\renewcommand{\tilde}{\widetilde}
\renewcommand{\hat}{\widehat}
\let\norm\undefined
\newcommand\norm[1]{\left\| #1 \right\|}
\newcommand\abs[1]{\left| #1 \right|}
\newcommand\inner[2]{\langle #1, #2 \rangle}
\newcommand\ceil[1]{\lceil #1 \rceil}
\newcommand\given[1][]{\:#1\vert\:}
\newcommand\term[1]{\textsc{term}~(\textsc{#1})}
\newcommand\sbr[1]{\left( #1 \right)}
\newcommand\mbr[1]{\left[ #1 \right]}
\newcommand\bbr[1]{\left\{ #1 \right\}}
\newcommand\seq[1]{\{#1\}_{t=1}^T}
\DeclareMathOperator*{\argmax}{arg\,max}
\DeclareMathOperator*{\argmin}{arg\,min}
\DeclareMathOperator*{\poly}{poly}
\newcounter{romancounter}
\newcommand{\rom}[1]{\setcounter{romancounter}{#1}\textit{(\roman{romancounter})}}
\definecolor{myblue}{RGB}{0,112,192}
\definecolor{myred}{RGB}{192,0,1}
\definecolor{wine_red}{RGB}{228,48,64}
\definecolor{DSgray}{cmyk}{0,1,0,0}
\newcommand{\black}[1]{{\color{black}{#1}}}
\newcommand{\cmark}{\ding{51}}
\newcommand{\xmark}{\ding{55}}
\def \Yes {\textcolor{red}{\cmark}}
\def \No {\textcolor{blue}{\xmark}}
\def \A {\mathcal{A}}
\def \B {\mathcal{B}}
\def \D {\mathcal{D}}
\def \E {\mathbb{E}}
\def \F {\mathcal{F}}
\def \H {\mathcal{H}}
\def \M {\mathcal{M}}
\def \O {\mathcal{O}}
\def \R {\mathbb{R}}
\def \X {\mathcal{X}}
\def \Y {\mathcal{Y}}
\def \Bb {\mathbb{B}}
\def \a {\boldsymbol{a}}
\def \abf {\mathbf{a}}
\def \b {\boldsymbol{b}}
\def \c {\boldsymbol{c}}
\def \e {\boldsymbol{e}}
\def \g {\mathbf{g}}
\def \m {\boldsymbol{m}}
\def \mb {\mathbf{m}}
\def \p {\boldsymbol{p}}
\def \q {\boldsymbol{q}}
\def \r {\boldsymbol{r}}
\def \uu {\boldsymbol{u}}
\def \v {\mathbf{v}}
\def \w {\boldsymbol{w}}
\def \x {\mathbf{x}}
\def \y {\mathbf{y}}
\def \z {\mathbf{z}}
\def \ph {\hat{p}}
\def \qh {\hat{q}}
\def \xh {\hat{\x}}
\def \xt {\tilde{\x}}
\def \pbh {\boldsymbol{\ph}}
\def \qbh {\boldsymbol{\qh}}
\def \epsilon {\varepsilon}
\def \sumtau {\sum_{t=1}^\tau}
\def \sumT {\sum_{t=1}^T}
\def \sumTT {\sum_{t=2}^T}
\def \sumM {\sum_{j=1}^M}
\def \sumN {\sum_{i=1}^N}
\def \Reg {\textnormal{\textsc{Reg}}}
\def \meta {\textsc{Meta-Reg}}
\def \base {\textsc{Base-Reg}}
\def \ellb {\boldsymbol{\ell}}
\def \Ot {\tilde{\O}}
\def \KL {\textnormal{KL}}
\def \ith {{i\text{-th}}\xspace}
\def \xs {\x^\star}
\def \is {{i^\star}}
\def \js {{j^\star}}
\def \define {\triangleq}
\def \les {\lesssim}
\newenvironment{proof}{\par\noindent{\textbf{Proof}\ }}{\hfill\BlackBox\\[2mm]}
\def \endenv {\hfill\raisebox{1pt}{$\triangleleft$}\smallskip}
\newtheorem{myThm}{Theorem}
\newtheorem{myLemma}{Lemma}
\newtheorem{myCor}{Corollary}
\newtheorem{myProp}{Proposition}
\theoremstyle{definition}
\newtheorem{myAssum}{Assumption}
\newtheorem{myDef}{Definition}
\newtheorem{myRemark}{Remark}
\newcommand{\pref}[1]{\prettyref{#1}}
\newcommand{\savehyperref}[2]{\texorpdfstring{\hyperref[#1]{#2}}{#2}}
\numberwithin{equation}{section}
\def \Vs {V_\star}
\def \Vb {\bar{V}}
\def \hexp {h^{\textnormal{exp}}}
\def \hsc {h^{\textnormal{sc}}}
\def \hc {h^{\textnormal{c}}}
\def \mub {\boldsymbol{\mu}}
\newcommand{\Bottomcoef}{\kappa}
\newcommand{\UniGrad}{\textnormal{\textsf{\mbox{UniGrad}}}\xspace}
\newcommand{\UniGradpp}{\textnormal{\textsf{\mbox{UniGrad++}}}\xspace}
\newcommand{\Correct}{\textnormal{\textsf{UniGrad.Correct}}\xspace}
\newcommand{\Bregman}{\textnormal{\textsf{UniGrad.Bregman}}\xspace}
\newcommand{\Correctpp}{\textnormal{\textsf{UniGrad++.Correct}}\xspace}
\newcommand{\Bregmanpp}{\textnormal{\textsf{UniGrad++.Bregman}}\xspace}
\def \OOMD {\textnormal{\textsf{OOMD}}\xspace}
\def \OOGD {\textnormal{\textsf{OOGD}}\xspace}
\def \msmwc {\textnormal{\textsf{MsMwC}}\xspace}
\def \MSMWCtop {\mbox{\textnormal{\textsf{MoM-Top}}}\xspace}
\def \MSMWCmid {\mbox{\textnormal{\textsf{MoM-Mid}}}\xspace}
\newcommand{\msoms}{\mbox{\textnormal{\textsf{MoM}}}\xspace}
\def \mlprod {\textsf{Adapt-ML-Prod}\xspace}
\def \omlprod {\textsf{Optimistic-Adapt-ML-Prod}\xspace}
\def \Top {\textnormal{\textsc{top}}}
\def \Mid {\textnormal{\textsc{mid}}}
\def \Gsc {G_\scvx}
\def \Dpsi {\D_{\psi}}
\def \scvx {\textnormal{sc}}
\def \exp {\textnormal{exp}}
\def \cvx {\textnormal{c}}
\renewenvironment{cases}{%
  \renewcommand{\arraystretch}{1.3}
  \left\lbrace
  \def\\{\cr}%
  \array{@{}l@{\quad}l@{}}%
}{%
  \endarray\right.%
}
\begin{document}

\title{Adaptivity and Universality: Problem-dependent Universal Regret for Online Convex Optimization}

\author{\name Peng Zhao     \email zhaop@lamda.nju.edu.cn \\
       \name Yu-Hu Yan      \email yanyh@lamda.nju.edu.cn \\
       \name Hang Yu        \email yuhang@lamda.nju.edu.cn \\
       \name Zhi-Hua Zhou \email zhouzh@lamda.nju.edu.cn \\
       \addr National Key Laboratory for Novel Software Technology, Nanjing University, China\\
   School of Artificial Intelligence, Nanjing University, China}

\editor{My Editor}

\maketitle

\begin{abstract}
    Universal online learning aims to achieve optimal regret guarantees without requiring prior knowledge of the curvature of online functions. 
    Existing methods have established minimax-optimal regret bounds for universal online learning, where a \emph{single} algorithm can simultaneously attain $\O(\sqrt{T})$ regret for convex functions, $\O(d \log T)$ for exp-concave functions, and $\O(\log T)$ for strongly convex functions, where $T$ is the number of rounds and $d$ is the dimension of the feasible domain.
    However, these methods still lack problem-dependent adaptivity.
    In particular, no universal method provides regret bounds that scale with the \emph{gradient variation} $V_T$, a key quantity that plays a crucial role in applications such as stochastic optimization and fast-rate convergence in games. 
    In this work, we introduce \textsf{UniGrad}, a novel approach that achieves both universality and adaptivity, with two distinct realizations: \Correct and \Bregman.
    Both methods achieve universal regret guarantees that adapt to gradient variation, simultaneously attaining $\O(\log V_T)$ regret for strongly convex functions and $\O(d \log V_T)$ regret for exp-concave functions.
    For convex functions, the regret bounds differ: \Correct achieves an $\O(\sqrt{V_T \log V_T})$ bound while preserving the RVU property that is crucial for fast convergence in online games, whereas \Bregman achieves the optimal $\O(\sqrt{V_T})$ regret bound through a novel design.
    Both methods employ a meta algorithm with $\O(\log T)$ base learners, which naturally requires $\O(\log T)$ gradient queries per round.
    To further enhance computational efficiency, we introduce \textsf{UniGrad++}, which retains the regret guarantees while reducing the gradient query requirement to just $1$ per round via a surrogate optimization technique. 
    Our results advance the state-of-the-art in universal online learning, with immediate implications and applications, including small-loss and gradient-variance bounds, novel guarantees for the stochastically extended adversarial model, and faster convergence in online games.
    Additionally, as an extension, we provide an anytime variant of our method.
\end{abstract}

\section{Introduction}
\label{sec:intro}
Online convex optimization (OCO) is a versatile and powerful framework for modeling the interaction between a learner and the environment over time~\citep{book'16:Hazan-OCO, book'22:FO-book}.
In each round $t \in [T]$, the learner selects a decision $\x_t$ from a convex compact set $\X \subseteq \R^d$, while the environment simultaneously chooses a convex online function $f_t: \X \rightarrow \R$.
The learner then incurs a loss $f_t(\x_t)$ and receives information about the online function to update the decision to $\x_{t+1}$, with the goal of optimizing the game-theoretic performance metric known as \emph{regret}~\citep{book'06:PLG-Bianchi}, whose definition is given by
\begin{equation}
  \label{eq:regret}
  \Reg_T = \sumT f_t(\x_t) - \min_{\x \in \X} \sumT f_t(\x).
\end{equation}
The regret quantifies the learner's excess loss relative to the best offline decision in hindsight.
In OCO, it is well-established that the type and curvature of online functions significantly influence the minimax regret bounds.
For convex functions, Online Gradient Descent (OGD) can achieve an $\O(\sqrt{T})$ regret~\citep{ICML'03:zinkevich}.
For $\alpha$-exp-concave functions, Online Newton Step (ONS), with prior knowledge of the curvature coefficient $\alpha$, attains an $\O(\frac{d}{\alpha} \log T)$ regret \citep{MLJ'07:Hazan-logT}.
For $\lambda$-strongly convex functions, OGD with a suitable step size configuration relating to the curvature coefficient $\lambda$ enjoys an $\O(\frac{1}{\lambda} \log T)$ regret~\citep{MLJ'07:Hazan-logT}.
These regret rates have been proved to be minimax optimal~\citep{MOR'98:exp-concave-lowerbound,COLT'08:OCO-lowerbound}.

Notably, there are two caveats in the above results.
First, to achieve the optimal regret bounds, these algorithms require prior knowledge of the function type and the parameter characterizing the curvature as the algorithmic input.
Moreover, these algorithms are only optimal in the worst case and lack adaptivity to problem-dependent hardness.
Therefore, modern online learning research strengthens these results in two key aspects: \rom{1} \emph{universality}, to handle the unknown types and curvatures; \rom{2} \emph{adaptivity}, to adapt to the problem-dependent hardness.
In the following, we discuss each aspect in detail.

\subsection{Universality: Unknown Curvature of Online Functions}
\label{subsec:universal}
Traditionally, to attain the minimax optimality, the learner must select the ``correct'' algorithm with well-tuned parameters tailored to each specific class of online functions, which requires the prior knowledge of the curvature information and can be burdensome in practice.
\emph{Universal online learning} aims to develop a single algorithm agnostic to the specific function type and curvature while achieving the same regret guarantees as if they were known~\citep{NIPS'16:MetaGrad,NIPS'17:Ashok-universal,UAI'19:Maler, COLT'19:Lipschitz-MetaGrad,NeurIPS'21:dual-adaptivity,ICML'22:universal,NeurIPS'23:universal,NeurIPS'24:OptimalGV,NeurIPS'24:universalProjection}.
The pioneering work of \citet{NIPS'16:MetaGrad} proposes the \textsf{MetaGrad} algorithm that achieves an $\O(\sqrt{T})$ regret for convex functions and an $\O(\frac{d}{\alpha} \log T)$ regret for exp-concave/strongly convex functions, leaving a gap to the optimal $\O(\frac{1}{\lambda} \log T)$ regret for strongly convex functions, which is later closed by \citet{UAI'19:Maler}. 

The above methods rely on a meta-base two-layer structure to handle the uncertainty of the function curvature.
Specifically, for each function family, the online learner maintains a set of base learners with different configurations, with a meta algorithm running on top to combine their outputs.
To achieve the desired universality, MetaGrad and its variants require the base learners to optimize over heterogeneous surrogate loss functions customized to each function family, and use a complex meta algorithm to adaptively combine these heterogeneous base learners.
The entire structure can be complex and technically challenging to analyze due to the heterogeneity of the base learners. 

To enhance flexibility, \citet{ICML'22:universal} introduce a simple and general framework that uses a meta algorithm equipped with a second-order regret bound, directly operating the base learners on the original online functions.
The method still achieves optimal regret bounds universally: $\O(\sqrt{T})$ for convex functions, $\O(\frac{d}{\alpha} \log T)$ for $\alpha$-exp-concave functions, and $\O(\frac{1}{\lambda} \log T)$ for $\lambda$-strongly convex functions.
It requires $\O(\log T)$ gradient queries per round, as $\O(\log T)$ base learners are maintained in parallel with different configurations.

\subsection{Adaptivity: Unknown Niceness of Online Environments}
\label{subsec:adaptivity}
Within a specific function family, the algorithm's performance is also influenced by the problem-dependent hardness.
Ideally, a well-designed online algorithm should be robust to the worst-case scenarios while simultaneously delivering faster-rate guarantees in more benign and easier environments.
\emph{Problem-dependent online learning} aims to design algorithms with regret guarantees scaling with the problem-dependent quantities~\citep{NIPS'10:smooth,COLT'12:VT,AISTATS'12:Orabona,COLT'18:adaptive-bandits,ICML'19:Adaptive-Smooth,NeurIPS'20:Ashok-matrix-regret,JMLR'24:Sword++}.
There are several different notions of problem-dependent adaptivity, including the small-loss bound, the gradient-variance bound, and the gradient-variation bound.
Among these, we focus on \emph{gradient-variation regret}~\citep{COLT'12:VT, MLJ'14:variation-Yang}, due to its fundamental importance in modern online learning and its strong connections to game theory and optimization.
The gradient-variation regret replaces the dependence on the time horizon $T$ with the gradient-variation quantity $V_T$ defined as  
\begin{equation}
  \label{eq:VT}
  V_T \define \sumTT \sup_{\x \in \X} \|\nabla f_t(\x) - \nabla f_{t-1}(\x)\|^2,
\end{equation}  
which measures the cumulative change of gradients across consecutive functions.
When the online functions change gradually, $V_T$ can be very small, even as low as $\O(1)$ when the functions are basically the same.
On the other hand, under the standard bounded-gradient assumption, $V_T$ is at most $\O(T)$ in the worst case.
Under smoothness, more adaptive algorithms can improve the minimax regret to $\O(\sqrt{V_T})$, $\O(\frac{d}{\alpha} \log V_T)$, and $\O(\frac{1}{\lambda} \log V_T)$ for convex, $\alpha$-exp-concave, and $\lambda$-strongly convex functions, respectively. 

These results are important since they safeguard the minimax worst-case rates and can be much better when the environment is easier such as $V_T = \O(1)$.
Moreover, as demonstrated by \citet{JMLR'24:Sword++}, the gradient-variation regret is more fundamental than other well-known problem-dependent quantities like the small loss $F_T = \min_{\x \in \X} \sumT f_t(\x)$~\citep{NIPS'10:smooth, AISTATS'12:Orabona}, since gradient-variation regret can imply small-loss bounds directly in analysis.
Furthermore, gradient variation plays a crucial role in bridging adversarial and stochastic optimization~\citep{NeurIPS'22:SEA,JMLR'24:OMD4SEA}, enabling fast rates in online games~\citep{NIPS'15:fast-rate-game, ICML'22:TVgame}, and facilitating acceleration in smooth offline optimization~\citep{LectureNote:AOpt25,NeurIPS'25:Holder}.
As a result, there has been a surge of recent interest in achieving gradient-variation regret bounds in various online learning problems~\citep{NeurIPS'20:Sword,NeurIPS'22:SEA,ICML'22:TVgame,AAAI'23:GV-constraints,NeurIPS'23:portfolio,JMLR'24:Sword++,AISTATS'24:online-bilevel,NeurIPS'24:Xie}.

\subsection{Our Contributions and Techniques}
\label{subsec:contributions}
Motivated by the above progress of modern online learning, a natural question arises: \emph{Is it possible to design a single algorithm that achieves both universality and adaptivity?}
More concretely, the goal is to design a universal algorithm with gradient-variation regret bounds across different function families: a single online algorithm simultaneously achieving an $\O(\sqrt{V_T})$ regret for convex functions, an $\O(\frac{d}{\alpha} \log V_T)$ regret for $\alpha$-exp-concave functions, and an $\O(\frac{1}{\lambda} \log V_T)$ regret for $\lambda$-strongly convex functions, respectively.

\begin{table}[!t]
  \centering
  \caption{Comparison with existing results (including prior works and our conference version~\citep{NeurIPS'23:universal}). The second column shows the regret bounds for strongly convex, exp-concave, and convex functions. The third column indicates the computational efficiency, where \mbox{``\# Grad.''} is the number of gradient queries in each round and \mbox{``\# Base''} is the number of maintained base learners. The last column indicates whether the method can support the RVU property that is vital for the fast-convergence of online games.} 
  \label{table:main}
  \vspace{2mm}
  \renewcommand*{\arraystretch}{1.6}
  \resizebox{0.999\textwidth}{!}{
  \begin{tabular}{c|ccc|c|c|c}
    \hline

    \hline
    \multirow{2}{*}{\textbf{Method}} & \multicolumn{3}{c|}{\textbf{Regret Bounds}} & \multicolumn{2}{c|}{\textbf{Efficiency}} & \multirow{2}{*}{\textbf{RVU}} \\ \cline{2-6}                 
    & Strongly Convex & Exp-concave & Convex & \# Grad. & \# Base \\ \hline
    \citet{NIPS'16:MetaGrad} & $\O(d \log T)$ & $\O(d \log T)$ & $\O(\sqrt{T})$ & $1$ & $\O(\log T)$ & \No\\ \hline
    \citet{UAI'19:Maler} & $\O(\log T)$ & $\O(d \log T)$ & $\O(\sqrt{T})$ & $1$ & $\O(\log T)$ & \No\\ \hline
    \citet{ICML'22:universal} & $\O(\log V_T)$ & $\O(d \log V_T)$ & $\O(\sqrt{T})$ & $\O(\log T)$ & $\O(\log T)$ & \No\\ \hline  \hline
    \rowcolor{gray!13}\citet{NeurIPS'23:universal} & $\O(\log V_T)$ & $\O(d \log V_T)$ & $\O(\sqrt{V_T \log V_T})$ & $\O((\log T)^2)$ & $\O((\log T)^2)$ & \Yes\\    \hline
    \rowcolor{gray!13}\Correct (\pref{thm:unigrad-correct}) & $\O(\log V_T)$ & $\O(d \log V_T)$ & $\O(\sqrt{V_T \log V_T})$ & $\O(\log T)$ & $\O(\log T)$ & \Yes\\    \hline
    \rowcolor{gray!13}\textsf{UniGrad.Bregman} (\pref{thm:unigrad-bregman}) & $\O(\log V_T)$ & $\O(d \log V_T)$ & $\O(\sqrt{V_T})$ & $\O(\log T)$ & $\O(\log T)$ & \No\\ \hline \hline
    \rowcolor{gray!13}\textsf{UniGrad++.Correct} (\pref{thm:unigrad-correct-1grad}) & $\O(\log V_T)$ & $\O(d \log V_T)$ & $\O(\sqrt{V_T \log V_T})$ & $1$ & $\O(\log T)$ & \Yes\\
    \hline
    \rowcolor{gray!13}\textsf{UniGrad++.Bregman} (\pref{thm:unigrad-bregman-1grad}) & $\O(\log V_T)$ & $\O(d \log V_T)$ & $\O(\sqrt{V_T})$ & $1$ & $\O(\log T)$ & \No\\ \hline

    \hline
  \end{tabular}}
\end{table}

\citet{ICML'22:universal} obtain partial results with regret bounds of $\O(\sqrt{T})$, $\O(\frac{d}{\alpha} \log V_T)$, and $\O(\frac{1}{\lambda} \log V_T)$ for convex, $\alpha$-exp-concave, and $\lambda$-strongly convex functions, simultaneously.
Their result, however, falls short in the convex case, which is arguably the most important: the improvement from $T$ to $V_T$ is polynomial for convex functions, whereas logarithmic for the other cases.
This gap was left open in their work. 

In this paper, we resolve the open problem and obtain the desired gradient-variation regret bounds for all three types of functions.
Our approach builds on the \emph{optimistic online ensemble} framework developed for gradient-variation dynamic regret~\citep{JMLR'24:Sword++}.
However, significant new ingredients are required for universal online learning, particularly in properly encoding gradient-variation adaptivity across all three types of functions.
We propose a novel approach called \UniGrad (short for ``\underline{Uni}versal \underline{Grad}ient-variation Online Learning''), consisting of two distinct realizations based on fundamentally different ideas.
\begin{itemize}[leftmargin=*]
    \item \textbf{Method 1 (\textsf{UniGrad.Correct}): Online Ensemble with Injected Corrections}. We propose the \Correct algorithm, which consists of a three-layer ensemble structure and incorporates \emph{injected corrections} to ensure stability cancellation within the online ensemble. The algorithm simultaneously achieves regret bounds of $\O(\frac{1}{\lambda} \log V_T)$ for $\lambda$-strongly convex functions, $\O(\frac{d}{\alpha} \log V_T)$ for $\alpha$-exp-concave functions, and $\O(\sqrt{V_T \log V_T})$ for convex functions.
    \item \textbf{Method 2 (\textsf{UniGrad.Bregman}): Online Ensemble with Extracted Bregman Divergence}. We propose the \Bregman algorithm, which leverages a negative term from the \emph{extracted Bregman divergence} in linearization and employs a novel analysis that bypasses stability arguments to handle gradient variation. The algorithm simultaneously achieves regret bounds of $\O(\frac{1}{\lambda} \log V_T)$ for $\lambda$-strongly convex functions, $\O(\frac{d}{\alpha} \log V_T)$ for $\alpha$-exp-concave functions, and $\O(\sqrt{V_T})$ for convex functions.
\end{itemize}
Although \Correct exhibits slight suboptimality in the convex case compared to \Bregman, the two methods rely on fundamentally different principles and enjoy their own merits.
\pref{table:main} provides a comparison of our results with prior works and our conference version~\citep{NeurIPS'23:universal}, highlighting that we are the first to achieve gradient-variation regret bounds across all three types of functions simultaneously.
Both \Correct and \Bregman require $\O(\log T)$ gradient queries per round, as they maintain this many base learners in parallel.
To improve efficiency, we further develop \UniGradpp, which matches the same regret guarantees with only $1$ gradient query per round.
The key improvement is a careful deployment of the ``surrogate optimization'' technique, which broadcasts global gradient information to all base learners while effectively handling bias.
Additionally, we extend our method to an anytime variant, eliminating the requirement of the time horizon $T$ in advance and preserving the same guarantees.

\paragraph{Technical Contributions.}
To achieve gradient-variation adaptivity, it is typical to first establish regret guarantees with respect to the \emph{empirical} gradient-variation quantity $\bar{V}_T \define \sumTT \|\nabla f_t(\x_t) - \nabla f_{t-1}(\x_{t-1})\|^2$, and then convert them to the desired $V_T$-type bounds (see definition in~\pref{eq:VT}) by handling the additional positive terms.
This requires carefully extracting proper positive terms and canceling them by leveraging negative terms in the regret analysis and algorithm design comprehensively, as well as exploiting additional curvature-induced negative terms in the exp-concave and strongly convex cases.
Crucially, all these considerations must be compatible with the online ensemble structure, which demands careful design and, in some cases, surgical adjustments across meta-base layers.
Below, we discuss key techniques of each method (with further comparisons in \pref{sec:comparison-discussion}), along with \UniGradpp and the anytime variant.
\begin{itemize}[leftmargin=*]
  \item \textbf{Techniques of \textsf{UniGrad.Correct}.} The key challenge here is to handle the \emph{stability} term $\sumTT \|\x_t - \x_{t-1}\|^2$ in universal online learning. This requires the meta algorithm to achieve an optimistic second-order regret while retaining a stability negative term in the analysis. To this end, we employ a two-layer mirror-descent-based meta algorithm, resulting in an overall three-layer online ensemble structure. We develop a cascaded correction mechanism to cancel stability in this three-layer ensemble and design appropriate optimism to ensure adaptivity across all three function classes. Owing to this explicit stability cancellation, \Correct preserves the \textsf{RVU} (Regret bounded by Variations in Utility) property, which is essential for achieving fast convergence in online games~\citep{NIPS'15:fast-rate-game}.
  \item \textbf{Techniques of \textsf{UniGrad.Bregman}.} We employ a fundamentally different approach by conducting a novel smoothness analysis of gradient variations. This enables handling a positive term unrelated to stability, thereby bypassing the need for stability-induced negative terms in the meta algorithm. Combined with a new \emph{Bregman-divergence negative term} extracted from the linearization, the algorithm avoids stability arguments entirely and employs a simple two-layer structure, achieving the optimal $\O(\sqrt{V_T})$ regret for convex functions. This optimal universal rate directly yields optimal guarantees for the stochastically extended adversarial (SEA) model~\citep{JMLR'24:OMD4SEA}. 
  \item \textbf{Techniques of \textsf{UniGrad++}.} To improve gradient query efficiency, instead of relying on the multi-gradient information $\{\nabla f_t(\x_{t,i})\}_{i=1}^N$, \UniGradpp adopts surrogate optimization by using only the global gradient $\nabla f_t(\x_t)$ to construct different surrogate functions that incorporate curvature information and then feed them into the meta and base updates. Consequently, it is crucial to address the bias introduced by the surrogate function and to handle the additional positive term arising from gradient variations with respect to the surrogate functions.
  \item \textbf{Techniques of Anytime variant.}  For the anytime variant, since the algorithm does not know the time horizon $T$ in advance, it is impossible to predefine the number of base learners, which is originally set as $\O(\log T)$. Moreover, the doubling trick cannot be employed in this case, as it would introduce $\poly(\log T)$ regret degradations, thereby ruining the desired gradient-variation bounds for exp-concave and strongly convex functions. We design a dynamic online ensemble framework where the number of base learners is adjusted dynamically based on certain monitoring metrics.
\end{itemize}

\paragraph{Implication and Applications.}
We demonstrate the importance and generality of our results through several implications and applications.
\rom{1} The obtained gradient-variation regret bounds not only safeguard worst-case guarantees~\citep{NIPS'16:MetaGrad,UAI'19:Maler} but also directly imply the small-loss bounds of~\citet{ICML'22:universal} and the gradient-variance bounds of~\citet{MLJ'10:Hazan-variance-bound} in analysis.
\rom{2} Gradient variation is shown to play an essential role in the stochastically extended adversarial (SEA) model~\citep{NeurIPS'22:SEA,JMLR'24:OMD4SEA}, an interpolation between stochastic and adversarial convex optimization.
Our approach positively resolves a major open problem left in \citet{JMLR'24:OMD4SEA} on whether it is possible to develop a single algorithm with universal guarantees across strongly convex, exp-concave, and convex functions in the SEA model.
\rom{3} In game theory, gradient variation captures changes in other players' actions and facilitates fast convergence to the Nash equilibrium with stability cancellation arguments~\citep{NIPS'13:optimism-games, NIPS'15:fast-rate-game, ICML'22:TVgame}, and we apply UniGrad.Correct to two-player zero-sum games to illustrate its universality.

\paragraph{Comparison to Conference Version.}
This journal extension significantly improves upon our earlier conference papers~\citep{NeurIPS'23:universal,NeurIPS'24:OptimalGV} in algorithm design, regret analysis, presentation, and experimental evaluation.
Specifically, while \Correct still employs a three-layer online ensemble to achieve the desired gradient-variation regret, the initial algorithm of~\citet{NeurIPS'23:universal} required maintaining $\O((\log T)^2)$ base learners.
In contrast, the new design reduces the number of base learners to $\O(\log T)$, substantially improving computational efficiency.
This improvement is enabled by a sharper understanding of the three-layer online ensemble, leading to a new construction of correction terms injected into the meta algorithm's feedback loss.
More comparisons are discussed in~\pref{subsec:comparison-correct-conference}.
Additionally, we develop an anytime variant of~\citet{NeurIPS'24:OptimalGV} that eliminates the need for $T$ in advance, which is achieved through a novel dynamic online ensemble framework that adjusts the number of base learners based on monitoring metrics.
Lastly, we have made substantial improvements to the presentation, introducing a more systematic and unified framework for the two methods and a modular structure for technical proofs.
We then present the one-gradient version, with the critical role of surrogate loss design highlighted.
We also include additional implications and applications to broaden the scope and significance of our methods and conduct empirical evaluation to validate their effectiveness.

\paragraph{Organization.}
In the following, we first formally state the problem setup and review a general framework for universal online learning in~\pref{sec:preliminary}.
Next, we provide the main technical results in \pref{sec:method1-correct} and \pref{sec:method2-Bregman}, where two methods with universal gradient-variation regret bounds are developed.
\pref{sec:one-gradient} enhances the efficiency by ensuring only $1$ gradient query per round.
Then, we discuss the implications, applications, and extensions of the obtained gradient-variation universal regret bounds in \pref{sec:applications}.
Based on the technical details and provided applications, \pref{sec:comparison-discussion} offers detailed discussions of the two methods and the extension over the conference version.
\pref{sec:experiments} reports the experiments.
Finally, \pref{sec:conclusion} concludes the paper.
All proofs and omitted details are deferred to appendices.

\section{Problem Setup and Preliminaries}
\label{sec:preliminary}
In this section, we introduce preliminaries, including the problem setup, assumptions, the optimistic online mirror descent, and a general universal online learning framework. 

\paragraph{Notations.}
We use $\|\cdot\|$ for $\|\cdot\|_2$ by default.
We represent the $i$-th out of $d$ dimensions of the bold vector $\v$ (or $\boldsymbol{v}$) using the corresponding regular font $v_i$, i.e., $\v$ (or $\boldsymbol{v}$) $= (v_1, v_2, \dots, v_d)^\top$.
$\|\x\|_{U} \define \sqrt{\x^\top U \x}$ refers to the matrix norm for any $\x$, where $U$ is a positive semi-definite matrix.
For a strictly convex and differentiable function $\psi:\X\rightarrow \R$, the induced Bregman divergence is defined as $\D_\psi(\x, \y) \define \psi(\x) - \psi(\y) - \inner{\nabla \psi(\y)}{\x - \y}$.
We use $\Delta_d$ to represent a $d$-dimensional simplex and denote the $i$-th basis vector by $\e_i$.
We adopt the asymptotic notations $a \les b$ or $a = \O(b)$ to denote that there exists a constant $C < \infty$ such that $a\le C b$.
We use the $\O(\cdot)$-notation to highlight the dependence on $T$ and $V_T$ while treating the iterated logarithmic factors as a constant following previous work~\citep{ALT'12:closer-adaptive-regret,COLT'15:Luo-AdaNormalHedge,JMLR'24:Sword++}.

\subsection{Problem Setup}
\label{subsec:problem-setup}
The protocol of online convex optimization (OCO) is as follows: at each round $t\in [T]$, the learner will select a decision $\x_t \in \X \subseteq \R^d$, while the environment simultaneously chooses a convex function $f_t: \X\rightarrow \R$.
The learner then incurs a loss $f_t(\x_t)$ and observes the gradient information of the online function $f_t(\cdot)$.
Following~\citet{JMLR'24:Sword++}, the OCO setting can be further refined based on the type of gradient information accessible to the learner:
\begin{enumerate}
    \item[\rom{1}] \textbf{multi-gradient feedback}: the learner can access multiple gradients of the online function, that is, $f_t(\cdot)$ at round $t \in [T]$;
    \item[\rom{2}] \textbf{one-gradient feedback}: the learner can only access one gradient at the decision point, that is, $\nabla f_t(\x_t)$ at round $t \in [T]$.
\end{enumerate}
We will first address the multi-gradient feedback model (in~\pref{sec:method1-correct} and~\pref{sec:method2-Bregman}) and then improve our results to the more challenging one-gradient feedback model in~\pref{sec:one-gradient}.

The goal of the online learner is to minimize the regret measure defined in~\pref{eq:regret}.
It is now well-established that the regret rates differ significantly depending on the type of online functions and their curvature coefficients.
In fact, there are three main classes of online functions: strongly convex, exponentially concave (abbreviated as exp-concave), and convex functions.
The formal definitions are as follows (with convex functions omitted).
\begin{myDef}[Strong Convexity]
    \label{def:sconvex}
    A function $f(\cdot)$ is $\lambda$-strongly convex if $f(\x) - f(\y) \le \inner{\nabla f(\x)}{\x - \y} - \frac{\lambda}{2} \cdot \|\x - \y\|^2$ holds for any $\x,\y \in \X$.
\end{myDef}
\begin{myDef}[Exp-Concavity]
    \label{def:expconcave}
    A function $f(\cdot)$ is $\alpha$-exponentially concave (abbreviated as \emph{exp-concave}),\footnote{The formal definition of $\beta$-exp-concavity is that $\exp(-\beta f(\cdot))$ is concave. Under Assumptions~\ref{assum:domain-boundedness} and~\ref{assum:gradient-boundedness} (see \pref{subsec:notation-assumption-definition}), $\beta$-exp-concavity implies \pref{def:expconcave} with $\alpha = \frac{1}{2} \cdot \min\{1/(4GD), \beta\}$~{\citep[Lemma~4.3]{book'16:Hazan-OCO}}. For clarity and simplicity, we adopt \pref{def:expconcave} as an alternative of exp-concavity.} if $f(\x) - f(\y) \le \inner{\nabla f(\x)}{\x - \y} - \frac{\alpha}{2} \cdot \inner{\nabla f(\x)}{\x - \y}^2$ holds for any $\x,\y \in \X$.
\end{myDef}
We refer to the $- \frac{\lambda}{2} \cdot \|\x - \y\|^2$ term in $\lambda$-strongly convex functions and the $- \frac{\alpha}{2} \cdot \inner{\nabla f(\x)}{\x - \y}^2$ term in $\alpha$-exp-concave functions as the \emph{curvature-induced negative terms}, which play a crucial role in achieving improved regret bounds compared to convex functions.
For the problem-independent regret bounds, it is known that the minimax rates are $\O(\frac{1}{\lambda}\log T)$, $\O(\frac{d}{\alpha}\log T)$, and $\O(\sqrt{T})$ for $\lambda$-strongly convex, $\alpha$-exp-concave, and convex functions, respectively~\citep{MOR'98:exp-concave-lowerbound,COLT'08:OCO-lowerbound}.
For the more adaptive gradient-variation regret bounds, it is known that different algorithms can be designed for each class of functions to achieve the corresponding regret bounds: $\O(\frac{1}{\lambda}\log V_T)$ regret for $\lambda$-strongly convex functions, $\O(\frac{d}{\alpha}\log V_T)$ regret for $\alpha$-exp-concave functions, and $\O(\sqrt{V_T})$ regret for convex functions~\citep{COLT'12:VT,ICML'22:universal}.

\paragraph{Universal Online Learning.}
As can be observed from the above discussions, the curvature information is crucial for the regret rate (no matter for the problem-independent or gradient-variation regret), and thus it is crucial for the online learner to choose the correct algorithm with well-tuned parameters for each class of functions.
However, this clearly burdens the learner with the prior knowledge of the function type and the parameter characterizing the curvature, hence prohibiting more applications in practice.
Given this background, \emph{universal online learning} aims to design a \emph{single} algorithm that can achieve the optimal regret bound for all three classes of online functions simultaneously.

Mathematically, for a sequence of online functions $\{f_t\}_{t=1}^T$ that may belong to one of the three classes -- $\F^\lambda_\scvx$ (for $\lambda$-strongly convex functions), $\F^\alpha_{\text{ec}}$ (for $\alpha$-exp-concave functions), and $\F_{\text{c}}$ (for convex functions), universal online learning algorithm $\A$ aims to attain the following \emph{universal regret} satisfying:
\begin{equation}
    \label{eq:universal-goal}
    \Reg_T(\A, \{f_t\}_{t=1}^T) \lesssim 
    \begin{cases}
        \Reg_T(\A_\scvx, \F^\lambda_\scvx), & \text{when } \{f_t\}_{t=1}^T \text{ belongs to } \F^\lambda_\scvx, \\[2mm]
        \Reg_T(\A_{\text{ec}}, \F^\alpha_{\text{ec}}), & \text{when } \{f_t\}_{t=1}^T \text{ belongs to } \F^\alpha_{\text{ec}}, \\[2mm]
        \Reg_T(\A_{\text{c}}, \F_{\text{c}}), & \text{when } \{f_t\}_{t=1}^T \text{ belongs to } \F_{\text{c}}, \\[2mm]
    \end{cases}
\end{equation}
where $\A_\scvx$, $\A_{\text{ec}}$, $\A_{\text{c}}$ are the (optimal) algorithms designed for $\F^\lambda_\scvx$, $\F^\alpha_{\text{ec}}$, and $\F_{\text{c}}$, respectively.
The corresponding regret bounds are denoted as $\Reg_T(\A_\scvx,\F^\lambda_\scvx)$, $\Reg_T(\A_{\text{ec}},\F^\alpha_{\text{ec}})$, and $\Reg_T(\A_{\text{c}},\F_{\text{c}})$.
For problem-independent regret, the respective rates are $\O(\frac{1}{\lambda}\log T)$, $\O(\frac{d}{\alpha}\log T)$, and $\O(\sqrt{T})$, as achieved by~\citet{ICML'22:universal}.
Furthermore, when adapting to gradient variations, the regret improves to $\O(\frac{1}{\lambda}\log V_T)$, $\O(\frac{d}{\alpha}\log V_T)$, and $\O(\sqrt{V_T})$.

\subsection{Assumptions and Optimistic Online Mirror Descent}
\label{subsec:notation-assumption-definition}
In this subsection, we first present several standard assumptions commonly used in online convex optimization, and then introduce the algorithmic framework of optimistic online mirror descent (\OOMD)~\citep{COLT'12:VT,COLT'13:optimistic}, which serves not only the foundation of many (adaptive) online learning algorithms, but also the basis of our proposed methods for universal online learning.
\begin{myAssum}[Domain Boundedness]
    \label{assum:domain-boundedness}
    For any $\x,\y \in \X \subseteq \R^d$, the domain diameter satisfies $\|\x - \y\| \le D$. 
\end{myAssum}

\begin{myAssum}[Gradient Boundedness]
    \label{assum:gradient-boundedness}
    For all $t \in [T]$ and any $\x \in \X$, the gradient norm of the online functions is bounded as $\|\nabla f_t(\x)\| \le G$. 
\end{myAssum}

\begin{myAssum}[Smoothness]
    \label{assum:smoothness}
    For each $t \in [T]$, the online function $f_t(\cdot)$ is $L$-smooth, i.e., $\|\nabla f_t(\x) - \nabla f_t(\y)\| \le L \|\x - \y\|$ holds for any $\x, \y \in \R^d$.
\end{myAssum}

The domain boundedness and gradient boundedness are standard assumptions for regret minimization in OCO~\citep{book'12:Shai-OCO,book'16:Hazan-OCO}.
The smoothness assumption on the online functions is necessary for first-order algorithms to achieve the gradient-variation regret~\citep{COLT'12:VT}.
While \pref{assum:smoothness} requires the smoothness on the entire $\R^d$ space here, this assumption can be relaxed to different degrees for our two proposed methods, which will be specified later. 

\paragraph{Optimistic Online Mirror Descent.}
\OOMD applies to the optimistic online learning scenario, where in addition to the standard protocol of OCO, at round $t \in [T]$, the learner also has access to an optimistic estimation of the future loss's gradient $\nabla f_t(\x_t)$ denoted by $M_t \in \R^d$, which is called ``optimistic vector'' or simply ``optimism''.
Based on this information, \OOMD updates in the following way:
\begin{equation}
\label{eq:OOMD-exp-concave}
    \begin{split}
    \x_t =  {}&\argmin_{\x \in \X} \bbr{\eta_t\inner{M_t}{\x} + \D_{\psi_t}(\x, \xh_t)},\\
    \xh_{t+1}  = {}& \argmin_{\x \in \X} \bbr{\eta_t\inner{\nabla f_t(\x_t)}{\x} + \D_{\psi_t}(\x, \xh_t)},
    \end{split}
\end{equation}
where $\psi_t(\cdot)$ is a regularizer to be specified, $\eta_t > 0$ is a time-varying step size, $\xh_t$ is an internal decision.
This framework is highly generic and can recover many existing online learning algorithms through flexible configurations~\citep{JMLR'24:Sword++}.
A notable fact is that \OOMD can achieve an $\O(\sqrt{A_T})$ adaptive bound for convex functions under standard bounded domain and gradient assumptions, where $A_T \triangleq \sumT \norm{\nabla f_t(\x_t) - M_t}^2$~\citep{COLT'13:optimistic}.
Essentially, this design represents how to capture the intrinsic/desired adaptivity in the online learning process: when the optimistic vector $M_t$ accurately predicts the actual gradient $\nabla f_t(\x_t)$, the quantity $A_T$ becomes small, leading to improved regret.

Focusing on the gradient-variation regret and the case of known curvature information, we have the following results.
For convex functions, setting the optimism as the last-round gradient (i.e., $M_t = \nabla f_{t-1}(\x_{t-1})$) and the Euclidean regularizer $\psi_t(\x) = \frac{1}{2}\|\x\|^2_2$, \OOMD recovers the well-known Optimistic Online Gradient Descent (\OOGD)~\citep{COLT'12:VT}:
\begin{equation}
    \label{eq:OOGD}
    \x_t = \Pi_\X \mbr{\xh_t - \eta_t M_t},\quad 
    \xh_{t+1} = \Pi_\X \mbr{\xh_t - \eta_t \nabla f_t(\x_t)},
\end{equation}
where $\Pi_\X[\x] \define \argmin_{\y \in \X} \|\x - \y\|_2$ is the Euclidean projection onto the feasible domain $\X$.
Under standard assumptions, setting the step size as $\eta_t = \min\{D / \sqrt{1 + \Vb_{t-1}}, 1/(2L)\}$, where $\Vb_t \define \sum_{s=1}^t \|\nabla f_s(\x_s)- \nabla f_{s-1}(\x_{s-1})\|^2$, \OOGD enjoys an $\O(\sqrt{V_T})$ gradient-variation regret bound, which is provably optimal~\citep{COLT'12:VT}. 

For $\lambda$-strongly convex functions, using the \OOGD algorithm with $M_t = \nabla f_{t-1}(\x_{t-1})$ and $\eta_t = 2/\lambda t$, we can obtain an $\O(\frac{1}{\lambda}\log V_T)$ gradient-variation regret bound~\citep{COLT'12:VT,ICML'22:universal}.

For $\alpha$-exp-concave functions, setting the optimism as the last-round gradient (i.e., $M_t = \nabla f_{t-1}(\x_{t-1})$) and using the regularizer $\psi_t(\x) = \frac{1}{2}\|\x\|^2_{U_t}$ with $U_t = I + \frac{\alpha G^2}{2} I + \frac{\alpha}{2} \sum_{s=1}^{t-1} \nabla f_s(\x_s) \nabla f_s(\x_s)^\top$, \OOMD recovers the Optimistic Online Newton Step (\textsc{OONS}) algorithm~\citep{MLJ'14:variation-Yang}:
\begin{equation}
    \label{eq:ONNS}
    \x_t = \argmin_{\x \in \X} \big\| \x - (\xh_t - U_t^{-1} M_t)\big\|_{U_t}^2,\quad 
    \xh_{t+1} = \argmin_{\x \in \X} \big\| \x - (\xh_t - U_t^{-1} \nabla f_t(\x_t))\big\|_{U_t}^2.
\end{equation}
OONS achieves an $\O(\frac{d}{\alpha}\log V_T)$ gradient-variation regret bound~\citep{MLJ'14:variation-Yang}.

\subsection{A General Framework for Universal Online Learning}
\label{subsec:framework}
As presented in~\pref{subsec:notation-assumption-definition}, while the same algorithmic template (\OOMD) can be used to achieve gradient-variation regret bounds across different function classes, the specific configurations such as step size tuning and regularization are vastly different.
This requires the online learner to select the ``correct'' algorithm and configuration to ensure the favorable guarantees.
Universal online learning seeks to eliminate this burden by designing a single algorithm that does not require prior knowledge of the function type or curvature, yet still achieves the same regret bounds as if this information were known.

Now we will review a general framework for universal online learning and the key insight of~\citet{ICML'22:universal}, which achieves the minimax optimal regret bounds of $\O(\frac{1}{\lambda}\log T)$ for $\lambda$-strongly convex, $\O(\frac{d}{\alpha}\log T)$ for $\alpha$-exp-concave, and $\O(\sqrt{T})$ for convex functions.
We will also discuss the challenges of adapting this framework to the gradient-variation regret.

\paragraph{Online Ensemble for Universal Online Learning.}
The fundamental challenge in universal online learning lies in the \emph{uncertainty} of the function type and curvature parameters.
A common wisdom is to employ an \emph{online ensemble} with a meta-base two-layer structure, where multiple diverse base learners are deployed to explore the environment and a meta algorithm runs on top to dynamically track the best-performing base learner~\citep{NIPS'16:MetaGrad,JMLR'21:metagrad,ICML'22:universal,NeurIPS'23:universal,NeurIPS'24:OptimalGV}.
Without loss of generality, we can focus on the case where parameters $\alpha, \lambda \in [1/T, 1]$.
If $\alpha, \lambda < 1/T$, even the optimal minimax results---$\O(\frac{d}{\alpha}\log T)$ for exp-concave functions and $\O(\frac{1}{\lambda} \log T)$ for strongly convex functions~\citep{MLJ'07:Hazan-logT}---become linear in $T$, making the regret bounds vacuous.
Conversely, if $\alpha, \lambda > 1$, they can be treated as $\alpha, \lambda = 1$, which only worsens the regret by an ignorable constant factor. 

For the non-degenerated case of $\alpha, \lambda \in [1/T, 1]$, we can discretize the unknown $\alpha$ and $\lambda$ into a candidate pool $\H^{\exp}$ and $\H^{\scvx}$ using an exponential grid, defined as
\begin{equation}
    \label{eq:candidate-pool}
    \H^{\exp} = \H^{\scvx} \triangleq \left\{ \frac{1}{T}, \frac{2}{T}, \frac{2^2}{T}, \cdots, \frac{2^{n-1}}{T} \right\},
\end{equation}
where $n = \ceil{\log_2 T} + 1 = \O(\log T)$ is the number of candidates.
It can be proved that the discretized candidate pool $\H^{\exp}$ and $\H^{\scvx}$ can approximate the continuous value of $\alpha$ and $\lambda$ with only constant errors.
Based on the pool, it is natural to design three distinct groups of base learners, each tailored to handle different curvature properties:
\begin{enumerate}
    \item[\rom{1}] \textit{strongly convex} base learners $\{\B_i^\scvx\}_{i \in [N_\scvx]}$: $|\H^\scvx| = n$ in total. Each base learner $\B_i$ runs the algorithm for strongly convex functions with a guess $\lambda_i \in \H^\scvx$ of the true $\lambda$;
    
    \item[\rom{2}] \textit{exp-concave} base learners $\{\B_i^\exp\}_{i \in [N_\exp]}$: $|\H^\exp| = n$ in total. Each base learner $\B_i$ runs the algorithm for exp-concave functions with a guess $\alpha_i \in \H^\exp$ of the true $\alpha$;
    
    \item[\rom{3}] \textit{convex} base learners $\B^{\cvx}$: only $1$ base learner running an algorithm for convex functions.
\end{enumerate}
In total, there are $N \define 1 + \abs{\H^{\exp}} + \abs{\H^{\scvx}} = 2n+1 = \O(\log T)$ base learners.
The best base learner is the one with the right guess of the curvature type and the closest guess of the curvature coefficient.
For example, suppose the online functions are $\alpha$-exp-concave (while this is unknown to the online learner), then the right guessed coefficient of the best base learner (indexed by $\is$) satisfies $\alpha_\is \le \alpha \le 2 \alpha_\is$.

In addition, there is a meta algorithm running on top of those base learners.
At the $t$-th round, we denote by $\x_{t,i}$ the decision generated by the $i$-th base learner, for $i \in [N]$.
The meta learner will produce the weight vector $\p_t = (p_{t,1}, p_{t,2}, \ldots, p_{t,N})^\top \in \Delta_N$ to combine the base learners adaptively.
The final decision is formed as $\x_t = \sum_{i=1}^N p_{t,i} \x_{t,i}$. 

\paragraph{The key idea of~\citet{ICML'22:universal}.}
The online ensemble framework offers a general recipe for constructing a universal online learning algorithm, but the specific designs for base learners and, more critically, the meta algorithm remain undefined.
The key innovation of~\citet{ICML'22:universal} lies in the design of the meta algorithm.
Their approach starts from the regret decomposition of the two-layer algorithm:
\begin{equation}
  \label{eq:Zhang-decompose}
  \Reg_T = \mbr{\sumT f_t(\x_t) - \sumT f_t(\x_{t,\is})} + \mbr{\sumT f_t(\x_{t,\is}) - \min_{\x \in \X} \sumT f_t(\x)},
\end{equation}
where the \emph{meta regret} (first term) evaluates how well the algorithm tracks the best base learner, and the \emph{base regret} (second term) measures the performance of this base learner.
The best base learner is the one that runs the algorithm matching the ground-truth function type with the most accurate guess of the curvature.
\citet{ICML'22:universal} insightfully observe that ensuring the \emph{second-order regret for the meta algorithm} is pivotal for achieving universality.
Specifically, the meta algorithm should satisfy:
\begin{equation}
    \label{eq:second-order-regret}
    \sumT \inner{\nabla f_t(\x_t)}{\x_t - \x_{t,\is}}  = \sumT \inner{\p_t}{\ellb_t} -  \sumT \ell_{t,\is} = \sumT  r_{t, \is} \le  \O\left( \sqrt{\sumT r_{t, \is}^2} \right)
\end{equation}
where the feedback loss is defined as $\ell_{t,i} \triangleq \inner{\nabla f_t(\x_t)}{\x_{t,i}}$, and hence $r_{t,i} = \inner{\nabla f_t(\x_t)}{\x_t - \x_{t,i}}$ represents the instantaneous regret 
for the meta algorithm for $i \in [N]$.
This condition can be satisfied by advanced prediction-with-expert-advice (PEA) algorithms, such as \mlprod~\citep{COLT'14:ML-Prod}.

By combining this condition with the \emph{curvature-induced negative terms}, \citet{ICML'22:universal} demonstrate that the meta regret can be bounded by a \emph{constant} $\O(1)$ for exp-concave and strongly convex functions, while ensuring $\O(\sqrt{T})$ for convex functions.
Taking $\alpha$-exp-concave functions as an example, by definition, the meta regret can be bounded as 
\begin{equation}
    \label{eq:meta-regret-exp-concave}
    \meta \le  \sumT  r_{t, \is} - \frac{\alpha}{2} \sumT r_{t, \is}^2 \lesssim \sqrt{\sumT r_{t, \is}^2} - \frac{\alpha}{2} \sumT r_{t, \is}^2 \le \O(1),
\end{equation}
where the first inequality follows from the property of exp-concave functions (\pref{def:expconcave}), and the second step holds by the second-order regret~\eqref{eq:second-order-regret} of the meta algorithm.
The last inequality is by the AM-GM inequality (\pref{lem:AM-GM}). 
A similar derivation applies to strongly convex functions.
\pref{eq:second-order-regret} illuminates the importance of both the second-order regret of the meta algorithm and the curvature-induced negative terms in universal online learning.
Meanwhile, for the convex case, the second-order regret in \pref{eq:second-order-regret} still ensures an $\O(\sqrt{T})$ meta regret. 

Therefore, with the meta algorithm achieving the second-order regret bound, \citet{ICML'22:universal} further employ base learners that directly optimize the base regret: using ONS for exp-concave functions leads to an $\O(\frac{d}{\alpha}\log T)$ base regret; using OGD with an appropriate step size for $\lambda$-strongly convex functions yields an $\O(\frac{1}{\lambda}\log T)$ base regret; and using OGD with a proper step size for convex functions results in an $\O(\sqrt{T})$ base regret.
Combining these base regret bounds with the corresponding meta regret bounds (i.e., $\O(1)$ for exp-concave and strongly convex functions, and $\O(\sqrt{T})$ for convex functions) yields the desired minimax optimal guarantees for universal online learning.

\paragraph{Challenges for Gradient-Variation Regret.} 
The meta regret of~\citet{ICML'22:universal} is $\O(1)$ for strongly convex and exp-concave functions, and $\O(\sqrt{T})$ for convex functions.
As a consequence, for gradient-variation regret, one can choose base learners with gradient-variation bounds~\citep{COLT'12:VT} to achieve final regret bounds of $\O(\frac{1}{\lambda}\log V_T)$ for $\lambda$-strongly convex functions and $\O(\frac{d}{\alpha}\log V_T)$ for $\alpha$-exp-concave functions.
However, for the convex case, since the meta regret is $\O(\sqrt{T})$, it will dominate the final regret even if the base regret can be improved to $\O(\sqrt{V_T})$, resulting in an unfavorable $\O(\sqrt{T})$ overall regret for convex functions that is problem-independent.
In the following two sections, we will present novel methods building upon~\citet{ICML'22:universal} to fix the issue and achieve the desired universal gradient-variation regret across all the three function families.

\section{Method I: Online Ensemble with Injected Corrections}
\label{sec:method1-correct}
This section introduces our first method, \Correct, which achieves universal gradient-variation regret bounds for strongly convex, exp-concave, and convex functions.

\subsection{Requirement on Meta Algorithm}
\label{sec:require-meta-correction}
As discussed in \pref{subsec:framework}, the main challenge for the existing universal online learning method~\citep{ICML'22:universal} in achieving gradient-variation regret is the $\O(\sqrt{T})$ meta regret in the convex case.
Therefore, in this subsection, we first analyze the requirements for the meta algorithm and address them in the following subsections.

To achieve adaptivity, we build upon the optimistic online ensemble framework~\citep{JMLR'24:Sword++}, in which it is crucial to introduce the optimistic update in the meta algorithm.
Essentially, the meta algorithm is solving the problem of Prediction with Expert Advice (PEA) involving $N$ experts over $T$ rounds.
At each round $t \in [T]$, in addition to the feedback loss $\ellb_t \in [0,1]^N$ from the environment, optimistic online learning also receives an \emph{optimistic vector} (also called \emph{optimism}), denoted by $\m_{t+1} \in \R^N$, that encodes predictable future information.
Using this hint, the learner updates the weight vector $\p_{t+1} \in \Delta_N$ to minimize cumulative regret: $\sumT \inner{\ellb_t} {\p_t} - \min_{i \in [N]} \sumT \ell_{t,i}$.

As shown in the analysis surrounding Eqs.~\eqref{eq:second-order-regret}--\eqref{eq:meta-regret-exp-concave}, the \emph{second-order regret} of the meta algorithm is crucial for ensuring universality across different function families.
To enjoy gradient-variation adaptivity in the convex case, it is necessary to further incorporate the optimistic update in the meta algorithm.
An example of such an algorithm is the \omlprod algorithm~\citep{NIPS'16:Optimistic-ML-Prod}, which can be viewed as an optimistic variant of \mlprod~\citep{COLT'14:ML-Prod} used in~\citep{ICML'22:universal}.
While we do not present algorithmic details here, we give its \emph{optimistic second-order regret bound} in the following form:
\begin{equation}
  \label{eq:optimistic-second-order-regret}
  \sumT \inner{\ellb_t}{\p_t} -  \sumT \ell_{t,\is} \le  \O\left( \sqrt{\sumT \Big(r_{t, \is} - m_{t, \is}\Big)^2} \right),
\end{equation}
where $r_{t,i} = \inner{\ellb_t}{\p_t} - \ell_{t,i} = \inner{\nabla f_t(\x_t)}{\x_t - \x_{t,i}}$ is the instantaneous regret of the $i$-th base learner, since we set the feedback loss as $\ellb_{t,i} = \inner{\nabla f_t(\x_t)}{\x_{t,i}}$ in the meta update.
We are now in a position to design an optimistic vector $\m_{t} \in \R^N$ to attain a favorable meta regret, particularly to avoid the $\O(\sqrt{T})$ meta regret in the convex case.

Indeed, to achieve gradient-variation adaptivity with $V_T \triangleq \sumTT \sup_{\x \in \X} \|\nabla f_t(\x) - \nabla f_{t-1}(\x)\|^2$, a common approach in the literature~\citep{COLT'12:VT,JMLR'24:Sword++} is to first obtain an upper bound related to \emph{empirical gradient variations}, defined as $\Vb_T \triangleq \sumTT \|\nabla f_t(\x_t) - \nabla f_{t-1}(\x_{t-1})\|^2$, and then account for the additional positive term introduced by the smoothness of the online functions (\pref{assum:smoothness}).
In fact, we can relax the assumption to the following one, which only requires the smoothness over the feasible domain $\X$ rather than the entire space $\R^d$. 
\begin{myAssum}[Smoothness over $\X$]
  \label{assum:smoothness-X}
  For each $t \in [T]$, the online function $f_t(\cdot)$ is $L$-smooth, i.e., $\|\nabla f_t(\x) - \nabla f_t(\y)\| \le L \|\x - \y\|$ holds for any $\x, \y \in \X$.
\end{myAssum}

We then have the following decomposition of the empirical gradient variation.
\begin{myLemma}[Empirical Gradient Variation Conversion]
    \label{lemma:empirical-GV-stability}
    Under~\pref{assum:smoothness-X}, the empirical gradient variation can be upper bounded as follows:
    \begin{equation}
        \label{eq:empirical-gradient-variation}
        \begin{split}
        \Vb_T \le {} & 2\sumTT \|\nabla f_t(\x_t) - \nabla f_{t-1}(\x_t)\|^2 + 2\sumTT \|\nabla f_{t-1}(\x_t) - \nabla f_{t-1}(\x_{t-1})\|^2\\
        \le {} & 2 \sumTT \sup\nolimits_{\x \in \X} \|\nabla f_t(\x) - \nabla f_{t-1}(\x)\|^2 + 2 L^2 \sumTT \|\x_t - \x_{t-1}\|^2.
        \end{split}
    \end{equation}
\end{myLemma}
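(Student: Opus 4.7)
The plan is to use a standard add-and-subtract trick to split each summand of $\bar V_T$ into a part capturing the change of the function and a part capturing the change of the decision, and then invoke smoothness to control the latter.

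First I would rewrite each term of $\bar V_T$ by inserting $\nabla f_{t-1}(\x_t)$:
\begin{equation*}
\nabla f_t(\x_t) - \nabla f_{t-1}(\x_{t-1}) = \bigl[\nabla f_t(\x_t) - \nabla f_{t-1}(\x_t)\bigr] + \bigl[\nabla f_{t-1}(\x_t) - \nabla f_{t-1}(\x_{t-1})\bigr].
\end{equation*}
Applying $\|a+b\|^2 \le 2\|a\|^2 + 2\|b\|^2$ termwise and summing over $t=2,\ldots,T$ yields the first inequality of the lemma directly.

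For the second inequality I would bound the two resulting sums separately. For the first sum, each norm is a pointwise evaluation at $\x_t \in \X$, so I can trivially upper bound it by the supremum over $\X$:
\begin{equation*}
\|\nabla f_t(\x_t) - \nabla f_{t-1}(\x_t)\|^2 \le \sup_{\x \in \X} \|\nabla f_t(\x) - \nabla f_{t-1}(\x)\|^2.
\end{equation*}
For the second sum, since both $\x_t$ and $\x_{t-1}$ lie in $\X$, \pref{assum:smoothness-X} applied to $f_{t-1}$ gives $\|\nabla f_{t-1}(\x_t) - \nabla f_{t-1}(\x_{t-1})\|^2 \le L^2\|\x_t - \x_{t-1}\|^2$. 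Combining these two bounds and multiplying by the factor of $2$ produces the stated inequality.

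There is no substantive obstacle here; the only mild subtlety is that \pref{assum:smoothness-X} is stated over $\X$ rather than the full space $\R^d$, so I should explicitly note that the two iterates $\x_t, \x_{t-1}$ appearing in the smoothness step indeed lie in $\X$, which is guaranteed by the OCO protocol. This is exactly why the authors bothered to relax \pref{assum:smoothness} to \pref{assum:smoothness-X}: the intermediate evaluation point $\x_t$ introduced by the add-subtract step stays inside the feasible domain, so smoothness on $\X$ suffices.
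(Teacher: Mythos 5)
Your proof is correct and is exactly the approach implied by the lemma's statement: the paper presents the two-line decomposition without a separate proof, and your add-and-subtract of $\nabla f_{t-1}(\x_t)$ followed by $\|a+b\|^2 \le 2\|a\|^2 + 2\|b\|^2$, the supremum bound, and \pref{assum:smoothness-X} applied to $f_{t-1}$ at the feasible iterates $\x_t, \x_{t-1} \in \X$ supplies precisely the omitted details.
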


As a result, we can achieve the favorable $V_T$-type meta regret by eliminating the positive stability term of the final decisions, i.e., $\norm{\x_t - \x_{t-1}}^2$.
Following the analysis in previous work~\citep{JMLR'24:Sword++}, it can be observed that the final decision $\x_t = \sum_{i=1}^N p_{t,i} \x_{t,i}$ admits a meta-base ensemble update, leading to the following decomposition:
\begin{equation}
  \label{eq:decompose}
  \|\x_t - \x_{t-1}\|^2 \lesssim \|\p_t - \p_{t-1}\|_1^2 + \sumN p_{t,i} \|\x_{t,i} - \x_{t-1,i}\|^2,
\end{equation}
where the first part is the meta learner's stability, and the second one is a weighted version of the base learners' stability.
The proof is provided in \pref{lem:decompose}.
For now, we focus on the meta stability term, $\|\p_t - \p_{t-1}\|_1^2$, which typically requires the meta algorithm to contribute a negative regret of the same form to cancel it out, as pioneered in~\citep{NeurIPS'20:Sword} and further developed in the follow-up works~\citep{JMLR'24:Sword++,ICML'22:TVgame}.

Now the requirements for the meta algorithm are clear: it must not only ensure a regret upper bound with a negative stability term, but also provide a concrete optimism that attains the empirical gradient variation across different function types.
Specifically,
\begin{enumerate}
  \item[\rom{1}] \textbf{Regret Bound:} The meta algorithm needs to ensure the following \emph{optimistic second-order meta regret} bound with \emph{negative stability terms}:
  \begin{equation}
    \label{eq:meta-goal}
    \sumT \inner{\ellb_t}{\p_t - \e_{\is}} \le  \O\left( \sqrt{\sumT \Big(r_{t, \is} - m_{t, \is}\Big)^2} - \sumTT \|\p_t - \p_{t-1}\|_1^2 \right),  
  \end{equation}
  or other similar formulations.
  \item[\rom{2}] \textbf{Optimism Design:} The meta algorithm needs a concrete and feasible design for the optimism $\m_t \in \R^N$ that can effectively unify various function types to achieve the desired $\bar{V}_T$-type (empirical gradient variation) bound.
\end{enumerate}

Based on the above, we briefly clarify our choice of meta algorithm.
The meta algorithm should achieve an optimistic second-order regret bound while preserving the negative stability terms as shown in~\pref{eq:meta-goal}.
For this purpose, instead of using the Prod-type update like \omlprod, we focus on the mirror-descent-type update, which is well-studied and proven to enjoy negative stability terms in analysis.
To the best of our knowledge, the \emph{only} one satisfying both requirements so far is the \textsf{Multi-scale Multiplicative-weight with Correction} (\msmwc) proposed by \citet{COLT'21:impossible-tuning}, which updates as follows:
\begin{equation}
  \label{eq:msmwc}
  \p_t = \argmin_{\p \in \Delta_d}\ \bbr{\inner{\m_t}{\p} + \D_{\psi_t}(\p, \pbh_t)},\quad \pbh_{t+1} = \argmin_{\p \in \Delta_d}\ \bbr{\inner{\ellb_t + \b_t}{\p} + \D_{\psi_t}(\p, \pbh_t)},
\end{equation}
where $\psi_t(\p) = \sum_{i=1}^d \varepsilon_{t,i}^{-1} p_i \log p_i$ is the weighted negative entropy regularizer with time-coordinate-varying learning rate $\varepsilon_{t,i}$, $\m_t$ is the optimism, $\ellb_t$ is the loss vector, and $\b_t$ is a bias term, which is key to solving the ``impossible tuning'' issue~\citep{COLT'21:impossible-tuning}.

Next, we analyze the negative terms in \msmwc, which are omitted by the authors in their analysis and turn out to be crucial for our purpose.
In \pref{lem:MsMwC-refine} below, we extend Lemma 1 of \citet{COLT'21:impossible-tuning} by explicitly exhibiting the negative terms in \msmwc.
The proof is deferred to \pref{app:msmwc}.
\begin{myLemma}[\msmwc Regret]
  \label{lem:MsMwC-refine}
  If $\max_{t \in [T], i \in [d]} \{|\ell_{t,i}|, |m_{t,i}|\}\le 1$ and $\varepsilon_i \le 1/32$, then \msmwc in~\pref{eq:msmwc} with time-invariant step sizes (i.e., $\varepsilon_{t,i} = \varepsilon_i$ for any $t \in [T]$)\footnote{We only focus on the proof with fixed learning rate, since it is sufficient for our analysis.} and bias term $b_{t,i} = 16 \varepsilon_{t,i} (\ell_{t,i} - m_{t,i})^2$ enjoys:
  \begin{equation*}
      \begin{aligned}
        \sumT \inner{\ellb_t}{\p_t} - \sumT \ell_{t,\is} \le \frac{1}{\varepsilon_\is} \log \frac{1}{\ph_{1,\is}} + \sum_{i=1}^d \frac{\ph_{1,i}}{\varepsilon_i} - 8 \sumT \sum_{i=1}^d \varepsilon_i p_{t,i} (\ell_{t,i} - m_{t,i})^2 &\\
        + 16 \varepsilon_\is \sumT (\ell_{t,\is} - m_{t,\is})^2 - 4 \sumTT \|\p_t - \p_{t-1}\|_1^2 &.
      \end{aligned}
  \end{equation*}
\end{myLemma}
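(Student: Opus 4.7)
The plan is to rerun the standard MsMwC analysis of Wei and Luo while carefully preserving the negative Bregman-divergence terms that their proof discards, and then to convert these preserved terms into the stated $\ell_1$-stability bound $-4\sumTT \|\p_t - \p_{t-1}\|_1^2$. I would first invoke the two-step optimistic OMD regret identity: applying the first-order optimality condition to the optimistic step at comparator $\pbh_{t+1}$ and to the corrected step at comparator $\e_\is$, then adding, yields
\begin{align*}
\inner{\ellb_t}{\p_t - \e_\is} \le {}& \D_{\psi_t}(\e_\is, \pbh_t) - \D_{\psi_t}(\e_\is, \pbh_{t+1}) + \inner{\ellb_t - \m_t}{\p_t - \pbh_{t+1}} \\
& - \D_{\psi_t}(\pbh_{t+1}, \p_t) - \D_{\psi_t}(\p_t, \pbh_t) - \inner{\b_t}{\pbh_{t+1} - \e_\is}.
\end{align*}
Summing the first line telescopes the $\D_{\psi_t}(\e_\is, \cdot)$ differences into $\frac{1}{\varepsilon_\is}\log(1/\ph_{1,\is}) + \sum_i \ph_{1,i}/\varepsilon_i$ in the standard multi-scale fashion, and the coordinate $b_{t,\is} = 16\varepsilon_\is(\ell_{t,\is} - m_{t,\is})^2$ contributes the stated comparator-square term.

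Next, I would bound the cross term by a weighted Fenchel--Young inequality against the local norm induced by $\nabla^{-2}\psi_t(\xi) = \mathrm{diag}(\varepsilon_i \xi_i)$. Under the hypothesis $\varepsilon_i \le 1/32$ and with the Young parameter tuned to absorb only a fraction of the available $\D_{\psi_t}(\pbh_{t+1},\p_t)$, the cross term is upper bounded by $\tfrac12 \D_{\psi_t}(\pbh_{t+1},\p_t) + C\sum_i \varepsilon_i p_{t,i}(\ell_{t,i} - m_{t,i})^2$ for a suitable constant $C$. The crucial reconciliation is between this positive local-norm quantity and the bias contribution $-\inner{\b_t}{\pbh_{t+1}} \approx -16\sum_i \varepsilon_i p_{t,i}(\ell_{t,i} - m_{t,i})^2$, where I invoke the fact that one multiplicative step under $\varepsilon_i \le 1/32$ ensures $\ph_{t+1,i}$ is a constant multiple of $p_{t,i}$. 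Choosing the Young parameter so that the residual leaves a net coefficient of $-8$ produces the stated $-8 \sumT \sum_i \varepsilon_i p_{t,i}(\ell_{t,i} - m_{t,i})^2$ term. At this stage, the inequality still carries two surviving negative divergences, $-\tfrac12 \D_{\psi_t}(\pbh_{t+1},\p_t)$ and $-\D_{\psi_t}(\p_t,\pbh_t)$, which will jointly yield the stability bound.

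To extract the $\ell_1$-stability, I would reindex the first surviving divergence by shifting $t \mapsto t-1$, so that at round $t$ one has access to $-\tfrac12 \D_{\psi_{t-1}}(\pbh_t, \p_{t-1})$. A Pinsker-type inequality arising from the strong convexity of the weighted negative entropy with respect to the $\ell_1$-norm on the simplex gives $\D_{\psi_t}(\p_t, \pbh_t) \ge c_1 \|\p_t - \pbh_t\|_1^2$ and $\D_{\psi_{t-1}}(\pbh_t, \p_{t-1}) \ge c_1 \|\pbh_t - \p_{t-1}\|_1^2$ with $c_1$ sufficiently large under $\varepsilon_i \le 1/32$. The triangle inequality $\|\p_t - \p_{t-1}\|_1^2 \le 2\|\p_t - \pbh_t\|_1^2 + 2\|\pbh_t - \p_{t-1}\|_1^2$ then combines these into the target $-4 \sumTT \|\p_t - \p_{t-1}\|_1^2$, with the boundary round $t=1$ absorbed into the leading entropy constants.

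The main obstacle is constant bookkeeping: the bias coefficient $16$, the threshold $\varepsilon_i \le 1/32$, the Young parameter, the fraction of $\D_{\psi_t}(\pbh_{t+1},\p_t)$ left unused, and the Pinsker constant must all be tuned jointly so that three coefficients---$-8$ on the local-norm square, $+16\varepsilon_\is$ on the comparator square, and $-4$ on the $\ell_1$-stability---emerge simultaneously with their stated integer values. A secondary subtlety is the approximation $\ph_{t+1,i} \asymp p_{t,i}$ inside $\inner{\b_t}{\pbh_{t+1}}$, which relies on the one-step multiplicative update preserving mass up to a universal constant under the step-size condition; absorbing this constant cleanly into the coefficient accounting is what ultimately pins down the constants $8$ and $4$ rather than some unwieldy fractions.
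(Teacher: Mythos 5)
Your skeleton agrees with the paper's: a two-step optimistic OMD inequality; telescoping the comparator Bregman divergences to the entropy constants; a local-norm bound on the cross term using the argmax/Fenchel--Young device; and extracting $-4\sumTT\|\p_t-\p_{t-1}\|_1^2$ from the retained divergences via $\D_{\psi_t}(\a,\b)\ge 32\,\KL(\a,\b)$ (from $\varepsilon_i\le 1/32$), Pinsker, a reindex $t\mapsto t-1$, and the triangle inequality. Those parts are aligned.

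The substantive deviation is in how you handle the bias $\b_t$, and it is not merely bookkeeping. The paper applies the OMD inequality to the \emph{corrected} loss $\ellb_t+\b_t$, so the cross term that gets bounded is $\inner{\ellb_t+\b_t-\m_t}{\p_t-\pbh_{t+1}}$. The argmax trick then yields a bound of the form $2\sum_i\varepsilon_i p_{t,i}(\ell_{t,i}-m_{t,i}+b_{t,i})^2\le 8\sum_i\varepsilon_i p_{t,i}(\ell_{t,i}-m_{t,i})^2$ --- naturally $\p_t$-weighted --- and the bias is moved back as $-\inner{\b_t}{\p_t-\e_\is}=-16\sum_i\varepsilon_i p_{t,i}(\ell_{t,i}-m_{t,i})^2+16\varepsilon_\is(\ell_{t,\is}-m_{t,\is})^2$, which is \emph{also} $\p_t$-weighted. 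The cancellation $8-16=-8$ is exact, with no appeal to any relation between $\pbh_{t+1}$ and $\p_t$. In your plan the cross term is the ``clean'' $\inner{\ellb_t-\m_t}{\p_t-\pbh_{t+1}}$ and the bias survives as $-\inner{\b_t}{\pbh_{t+1}}$, which is $\pbh_{t+1}$-weighted, so reconciling it with the $\p_t$-weighted cross-term bound forces the multiplicative-stability claim $\ph_{t+1,i}\gtrsim p_{t,i}$. That is a genuine extra lemma (both $\p_t$ and $\pbh_{t+1}$ are projections of $\pbh_t$ through different potentials, so the ratio also carries a renormalization factor), and it is lossy: under $\varepsilon_i\le 1/32$ and $|\ell_{t,i}|,|m_{t,i}|\le 1$ one gets roughly $\ph_{t+1,i}/p_{t,i}\ge e^{-3/8}\approx 0.69$, which only barely leaves room for the stated $-8$. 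You flag this as a ``secondary subtlety,'' but it is in fact the step the paper's decomposition is designed to eliminate, and without carrying it out the proposal does not pin down the constants. If you want your route to go through, you should either prove and invoke that multiplicative lower bound explicitly, or --- simpler --- fold $\b_t$ into the cross term as the paper does.
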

With proper step size tuning, \pref{lem:MsMwC-refine} derives an optimistic second-order regret bound with negative stability terms, i.e., $\Ot\Big(\sqrt{\sumT (\ell_{t,\is} - m_{t,\is})^2} - \sumTT \|\p_t - \p_{t-1}\|_1^2\Big)$, which is much closer to the desired one in~\pref{eq:meta-goal}, where $\Ot(\cdot)$-notation omits $\poly(\log T)$ factors.
Nonetheless, there are two caveats.
\begin{enumerate}
  \item[\rom{1}] The second-order bound of \msmwc is based on $(\ell_{t,\is} - m_{t,\is})^2$, which differs from $(r_{t,\is} - m_{t,\is})^2$ in \pref{eq:meta-goal} and is in fact stronger. 
  To see this, note that the OMD update of~\citet{COLT'21:impossible-tuning} in~\pref{eq:msmwc} enjoys a \emph{shifting-invariant property}, meaning that adding a constant to \emph{all} entries of the loss vector does not change the update of $\p_t$. 
  Therefore, we can define $\tilde{m}_{t,i} = \inner{\nabla f_t(\x_t)}{\x_t} - m_{t,i}$ for any $i \in [d]$ as the optimism in \msmwc. 
  With this choice, $(\ell_{t,\is} - \tilde{m}_{t,\is})^2 = (r_{t,\is} - m_{t,\is})^2$, while the update of $\p_t$ remains unchanged. 
  In other words, even when using the original optimism $m_{t,i}$, the algorithm still enjoys the second-order regret bound scaling with $(r_{t,\is} - m_{t,\is})^2$.

  \item[\rom{2}] The aforementioned bound of \msmwc omits $\poly(\log T)$ factors.
  Unfortunately, this makes it infeasible for the strongly convex or exp-concave cases, since the target rate is $\O(\log V_T)$, and an $\Ot(1) = \O(\log T)$ meta regret would ruin the desired gradient-variation adaptivity.
\end{enumerate}

In~\pref{subsec:universal-optimism}, we design optimism compatible with various function types, where the shift-invariant property plays a key role.
We then introduce a new meta algorithm in~\pref{subsec:meta-correct}, which builds on \msmwc and further consists of a two-layer structure to eliminate the additional $\O(\log T)$ factor in the meta regret.
Finally, in~\pref{subsec:method1-overall}, we combine these components to present the overall algorithm and its regret guarantees.

\subsection{Optimistic Second-Order Meta Regret: A Universal Optimism Design}
\label{subsec:universal-optimism}

In the following, we will demonstrate that designing an optimism $m_{t,i}$ to effectively unify various function types with the desired adaptivity is non-trivial, necessitating novel ideas.

\paragraph{A First Attempt on Optimism Design.}
Examining the optimistic second-order regret bound of the meta algorithm in~\pref{eq:optimistic-second-order-regret} and the analysis around~\pref{eq:meta-regret-exp-concave}, it is known that the meta regret for base learners associated with exp-concave and strongly convex functions (i.e., $i \in [N_\scvx]$ and $i \in [N_\exp]$, respectively) is bounded by a constant.
Therefore, a natural choice for the optimism $\m_t \in \R^N$ is:
\begin{equation}
  \label{eq:optimism-natural-1}
  m_{t,i} = \inner{\nabla f_{t-1}(\x_{t-1})}{\x_{t-1} - \x_{t,i}} \mbox{ for } i \in [N_{\cvx}], \mbox{ and } m_{t,i} = 0 \mbox{ for } i \in [N_{\exp}] \cup [N_{\scvx}].\footnote{In \pref{subsec:universal-optimism} and \pref{subsec:meta-correct}, we ignore the requirement of $\max_{t \in [T], i \in [d]} \{|\ell_{t,i}|, |m_{t,i}|\}\le 1$ only for clarity. When presenting the final and formal setups of the losses and optimisms of the meta algorithm in \pref{subsec:method1-overall}, we will ensure that this requirement is satisfied using normalization.}
\end{equation}
This essentially keeps the optimism for exp-concave and strongly convex base learners to zero, while approximating the instantaneous regret $r_{t,i} = \inner{\nabla f_t(\x_t)}{\x_t - \x_{t,i}}$ for the convex base learner as closely as possible using the last-round decision $\x_{t-1}$ and the latest base decisions $\{\x_{t,i}\}_{i \in [N]}$.
However, for the non-zero entries, it becomes challenging to quantify the upper bound of the term $(r_{t,i} - m_{t,i})^2 = (\inner{\nabla f_t(\x_t)}{\x_t - \x_{t,i}} - \inner{\nabla f_{t-1}(\x_{t-1})}{\x_{t-1} - \x_{t,i}})^2$ due to a mismatch in indices.

To tackle this challenge, inspired by the literature~\citep{NIPS'16:Optimistic-ML-Prod,COLT'21:impossible-tuning}, one possibility is to make the optimism slightly ``lookahead'', leveraging the shift-invariant property of \msmwc.
Specifically, we can set the optimism vector $\m_t \in \R^N$ as:
\begin{equation}
  \label{eq:optimism-natural-2}
  m_{t,i} = \inner{\nabla f_{t-1}(\x_{t-1})}{ \x_t - \x_{t,i} }, ~~\forall i\in [N].
\end{equation}
Although $\x_t$ is unknown when defining $m_{t,i}$, all entries of $\m_t$ share the same unknown value $\inner{\nabla f_{t-1}(\x_{t-1})}{ \x_t}$, making it equivalent to using $\widetilde{m}_{t,i} = \inner{\nabla f_{t-1}(\x_{t-1})}{\x_{t,i}}$ for $i \in [N]$, and the OMD-type update remains unchanged.
Under the optimism in~\pref{eq:optimism-natural-2}, the second-order optimistic quantity in the meta regret can be bounded as follows:
\begin{equation*}
  (r_{t, \is} - m_{t, \is})^2 \lesssim \left\{ 
    \begin{matrix}
      \begin{aligned}
        & \|\x_t - \x_{t,\is}\|^2, & \textnormal{(strongly convex)}\\[1mm]
        & \inner{\nabla f_{t}(\x_{t}) - \nabla f_{t-1}(\x_{t-1})}{\x_t - \x_{t,\is}}^2, & \textnormal{(exp-concave)}\\[1mm]
        & \|\nabla f_t(\x_t) - \nabla f_{t-1}(\x_{t-1})\|^2. & \textnormal{(convex)}
      \end{aligned}
    \end{matrix}
  \right. 
\end{equation*}

This works well for the convex case, since it yields a $\Vb_T$-type bound that can be converted into the desired $V_T$ bound by addressing the additional positive term later (see~\pref{lemma:empirical-GV-stability}).
It also works for the strongly convex case, where the upper bound is canceled by the curvature-induced negative term $-\norm{\x_t - \x_{t,\is}}^2$ from strong convexity (see \pref{def:sconvex}). 
However, this design~\eqref{eq:optimism-natural-2} would \emph{fail} for exp-concave base learners, because the curvature-induced negative term $- \inner{\nabla f_t(\x_t)}{\x_t - \x_{t,\is}}^2$ from exp-concavity (see~\pref{def:expconcave}) \emph{cannot} cancel the positive term $\inner{\nabla f_{t}(\x_{t}) - \nabla f_{t-1}(\x_{t-1})}{\x_t - \x_{t,\is}}^2$ in the meta regret due to a mismatch.

\paragraph{Our Unifying Optimism Design.}
To unify various types of functions, we propose a simple optimism design: set the optimism as the last-round instantaneous regret, i.e.,
\begin{equation}
  \label{eq:optimism-design}
  m_{t,i} \triangleq r_{t-1,i} = \inner{\nabla f_{t-1}(\x_{t-1})}{ \x_{t-1} - \x_{t,i} },~~\forall i\in[N].
\end{equation}
Unlike the ``lookahead'' design in~\eqref{eq:optimism-natural-2}, we simply use the last-round information.
The key idea is that, although the resulting optimistic second-order regret bound in \eqref{eq:optimistic-second-order-regret} cannot be perfectly canceled by the exp-concavity-induced negative term (i.e., $- r_{t,\is}^2$) on each round, it becomes manageable when \emph{aggregated over the entire horizon}:
\begin{equation}
  \label{eq:optimism-design-summation}
  \sumT (r_{t,\is} - m_{t,\is})^2 \overset{\eqref{eq:optimism-design}}{=} \sumT (r_{t,\is} - r_{t-1,\is})^2 \lesssim 4 \sumT r_{t,\is}^2.
\end{equation}
This holds because $r_{t,\is}$ and $r_{t-1,\is}$ differ only by one step, making the cumulative sum easier to control than the individual terms.
\pref{lem:universal-optimism} shows that this design achieves universality, in particular resolving the failure in the exp-concave case. 
\begin{myLemma}[Universality of Optimism]
  \label{lem:universal-optimism}
  Under \pref{assum:domain-boundedness}--\ref{assum:smoothness}, when setting the optimism as in \pref{eq:optimism-design}, it holds that
  \begin{align*}
    && \sumT (r_{t, \is} - m_{t, \is})^2 \lesssim 
    \left\{ 
      \begin{matrix}
        \begin{aligned}
          & \sumT \norm{\x_t - \x_{t,\is}}^2, & \text{(strongly convex)}\\[1mm]
          & \sumT \inner{\nabla f_t(\x_t)}{\x_t - \x_{t,\is}}^2, & \text{(exp-concave)}\\[1mm]
          & \sumTT \|\nabla f_t(\x_t) - \nabla f_{t-1}(\x_{t-1})\|^2. & \text{\qquad(convex)}
        \end{aligned}
      \end{matrix}
    \right.
  \end{align*}
\end{myLemma}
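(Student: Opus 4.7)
The plan rests on the identity $m_{t,\is} = r_{t-1,\is}$ enforced by \pref{eq:optimism-design}, so that
\begin{equation*}
r_{t,\is} - m_{t,\is} = r_{t,\is} - r_{t-1,\is}.
\end{equation*}
I would then treat each function class separately using two complementary tools: a \emph{shift} step, which applies $(a-b)^2 \le 2a^2 + 2b^2$ followed by a re-indexing of the sum to give $\sumT(r_{t,\is}-m_{t,\is})^2 \lesssim \sumT r_{t,\is}^2$ (up to a boundary constant), and a \emph{mixed} identity, which separates the gradient change from the decision change.

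For the strongly convex case, after the shift step I would invoke Cauchy--Schwarz together with \pref{assum:gradient-boundedness} to obtain $r_{t,\is}^2 = \inner{\nabla f_t(\x_t)}{\x_t - \x_{t,\is}}^2 \le G^2 \norm{\x_t - \x_{t,\is}}^2$, which is exactly the stated bound. For the exp-concave case, the shift step already produces $\sumT \inner{\nabla f_t(\x_t)}{\x_t - \x_{t,\is}}^2$, so no further manipulation is needed. The conceptual point previewed around \pref{eq:optimism-design-summation} is that the per-round quadratic cannot be absorbed by exp-concavity pointwise; only the cumulative version can, which is exactly what the shift trick supplies.

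For the convex case the shift is too lossy, since $r_{t,\is}^2$ is only uniformly $O(G^2 D^2)$. Instead, I would introduce $\nabla f_{t-1}(\x_{t-1})$ against the matched point $\x_t$ and use the mixed identity
\begin{equation*}
r_{t,\is} - m_{t,\is} = \inner{\nabla f_t(\x_t) - \nabla f_{t-1}(\x_{t-1})}{\x_t - \x_{t,\is}} + \inner{\nabla f_{t-1}(\x_{t-1})}{\x_t - \x_{t-1}},
\end{equation*}
square it, and apply \pref{assum:domain-boundedness} and \pref{assum:gradient-boundedness} to the two pieces. The first piece delivers the target empirical gradient variation $\sumTT \norm{\nabla f_t(\x_t) - \nabla f_{t-1}(\x_{t-1})}^2$ (up to a factor $D^2$).

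The main obstacle is the leftover stability contribution $\sumTT \norm{\x_t - \x_{t-1}}^2$ produced by the second piece, which is absent from the right-hand side of the convex bound. The plan for this is to rely on the negative stability regret required of the meta algorithm (cf.\ the target \pref{eq:meta-goal} and the decomposition \pref{eq:decompose}), so that the stability is cancelled at the meta level and only the empirical gradient variation survives in the final regret accounting. Keeping this bookkeeping consistent across all three function classes---so that the sharp $\log V_T$ rates for strongly convex and exp-concave are preserved while the convex case still collapses to $\bar V_T$---is the most delicate part of the overall argument, and it is exactly what motivates the simple optimism choice in \pref{eq:optimism-design}.
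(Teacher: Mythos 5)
Your shift step handles the strongly convex and exp-concave cases exactly as the paper does, and that part is fine: from $m_{t,\is}=r_{t-1,\is}$ you get $\sumT(r_{t,\is}-m_{t,\is})^2\le 2\sumT r_{t,\is}^2 + 2\sumT r_{t-1,\is}^2 \le 4\sumT r_{t,\is}^2 + \O(1)$, and then Cauchy--Schwarz and \pref{assum:gradient-boundedness} give $r_{t,\is}^2 \le G^2\|\x_t-\x_{t,\is}\|^2$ for the strongly convex line.

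The problem is your convex case. Your ``mixed identity'' is not an identity under the premise $m_{t,\is}=r_{t-1,\is}=\inner{\g_{t-1}}{\x_{t-1}-\x_{t-1,\is}}$ that you state at the top. Expanding carefully,
\begin{equation*}
r_{t,\is}-m_{t,\is}=\inner{\g_t-\g_{t-1}}{\x_t-\x_{t,\is}}+\inner{\g_{t-1}}{(\x_t-\x_{t-1})-(\x_{t,\is}-\x_{t-1,\is})},
\end{equation*}
whereas you wrote only $\inner{\g_t-\g_{t-1}}{\x_t-\x_{t,\is}}+\inner{\g_{t-1}}{\x_t-\x_{t-1}}$. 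You dropped the base-learner stability contribution $\inner{\g_{t-1}}{\x_{t,\is}-\x_{t-1,\is}}$; the two sides differ by exactly this term. (Your version would be correct if $m_{t,\is}$ were $\inner{\g_{t-1}}{\x_{t-1}-\x_{t,\is}}$, i.e.\ with today's $\x_{t,\is}$, but then $m_{t,\is}\ne r_{t-1,\is}$ and your shift step for the other two cases would no longer re-index cleanly. You cannot have both.) As a consequence your downstream bookkeeping identifies only the meta-level stability $\sumTT\|\x_t-\x_{t-1}\|^2$ as the leftover to be cancelled, when in fact the paper's bound after squaring also produces the base stability $\sumTT\|\x_{t,\is}-\x_{t-1,\is}\|^2$. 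This second term is handled differently: it is not cancelled at the meta layer via \pref{eq:meta-goal} but by the base learner's own intrinsic negative stability in the optimistic OMD analysis (cf.\ the injected corrections in \pref{eq:mid-loss} and the base-learner lemmas in Appendix E). Omitting it breaks the accounting for the convex rate.

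One further point to flag so it does not trip you up: the way the $\lesssim$ in the lemma statement is used for the convex line, the right-hand side $\sumTT\|\nabla f_t(\x_t)-\nabla f_{t-1}(\x_{t-1})\|^2$ is the headline term; the paper's own derivation (see the chain around \pref{eq:lem1 eq1}) in fact produces an upper bound of the form $\O(D^2\Vb_T + G^2 S_T^\x + G^2 S_{T,\is}^\x)$, and the stability pieces are then absorbed elsewhere. Any proof of this lemma must exhibit those stability remainders explicitly and state where they get cancelled, rather than silently folding one of them away.
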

The proof is in \pref{app:universal-optimism}.
For strongly convex and exp-concave learners, the meta regret is effectively canceled by curvature-induced negative terms, following the same analysis as in~\citet{ICML'22:universal} (see~\pref{eq:meta-regret-exp-concave}).
For the convex case, we achieve a $\bar{V}_T$-type bound scaling with the empirical gradient variation.
As shown in \pref{lemma:empirical-GV-stability}, this can be further reduced to the desired $\O(\sqrt{V_T})$ meta regret by addressing the extra positive stability term of the final decisions, i.e., $\norm{\x_t - \x_{t-1}}^2$, which will be discussed in the next subsection.

\subsection{A New Meta Algorithm: Negative Regret Terms and Injected Corrections}
\label{subsec:meta-correct}
In this part, we present the complete meta algorithm design for this problem.
To motivate this, we recall that the meta algorithm is required to ensure an optimistic second-order regret bound with negative stability terms as in~\pref{eq:meta-goal}.
\msmwc~\citep{COLT'21:impossible-tuning} is the only known algorithm satisfying both requirements, but its regret bound contains an additional $\poly(\log T)$ factor, making it \emph{infeasible} for strongly convex or exp-concave cases.

To address this issue, we design a new meta algorithm termed \textsf{\msoms} (\textsf{MsMwC-over-MsMwC}), which itself is a \emph{two-layer} algorithm using \msmwc as both meta and base learners.
The key insight is that the $\log T$ factor arises from the multi-scale regularizer with \emph{time-varying} learning rates.
While this regularizer is crucial for resolving the ``impossible tuning'' issue addressed in their paper, it introduces undesired $\poly(\log T)$ factors due to the clipping issue, which is intolerable for our setting.
To overcome this, our proposed \msoms meta algorithm uses \msmwc with \emph{time-invariant} learning rates as a base learner, maintaining multiple base learners with different candidate learning rates and dynamically searching for a suitable one to achieve adaptivity.
This approach effectively replaces the additional $\O(\log T)$ factors with $\O(\log \sum_t (\ell_{t,\is} - m_{t,\is})^2)$, a tolerable overhead introduced by the two-layer structure, making it possible to achieve the desired $\O(\log V_T)$ regret for strongly convex and exp-concave cases.

\begin{algorithm}[!t]
    \caption{\textsf{MsMwC-over-MsMwC} (\textsf{\msoms}): Meta algorithm of \Correct}
    \label{alg:2layer-msmwc}
    \begin{algorithmic}[1]
    \REQUIRE Time horizon $T$, hyperparameter $C_0$

    \STATE \textbf{Initialize}: 
    \begin{itemize}[left=5mm, labelsep=5pt, itemsep=1pt, topsep=1pt]
      \item \MSMWCtop with learning rates $\varepsilon^\Top_{t,j} = \varepsilon^\Top_j = \frac{1}{C_0 \cdot 2^j}$ for all $t \in [T]$ and initial decision $q_{1,j}^\Top=\qh_{1,j}^\Top = \frac{(\varepsilon_j^\Top)^2}{\sumM (\varepsilon_j^\Top)^2}$ for $j \in [M]$
      \item \MSMWCmid with learning rates $\varepsilon^\Mid_{t,j,i} = 2 \varepsilon_j^\Top$ for all $t \in [T]$ and initial decision $q_{1,j,i}^\Mid=\qh_{1,j,i}^\Mid = \frac1N$ for $i \in [N]$
      \item Number of {\MSMWCmid}'s $M = \ceil{\log_2 T}$, number of base learners $N = 2 \ceil{\log_2 T} + 1$
    \end{itemize}

    \FOR{$t=1$ {\bfseries to} $T$}
      \STATE Compute the aggregated weight for the next round:
    $\p_{t} = \sumM q_{t,j}^\Top \q_{t,j}^\Mid \in \Delta_N$
        \STATE For all $j \in [M]$, the $j$-th \MSMWCmid updates to $\q_{t+1,j}^\Mid \in \Delta_N$ using $b^\Mid_{t,j,i} = 16 \varepsilon^\Mid_{t,j,i} (\ell^\Mid_{t,j,i} - m^\Mid_{t,j,i})^2$ via the following rule: 
        \begin{equation}
            \label{eq:MSMWC-mid}
            \begin{aligned}
                \qbh_{t+1,j}^\Mid = {} & \argmin_{\q\in\Delta_N} \left\{\inner{\ellb_{t,j}^\Mid + \b_{t,j}^\Mid}{\q} + \D_{\psi_{t,j}^\Mid}(\q, \qbh_{t,j}^\Mid) \right\},\\
                \q_{t+1,j}^\Mid = {} & \argmin_{\q\in\Delta_N} \left\{ \inner{\m_{t+1,j}^\Mid}{\q} + \D_{\psi_{t+1,j}^\Mid}(\q, \qbh_{t+1,j}^\Mid) \right\}.
            \end{aligned}
    \end{equation}

    \STATE \MSMWCtop updates to $\q_{t+1}^\Top \in \Delta_M$ using $b^\Top_{t,j} = 16 \varepsilon^\Top_{t,j} (\ell^\Top_{t,j} - m^\Top_{t,j})^2$ via:
    \begin{equation}
        \label{eq:MSMWC-top}
        \begin{aligned}
            \qbh_{t+1}^\Top = {} & \argmin_{\q\in\Delta_M} \left\{\inner{\ellb_t^\Top + \b_t^\Top}{\q} + \D_{\psi_t^\Top}(\q, \qbh_t^\Top) \right\},\\
            \q_{t+1}^\Top = {} & \argmin_{\q\in\Delta_M} \left\{ \inner{\m_{t+1}^\Top}{\q} + \D_{\psi_{t+1}^\Top}(\q, \qbh_{t+1}^\Top) \right\}.
        \end{aligned}
    \end{equation}

    \ENDFOR
    \end{algorithmic}
\end{algorithm}

\paragraph{Meta Algorithm.} 
\msoms updates in the following way.
The first layer runs a single \msmwc (marked as \MSMWCtop) on $\Delta_M$, whose decision is denoted by $\smash{\q_t^\Top \in \Delta_M}$.
It follows the general update rule of \eqref{eq:msmwc} with its own losses $\seq{\ellb_t^\Top}$, optimisms $\seq{\m_t^\Top}$, bias terms $\smash{\seq{\b_t^\Top}}$, and learning rates $\seq{\{\varepsilon^\Top_{t,j}\}_{j=1}^M}$.
The weighted negative entropy regularizer $\psi_{t}^\Top$ is defined as $\psi_{t}^\Top(\q) = \sum_{j=1}^M ({\varepsilon_{t,j}^{\Top}})^{-1} q_j \log q_j$.
\MSMWCtop further connects with $M$ \mbox{\msmwc}s (marked as \MSMWCmid) in the second layer.
The decision of the $j$-th \MSMWCmid is denoted by $\smash{\q_{t,j}^\Mid \in \Delta_N}$, which is updated via the same update rule as in~\eqref{eq:msmwc} with its own losses $\seq{\ellb_{t,j}^\Mid}$, optimisms $\seq{\m_{t,j}^\Mid}$, bias terms $\seq{\b_{t,j}^\Mid}$, and learning rates $\seq{\{\varepsilon^\Mid_{t,j,i}\}_{i=1}^N}$.
The weighted negative entropy regularizer $\psi_{t,j}^\Mid$ is defined as $\psi_{t,j}^\Mid(\q) = \sum_{i=1}^N ({\varepsilon_{t,j,i}^\Mid})^{-1} q_i \log q_i$.
The final output of \msoms at the $t$-th iteration is:
\begin{equation}
  \label{eq:meta-output}
  \p_t = \sumM q_{t,j}^\Top \q_{t,j}^\Mid \in \Delta_N.
\end{equation}
The details of the two-layer meta algorithm \msoms are described in~\pref{alg:2layer-msmwc}.

Building on \pref{lem:MsMwC-refine}, we provide an analysis for the two-layer meta learner \msoms, which largely follows Theorems 4 and 5 of \citet{COLT'21:impossible-tuning}, but includes additional negative stability terms.
The proof is deferred to \pref{app:two-layer-MsMwC}.
\begin{myLemma}[Two-layer \msoms]
  \label{lem:two-layer-MsMwC}
  If $|\ell^\Top_{t,j}|, |m^\Top_{t,j}|, |\ell^\Mid_{t,j,i}|, |m^\Mid_{t,j,i}| \le 1$ and $(\ell^\Top_{t,j} - m^\Top_{t,j})^2 = \inner{\ellb^\Mid_{t,j} - \m^\Mid_{t,j}}{\q^\Mid_{t,j}}^2$ for any $t \in [T]$, $j \in [M]$, and $i \in [N]$, \msoms (\pref{alg:2layer-msmwc}) satisfies
  \begin{equation*}
    \sumT \inner{\ellb_t^\Top}{\q_t^\Top - \e_\js} + \sumT \inner{\ellb_{t,\js}^\Mid}{\q_{t,\js}^\Mid - \e_\is} \le \frac{1}{\varepsilon^\Top_{\js}} \log \frac{N}{3 C_0^2 (\varepsilon^\Top_{\js})^2} + 32 \varepsilon^\Top_{\js} \Vs - \frac{C_0}{2} S_T^\Top - \frac{C_0}{4} S_{T,\js}^\Mid,
  \end{equation*}
  where the terms are defined as follows:
  \begin{itemize}
    \item $\Vs \define \sumTT (\ell^\Mid_{t,\js,\is} - m^\Mid_{t,\js,\is})^2$ is the second-order quantity;
    \item $S_T^\Top \define \sumTT \|\q_t^\Top - \q_{t-1}^\Top\|_1^2$ measures the stability of \MSMWCtop;
    \item $S_{T,j}^\Mid \define \sumTT \|\q_{t,j}^\Mid - \q_{t-1,j}^\Mid\|_1^2$ measures the stability of \MSMWCmid.
  \end{itemize}
\end{myLemma}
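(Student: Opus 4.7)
The plan is to apply Lemma~\ref{lem:MsMwC-refine} twice — once to the top-layer \MSMWCtop against comparator $\js$, and once to the $\js$-th middle-layer \MSMWCmid against comparator $\is$ — and then glue the two bounds using the cross-layer identity $(\ell^\Top_{t,\js} - m^\Top_{t,\js})^2 = \inner{\ellb^\Mid_{t,\js} - \m^\Mid_{t,\js}}{\q^\Mid_{t,\js}}^2$ supplied by the hypothesis.

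First I would instantiate Lemma~\ref{lem:MsMwC-refine} on \MSMWCtop with the step sizes $\varepsilon^\Top_j = 1/(C_0 2^j)$ and the prescribed prior $\qh^\Top_{1,j} = (\varepsilon^\Top_j)^2/\sum_k(\varepsilon^\Top_k)^2$. Bounding the geometric tail $\sum_k (\varepsilon^\Top_k)^2 \le 1/(3C_0^2)$ turns the KL-type constant $\tfrac{1}{\varepsilon^\Top_\js}\log(1/\qh^\Top_{1,\js})$ into $\tfrac{1}{\varepsilon^\Top_\js}\log\tfrac{1}{3C_0^2(\varepsilon^\Top_\js)^2}$, and the auxiliary $\sum_j \qh^\Top_{1,j}/\varepsilon^\Top_j$ reduces to an $\O(1/\varepsilon^\Top_\js)$ additive constant. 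I would then instantiate Lemma~\ref{lem:MsMwC-refine} on the $\js$-th \MSMWCmid with the uniform prior $\qh^\Mid_{1,\js,i}=1/N$ and step sizes $\varepsilon^\Mid_{\js,i}=2\varepsilon^\Top_\js$, whose only surviving additive constants are $\tfrac{\log N}{2\varepsilon^\Top_\js}$ and $\tfrac{1}{2\varepsilon^\Top_\js}$.

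The heart of the argument is the cross-layer cancellation. Summing the two bounds, the top layer contributes the positive term $+16\varepsilon^\Top_\js\sumT(\ell^\Top_{t,\js}-m^\Top_{t,\js})^2$, while the middle layer contributes the negative term $-16\varepsilon^\Top_\js\sumT\sum_i q^\Mid_{t,\js,i}(\ell^\Mid_{t,\js,i}-m^\Mid_{t,\js,i})^2$ (the coefficient $-8\varepsilon^\Mid_{\js,i}=-16\varepsilon^\Top_\js$ arises precisely from the factor-two ratio baked into the algorithm). Invoking the hypothesis together with Jensen's inequality
\[
\inner{\ellb^\Mid_{t,\js}-\m^\Mid_{t,\js}}{\q^\Mid_{t,\js}}^2 \le \sum_i q^\Mid_{t,\js,i}(\ell^\Mid_{t,\js,i}-m^\Mid_{t,\js,i})^2
\]
upper bounds the top positive term by the absolute value of the middle negative term, so the two exactly cancel. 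The residual positive contribution $+32\varepsilon^\Top_\js\sumT(\ell^\Mid_{t,\js,\is}-m^\Mid_{t,\js,\is})^2$ becomes $32\varepsilon^\Top_\js\Vs$ after absorbing the $t=1$ summand into the constants via $|\ell^\Mid-m^\Mid|\le 2$, matching the leading term of the target.

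Finally I would merge the remaining additive constants $\tfrac{1}{\varepsilon^\Top_\js}\log\tfrac{1}{3C_0^2(\varepsilon^\Top_\js)^2}$, $\tfrac{\log N}{2\varepsilon^\Top_\js}$, and the $\O(1/\varepsilon^\Top_\js)$ leftover into the unified form $\tfrac{1}{\varepsilon^\Top_\js}\log\tfrac{N}{3C_0^2(\varepsilon^\Top_\js)^2}$, and track the stability coefficients through the strong-convexity lower bound of the weighted-entropy Bregman divergence, whose effective modulus scales with the inverse of the largest admissible step (namely $1/(2C_0)$ for the top layer and $1/C_0$ for the middle layer). This rescales the $-4\|\cdot\|_1^2$ factors of Lemma~\ref{lem:MsMwC-refine} into the stated $-\tfrac{C_0}{2} S^\Top_T$ and $-\tfrac{C_0}{4} S^\Mid_{T,\js}$ coefficients. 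The main obstacle is the cancellation step: it depends critically on the exact ratio $\varepsilon^\Mid_{\js,i}=2\varepsilon^\Top_\js$ together with the variance-contraction identity that converts a scalar squared-residual at the top into the full quadratic form at the middle; without both ingredients one incurs an extra $\log T$ overhead that would ruin the eventual $\O(\log V_T)$ guarantee for strongly convex and exp-concave functions.
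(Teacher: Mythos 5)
Your proposal is correct and follows the same route as the paper's own proof: apply Lemma~\ref{lem:MsMwC-refine} to \MSMWCtop and to the $\js$-th \MSMWCmid, bound the initialization constants via the geometric tail of the step-size grid, and use the hypothesis $(\ell^\Top_{t,\js}-m^\Top_{t,\js})^2 = \inner{\ellb^\Mid_{t,\js}-\m^\Mid_{t,\js}}{\q^\Mid_{t,\js}}^2$ together with Jensen/Cauchy--Schwarz so that the top layer's $+16\varepsilon^\Top_\js$ second-order term is absorbed by the middle layer's $-8\varepsilon^\Mid_{\js,i}q^\Mid_{t,\js,i}$ weighted second-order term (exactly because $\varepsilon^\Mid_{\js,i}=2\varepsilon^\Top_\js$). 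You have also correctly identified the role of the step-size ratio and the minimum inverse step in scaling the stability coefficients, matching the paper's argument.
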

We highlight several important points regarding this result.
First, following \citet{COLT'21:impossible-tuning}, we choose $M = \O(\log T)$ instances of \MSMWCmid to ensure a second-order regret guarantee of $\O(\sqrt{\Vs \log \Vs})$ (Theorem 5 therein).
Second, following \pref{lem:universal-optimism}, we set $m^\Mid_{t,\js,i} = \inner{\nabla f_t(\x_t)}{\x_t} - \inner{\nabla f_{t-1}(\x_{t-1})}{\x_{t-1} - \x_{t-1,i}}$ to unify various function types.
Finally, we note that the condition $(\ell^\Top_{t,j} - m^\Top_{t,j})^2 = \inner{\ellb^\Mid_{t,j} - \m^\Mid_{t,j}}{\q^\Mid_{t,j}}^2$ in \pref{lem:two-layer-MsMwC} is only to make the lemma self-contained.
When using \pref{lem:two-layer-MsMwC} (more specifically, in \pref{thm:unigrad-correct-1grad}), we will verify that this condition is inherently satisfied by our algorithm.

\paragraph{Injected Corrections.}
Note that from \pref{lem:two-layer-MsMwC}, the two-layer \msoms already consists of negative terms of $\|\q_t^\Top - \q_{t-1}^\Top\|_1^2$ and $\|\q_{t,j}^\Mid - \q_{t-1,j}^\Mid\|_1^2$.
However, observing the decomposition in~\pref{eq:decompose}, we can see that those negative terms still \emph{mismatch} with the positive term $\|\p_t - \p_{t-1}\|_1^2$, where $p_{t,i} = \sumM q_{t,j}^\Top q_{t,j,i}^\Mid$ as shown in \pref{eq:meta-output}.
To solve this issue, we decompose the stability term $\|\p_t - \p_{t-1}\|_1^2$ into two parts (with proof in \pref{lem:decompose-simplex}):
\begin{equation}
  \label{eq:meta-stability-de}
  \|\p_t - \p_{t-1}\|_1^2 \le 2 \|\q^\Top_t - \q^\Top_{t-1}\|_1^2 + 2 \sumM q^\Top_{t,j} \|\q^\Mid_{t,j} - \q^\Mid_{t-1,j}\|_1^2.
\end{equation}
The first term on the right-hand side, $\|\q^\Top_t - \q^\Top_{t-1}\|_1^2$, can be directly canceled by the corresponding negative term in the analysis of \msoms, as shown in \pref{lem:two-layer-MsMwC}.
However, the second term, $\sumM q^\Top_{t,j} \|\q^\Mid_{t,j} - \q^\Mid_{t-1,j}\|_1^2$, cannot be canceled in the same way as the negative term $\|\q_{t,j}^\Mid - \q_{t-1,j}^\Mid\|_1^2$ in \pref{lem:two-layer-MsMwC} does not align with it.
This mismatch presents a key challenge in the analysis.
To address this issue, we draw inspiration from the work of~\citet{JMLR'24:Sword++} on the gradient-variation dynamic regret in non-stationary online learning,\footnote{This work proposes an improved dynamic regret minimization algorithm compared to its conference version~\citep{NeurIPS'20:Sword}, which introduces the correction terms to the meta-base online ensemble structure and thus improves the gradient query complexity from $\O(\log T)$ to 1 within each round.} and introduce carefully designed \emph{correction terms} to facilitate effective collaboration between layers, such that the second term in \pref{eq:meta-stability-de} can be canceled under the universal online learning scenario.
This adaptation exhibits more challenges due to the more complicated structure of the employed meta algorithm. 

To see how the correction works, consider a simpler PEA problem with regret $\sum_t \langle \ellb_t, \q_t - \e_{\js} \rangle$.
If we instead optimize the corrected loss $\ellb_t + \c_t$ and obtain a regret bound of $R_T$, i.e., $\sum_t \langle \ellb_t + \c_t, \q_t - \e_{\js} \rangle \le R_T$, then moving the correction terms to the right-hand side, the original regret is at most $\sum_t \langle \ellb_t, \q_t - \e_{\js} \rangle \le R_T - \sum_t \sum_{j} q_{t,j} c_{t,j} + \sum_t c_{t, \js}$, where the \emph{correction-induced negative term} $- \sum_t \sum_{j} q_{t,j} c_{t,j}$ can be used for cancellation.
Meanwhile, the algorithm is required to handle an extra term of $\sum_t c_{t,\js}$, which only relies on the $\js$-th dimension and is thus relatively easier to control within that dimension (or called expert).

To see how the correction scheme works in our case, we can inject the correction terms into the loss of \MSMWCtop as:
\begin{equation}
  \label{eq:top-loss-1}
  \begin{aligned}
    \ell_{t,j}^\Top = {} & \inner{\ellb_t^\Mid}{\q_{t,j}^\Mid} + \gamma^\Top \|\q_{t,j}^\Mid - \q_{t-1,j}^\Mid\|_1^2, \\
    m_{t,j}^\Top = {} & \inner{\m_t^\Mid}{\q_{t,j}^\Mid} + \gamma^\Top \|\q_{t,j}^\Mid - \q_{t-1,j}^\Mid\|_1^2,
  \end{aligned}
\end{equation}
where $\gamma^\Top>0$ is the coefficient of corrections, which will be specified later.
This correction setup is analogous to $c_{t,j} = \gamma^\Top \|\q_{t,j}^\Mid - \q_{t-1,j}^\Mid\|_1^2$ in the simplified example above.
As a result, by choosing the correction coefficient appropriately, we can ensure that the correction-induced negative term $- \sum_{j} q^\Top_{t,j} \|\q_{t,j}^\Mid - \q_{t-1,j}^\Mid\|_1^2$ can be used for cancellation.
Moreover, as shown above, the correction introduces a positive term (the cost of corrections) $c_{t, \js}$, which equals $\gamma^\Top \|\q_{t,\js}^\Mid - \q_{t-1,\js}^\Mid\|_1^2$ in our case.
Note that this cost of corrections can be perfectly handled by the intrinsic negative terms in the analysis of \msoms, as given in \pref{lem:two-layer-MsMwC}.
We further note that the construction of the loss and optimism in \eqref{eq:top-loss-1} satisfies the requirement $(\ell^\Top_{t,j} - m^\Top_{t,j})^2 = \inner{\ellb^\Mid_{t,j} - \m^\Mid_{t,j}}{\q^\Mid_{t,j}}^2$ in \pref{lem:two-layer-MsMwC}, which is crucial for the correctness of the regret analysis.

Finally, to conclude, given a PEA problem with regret $\sum_t \langle \ellb_t, \p_t - \e_\is \rangle$, by leveraging \msoms (\pref{alg:2layer-msmwc}) along with corrected losses in \pref{eq:top-loss-1}, it holds that
\begin{align*}
  & \sumT \inner{\ellb_t}{\p_t - \e_\is} = \sumT \inner{\ellb_t}{\p_t - \q^\Mid_{t,\js}} + \sumT \inner{\ellb_t}{\q^\Mid_{t,\js} - \e_\is}\\
  = {} & \sumT \inner{\ellb^\Top_t}{\q^\Top_t - \e_{\js}} + \sumT \inner{\ellb^\Mid_t}{\q^\Mid_{t,\js} - \e_\is} - \gamma^\Top \sumT \sumM q^\Top_{t,j} \|\q_{t,j}^\Mid - \q_{t-1,j}^\Mid\|_1^2 + \gamma^\Top S_{T,\js}^\Mid\\
  \le {} & \O\sbr{\sqrt{\Vs \log \Vs} - \sumTT \|\q_t^\Top - \q_{t-1}^\Top\|_1^2 - \gamma^\Top \sumT \sumM q^\Top_{t,j} \|\q_{t,j}^\Mid - \q_{t-1,j}^\Mid\|_1^2}\\
  \le {} & \O\sbr{\sqrt{\Vs \log \Vs} - \sumT \|\p_t - \p_{t-1}\|_1^2},
\end{align*}
where $\Vs \define \sumTT (\ell^\Mid_{t,\js,\is} - m^\Mid_{t,\js,\is})^2$ is the second-order quantity, the second step sets $\ellb^\Mid_t = \ellb_t$ and uses the definition of correction terms in \pref{eq:top-loss-1}, the third step leverages \pref{lem:two-layer-MsMwC}, and the last step is due to \pref{eq:meta-stability-de}. This nearly matches our goal in \pref{eq:meta-goal}, up to a logarithmic regret overhead in the second-order optimistic term.

\subsection{Overall Algorithm and Regret Guarantee}
\label{subsec:method1-overall}
The meta algorithm proposed in \pref{subsec:meta-correct} is able to achieve a second-order regret bound of $\O(\sqrt{\Vs \log \Vs})$ with negative stability terms of $\|\p_t - \p_{t-1}\|_1^2$. Recall that we still need to handle $\sumN p_{t,i} \|\x_{t,i} - \x_{t-1,i}\|^2$ as shown in the second term of \pref{eq:decompose}. To this end, we provide a further decomposition of this quantity:
\begin{equation}
  \label{eq:decompose-2}
  \begin{aligned}
    & \sumN p_{t,i} \|\x_{t,i} - \x_{t-1,i}\|^2 = \sumN \sbr{\sumM q_{t,j}^\Top q_{t,j,i}^\Mid} \|\x_{t,i} - \x_{t-1,i}\|^2\\
    = {} & \sumN \sumM q_{t,j}^\Top q_{t,j,i}^\Mid \|\x_{t,i} - \x_{t-1,i}\|^2 = \sumM q_{t,j}^\Top \sumN q_{t,j,i}^\Mid \|\x_{t,i} - \x_{t-1,i}\|^2,
  \end{aligned}
\end{equation}
where the first equality exploits the two-layer structure of the meta algorithm for computing $p_{t,i}$ in~\pref{eq:meta-output}. Thus, we obtain the following decomposition of the overall algorithmic stability $\|\x_t - \x_{t-1}\|^2$.
\begin{myLemma}
  \label{lem:decompose-three-layer}
  For any $t \ge 2$, if $\x_t = \sumN p_{t,i} \x_{t,i}\in \X$ and $\p_t = \sumM q_{t,j}^\Top \q_{t,j}^\Mid \in \Delta_N$, where $\q_t^\Top \in \Delta_M$ and $\q_{t,j}^\Mid \in \Delta_N$ for any $j \in [M]$, then it holds that
  \begin{equation*}
    \|\x_t - \x_{t-1}\|^2 \le 4D^2 \|\q_t^\Top - \q_{t-1}^\Top\|_1^2 + 4D^2 \sumM q_{t,j}^\Top \|\q_{t,j}^\Mid - \q_{t-1,j}^\Mid\|_1^2 + 2 \sumM q_{t,j}^\Top \sumN q_{t,j,i}^\Mid \|\x_{t,i} - \x_{t-1,i}\|^2.
  \end{equation*}
\end{myLemma}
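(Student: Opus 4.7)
The plan is to derive the three-layer stability decomposition by chaining together two intermediate decompositions that are already available in the excerpt: the meta-base decomposition from \pref{eq:decompose}, and the two-level meta decomposition from \pref{eq:meta-stability-de}, combined with the identity already spelled out in \pref{eq:decompose-2}.

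First, I would invoke the decomposition in \pref{eq:decompose} (proved in \pref{lem:decompose}) applied to the convex combination $\x_t = \sum_{i=1}^N p_{t,i}\x_{t,i}$. Under \pref{assum:domain-boundedness} this yields the standard bound
\[
\|\x_t - \x_{t-1}\|^2 \le 2D^2 \|\p_t - \p_{t-1}\|_1^2 + 2 \sum_{i=1}^N p_{t,i}\|\x_{t,i} - \x_{t-1,i}\|^2,
\]
which separates the meta-layer contribution $\|\p_t-\p_{t-1}\|_1^2$ from the weighted base-layer contribution. This step is purely routine; the constants $2D^2$ and $2$ are exactly what feed into the final $4D^2$ and $2$ of the target bound.

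Next, I would expand the meta-layer term using the fact that $\p_t = \sum_{j=1}^M q_{t,j}^\Top \q_{t,j}^\Mid$ arises itself from a two-layer ensemble. Applying \pref{eq:meta-stability-de} gives
\[
\|\p_t - \p_{t-1}\|_1^2 \le 2\|\q_t^\Top - \q_{t-1}^\Top\|_1^2 + 2\sum_{j=1}^M q_{t,j}^\Top \|\q_{t,j}^\Mid - \q_{t-1,j}^\Mid\|_1^2.
\]
For the base-layer term, I would use the identity already derived in \pref{eq:decompose-2}, namely
\[
\sum_{i=1}^N p_{t,i}\|\x_{t,i} - \x_{t-1,i}\|^2 = \sum_{j=1}^M q_{t,j}^\Top \sum_{i=1}^N q_{t,j,i}^\Mid \|\x_{t,i} - \x_{t-1,i}\|^2,
\]
which is an immediate consequence of the factorization $p_{t,i} = \sum_j q_{t,j}^\Top q_{t,j,i}^\Mid$ and rearrangement of the double sum.

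Plugging both of these into the first inequality and multiplying the meta piece by $2D^2$ yields the coefficients $4D^2$ on $\|\q_t^\Top-\q_{t-1}^\Top\|_1^2$ and on $\sum_j q_{t,j}^\Top\|\q_{t,j}^\Mid-\q_{t-1,j}^\Mid\|_1^2$, and the coefficient $2$ on $\sum_j q_{t,j}^\Top \sum_i q_{t,j,i}^\Mid \|\x_{t,i}-\x_{t-1,i}\|^2$, matching the statement exactly. There is essentially no obstacle here: both sub-decompositions are already established in the excerpt, and the only thing to verify is bookkeeping of the constants and the correct use of the convex-combination structure at each layer (so that Jensen-type inequalities apply with $p_{t,i}$, $q_{t,j}^\Top$, and $q_{t,j,i}^\Mid$ as probability weights). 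The slight care required is in step one, where the bound $\|\sum_i p_{t,i}\x_{t,i} - \sum_i p_{t-1,i}\x_{t-1,i}\|^2$ must be split cleanly into a ``weights-change'' part (using $\|\x_{t,i}-\x_{t-1,i'}\|\le D$) and a ``base-change'' part via Cauchy--Schwarz or Jensen; but this is precisely what \pref{lem:decompose} already provides.
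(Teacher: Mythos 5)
Your proposal is correct and follows exactly the same route as the paper, which states that \pref{lem:decompose-three-layer} "can be directly derived by combining \pref{lem:decompose}, \pref{eq:meta-stability-de}, and \pref{eq:decompose-2}." The bookkeeping of constants ($2D^2\times 2 = 4D^2$ for the meta pieces, $2$ unchanged for the base piece) is right, and you correctly use that \pref{eq:decompose-2} is an equality from the factorization $p_{t,i}=\sum_j q_{t,j}^\Top q_{t,j,i}^\Mid$, so no further loss is incurred there.
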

The proof can be directly derived by combining \pref{lem:decompose}, \pref{eq:meta-stability-de}, and \pref{eq:decompose-2}.
It is worth noting that this decomposition differs from the one presented in our conference version~\citep{NeurIPS'23:universal}.
The specific differences and key improvements will be discussed at the end of this section.

\begin{figure*}[!t]
    \centering
    \includegraphics[clip, trim=20mm 19mm 9mm 26mm, width=0.9\textwidth]{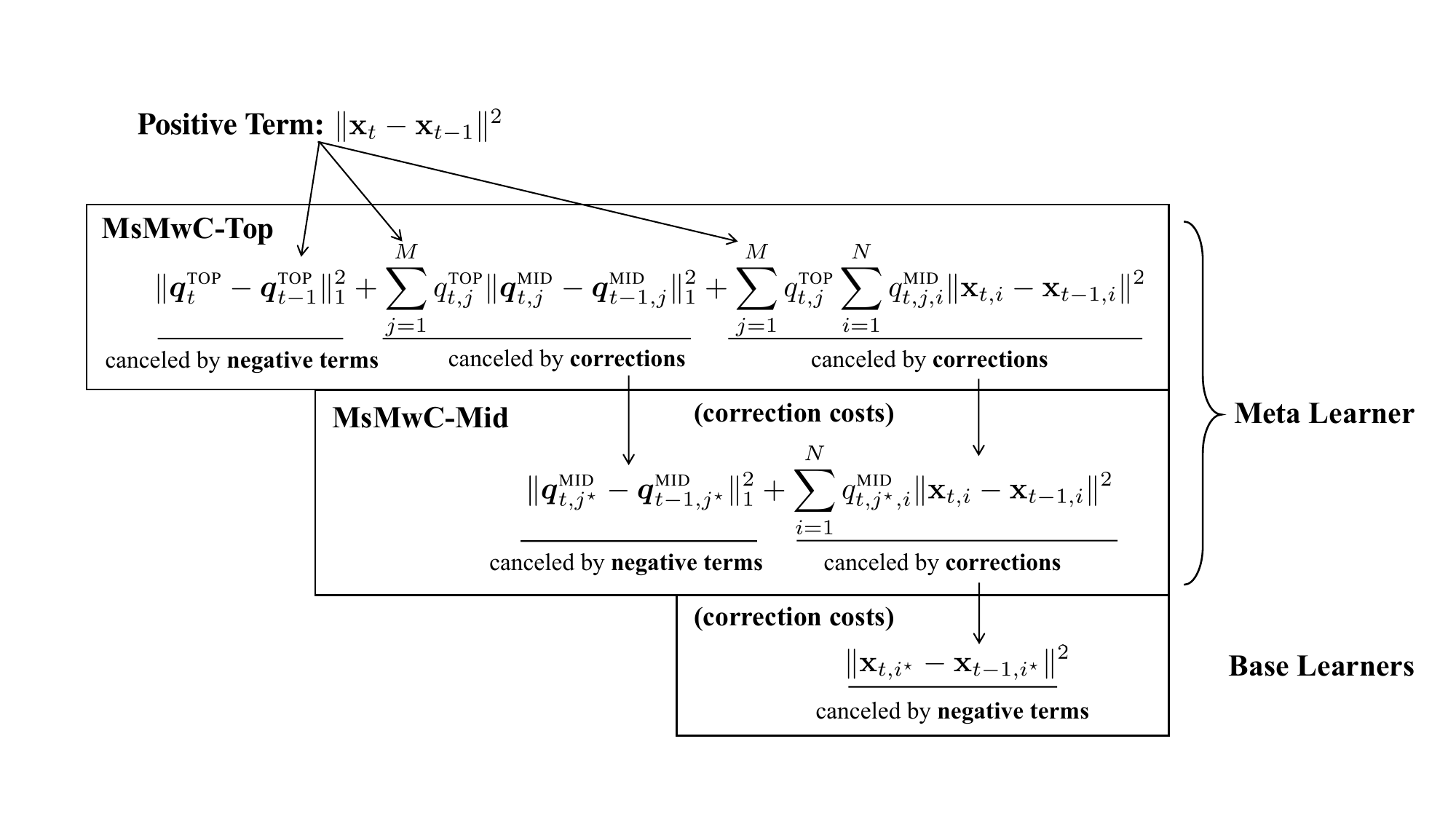}
    \caption{\small{Decomposition of the positive term $\|\x_t - \x_{t-1}\|^2$} and how it is handled by our online ensemble method via intrinsic negative stability terms and injected corrections.}
    \label{fig:correction}
    \vspace*{-4mm}
\end{figure*}

Based on~\pref{lem:decompose-three-layer}, we leverage the idea of \emph{cancellation by corrections}~\citep{JMLR'24:Sword++}, as already employed in the last subsection. Generally, to handle weighted terms like $\sum_j q_{t,j}^\Top c_{t,j}$, we inject correction terms into the loss of \MSMWCtop as $\ell_{t,j}^\Top \leftarrow \ell_{t,j}^\Top + c_{t,j}$. Here the correction term $c_{t,j}$ is designed with two components: $$c_{t,j} \approx \|\q_{t,j}^\Mid - \q_{t-1,j}^\Mid\|_1^2 + \sumN q_{t,j,i}^\Mid \|\x_{t,i} - \x_{t-1,i}\|^2,$$
allowing us to cancel both the second and third terms in \pref{lem:decompose-three-layer} simultaneously, rather than relying on a single correction as in \pref{eq:top-loss-1}. 
Thus, the final feedback loss and optimism of \MSMWCtop are set using two corrections:
\begin{equation}
  \label{eq:top-loss-2}
  \begin{gathered}
    \ell_{t,j}^\Top = \frac{1}{Z}\sbr{\inner{\ellb_t^\Mid}{\q_{t,j}^\Mid} + \gamma^\Top \|\q_{t,j}^\Mid - \q_{t-1,j}^\Mid\|_1^2 + \sumN \gamma^\Mid q_{t,j,i}^\Mid\norm{\x_{t,i} - \x_{t-1,i}}^2}\in[-1,1], \\
    m_{t,j}^\Top = \frac{1}{Z}\sbr{\inner{\m_{t}^\Mid}{\q_{t,j}^\Mid} + \gamma^\Top \|\q_{t,j}^\Mid - \q_{t-1,j}^\Mid\|_1^2 + \sumN \gamma^\Mid q_{t,j,i}^\Mid\norm{\x_{t,i} - \x_{t-1,i}}^2}\in[-1,1],
  \end{gathered}
\end{equation}
where $\gamma^\Top, \gamma^\Mid>0$ are the correction coefficients and $Z>0$ is a to-be-determined normalization factor to ensure that both the feedback loss and the optimism in \pref{eq:top-loss-2} lie in $[-1,1]$.
Note that when injecting a correction into the loss, the same correction must be applied to the optimism.
This is because the second-order bound depends on the difference between the loss and the optimism, i.e., $\ell_{t,j}^\Top - m_{t,j}^\Top$.
Therefore, this preserves the regret guarantees while incorporating corrections.

Recall that the correction scheme comes with a cost. Specifically, when we use the corrections in \pref{eq:top-loss-2}, we need to handle the extra term of $\gamma^\Mid \sumN q_{t,\js,i}^\Mid \norm{\x_{t,i} - \x_{t-1,i}}^2$ for some specific but unknown $\js \in [M]$ in the analysis. To this end, we inject the correction term into the loss of every \MSMWCmid as $\ell_{t,j,i}^\Mid \leftarrow \ell_{t,j,i}^\Mid + c_{t,i}$, where $c_{t,i} = \gamma^\Mid \|\x_{t,i} - \x_{t-1,i}\|^2$. To conclude, the final losses and optimisms of \MSMWCmid are: 
\begin{equation}
  \label{eq:mid-loss}
  \begin{aligned}
    \ell_{t,j,i}^\Mid = {} & \frac{1}{Z}\sbr{\inner{\nabla f_t(\x_t)}{\x_{t,i}} + \gamma^\Mid \|\x_{t,i} - \x_{t-1,i}\|^2}\in[-1,1],\\
    m_{t,j,i}^\Mid = {} & \frac{1}{Z}\sbr{\inner{\nabla f_{t-1}(\x_{t-1})}{\x_{t-1,i}} + \gamma^\Mid \|\x_{t,i} - \x_{t-1,i}\|^2}\in[-1,1],
  \end{aligned}
\end{equation}
where $Z$ is set as $\max\{Z_\Mid, Z_\Top\}$. Specifically, $Z_\Mid \define GD+\gamma^\Mid D^2$ serves as the normalization factor ensuring that $\ell_{t,j,i}^\Mid, m_{t,j,i}^\Mid \in [-1,1]$ for all $t \in [T], j \in [M], i \in [N]$, and $Z_\Top \define 1+\gamma^\Mid D^2+2\gamma^\Top$ is chosen to restrict the range of $\ell_{t,j}^\Top, m_{t,j}^\Top$ for all $t \in [T], j \in [M]$.
We note that setting the same normalization factor for \MSMWCtop and \MSMWCmid is necessary to directly apply \pref{lem:two-layer-MsMwC}.

We now present our final meta algorithm \msoms (\pref{alg:2layer-msmwc}) with carefully designed injected corrections into both \MSMWCtop and \MSMWCmid.
We emphasize that the cost of the corrections in \MSMWCmid is a positive term of $\gamma^\Mid \|\x_{t,\is} - \x_{t-1,\is}\|^2$ for some specific but unknown $\is \in [N]$ in the analysis.
Fortunately, this is exactly the stability term of the $\ith$ base learner, which can be perfectly handled by the intrinsic negative terms in the analysis of classic online mirror descent algorithms.
We summarize the overall correction process in \pref{fig:correction}.
Because \eqref{eq:top-loss-2} and \eqref{eq:mid-loss} both include correction steps, the mechanism exhibits a cascade in which higher-layer corrections propagate to lower layers to cancel negative terms.
Thus, we refer to this as a \emph{cascaded correction} mechanism.

\begin{algorithm}[!t]
    \caption{\Correct: Universal Gradient-variation Regret by Injected Corrections}
    \label{alg:UniGrad-Correct}
    \begin{algorithmic}[1]
    \REQUIRE Base learner configurations $\{\B_i\}_{i \in [N]}\define \{\B_i^\scvx\}_{ i\in [N_\scvx]} \cup \{\B_i^\exp\}_{i \in [N_\exp]} \cup \B^{\cvx}$, algorithm parameters $\gamma^\Mid$, $\gamma^\Top$, and $C_0$
    
    \STATE \textbf{Initialize}: $\M$~---~meta algorithm \msoms as shown in \pref{alg:2layer-msmwc} \\
    \makebox[1.8cm]{} $\{\B_i\}_{i \in [N]}$~---~base learners as specified in \pref{subsec:framework}
    
    \FOR{$t=1$ {\bfseries to} $T$}
        \STATE Submit $\x_t = \sumN p_{t,i} \x_{t,i}$, suffer $f_t(\x_t)$, and observe $\nabla f_t(\cdot)$ \label{line:observe}
        \STATE $\{\B_i\}_{i=1}^N$ update their own decisions to $\{\x_{t+1,i}\}_{i=1}^N$ using $\nabla f_t(\cdot)$ \label{line:base-update}

        \STATE Compute $\{\ellb^\Mid_{t,j}, \m^\Mid_{t+1,j}\}_{j=1}^M$ via~\pref{eq:mid-loss}, send to $\M$, get $\{\q_{t+1,j}^\Mid\}_{j=1}^M \in (\Delta_N)^M$ \label{line:meta-mid-update}

        \STATE Compute $\ellb_t^\Top, \m_{t+1}^\Top$ via~\pref{eq:top-loss-2}, send to $\M$, and obtain $\q_{t+1}^\Top \in \Delta_M$ \label{line:meta-top-update}

        \STATE Aggregate the final meta weights $\p_{t+1} \in \Delta_N$ via \pref{eq:meta-output}
        \label{line:meta-combine}
    \ENDFOR
    \end{algorithmic}
\end{algorithm}

To conclude, because we choose the two-layer \msoms as the meta learner, the overall algorithm results in a \emph{three-layer} online ensemble structure.
The overall update of \Correct is presented in \pref{alg:UniGrad-Correct}.
The base learner configurations are the same as those introduced in \pref{subsec:framework}.
Specifically, in \pref{line:observe}, the learner submits the weighted decision, suffers the corresponding loss, and receives the gradient information of the loss function.
Subsequently, the update is conducted from the bottom to the top.
Concretely, in \pref{line:base-update}, the base learners update their own decisions to $\{\x_{t+1,i}\}_{i=1}^N$ using $\nabla f_t(\cdot)$.
In \pref{line:meta-mid-update}, each \MSMWCmid computes its own losses and optimisms using \pref{eq:mid-loss} and updates its own decisions according to \pref{alg:2layer-msmwc}.
In \pref{line:meta-top-update}, \MSMWCtop computes its loss and optimism using \pref{eq:top-loss-2} and updates accordingly.
Finally, in \pref{line:meta-combine}, the meta learner aggregates the final weights $\p_{t+1} \in \Delta_N$ via \pref{eq:meta-output}.

We next present the main theoretical result. Our proposed UniGrad.Correct can achieve the following gradient-variation regret guarantees, with the proof provided in \pref{app:unigrad-correct}.
\begin{myThm}
    \label{thm:unigrad-correct}
    Under Assumptions \ref{assum:domain-boundedness}, \ref{assum:gradient-boundedness}, \ref{assum:smoothness-X}, by setting $C_0=\max\bbr{1,8D,4\gamma^{\Top},4D^2C_1}$, $\gamma^\Top=C_1$, and $\gamma^{\Mid}=2D^2C_1$, where $C_1 = 128(D^2L^2+G^2)$, \Correct (\pref{alg:UniGrad-Correct}) achieves the following universal gradient-variation regret guarantees: 
    \begin{equation}
        \sumT f_t(\x_t) - \min_{\x \in \X} \sumT f_t(\x) \le  
        \begin{cases}
            \O\left(\frac{1}{\lambda}\log V_T\right), & \text{when } \{f_t\}_{t=1}^T \text{ are $\lambda$-strongly convex}, \\[2mm]
            \O\left(\frac{d}{\alpha}\log V_T\right), & \text{when } \{f_t\}_{t=1}^T \text{ are $\alpha$-exp-concave}, \\[2mm]
            \O(\sqrt{V_T \log V_T}), & \text{when } \{f_t\}_{t=1}^T \text{ are convex},
        \end{cases}
    \end{equation}
\end{myThm}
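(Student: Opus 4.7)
The plan is to combine the base-meta regret decomposition of \pref{eq:Zhang-decompose}, the two-layer meta regret from \pref{lem:two-layer-MsMwC}, the cascaded correction scheme of \pref{eq:top-loss-2}--\pref{eq:mid-loss}, and the universal optimism analysis of \pref{lem:universal-optimism}, and then to close the loop by converting the resulting empirical gradient-variation bound $\bar{V}_T$ into the population quantity $V_T$ through \pref{lemma:empirical-GV-stability}. The base learners $\{\B_i\}_{i=1}^N$ are chosen as in \pref{subsec:framework}, so that for any benchmark $\x \in \X$ there exists an index $\is$ (with a matching $\js$ in \MSMWCmid) whose base regret already enjoys the target rate in $\bar{V}_T$: Optimistic OGD gives $\O(\frac{1}{\lambda}\log \bar{V}_T)$ for the strongly convex guess, OONS gives $\O(\frac{d}{\alpha}\log \bar{V}_T)$ for the exp-concave guess, and OOGD with the proper step size gives $\O(\sqrt{\bar{V}_T})$ for the convex learner. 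The substantive work is on the meta side.

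First I would unroll the meta regret so that the injected corrections in \pref{eq:top-loss-2} and \pref{eq:mid-loss} move to the right-hand side. Using \pref{lem:two-layer-MsMwC} together with the identity $(\ell^\Top_{t,j} - m^\Top_{t,j})^2 = \inner{\ellb^\Mid_{t,j} - \m^\Mid_{t,j}}{\q^\Mid_{t,j}}^2$ (which is built into the loss construction in \pref{eq:top-loss-2}), I would show
\begin{equation*}
\sumT \inner{\ellb_t}{\p_t - \e_\is} \lesssim \varepsilon^\Top_\js V^\star + \frac{\log T}{\varepsilon^\Top_\js} + \gamma^\Top S_{T,\js}^\Mid + \gamma^\Mid \sumT \|\x_{t,\is} - \x_{t-1,\is}\|^2 - \frac{C_0}{2} S_T^\Top - \frac{C_0}{4} S_{T,\js}^\Mid,
\end{equation*}
where $\ell_{t,i} = \inner{\nabla f_t(\x_t)}{\x_{t,i}}$ and the two correction costs $\gamma^\Top S_{T,\js}^\Mid$ and $\gamma^\Mid \sum_t\|\x_{t,\is}-\x_{t-1,\is}\|^2$ concentrate on a single coordinate. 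Setting $C_0$ as in the statement absorbs the first correction cost into the negative term $S_{T,\js}^\Mid$, while the remaining per-base stability cost is paid by the intrinsic negative term of the $\is$-th base learner, which is standard for OMD-type base algorithms. What remains after those cancellations is governed by $V^\star=\sumTT(\ell^\Mid_{t,\js,\is}-m^\Mid_{t,\js,\is})^2$, and applying \pref{lem:universal-optimism} bounds $V^\star$ by the three curvature-dependent quantities listed there.

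Next I would tune $\varepsilon^\Top_\js$ via the exponential grid and branch on the function class. For the $\lambda$-strongly convex case, \pref{lem:universal-optimism} gives $V^\star \lesssim \sum_t \|\x_t-\x_{t,\is}\|^2$; pairing this with the strong-convexity negative term $-\frac{\lambda}{2}\sum_t\|\x_t-\x_{t,\is}\|^2$ extracted from $\sum_t \inner{\nabla f_t(\x_t)}{\x_t-\x_{t,\is}}$ and applying AM-GM (as in \pref{eq:meta-regret-exp-concave}) drives the meta regret to $\O(1)$, which combines with the base $\O(\frac{1}{\lambda}\log \bar{V}_T)$. The exp-concave case is identical in spirit with $V^\star \lesssim \sum_t r_{t,\is}^2$ and the $-\frac{\alpha}{2}r_{t,\is}^2$ term. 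For the convex case, $V^\star \lesssim \bar{V}_T$, so the meta regret is $\O(\sqrt{\bar{V}_T\log\bar{V}_T})$; combined with the convex base regret this gives $\O(\sqrt{\bar{V}_T\log \bar{V}_T})$.

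The last—and, I expect, the main obstacle—is the conversion from $\bar{V}_T$ to $V_T$. By \pref{lemma:empirical-GV-stability}, $\bar{V}_T \le 2 V_T + 2L^2\sumTT\|\x_t-\x_{t-1}\|^2$, so I need the $\|\x_t-\x_{t-1}\|^2$ term to be fully absorbed. This is exactly where the three-layer stability decomposition \pref{lem:decompose-three-layer} comes in: the $4D^2\|\q_t^\Top-\q_{t-1}^\Top\|_1^2$ summand is killed by the intrinsic $-\frac{C_0}{2}S_T^\Top$ term from \pref{lem:two-layer-MsMwC}; the weighted middle-layer term $\sum_j q_{t,j}^\Top\|\q_{t,j}^\Mid-\q_{t-1,j}^\Mid\|_1^2$ is exactly what the $\gamma^\Top$-correction injected into $\ell_{t,j}^\Top$ produces as a negative correction-induced term, and the choice $\gamma^\Top = C_1$ with $C_1=128(D^2L^2+G^2)$ makes the cancellation quantitative; and the weighted base-stability term $\sum_j q^\Top_{t,j}\sum_i q^\Mid_{t,j,i}\|\x_{t,i}-\x_{t-1,i}\|^2$ is cancelled by the $\gamma^\Mid$-correction that propagates down to \MSMWCmid via \pref{eq:mid-loss}, at the cost of $\gamma^\Mid \sum_t\|\x_{t,\is}-\x_{t-1,\is}\|^2$ on the single coordinate $\is$, which is in turn absorbed by the base learner's own stability negative term. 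After all three cancellations, $\bar{V}_T$ collapses to an absolute constant multiple of $V_T$ plus additive constants, which yields $\O(\sqrt{V_T\log V_T})$, $\O(\frac{d}{\alpha}\log V_T)$, and $\O(\frac{1}{\lambda}\log V_T)$ in the three cases respectively. Verifying that the prescribed constants $C_0,\gamma^\Top,\gamma^\Mid$ really produce non-negative coefficients in every cancellation—so no stability term is left unpaid—is the bookkeeping-heavy part and the main source of technical friction.
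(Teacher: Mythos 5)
Your proposal correctly reconstructs the paper's argument: the same meta--base regret decomposition, the same invocation of \pref{lem:two-layer-MsMwC} with the loss/optimism construction in \pref{eq:top-loss-2}--\pref{eq:mid-loss} to extract negative stability terms plus two correction costs, the same use of \pref{lem:universal-optimism} to bound $\Vs$ per function class, the same exponential-grid tuning of $\varepsilon^\Top_\js$ to drive the strongly-convex and exp-concave meta regrets to a logarithmic constant while leaving $\O(\sqrt{\Vs\log\Vs})$ in the convex case, and the same three-layer stability cancellation via \pref{lem:decompose-three-layer} to convert the empirical quantity to $V_T$. Aside from minor notational looseness (e.g.\ $\log T$ where the tuned bound actually gives $\log N = \O(\log\log T)$, and conflating the base learner's own empirical gradient variation $\sumTT\|\nabla f_t(\x_{t,\is}) - \nabla f_{t-1}(\x_{t-1,\is})\|^2$ with $\bar V_T$ before the $S_{T,\is}^\x$ absorption), this is the paper's proof in outline.
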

We emphasize that the regret guarantee is achieved \emph{without} prior knowledge of the function families or curvature information.
For strongly convex functions and exp-concave functions, the regret bound matches the best-known gradient-variation regret bounds that were specifically designed with curvature prior knowledge.
For convex functions, the result exhibits a slight logarithmic gap compared to the target $\O(\sqrt{V_T})$ bound, and this gap will be addressed and closed in the next section.

It is worth mentioning that even when the curvature $\alpha$ (or $\lambda$) is smaller than $1/T$, our algorithm can still guarantee an $\O(\sqrt{V_T \log V_T})$ bound, because exp-concave and strongly convex functions are also convex and thus our convex bound still holds.

\begin{myRemark}[Technique]
    Our method follows the general \emph{optimistic online ensemble} framework proposed by \citet{JMLR'24:Sword++}, which was originally designed for dynamic regret minimization with convex functions.
    In the universal online learning with gradient-variation setting, new ingredients are required.
    Specifically, the goal of handling different function families universally necessitates a two-layer meta algorithm with both second-order regret and negative stability terms, namely, \msoms.
    Furthermore, solving this problem requires new ideas on universal optimism design, the correction scheme, and base-learner sharing, which collectively lead to our three-layer collaborative online ensemble structure, making it significantly different from the two-layer structure used in \citet{JMLR'24:Sword++}.
    \endenv
\end{myRemark}

\begin{myRemark}[Comparison to Conference Version]
  We here briefly mention the differences between \Correct and the conference version~\citep{NeurIPS'23:universal}.
  In the conference version, we have introduced the idea of employing a three-layer online ensemble (with a two-layer \msoms as the meta algorithm and cascaded correction terms to address positive stability terms) to achieve gradient-variation universal regret. 
  However, the current \Correct differs significantly from the conference version---the conference version requires maintaining $\O((\log T)^2)$ base learners that are updated simultaneously, whereas \Correct reduces this complexity to $\O(\log T)$, offering a more efficient solution. Detailed technical comparisons and the improvement will be discussed in \pref{subsec:comparison-correct-conference}.
\endenv
\end{myRemark}
\section{Method II: Online Ensemble with Extracted Bregman Divergence}
\label{sec:method2-Bregman}
This section introduces our second method, named \Bregman, which achieves optimal universal gradient-variation regret bounds.
The new method exhibits a significantly different methodology from the \Correct method presented in \pref{sec:method1-correct}.

\subsection{Online Ensemble with Extracted Bregman Divergence}
\label{subsec:Bregman-algorithm-analysis}

This subsection leverages the Bregman divergence term extracted from the linearization of convex online functions, along with a key property of smooth functions that connects gradient variation to Bregman divergence.

For clarity, we illustrate our idea from the ground level. To obtain the gradient variation $V_T$ defined in~\eqref{eq:VT}, we first need to attain its \emph{empirical} version $\Vb_T \define \sum_{t \le T} \|\nabla f_t(\x_t) - \nabla f_{t-1}(\x_{t-1})\|^2$.
Previous studies decompose this term as shown in \pref{eq:empirical-gradient-variation}, which requires controlling the algorithmic stability $\|\x_t - \x_{t-1}\|^2$.
Consequently, since each decision is a weighted combination of base learners' decisions (i.e., $\x_t = \sum_{i \le N} p_{t,i} \x_{t,i}$), the stability is difficult to control. This requires a very powerful meta algorithm with both second-order regret guarantees and negative stability terms.
This is why \Correct requires a three-layer online ensemble structure and achieves a sub-optimal $\O(\sqrt{V_T \log V_T})$ regret bound for convex functions.

\paragraph{Empirical Gradient Variation Decomposition.}
To address the above problem, we propose a novel decomposition of the empirical gradient variation that avoids the stability term at the meta algorithm level.
Specifically, we decompose the empirical gradient variation into three parts:
\begin{equation}
    \label{eq:emp-VT-de-breg-1}
    \Vb_T \les \|\nabla f_t(\x_t) - \nabla f_t(\x_{t,\is})\|^2 + \|\nabla f_t(\x_{t,\is}) - \nabla f_{t-1}(\x_{t-1,\is})\|^2 + \|\nabla f_{t-1}(\x_{t-1,\is}) - \nabla f_{t-1}(\x_{t-1})\|^2.
\end{equation}
\begin{itemize}[leftmargin=*]
    \item The middle term on the right-hand side measures the empirical gradient variation of a base learner, which can be controlled within the base learner via its intrinsic negative stability terms using optimistic OMD algorithms.
    \item The first and last terms capture the gradient differences between the meta decision $\x_t$ (or $\x_{t-1}$) and the best base learner's decision $\x_{t,\is}$ (or $\x_{t-1,\is}$).
    We show that they can be upper-bounded by \emph{Bregman divergences} using a useful property of smoothness~\citep{book'18:Nesterov-Convex} (restated below), which yields $\|\nabla f_t(\x_t) - \nabla f_t(\x_{t,\is})\|^2 \leq 2L \cdot \D_{f_t}(\x_{t,\is}, \x_t)$.
\end{itemize}
\begin{myProp}[Theorem 2.1.5 of \citet{book'18:Nesterov-Convex}]
    \label{prop:smoothness}
    A function $f(\cdot)$ is $L$-smooth over $\R^d$ if and only if 
    \begin{equation}
        \label{eq:smoothness-property}
        \|\nabla f(\x) - \nabla f(\y)\|^2 \le 2 L \cdot \D_f(\y,\x), \quad \text{for any } \x, \y \in \R^d.
    \end{equation}
\end{myProp}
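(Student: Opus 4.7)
The plan is to prove both directions of this classical equivalence following the standard argument in Nesterov's textbook, under the implicit convexity of $f$ that ensures $\D_f(\y,\x) \ge 0$ (needed for compatibility with the non-negative left-hand side). For the forward direction (smoothness implies the Bregman bound), I would exploit the shifted-function trick: for any fixed $\x$, define
$$\phi_\x(\y) \define f(\y) - \inner{\nabla f(\x)}{\y}.$$
This auxiliary function is convex, $L$-smooth (with the same constant as $f$), and attains its global minimum at $\y = \x$, since $\nabla \phi_\x(\x) = 0$.

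Next, I would apply the descent lemma (a direct consequence of $L$-smoothness) to $\phi_\x$ along the gradient step from $\y$, giving
$$\phi_\x(\x) \;\le\; \phi_\x\!\Big(\y - \tfrac{1}{L}\nabla \phi_\x(\y)\Big) \;\le\; \phi_\x(\y) - \tfrac{1}{2L}\|\nabla \phi_\x(\y)\|^2,$$
where the first inequality uses global optimality of $\x$ for $\phi_\x$ and the second is the standard one-step descent. Unfolding the definition of $\phi_\x$ and noting $\nabla \phi_\x(\y) = \nabla f(\y) - \nabla f(\x)$, rearrangement yields
$$\tfrac{1}{2L}\|\nabla f(\y) - \nabla f(\x)\|^2 \;\le\; f(\y) - f(\x) - \inner{\nabla f(\x)}{\y - \x} \;=\; \D_f(\y, \x),$$
which is exactly the desired inequality after multiplying through by $2L$.

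For the reverse direction, I would apply the Bregman inequality once with $(\x,\y)$ and once with $(\y,\x)$, then sum the two to get
$$\|\nabla f(\x) - \nabla f(\y)\|^2 \;\le\; L\,\inner{\nabla f(\x) - \nabla f(\y)}{\x - \y}.$$
Cauchy--Schwarz on the right-hand side, followed by dividing by $\|\nabla f(\x) - \nabla f(\y)\|$ (the vanishing-norm case being trivial), delivers $\|\nabla f(\x) - \nabla f(\y)\| \le L\,\|\x - \y\|$, i.e., $L$-smoothness.

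The only non-routine step is the construction of $\phi_\x$ in the forward direction, which converts the two-point descent inequality into an anchored one-step bound at the minimizer; everything else is straightforward algebra. Since this is a classical textbook identity, no new technique is required beyond this shift, and the argument can be kept very short or simply attributed to Nesterov's monograph.
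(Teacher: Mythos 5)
Your proof is correct, and your forward-direction argument (anchoring at $\phi_\x(\y) = f(\y) - \inner{\nabla f(\x)}{\y}$, then applying the one-step descent bound at the global minimizer $\y = \x$) is exactly the argument the paper itself reconstructs in the proof of the relaxed Lemma~\ref{lem:smooth-relax} in Appendix~B.2, where the ``self-bounding property'' step is just the descent lemma applied to the shifted function. The paper does not prove Proposition~\ref{prop:smoothness} directly but attributes it to Nesterov's Theorem~2.1.5; your reverse direction (symmetrize the Bregman bound, then Cauchy--Schwarz) is the standard argument there. You are also right to flag the implicit convexity assumption: the ``if and only if'' only holds for convex $f$ --- this is exactly the class $\F^{1,1}_L$ in Nesterov's book --- and the paper's phrasing omits the word ``convex'' only because convexity is assumed throughout the OCO setting.
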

Compared with the commonly used inequality $\|\nabla f(\x) - \nabla f(\y)\| \le L \|\x - \y\|$, \pref{prop:smoothness} provides a tighter bound for the \emph{squared} gradient changes. Since $\|\nabla f(\x) - \nabla f(\y)\|^2 \le 2 L \D_f(\y,\x) \le L^2 \|\x - \y\|^2$, where the second inequality holds because $\D_f(\y,\x) \le \frac{L}{2} \|\x - \y\|^2$ for any $\x, \y \in \R^d$~{\citep[Theorem~2.1.5]{book'18:Nesterov-Convex}}, this yields a simpler subsequent analysis.   

Combining the above decomposition with the smoothness property yields the following result, which upper-bounds the empirical gradient variation $\Vb_T$ by the Bregman divergence terms and the gradient variation $V_T$.
The proof is deferred to \pref{app:emp-VT-de-breg}.
\begin{myLemma}[Empirical Gradient Variation Conversion - II]
    \label{lem:emp-VT-de-breg}
    Under \pref{assum:smoothness}, the empirical gradient variation $\Vb_T \define \sumTT \|\nabla f_t(\x_t) - \nabla f_{t-1}(\x_{t-1})\|^2$ can be upper bounded as
    \begin{equation}
        \label{eq:emp-VT-de-breg}
        \Vb_T \les 2L \sumT \D_{f_t}(\x_{t,\is}, \x_t) + L^2 \sumTT \|\x_{t,\is} - \x_{t-1,\is}\|^2 + V_T, 
    \end{equation}
    where $\is \in [N]$ can be any base learner index.
\end{myLemma}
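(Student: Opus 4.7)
The plan is to start from the decomposition already displayed in~\pref{eq:emp-VT-de-breg-1}, which splits $\|\nabla f_t(\x_t) - \nabla f_{t-1}(\x_{t-1})\|^2$ into three squared-norm pieces by inserting $\nabla f_t(\x_{t,\is})$ and $\nabla f_{t-1}(\x_{t-1,\is})$ and applying the elementary $\|a+b+c\|^2 \les \|a\|^2 + \|b\|^2 + \|c\|^2$. Summing this bound across $t = 2, \ldots, T$ and working on each of the three summands independently is then the strategy.

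For the first summand $\sumTT \|\nabla f_t(\x_t) - \nabla f_t(\x_{t,\is})\|^2$, I apply~\pref{prop:smoothness} to the smooth function $f_t$ on the pair $(\x_{t,\is}, \x_t)$ to obtain the Bregman-divergence bound $2L \cdot \D_{f_t}(\x_{t,\is}, \x_t)$; summing yields the first term on the right-hand side of~\pref{eq:emp-VT-de-breg}. For the third summand $\sumTT \|\nabla f_{t-1}(\x_{t-1,\is}) - \nabla f_{t-1}(\x_{t-1})\|^2$, the same smoothness property gives $2L \cdot \D_{f_{t-1}}(\x_{t-1,\is}, \x_{t-1})$, and a re-indexing $s = t-1$ turns the sum into $\sum_{s=1}^{T-1} 2L \cdot \D_{f_s}(\x_{s,\is}, \x_s) \le 2L \sumT \D_{f_t}(\x_{t,\is}, \x_t)$, which is absorbed into the same Bregman-divergence term up to the $\les$ constant.

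For the middle summand $\sumTT \|\nabla f_t(\x_{t,\is}) - \nabla f_{t-1}(\x_{t-1,\is})\|^2$, the idea is to insert $\nabla f_{t-1}(\x_{t,\is})$ and split again, yielding
\begin{equation*}
  \|\nabla f_t(\x_{t,\is}) - \nabla f_{t-1}(\x_{t-1,\is})\|^2 \les \|\nabla f_t(\x_{t,\is}) - \nabla f_{t-1}(\x_{t,\is})\|^2 + \|\nabla f_{t-1}(\x_{t,\is}) - \nabla f_{t-1}(\x_{t-1,\is})\|^2.
\end{equation*}
The first piece is bounded by $\sup_{\x \in \X} \|\nabla f_t(\x) - \nabla f_{t-1}(\x)\|^2$, whose sum is exactly $V_T$ by definition~\pref{eq:VT}. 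The second piece is bounded by $L^2 \|\x_{t,\is} - \x_{t-1,\is}\|^2$ directly from the Lipschitz-gradient form of~\pref{assum:smoothness}, producing the remaining base-stability term in~\pref{eq:emp-VT-de-breg}. Collecting the three contributions finishes the proof.

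I do not anticipate any substantial obstacle: the decomposition is already identified in the surrounding text, \pref{prop:smoothness} converts squared gradient differences on the same function into Bregman divergences, and the triangle-inequality split of the cross term cleanly isolates $V_T$ from a base-learner stability term. The only minor subtlety is making sure the re-indexed third summand and the first summand can be combined under the single Bregman-divergence sum $\sumT \D_{f_t}(\x_{t,\is}, \x_t)$, which is why the final statement is written with $\les$ rather than with an explicit constant.
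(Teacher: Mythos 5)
Your proof is correct and follows essentially the same route as the paper's. You start from the three-term split in~\pref{eq:emp-VT-de-breg-1} and then further split the middle term by inserting $\nabla f_{t-1}(\x_{t,\is})$; the paper instead applies a single four-term split that inserts $\nabla f_t(\x_{t,\is})$, $\nabla f_{t-1}(\x_{t,\is})$, and $\nabla f_{t-1}(\x_{t-1,\is})$ all at once, but the resulting four squared-norm pieces are identical, and the subsequent bounding — \pref{prop:smoothness} for the two pieces involving the same function, the $\sup$ bound to produce $V_T$, Lipschitz gradients for the base-learner stability term, and a shift of index $t \mapsto t-1$ to merge the two Bregman sums — matches the paper's proof step for step.
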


\paragraph{Negative Bregman Divergence.}
In \pref{lem:emp-VT-de-breg}, the only term that remains to be handled is the Bregman divergence $\D_{f_t}(\x_{t,\is}, \x_t)$.
Fortunately, this term is canceled by a negative term arising from the linearization of convex functions.
Specifically, the meta regret can be transformed into
\begin{equation}
    \label{eq:negative-Bregman-1}
    \meta = \sumT f_t(\x_t) - \sumT f_t(\x_{t,\is}) = \sumT \inner{\nabla f_t(\x_t)}{\x_t - \x_{t,\is}} - \sumT \D_{f_t}(\x_{t,\is}, \x_t),
\end{equation}
where $\is$ indicates the best base learner.
The last term is a \emph{negative term from linearization}, which represents the compensation when treating a convex function as linear.
Previous studies on gradient-variation regret omit this term, while we show below that this negative term helps achieve a simpler analysis of the empirical gradient variation.

The Bregman divergence negative term in \pref{eq:negative-Bregman-1} cancels the positive term in \pref{eq:emp-VT-de-breg}, and the stability term $\|\x_{t,\is} - \x_{t-1,\is}\|^2$ can be controlled within the base learner via its intrinsic negative stability terms using optimistic algorithms.
Thus, only the gradient variation quantity $V_T$ remains as the desired regret bound.

We emphasize that the Bregman divergence negative term arises from the linearization of convex functions and is thus \emph{algorithm-independent}.
Therefore, we avoid controlling the algorithmic stability for gradient variation regret, in contrast to previous works~\citep{COLT'12:VT,NeurIPS'23:universal}.
To the best of our knowledge, this is the \emph{first} alternative analysis of gradient-variation regret since its introduction.

The inspiration for our negative term from linearization comes from recent advances in stochastic smooth optimization~\citep{ICML'20:simple-acceleration}.
While their work focuses on achieving the $\O(1/T^2)$ function-value convergence rate of Nesterov's accelerated gradient method~\citep{book'18:Nesterov-Convex}, our approach addresses the gradient-variation regret in the universal online (adversarial) convex optimization setting.

\paragraph{About Smoothness Requirement.}
Finally, we discuss the smoothness requirement.
Notice that \pref{prop:smoothness} requires smoothness over the whole $\R^d$, which is a much too strong assumption.
We emphasize that to leverage the useful smoothness property in \pref{prop:smoothness}, we only need $\|\nabla f(\x) - \nabla f(\y)\|^2 \le 2 L \D_f(\y,\x)$ on the feasible domain $\X$.
In this work, we show that this requirement can be satisfied with smoothness over a slightly larger domain than $\X$, formally, the minimal \pref{assum:smoothness-X+} below.
Readers can refer to \pref{lem:smooth-relax} in \pref{app:smooth-relax} for the formal statement and the proof.
\begin{myAssum}[Smoothness over $\X_+$]
    \label{assum:smoothness-X+}
    Under the condition of $\|\nabla f_t(\x)\| \le G$ for any $\x \in \X$ and $t \in [T]$, all online functions are $L$-smooth: $\|\nabla f_t(\x) - \nabla f_t(\y)\| \le L \|\x - \y\|$ over $\X_+$, where $\X_+ \define \{\x + \z \given \x \in \X, \z \in G\Bb/L\}$ and $\Bb \define \{\x \given \|\x\| \le 1\}$ is a unit ball.
\end{myAssum}
We note that \pref{assum:smoothness-X+} is reasonable, as many commonly used functions are globally smooth, such as the squared loss $f(\x) = \frac{1}{2}\|\x\|^2$ and the logistic loss $f(\x) = \log(1 + \exp(- \boldsymbol{\theta}^\top \x))$ for some $\boldsymbol{\theta} \in \R^d$.
While the feasible domains of these functions are constrained to $\X$ (satisfying \pref{assum:smoothness-X}), they remain smooth over larger sets that extend beyond $\X$, thereby satisfying \pref{assum:smoothness-X+}.

Building on the fact that \pref{assum:smoothness-X+} is a sufficient condition for \pref{eq:smoothness-property} on $\X$, it can be directly obtained that \pref{lem:emp-VT-de-breg} also holds under the relaxed \pref{assum:smoothness-X+}.

\subsection{Overall Algorithm and Regret Guarantee}
\label{subsec:method2-overall}
In this section, we present the overall algorithm and its regret guarantees.
Our algorithm adopts a two-layer online ensemble structure.
Leveraging the new decomposition in \pref{lem:emp-VT-de-breg}, the meta algorithm does \emph{not} require explicit control of its algorithmic stability; it only needs an optimistic second-order regret guarantee.
Consequently, we employ \omlprod~\citep{NIPS'16:Optimistic-ML-Prod} as the meta algorithm.
The base learners remain the same as those introduced in \pref{subsec:framework}.

\paragraph{Meta Algorithm.}
\Bregman employs \omlprod~\citep{NIPS'16:Optimistic-ML-Prod} as the meta algorithm to dynamically combine the base learners, in contrast to the complex \msoms used in \Correct.
\omlprod is a simple algorithm that enjoys a second-order optimistic regret bound and admits closed-form weight updates.
Specifically, the weight vector $\p_{t+1} \in \Delta_N$ is updated as follows: $\forall i \in [N]$,
\begin{equation}
    \label{eq:omlprod}
    \begin{gathered}
        \forall t \ge 1, W_{t,i} = \sbr{W_{t-1,i} \cdot \exp\sbr{\epsilon_{t-1,i} r_{t,i} - \epsilon_{t-1,i}^2 (r_{t,i} - m_{t,i})}}^{\frac{\epsilon_{t,i}}{\epsilon_{t-1,i}}}, \text{ and } W_{0,i} = \frac{1}{N},\\
        p_{t+1,i} \propto \epsilon_{t,i} \cdot \exp(\epsilon_{t,i} m_{t+1,i}) \cdot W_{t,i},
    \end{gathered}
\end{equation}
where $W_{t,i}$ and $\epsilon_{t,i}$ denote the potential variable and learning rate for the $i$-th base learner, respectively.
The feedback loss vector $\ellb_t \in \R^N$ is configured as 
$\ell_{t,i} \define \frac{1}{2GD}\inner{\nabla f_t(\x_t)}{\x_{t,i}} + \frac12 \in [0,1]$, where $r_{t,i} = \inner{\p_t}{\ellb_t} - \ell_{t,i}$ measures the instantaneous regret.
The optimistic vector $\m_t \in \R^{N}$ is designed as\footnote{Though $\x_t$ is unknown when using $m_{t,i}$, we only need the scalar value of $\inner{\nabla f_{t-1}(\x_{t-1})}{\x_t}$, which is bounded and can be efficiently solved via a one-dimensional fixed-point problem: $\inner{\nabla f_{t-1}(\x_{t-1})}{\x_t(z)} = z$.
$\x_t$ is a function of $z$ because $\x_t$ relies on $p_{t,i}$, $p_{t,i}$ relies on $m_{t,i}$ and $m_{t,i}$ relies on $z$.
Interested readers can refer to Section 3.3 of \citet{NIPS'16:Optimistic-ML-Prod} for more details.}
\begin{equation}
    \label{eq:Bregman-optimism}
    m_{t,i} = \tfrac{1}{2GD}\inner{\nabla f_{t-1}(\x_{t-1})}{\x_t - \x_{t,i}} \mbox{ for } i \in [N_{\cvx}], \mbox{ and } m_{t,i} = 0 \mbox{ for } i \in [N_{\exp}] \cup [N_{\scvx}].
\end{equation}
The learning rate $\epsilon_{t,i}$ is set as 
\begin{equation}
    \label{eq:omlprod-lr}
    \epsilon_{t,i} = \min \bbr{\frac{1}{8},\ \sqrt{\frac{\log N}{\sum_{s=1}^t (r_{s,i} - m_{s,i})^2}}}.
\end{equation}
\pref{alg:UniGrad-Bregman} describes the overall update procedures of \Bregman.

\begin{algorithm}[!t]
    \caption{\Bregman: Universal GV Regret by Extracted Bregman Divergence}
    \label{alg:UniGrad-Bregman}
    \begin{algorithmic}[1]
    \REQUIRE Base learner configurations $\{\B_i\}_{i \in [N]}\define \{\B_i^\scvx\}_{ i\in [N_\scvx]} \cup \{\B_i^\exp\}_{i \in [N_\exp]} \cup \B^{\cvx}$
    \STATE \textbf{Initialize}: $\M$~---~meta learner \omlprod with $W_{0,i} = \frac{1}{N}$ for all $i \in [N]$ \\
    \makebox[1.8cm]{} $\{\B_i\}_{i \in [N]}$~---~base learners as specified in \pref{subsec:framework}
    \FOR{$t=1$ {\bfseries to} $T$}
        \STATE Submit $\x_t = \sumN p_{t,i} \x_{t,i}$, suffer $f_t(\x_t)$, and observe $\nabla f_t(\cdot)$
        \STATE $\{\B_i\}_{i=1}^N$ update their own decisions to $\{\x_{t+1,i}\}_{i=1}^N$ using  $\nabla f_t(\cdot)$
        \STATE Calculate $\m_{t+1}$~\eqref{eq:Bregman-optimism} and $\r_t$ using $\{\x_{t,i}\}_{i=1}^N$, $\x_t$, $\nabla f_t(\x_t)$, and $\{\x_{t+1,i}\}_{i=1}^N$, send them to $\M$, and obtain $\p_{t+1} \in \Delta_N$ via \pref{eq:omlprod} and \pref{eq:omlprod-lr}
    \ENDFOR
    \end{algorithmic}
\end{algorithm}

We now present the regret guarantees of \Bregman, demonstrating that the algorithm achieves optimal gradient-variation regret without requiring prior knowledge of curvature information.
The proof is provided in \pref{app:main-Bregman}.
\begin{myThm}
    \label{thm:unigrad-bregman}
    Under Assumptions \ref{assum:domain-boundedness}, \ref{assum:gradient-boundedness}, \ref{assum:smoothness-X+}, by setting the learning rate of meta algorithm as \pref{eq:omlprod-lr}, \Bregman (\pref{alg:UniGrad-Bregman}) achieves the following universal gradient-variation regret guarantees: 
    \begin{equation*}
        \sumT f_t(\x_t) - \min_{\x \in \X} \sumT f_t(\x) \le  
        \begin{cases}
            \O\left(\frac{1}{\lambda}\log V_T\right), & \text{when } \{f_t\}_{t=1}^T \text{ are $\lambda$-strongly convex}, \\[2mm]
            \O\left(\frac{d}{\alpha}\log V_T\right), & \text{when } \{f_t\}_{t=1}^T \text{ are $\alpha$-exp-concave}, \\[2mm]
            \O(\sqrt{V_T}), & \text{when } \{f_t\}_{t=1}^T \text{ are convex}.
        \end{cases}
    \end{equation*}
\end{myThm}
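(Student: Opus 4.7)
The plan is to start from the standard two-layer decomposition $\Reg_T = \meta + \base$, where the meta regret is $\sumT (f_t(\x_t) - f_t(\x_{t,\is}))$ for a carefully chosen expert index $\is$ (depending on the realized function class) and the base regret is $\sumT f_t(\x_{t,\is}) - \min_\x \sumT f_t(\x)$. The crucial first move, following \pref{subsec:Bregman-algorithm-analysis}, is to rewrite $f_t(\x_t) - f_t(\x_{t,\is}) = \inner{\nabla f_t(\x_t)}{\x_t - \x_{t,\is}} - \D_{f_t}(\x_{t,\is}, \x_t)$, so that the meta regret equals the linearized regret $\sumT r_{t,\is}$ minus the extracted Bregman negative term $\sumT \D_{f_t}(\x_{t,\is}, \x_t)$. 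The linearized meta regret is exactly what \omlprod controls, and the second-order optimistic bound gives $\sum_t \inner{\nabla f_t(\x_t)}{\x_t-\x_{t,\is}} \lesssim GD\sqrt{(\log N)\sum_t(r_{t,\is}-m_{t,\is})^2}$ up to lower-order $\log N$ factors.

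For the exp-concave and strongly convex cases I would set $\is$ to the corresponding best guess in $\H^{\exp}$ or $\H^{\scvx}$ and use $m_{t,\is}=0$, exactly as in \eqref{eq:optimism-natural-1}. Then $(r_{t,\is}-m_{t,\is})^2 = r_{t,\is}^2$, and the standard curvature-induced negative term ($-\frac{\alpha}{2}r_{t,\is}^2$ for exp-concave via \pref{def:expconcave}, or $-\frac{\lambda}{2}\|\x_t-\x_{t,\is}\|^2$ for strongly convex via \pref{def:sconvex}) combined with AM-GM collapses the linearized meta regret to $\O(1)$, mirroring \pref{eq:meta-regret-exp-concave}. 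Combining this $\O(1)$ meta regret with an optimistic base learner (OONS for $\alpha$-exp-concave, appropriately stepped OOGD for $\lambda$-strongly convex) that achieves $\O(\frac{d}{\alpha}\log V_T)$ and $\O(\frac{1}{\lambda}\log V_T)$ base regret respectively yields the claimed guarantees. Here I will also need the standard argument that the discretization error from the exponential grid $\H^{\exp},\H^{\scvx}$ of \eqref{eq:candidate-pool} is at most a constant factor, and that when $\alpha$ or $\lambda$ is below $1/T$ the convex bound dominates.

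The main work is the convex case. Here I would pick $\is$ to be the single convex base learner $\B^{\cvx}$ and use the nonzero optimism from \eqref{eq:Bregman-optimism}, which gives $r_{t,\is}-m_{t,\is}=\tfrac{1}{2GD}\inner{\nabla f_t(\x_t)-\nabla f_{t-1}(\x_{t-1})}{\x_t-\x_{t,\is}}$, so $\sum_t (r_{t,\is}-m_{t,\is})^2 \lesssim \Vb_T/G^2$ by Cauchy--Schwarz and domain boundedness. Thus the linearized meta regret is $\O(D\sqrt{\Vb_T})$ up to log-log factors absorbed by the $\Ot$-notation. Applying AM-GM of the form $D\sqrt{\Vb_T}\le \tfrac{\Vb_T}{8L}+2LD^2$, I would then invoke \pref{lem:emp-VT-de-breg} to substitute $\Vb_T \lesssim 2L\sumT \D_{f_t}(\x_{t,\is},\x_t) + L^2\sumTT\|\x_{t,\is}-\x_{t-1,\is}\|^2 + V_T$. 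The coefficient chosen via AM-GM is designed so that the resulting multiple of $\sumT \D_{f_t}(\x_{t,\is},\x_t)$ is at most half of the extracted negative Bregman term $\sumT \D_{f_t}(\x_{t,\is},\x_t)$ kept in the meta decomposition, which makes that contribution non-positive. The remaining base-level stability $\sumTT \|\x_{t,\is}-\x_{t-1,\is}\|^2$ is absorbed into the intrinsic negative stability of the OOGD base learner's analysis (which is what enables its own $\O(\sqrt{V_T})$ guarantee). Combining this with the base learner's $\O(\sqrt{V_T})$ regret yields $\Reg_T = \O(\sqrt{V_T})$.

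The main obstacle, and where careful bookkeeping is required, is the constant-tuning of the AM-GM step: the coefficient in front of $\sqrt{\Vb_T}$ has to be small enough that, after invoking \pref{lem:emp-VT-de-breg}, the positive multiple of $\sumT \D_{f_t}(\x_{t,\is},\x_t)$ is strictly dominated by the extracted $-\sumT \D_{f_t}(\x_{t,\is},\x_t)$, and the resulting positive multiple of the base-level stability does not exceed the base learner's intrinsic negative stability budget. Once these two cancellations are aligned, only $V_T$ survives and the $\O(\sqrt{V_T})$ convex rate follows; the remaining $\poly(\log\log T)$ overhead from \omlprod's second-order bound is folded into the $\O(\cdot)$ notation per the conventions stated in \pref{subsec:problem-setup}.
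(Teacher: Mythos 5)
Your overall plan matches the paper's method (\Bregman): linearize the meta regret to extract the negative Bregman term, use \omlprod's second-order optimistic bound, decompose the empirical gradient variation via \pref{lem:emp-VT-de-breg}, and cancel positives against the extracted Bregman and the base learner's intrinsic negative stability. The exp-concave and strongly-convex cases go through (your accounting is slightly different: the paper extracts a \emph{second} Bregman divergence $\D_{f_t}(\xs,\x_{t,\is})$ from the base-regret linearization and uses \pref{eq:base-decompose} to convert the base learner's $\Vb_{T,\is}$ to $V_T$, whereas you implicitly fall back on the base learner's own negative stability budget as in \pref{lemma:empirical-GV-stability}; both are valid).

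The convex case, however, has a genuine error in the order of operations. You apply the AM--GM bound $D\sqrt{\Vb_T}\le \tfrac{\Vb_T}{8L}+2LD^2$ \emph{before} substituting \pref{lem:emp-VT-de-breg}. After substitution, the $\tfrac{\Vb_T}{8L}$ term produces a contribution $\tfrac{V_T}{8L}$ that survives intact: it is linear in $V_T$, there is no negative term anywhere in the budget containing $V_T$ to cancel it, and so the final regret comes out as $\O(V_T/L)$ rather than $\O(\sqrt{V_T})$ (recall $V_T$ can be as large as $\O(T)$, so this is vacuous). The fix is to perform the substitution inside the square root first and then split using subadditivity of $\sqrt{\cdot}$, reserving the $V_T$ term:
\begin{equation*}
    D\sqrt{\Vb_T}
    \;\lesssim\;
    D\sqrt{V_T + L\textstyle\sum_t \D_{f_t}(\x_{t,\is},\x_t) + L^2 S_{T,\is}^{\x}}
    \;\le\;
    \O\bigl(\sqrt{V_T}\bigr)
    + D\sqrt{L\textstyle\sum_t \D_{f_t}(\x_{t,\is},\x_t)}
    + DL\sqrt{S_{T,\is}^{\x}},
\end{equation*}
and only then apply AM--GM to the last two terms so that the resulting coefficients of $\sum_t\D_{f_t}(\x_{t,\is},\x_t)$ and $S_{T,\is}^{\x}$ are small enough to be absorbed by the extracted Bregman term and the base learner's $-\Theta(\Bottomcoef)\cdot S_{T,\is}^{\x}$ stability, while the $\O(\sqrt{V_T})$ part is left untouched. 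This is exactly what the paper does in \pref{eq:cvx-meta}; with that correction your argument closes.
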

Notably, \Bregman improves the convex regret bound from $\O(\sqrt{V_T \log V_T})$ to the optimal $\O(\sqrt{V_T})$~\citep{MLJ'14:variation-Yang}, at the cost of requiring smoothness over a slightly larger domain. 
Even when the curvature $\alpha$ (or $\lambda$) is smaller than $1/T$, our algorithm still guarantees an $\O(\sqrt{V_T})$ bound, since exp-concave and strongly convex functions are also convex, and therefore our convex bound remains valid.

Finally, we compare \Correct and \Bregman in terms of their implications and applications.
\Correct is applicable to multi-player game settings~\citep{NIPS'15:fast-rate-game,ICML'22:TVgame} due to its stability property, which has been shown to be essential for achieving faster convergence in games~\citep{NIPS'15:fast-rate-game}. 
\Bregman, by contrast, can be extended to the anytime setting, thanks to the simplicity and flexibility of its meta algorithm.
Details of these extensions and applications are provided in \pref{sec:applications}. A more detailed comparison between \Correct and \Bregman will be discussed in \pref{subsec:comparison-correct-Bregman}.
\section{One Gradient Query per Round}
\label{sec:one-gradient}
Though achieving favorable regret guarantees in \pref{sec:method1-correct} and \pref{sec:method2-Bregman}, one caveat is that both \Correct and \Bregman require $\O(\log T)$ gradient queries per round because they need $\nabla f_t(\x_{t,i})$ for all $i \in [N]$, making them computationally inefficient when the gradient evaluation is costly, e.g., in nuclear norm optimization~\citep{ICML'09:Ye} and mini-batch optimization~\citep{KDD'14:Li}.
The same concern also appears in the approach of \citet{ICML'22:universal}, who provided small-loss and worst-case regret guarantees for universal online learning.
By contrast, traditional algorithms such as OGD typically work under the \emph{one-gradient} feedback setup, namely, they only require one gradient $\nabla f_t(\x_t)$ for the update.
In light of this, it is natural to ask \emph{whether we can design a universal algorithm that can maintain the desired regret guarantees while using only one gradient query per round.}

In this section, we provide an affirmative answer for this question via a dedicated surrogate optimization technique that implements base algorithms on carefully designed surrogate functions.
We first present a general framework for the one-gradient algorithm in \pref{subsec:general-one-gradient}, and then instantiate it for \Correct and \Bregman algorithms, in \pref{subsec:correct-one-gradient} and \pref{subsec:bregman-one-gradient}, respectively.

\subsection{A General Idea of Surrogate Optimization}
\label{subsec:general-one-gradient}
In the following, we take $\lambda$-strongly convex functions as an example.
To address this challenge, inspired by \citet{IJCAI'18:Wang}, we propose an effective regret decomposition as follows.
Specifically, let $\xs \in \argmin_{\x \in \X} \sum_{t \in [T]} f_t(\x)$ denote the optimal solution and $\is$ be the index of the best base learner with $\lambda_\is \le \lambda \le 2\lambda_\is$.
We have
\begin{align*}
    & \Reg_T \le \sumT \inner{\nabla f_t(\x_t)}{\x_t - \xs} - \frac{\lambda}{4}\sumT \|\x_t - \xs\|^2 - \frac{1}{2}\sumT \D_{f_t}(\xs, \x_t)\\ 
    \le {} & \sumT \inner{\nabla f_t(\x_t)}{\x_t - \xs} - \frac{\lambda_\is}{4}\sumT \|\x_t - \xs\|^2 - \frac{1}{2}\sumT \D_{f_t}(\xs, \x_t)\\ 
    \le {} & \mbr{\sumT \inner{\nabla f_t(\x_t)}{\x_t - \x_{t,\is}} - \frac{\lambda_\is}{4} \sumT \|\x_t - \x_{t,\is}\|^2} - \frac{1}{2} \sumT \D_{f_t}(\xs, \x_t)\\
    & + \mbr{\sumT \sbr{\inner{\nabla f_t(\x_t)}{\x_{t,\is}} + \frac{\lambda_\is}{4} \|\x_{t,\is} - \x_t\|^2} - \sumT \sbr{\inner{\nabla f_t(\x_t)}{\xs} + \frac{\lambda_\is}{4} \|\xs - \x_t\|^2}}\\
    = {} & \underbrace{\mbr{\sumT r_{t,\is} - \frac{\lambda_\is}{4} \sumT \|\x_t - \x_{t,\is}\|^2}}_{\meta} + \underbrace{\mbr{\sumT \hsc_{t,\is}(\x_{t,\is}) - \sumT \hsc_{t,\is}(\xs)}}_{\base} - \frac{1}{2} \sumT \D_{f_t}(\xs, \x_t).
\end{align*}
The first step follows from $\D_{f_t}(\xs, \x_t) \ge \frac{\lambda}{2} \|\x_t - \xs\|^2$ since $f_t(\cdot)$ is $\lambda$-strongly convex, and it preserves the Bregman divergence linearization-induced negative term for \Bregman.
The second step uses the definition of the best base learner (indexed by $\is$): $\lambda_\is \le \lambda \le 2 \lambda_\is$.
The third step inserts an intermediate term $\frac{\lambda_\is}{4} \sumT \|\x_t - \x_{t,\is}\|^2$ and reorganizes the equation.
The last step rewrites the equation by defining the following surrogate loss:
\begin{equation}
    \label{eq:surrogate-scvx}
    \hsc_{t,i}(\x) \define \inner{\nabla f_t(\x_t)}{\x} + \frac{\lambda_i}{4}\|\x - \x_t\|^2.
\end{equation}
Note that the meta regret here is nearly identical to that in the multi-gradient setup, which can be optimized via algorithms with second-order regret guarantees, as demonstrated in \pref{sec:method1-correct} and \pref{sec:method2-Bregman}, making it as flexible as \citet{ICML'22:universal}.
Furthermore, the surrogate loss function in \pref{eq:surrogate-scvx} requires \emph{only one} gradient $\nabla f_t(\x_t)$, making it as efficient as \citet{NIPS'16:MetaGrad}. 

Similarly, for $\alpha$-exp-concave and convex functions, we define the surrogates
\begin{equation}
    \label{eq:surrogate-exp-cvx}
    \hexp_{t,i}(\x) \define \inner{\nabla f_t(\x_t)}{\x} + \frac{\alpha_i}{4} \inner{\nabla f_t(\x_t)}{\x - \x_t}^2,\quad h^{\text{c}}_t(\x) \define \inner{\nabla f_t(\x_t)}{\x}.
\end{equation}
In \pref{subsec:correct-one-gradient} and \pref{subsec:bregman-one-gradient}, we propose one-gradient improvements of \Correct and \Bregman, respectively, and demonstrate that additional novel analyses are required to achieve gradient-variation guarantees for the base regret, \mbox{\emph{defined on surrogates}}.

As a byproduct, we show that this regret decomposition approach can be used to recover the minimax optimal worst-case universal guarantees using one gradient with a simple approach and analysis, with proof provided in \pref{app:recover-MetaGrad}.
\begin{myProp}
    \label{prop:recover-MetaGrad}
    Under Assumptions~\ref{assum:domain-boundedness} and \ref{assum:gradient-boundedness}, using the surrogate loss functions as defined in \pref{eq:surrogate-scvx} and \pref{eq:surrogate-exp-cvx}, and running \mlprod as the meta algorithm (by choosing $\m_t = \mathbf{0}$ in \pref{eq:omlprod}) guarantees $\O(\frac{1}{\lambda}\log T)$, $\O(\frac{d}{\alpha} \log T)$ and $\O(\sqrt{T})$ regret bounds for strongly convex, exp-concave and convex functions, using one gradient per round.
\end{myProp}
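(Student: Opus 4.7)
The plan is to extend the regret decomposition displayed just above the proposition to all three function families and then bound meta and base regret separately, dropping the nonpositive Bregman term $-\tfrac{1}{2}\sumT \D_{f_t}(\xs,\x_t)$ (which is only needed for the gradient-variation analysis). For $\lambda$-strongly convex functions, the displayed inequality already yields $\Reg_T \le \meta + \base$ with $\meta = \sumT r_{t,\is} - \tfrac{\lambda_\is}{4}\sumT \|\x_t - \x_{t,\is}\|^2$ and $\base = \sumT \hsc_{t,\is}(\x_{t,\is}) - \sumT \hsc_{t,\is}(\xs)$. For $\alpha$-exp-concave functions, I would insert $\x_{t,\is}$ into $\inner{\nabla f_t(\x_t)}{\x_t - \xs}$ and add/subtract $\tfrac{\alpha_\is}{4}\inner{\nabla f_t(\x_t)}{\x_{t,\is} - \x_t}^2$ to match $\hexp_{t,\is}(\x_{t,\is}) - \hexp_{t,\is}(\xs)$; using $\alpha_\is \le \alpha \le 2\alpha_\is$, the residual $(\tfrac{\alpha_\is}{4} - \tfrac{\alpha}{2})\sumT \inner{\nabla f_t(\x_t)}{\x_t-\xs}^2$ is nonpositive and can be discarded. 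Identifying $\inner{\nabla f_t(\x_t)}{\x_t - \x_{t,\is}}^2 = r_{t,\is}^2$ then gives $\meta = \sumT r_{t,\is} - \tfrac{\alpha_\is}{4}\sumT r_{t,\is}^2$ and a base regret on $\hexp_{t,\is}$. For convex functions, splitting at the single convex base learner produces $\Reg_T \le \sumT r_{t,\is} + \sumT(\hc_t(\x_{t,\is}) - \hc_t(\xs))$ directly.

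Next, I would bound meta and base regret for each class. The meta algorithm \mlprod (i.e., \omlprod with $\m_t = \mathbf{0}$) enjoys a second-order bound $\sumT r_{t,\is} \le \O(\sqrt{\sumT r_{t,\is}^2})$, with $\log N = \O(\log\log T)$ absorbed by the $\O(\cdot)$-notation. For the strongly convex case, using $r_{t,\is}^2 \le G^2 \|\x_t - \x_{t,\is}\|^2$ and AM-GM gives $\meta \le \O(G^2/\lambda_\is) = \O(1/\lambda)$, mirroring \pref{eq:meta-regret-exp-concave}; OGD on the $(\lambda_\is/2)$-strongly convex surrogate $\hsc_{t,\is}$ with step size $\O(1/(\lambda_\is t))$ yields $\base = \O(\tfrac{1}{\lambda}\log T)$ by~\citet{MLJ'07:Hazan-logT}. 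For the exp-concave case, AM-GM directly gives $\meta \le \O(1/\alpha_\is) = \O(1/\alpha)$; the surrogate $\hexp_{t,\is}$ is a rank-one quadratic with Hessian $\tfrac{\alpha_\is}{2}\nabla f_t(\x_t)\nabla f_t(\x_t)^\top$, and a short calculation using $\|\nabla f_t(\x_t)\| \le G$ and diameter $D$ shows it is $\Theta(\alpha_\is)$-exp-concave on $\X$, so Online Newton Step delivers $\base = \O(\tfrac{d}{\alpha}\log T)$. For the convex case, $|r_{t,\is}| \le GD$ gives $\meta \le \O(\sqrt{T})$ with no cancellation, and OGD on the linear surrogate $\hc_t$ with step size $\O(D/(G\sqrt{T}))$ yields $\base = \O(\sqrt{T})$. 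Summing the two pieces in each case gives the claimed bounds. Crucially, each surrogate depends on $f_t$ only through $\nabla f_t(\x_t)$, which enforces the one-gradient-per-round promise.

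The main obstacle is the exp-concave decomposition: one must match the quadratic artifacts of the form $\inner{\nabla f_t(\x_t)}{\cdot - \x_t}^2$ introduced by $\hexp_{t,\is}$ against part of the true curvature term $-\tfrac{\alpha}{2}\sumT \inner{\nabla f_t(\x_t)}{\x_t-\xs}^2$, and only the specific ordering $\alpha_\is \le \alpha$ (enforced by the candidate grid $\H^{\exp}$ undershooting the true curvature, and symmetrically $\lambda_\is \le \lambda$ for \pref{def:sconvex}) makes the residual nonpositive. Once this algebraic step and the routine verification that $\hexp_{t,i}$ is $\Theta(\alpha_i)$-exp-concave on $\X$ are in place, the rest of the argument reduces to invoking classical OGD/ONS base-learner regret bounds together with the second-order guarantee of \mlprod.
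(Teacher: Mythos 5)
Your proposal is correct and matches the paper's own proof in all essential respects: both decompose $\Reg_T$ via the surrogate-optimization identity from Section~6.1 into a meta regret on $r_{t,\is}$ (with the curvature-induced negative term) plus a base regret on the one-gradient surrogate, bound the meta regret by applying AM-GM to the second-order $\O(\sqrt{\sumT r_{t,\is}^2})$ guarantee of \mlprod against the negative quadratic, and close each base regret with the classical OGD/ONS $\O(\log T)$ or $\O(\sqrt{T})$ bounds. The only cosmetic difference is in the strongly convex meta bound, where the paper observes that $\lambda$-strong convexity implies $(\lambda/G^2)$-exp-concavity and reuses the exp-concave meta calculation, whereas you apply $r_{t,\is}^2 \le G^2\|\x_t - \x_{t,\is}\|^2$ and AM-GM directly against the $-\frac{\lambda_\is}{4}\|\x_t - \x_{t,\is}\|^2$ term---an equivalent computation that is slightly more self-contained.
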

\begin{myRemark}
    This result demonstrates that our surrogate optimization framework not only enables one-gradient universal algorithms but also provides a unified approach to recover classical minimax optimal bounds.
    The simplicity of the analysis compared to existing approaches highlights the power of the surrogate loss technique.
    \endenv
\end{myRemark}

\subsection{\textsf{UniGrad++.Correct}: One-Gradient Improvement of \textsf{UniGrad.Correct}}
\label{subsec:correct-one-gradient}
To begin with, we recall the base regret definition with surrogates (we still take $\lambda$-strongly convex functions as an example):
\begin{equation*}
    \base = \mbr{\sumT \hsc_{t,\is}(\x_{t,\is}) - \sumT \hsc_{t,\is}(\xs)}, \text{ where } \hsc_{t,i}(\x) \define \inner{\nabla f_t(\x_t)}{\x} + \frac{\lambda_i}{4}\|\x - \x_t\|^2.
\end{equation*}
For strongly convex gradient-variation regret minimization, the best known algorithm runs an initialization of the \OOMD:
\begin{equation*}
    \x_{t,i} = \Pi_\X \mbr{\xh_{t,i} - \eta_{t,i} M_{t,i}},\qquad \xh_{t+1,i} = \Pi_\X \mbr{\xh_{t,i} - \eta_{t,i} \nabla \hsc_{t,i}(\x_{t,i})},
\end{equation*}
where $\eta_{t,i}$ represents the step size.
With appropriately chosen step sizes, the base learner achieves an optimistic bound of $\O(\log (\sum_{t \le T} \|\nabla \hsc_{t,i}(\x_{t,i}) - M_{t,i}\|^2))$ (e.g., Theorem 15 of \citet{COLT'12:VT}).
Therefore, choosing the optimism as $M_{t,i} = \nabla \hsc_{t-1,i}(\x_{t-1,i})$ leads to an empirical gradient-variation bound $\O(\frac{1}{\lambda}\log \Vb_{T,i}^{\scvx})$ \emph{defined on surrogates}, where $\Vb_{T,i}^{\scvx} \define \sumTT \|\nabla \hsc_{t,i}(\x_t) - \nabla \hsc_{t-1,i}(\x_{t-1})\|^2$.
To handle this term, we decompose it as
\begin{align*}
    \Vb_{T,i}^{\scvx} = {} & \sumTT \norm{\nabla f_t(\x_t) + \frac{\lambda_i}{2} (\x_{t,i} - \x_t) - \nabla f_{t-1}(\x_{t-1}) - \frac{\lambda_i}{2} (\x_{t-1,i} - \x_{t-1})}^2 \\
    \les {} & \sumTT \norm{\nabla f_t(\x_t) - \nabla f_{t-1}(\x_{t-1})}^2 + \lambda_i^2 \sumTT \norm{\x_t - \x_{t-1}}^2 + \lambda_i^2 \sumTT \norm{\x_{t,i} - \x_{t-1,i}}^2,
\end{align*}
where $\nabla \hsc_{t,i}(\x) = \nabla f_t(\x_t) + \frac{\lambda_i}{2} (\x - \x_t)$ because of the definition of the strongly convex surrogate function~\eqref{eq:surrogate-scvx}.
Notice that the above decomposition not only contains the desired gradient variation, but also includes the positive stability terms of base decisions $\|\x_{t,i} - \x_{t-1,i}\|^2$ and final decisions $\|\x_t - \x_{t-1}\|^2$.
Fortunately, same as \Correct in \pref{sec:method1-correct}, these stability terms can be effectively addressed through our cancellation mechanism within the online ensemble, by adjusting the correction coefficients accordingly.

\begin{algorithm}[!t]
    \caption{\Correctpp: One-Gradient Improvement of \Correct}
    \label{alg:UniGrad-Correct-1grad}
    \begin{algorithmic}[1]
    \REQUIRE Base learner configurations $\{\B_i\}_{i \in [N]}\define \{\B_i^\scvx\}_{ i\in [N_\scvx]} \cup \{\B_i^\exp\}_{i \in [N_\exp]} \cup \B^{\cvx}$, algorithm parameters $\gamma^\Mid$, $\gamma^\Top$, and $C_0$

    \STATE \textbf{Initialize}: $\M$~---~meta learner \msoms as shown in \pref{alg:2layer-msmwc} \\
    \makebox[1.8cm]{} $\{\B_i\}_{i \in [N]}$~---~base learners as specified in \pref{subsec:framework}\\
    \makebox[1.8cm]{} $\{\hsc_{t,i}(\cdot)\}_{i \in [N]}$~---~strongly convex surrogate losses as defined in \pref{eq:surrogate-scvx}

    \FOR{$t=1$ {\bfseries to} $T$}
        \STATE Submit $\x_t = \sumN p_{t,i} \x_{t,i}$, suffer $f_t(\x_t)$, and observe $\nabla f_t(\cdot)$
        \STATE $\{\B_i^\scvx\}_{i \in [N_\scvx]}$, $\{\B_i^\exp\}_{i \in [N_\exp]}$, and $\B^{\cvx}$ update their own decisions to $\{\x_{t+1,i}\}_{i=1}^N$ using surrogate losses of $\{\hsc_{t,i}(\cdot)\}_{\lambda_i \in \H^{\scvx}}$~\eqref{eq:surrogate-scvx}, $\{\hexp_{t,i}(\cdot)\}_{\alpha_i \in \H^{\exp}}$~\eqref{eq:surrogate-exp-cvx}, and $h_t^{\cvx}(\cdot)$~\eqref{eq:surrogate-exp-cvx} \label{line:correct-1grad}

        \STATE Compute $\{\ellb^\Mid_{t,j}, \m^\Mid_{t+1,j}\}_{j=1}^M$ via~\pref{eq:mid-loss}, send to $\M$, get $\{\q_{t+1,j}^\Mid\}_{j=1}^M \in (\Delta_N)^M$ \label{line:meta-mid-update-1grad}

        \STATE Compute $\ellb_t^\Top, \m_{t+1}^\Top$ via~\pref{eq:top-loss-2}, send to $\M$, and obtain $\q_{t+1}^\Top \in \Delta_M$ \label{line:meta-top-update-1grad}

        \STATE Aggregate the final meta weights $\p_{t+1} \in \Delta_N$ via \pref{eq:meta-output}
        \label{line:meta-combine-1grad}
    \ENDFOR
    \end{algorithmic}
\end{algorithm}

This efficient version is concluded in \pref{alg:UniGrad-Correct-1grad}, where the only algorithmic modification from \pref{alg:UniGrad-Correct} is that in \pref{line:correct-1grad}, base learners update on the carefully designed surrogate functions, not the original ones.
A more detailed description of base learners' update rules are deferred to \pref{app:base-update-1grad} for self-containedness.
We provide the regret guarantee below, which achieves the same guarantees as \pref{thm:unigrad-correct} with only one gradient per round.
The proof is in \pref{app:unigrad-correct-1grad}.
\begin{myThm}
    \label{thm:unigrad-correct-1grad}
    Under Assumptions~\ref{assum:domain-boundedness}, \ref{assum:gradient-boundedness}, \ref{assum:smoothness-X}, by setting 
    \begin{gather}
        C_0=\max\bbr{1,8D,4\gamma^{\Top},4D^2C_{11},16D^2C_{10},80D^3L^2+4D^2C_1}, \label{eq:unigrad-correct-1grad}\\
        \gamma^\Top=\max\bbr{2D^2C_{11},8D^2C_{10},40D^3L^2+2D^2C_1},
        \gamma^{\Mid}=\max\bbr{C_{11},4C_{10}, 20DL^2+C_1}, \notag
    \end{gather}
    where $C_1=128(D^2L^2+G^2)$, $C_{10}=4L^2 + 32D^2 G^2 L^2 + 8G^4$, and $C_{11}=128G^2(1+L^2)$, \Correctpp (\pref{alg:UniGrad-Correct-1grad}) achieves the following universal gradient-variation regret guarantees using only one gradient per round:
    \begin{equation*}
        \sumT f_t(\x_t) - \min_{\x \in \X} \sumT f_t(\x) \le  
        \begin{cases}
            \O\left(\frac{1}{\lambda}\log V_T\right), & \text{when } \{f_t\}_{t=1}^T \text{ are $\lambda$-strongly convex}, \\[2mm]
            \O\left(\frac{d}{\alpha}\log V_T\right), & \text{when } \{f_t\}_{t=1}^T \text{ are $\alpha$-exp-concave}, \\[2mm]
            \O(\sqrt{V_T \log V_T}), & \text{when } \{f_t\}_{t=1}^T \text{ are convex}.
        \end{cases}
    \end{equation*}
\end{myThm}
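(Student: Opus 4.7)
The plan is to mirror the proof of \pref{thm:unigrad-correct}, but with two substantive modifications: (i) the regret is first decomposed so that the \emph{base regret} is measured on surrogate losses (as in \pref{subsec:general-one-gradient}), while the meta regret retains essentially the same form as in the multi-gradient case; (ii) the correction coefficients in \msoms are inflated so that the \emph{additional} stability terms introduced by the surrogate decomposition can be absorbed. Concretely, I start from the three surrogate-based decompositions
\[
\Reg_T \le \meta + \base_{\scvx/\exp/\cvx} - \tfrac{1}{2}\sumT \D_{f_t}(\xs,\x_t),
\]
where $\meta = \sumT r_{t,\is} - \kappa_\is \sumT q_{t,\is}$ with $q_{t,\is}$ being the curvature-induced negative term ($\|\x_t-\x_{t,\is}\|^2$ or $\inner{\nabla f_t(\x_t)}{\x_t-\x_{t,\is}}^2$; absent in the convex case). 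The Bregman-divergence negative term is kept aside for later use in the smoothness conversion, exactly as done for \Correct in \pref{sec:method1-correct}.

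Next I bound the base regret for each group of base learners. Since \B_i^\scvx, \B_i^\exp, and \B^\cvx are now run on the surrogates $\hsc_{t,i}$, $\hexp_{t,i}$, $h^\cvx_t$ defined in Eqs.~\eqref{eq:surrogate-scvx}--\eqref{eq:surrogate-exp-cvx}, the standard optimistic analyses (Optimistic OGD for the convex/strongly convex surrogates, Optimistic ONS for the exp-concave surrogate) give empirical gradient-variation guarantees of the form $\O(\log \Vb_{T,i}^{\scvx/\exp})$ or $\O(\sqrt{\Vb_{T}^{\cvx}})$, measured in terms of $\|\nabla h_{t,i}(\x_{t,i}) - \nabla h_{t-1,i}(\x_{t-1,i})\|^2$. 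I then carry out the surrogate-gradient decomposition sketched for the strongly convex case in \pref{subsec:correct-one-gradient},
\[
\Vb_{T,i}^{\scvx} \les \sumTT \|\nabla f_t(\x_t)-\nabla f_{t-1}(\x_{t-1})\|^2 + \lambda_i^2 \sumTT \|\x_t-\x_{t-1}\|^2 + \lambda_i^2 \sumTT \|\x_{t,i}-\x_{t-1,i}\|^2,
\]
and analogously for the exp-concave and convex surrogates. For the exp-concave case one needs the identity $\nabla \hexp_{t,i}(\x) = \nabla f_t(\x_t)\bigl(1+\tfrac{\alpha_i}{2}\inner{\nabla f_t(\x_t)}{\x-\x_t}\bigr)$ and $\|\nabla f_t(\x_t)\|\le G$ from \pref{assum:gradient-boundedness} to bound the extra terms by constants times $\|\x_t-\x_{t-1}\|^2$ and $\|\x_{t,i}-\x_{t-1,i}\|^2$; the constants $C_{10}$ and $C_{11}$ in the theorem statement precisely reflect this computation. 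Finally, the leading empirical gradient-variation piece is converted to $V_T$ via \pref{lemma:empirical-GV-stability}, producing a further $L^2\sumTT \|\x_t-\x_{t-1}\|^2$ term; these $\|\x_t-\x_{t-1}\|^2$ contributions are exactly the additional pressure compared with the multi-gradient proof.

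For the meta regret, I plug the surrogate-based losses and optimisms into \pref{alg:2layer-msmwc}; the universal optimism analysis of \pref{lem:universal-optimism} and the two-layer bound of \pref{lem:two-layer-MsMwC} continue to apply because nothing in them depends on whether $r_{t,i}$ comes from original or surrogate base updates. Applying \pref{lem:decompose-three-layer}, the algorithmic stability $\|\x_t-\x_{t-1}\|^2$ is split into $\|\q^\Top_t-\q^\Top_{t-1}\|_1^2$, a $q^\Top_{t,j}$-weighted $\|\q^\Mid_{t,j}-\q^\Mid_{t-1,j}\|_1^2$, and a $q^\Top_{t,j}q^\Mid_{t,j,i}$-weighted $\|\x_{t,i}-\x_{t-1,i}\|^2$. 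The first is canceled by the intrinsic \MSMWCtop negative term, the second by the \MSMWCtop correction governed by $\gamma^\Top$, and the third by the \MSMWCmid correction governed by $\gamma^\Mid$. The crux is that the surrogate decomposition injects \emph{additional} positive multiples of $\|\x_t-\x_{t-1}\|^2$ and $\|\x_{t,i}-\x_{t-1,i}\|^2$ (with coefficients $C_{10}$ and $C_{11}$ from the strongly convex/exp-concave surrogates and $20DL^2+C_1$ from the convex conversion chain), so both correction coefficients must be enlarged to dominate these new terms uniformly over $i\in[N]$; the choices $\gamma^\Top=\max\{2D^2C_{11},8D^2C_{10},40D^3L^2+2D^2C_1\}$ and $\gamma^\Mid=\max\{C_{11},4C_{10},20DL^2+C_1\}$ in \pref{eq:unigrad-correct-1grad} are calibrated precisely so that every newly created positive term has a strictly larger negative counterpart. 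The normalization $C_0$ is then set to keep $\ell^\Top_{t,j},m^\Top_{t,j},\ell^\Mid_{t,j,i},m^\Mid_{t,j,i}\in[-1,1]$ as required by \pref{lem:two-layer-MsMwC}.

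Putting the pieces together in the three regimes, the exp-concave/strongly convex base bounds on surrogates are $\log$-type quantities in the surrogate gradient variation; combined with the AM-GM trick used around \pref{eq:meta-regret-exp-concave} and the curvature-induced meta negatives $-\kappa_\is\sum_t q_{t,\is}$, the meta cost is $\O(1)$, yielding the desired $\O(\frac{1}{\lambda}\log V_T)$ and $\O(\frac{d}{\alpha}\log V_T)$. In the convex regime, the convex surrogate's Optimistic OGD delivers $\O(\sqrt{\Vb_T^\cvx})$, the meta regret is $\O(\sqrt{\Vs\log \Vs})$ via \msoms, and the Bregman-divergence negative term $-\tfrac12\sum_t\D_{f_t}(\xs,\x_t)$ retained from linearization absorbs the residual smoothness-induced positive contribution; the overall bound becomes $\O(\sqrt{V_T\log V_T})$. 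The main obstacle I anticipate is the bookkeeping in step (ii): one must verify that, for \emph{every} guess $\lambda_i\in \H^\scvx$ and $\alpha_i\in \H^\exp$, the worst-case coefficient of the extra $\|\x_{t,i}-\x_{t-1,i}\|^2$ and $\|\x_t-\x_{t-1}\|^2$ terms is uniformly dominated by the chosen $\gamma^\Mid,\gamma^\Top$ (this is why the statement takes a maximum over three sources), and that the exp-concave surrogate's gradient does not blow up beyond a constant factor of $G$---both are delicate but essentially mechanical once the right constants are identified.
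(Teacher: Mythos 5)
Your overall plan — decompose the regret so the base learners optimize surrogate losses $\hsc_{t,i},\hexp_{t,i},h^\cvx_t$, then inflate the \msmwc correction coefficients $\gamma^\Top,\gamma^\Mid$ to absorb the extra stability terms introduced by the surrogate gradient decomposition — matches the paper's proof of \pref{thm:unigrad-correct-1grad} in its skeleton: the same surrogate gradient-variation bounds ($\Vb^{\scvx}_{T,i},\Vb^{\exp}_{T,i}$ decomposed into $V_T$, $S_T^{\x}$, $S_{T,i}^{\x}$ with constants feeding into $C_{10},C_{11}$), the same use of \pref{lem:universal-optimism}, \pref{lem:two-layer-MsMwC}, and \pref{lem:decompose-three-layer}, and the same cascaded cancellation.

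However, there is a genuine error in the regret decomposition. You write $\Reg_T \le \meta + \base - \frac12\sum_t\D_{f_t}(\xs,\x_t)$, claim that the Bregman-divergence negative term is "kept aside for later use in the smoothness conversion, exactly as done for \Correct in Section~3," and later assert that this term "absorbs the residual smoothness-induced positive contribution" in the convex regime. Both claims are wrong. First, the Bregman extraction is the defining ingredient of \Bregman (Section~4), not \Correct (Section~3); the paper's proof of \Correctpp uses the decomposition $\Reg_T \le \meta + \base$ with no retained Bregman term at all — strong convexity (resp.\ exp-concavity) gives $-\frac{\lambda_\is}{2}\|\x_t-\x_{t,\is}\|^2$ (resp.\ $-\frac{\alpha_\is}{2}\inner{\g_t}{\x_t-\x_{t,\is}}^2$) directly, and the convex case has no curvature negative at all. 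Second, even if one does retain $-\frac12\sum_t\D_{f_t}(\xs,\x_t)$ (which is legitimate, since it only tightens the bound), this term cannot cancel the $L^2\sum_t\|\x_t-\x_{t-1}\|^2$ contribution coming from the \pref{lemma:empirical-GV-stability} conversion: $\D_{f_t}(\xs,\x_t)$ measures curvature between $\x_t$ and the fixed comparator $\xs$ and has no relation to consecutive-step stability $\|\x_t-\x_{t-1}\|^2$. In the \Correct family, every such stability term must be absorbed by the cascaded correction mechanism — the intrinsic $\msoms$ negatives and the injected corrections with coefficients $\gamma^\Top,\gamma^\Mid$ — and the hyperparameter choices in \pref{eq:unigrad-correct-1grad} are calibrated to do exactly that. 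The Bregman route is a distinct analytic device belonging to \Bregman (where it cancels $\D_{f_t}(\xs,\x_t)$ terms arising from \pref{prop:smoothness}, not $\|\x_t-\x_{t-1}\|^2$ terms from \pref{lemma:empirical-GV-stability}). If you drop the Bregman term from your decomposition and the associated claim, the remainder of your sketch is essentially the paper's argument.
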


\subsection{\textsf{UniGrad++.Bregman}: One-Gradient Improvement of \textsf{UniGrad.Bregman}}
\label{subsec:bregman-one-gradient}
In this part, we leverage the same idea of surrogate losses to improve the gradient query efficiency of \Bregman.
However, this becomes more challenging than that in \pref{subsec:correct-one-gradient}, because the meta-base regret decomposition of \pref{eq:Zhang-decompose} (we restate it below)
\begin{equation*}
    \Reg_T = \mbr{\sumT f_t(\x_t) - \sumT f_t(\x_{t,\is})} + \mbr{\sumT f_t(\x_{t,\is}) - \sumT f_t(\xs)},
\end{equation*}
is not suitable anymore because this decomposition would require each base learner to access its own gradient $\nabla f_t(\x_{t,i})$ per round, which is not allowed in the one-gradient setup.
Therefore, the decomposition in \pref{lem:emp-VT-de-breg} becomes invalid, since the negative Bregman divergence term $\D_{f_t}(\x_{t,\is}, \x_t)$ in \pref{eq:negative-Bregman-1} becomes vacuous.

\paragraph{Empirical Gradient Variation Decomposition.}
To address this issue, we need to find a new regret decomposition that allows us to use only one gradient $\nabla f_t(\x_t)$ per round.
Recall that the negative Bregman divergence term from linearization is algorithm-independent.
Building on this observation, we propose a new decomposition for the empirical gradient variation by inserting algorithm-independent intermediate terms such as $\nabla f_t(\xs)$ and $\nabla f_{t-1}(\xs)$, where $\xs \in \argmin_{\x \in \X} \sumT f_t(\x)$ and a more detailed derivation with constants is provided in \pref{eq:empirical-VT-de}.
\begin{align}
    \Vb_T \lesssim {} & \sumTT \sbr{\|\nabla f_t(\x_t) - \nabla f_t(\xs)\|^2 + \|\nabla f_t(\xs) - \nabla f_{t-1}(\xs)\|^2 + \|\nabla f_{t-1}(\xs) - \nabla f_{t-1}(\x_{t-1})\|^2} \notag\\
    \overset{\eqref{eq:smoothness-property}}{\lesssim} {} & L \sumTT \D_{f_t}(\xs, \x_t) + V_T + L \sumTT \D_{f_{t-1}}(\xs, \x_{t-1}) \le 2L \sumT \D_{f_t}(\xs, \x_t) + V_T, \label{eq:VbT-de}
\end{align}
Consequently, to cancel the additional $\D_{f_t}(\xs, \x_t)$ while using only one gradient $\nabla f_t(\x_t)$ per round, we use the overall regret linearization below:
\begin{equation}
    \label{eq:negative-Bregman}
    \sumT f_t(\x_t) - \sumT f_t(\xs) = \sumT \inner{\nabla f_t(\x_t)}{\x_t - \xs} - \sumT \D_{f_t}(\xs, \x_t).
\end{equation}

\paragraph{Surrogate Empirical Gradient Variation.}
Furthermore, we show that additional novel analysis is required to handle the empirical gradient variation \mbox{\emph{defined on surrogates}}.
Again, we take $\lambda$-strongly convex functions as an example and provide the following decomposition of the empirical gradient on surrogates:
\begin{align*}
    D_{T,i}^\scvx = {} & \sumTT \|\nabla \hsc_{t,i}(\x_{t,i}) - \nabla \hsc_{t-1,i}(\x_{t-1,i})\|^2\\
    = {} & \sumTT \norm{\nabla f_t(\x_t) - \nabla f_{t-1}(\x_{t-1}) + \frac{\lambda_i}{2}(\x_{t,i} - \x_t) - \frac{\lambda_i}{2}(\x_{t-1,i} - \x_{t-1})}^2\\
    \les {} & \sumTT \norm{\nabla f_t(\x_t) - \nabla f_{t-1}(\x_{t-1})}^2 + \sumTT \|\x_{t,i} - \x_t\|^2 + \sumTT \|\x_{t-1,i} - \x_{t-1}\|^2\\
    \les {} & \sumTT \norm{\nabla f_t(\x_t) - \nabla f_{t-1}(\x_{t-1})}^2 + \sumTT \|\x_{t,i} - \x_t\|^2.
\end{align*}
In the third step, instead of controlling $(\x_{t,i} - \x_t) - (\x_{t-1,i} - \x_{t-1})$ per round, which requires analyzing the stability term $\|\x_t - \x_{t-1}\|^2$ directly, we deal with the additional surrogate-induced terms by \emph{aggregation over the time horizon}, using a similar idea in our unifying optimism design in \pref{lem:universal-optimism}.
Consequently, this term can be canceled out by the curvature-induced negative term from the meta regret, as shown in \pref{eq:meta-regret-exp-concave}.
For this cancellation to occur, appropriate coefficients are needed, which are provided in the detailed proofs and are omitted here for clarity.\footnote{For strongly convex functions, a simpler choice would be $M_{t,i} = \nabla f_{t-1}(\x_{t-1})$ to allow simpler surrogate-induced terms.
We choose the gradient of the last round as the optimism since this is the the only choice at present to achieve a gradient-variation regret for exp-concave functions~\citep{COLT'12:VT}.}

The empirical gradient variation defined on the original functions, i.e., $\|\nabla f_t(\x_t) - \nabla f_{t-1}(\x_{t-1})\|^2$, can be decomposed as shown in \pref{eq:VbT-de} and canceled by the negative Bregman divergence terms from linearization as presented in \pref{eq:negative-Bregman}.

\begin{algorithm}[!t]
    \caption{\Bregmanpp: One-Gradient Improvement of \Bregman}
    \label{alg:UniGrad-Bregman-1grad}
    \begin{algorithmic}[1]
    \REQUIRE Base learner configurations $\{\B_i\}_{i \in [N]}\define \{\B_i^\scvx\}_{ i\in [N_\scvx]} \cup \{\B_i^\exp\}_{i \in [N_\exp]} \cup \B^{\cvx}$ 
    \STATE \textbf{Initialize}: $\M$~---~meta learner \omlprod with $W_{0,i} = \frac{1}{N}$ for all $i \in [N]$ \\
    \makebox[1.8cm]{} $\{\B_i\}_{i \in [N]}$~---~base learners as specified in \pref{subsec:framework}
    \FOR{$t=1$ {\bfseries to} $T$}
        \STATE Submit $\x_t = \sumN p_{t,i} \x_{t,i}$, suffer $f_t(\x_t)$, and observe $\nabla f_t(\cdot)$
        \STATE $\{\B_i^\scvx\}_{i \in [N_{\scvx}]}$, $\{\B_i^\exp\}_{i \in [N_{\exp}]}$, $\B^{\cvx}$ update their own decisions to $\{\x_{t+1,i}\}_{i=1}^N$ using surrogate losses of $\{\hsc_{t,i}(\cdot)\}_{\lambda_i \in \H^{\scvx}}$~\eqref{eq:surrogate-scvx}, $\{\hexp_{t,i}(\cdot)\}_{\alpha_i \in \H^{\exp}}$~\eqref{eq:surrogate-exp-cvx}, and $h_t^{\cvx}(\cdot)$~\eqref{eq:surrogate-exp-cvx} \label{line:bregman-1grad}
        \STATE Calculate $\m_{t+1}$~\eqref{eq:Bregman-optimism} and $\r_t$ using $\{\x_{t,i}\}_{i=1}^N$, $\x_t$, $\nabla f_t(\x_t)$, and $\{\x_{t+1,i}\}_{i=1}^N$, send them to $\M$, and obtain $\p_{t+1} \in \Delta_N$
    \ENDFOR
    \end{algorithmic}
\end{algorithm}

This simple and novel analysis eliminates the need to control the overall algorithmic stability $\|\x_t - \x_{t-1}\|^2$ required by \Correct, and is essential for achieving the optimal regret guarantees, as provided in the following, where the proof is deferred to \pref{app:unigrad-bregman-1grad}.
\begin{myThm}
    \label{thm:unigrad-bregman-1grad}
    Under Assumptions \ref{assum:domain-boundedness}, \ref{assum:gradient-boundedness}, \ref{assum:smoothness-X+}, by setting the learning rate of meta algorithm as \pref{eq:omlprod-lr}, \Bregmanpp (\pref{alg:UniGrad-Bregman-1grad}) achieves the following universal gradient-variation regret guarantees using only one gradient per round:
    \begin{equation*}
        \sumT f_t(\x_t) - \min_{\x \in \X} \sumT f_t(\x) \le  
        \begin{cases}
            \O\left(\frac{1}{\lambda}\log V_T\right), & \text{when } \{f_t\}_{t=1}^T \text{ are $\lambda$-strongly convex}, \\[2mm]
            \O\left(\frac{d}{\alpha}\log V_T\right), & \text{when } \{f_t\}_{t=1}^T \text{ are $\alpha$-exp-concave}, \\[2mm]
            \O(\sqrt{V_T}), & \text{when } \{f_t\}_{t=1}^T \text{ are convex}.
        \end{cases}
    \end{equation*}
\end{myThm}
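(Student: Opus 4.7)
The plan is to build on the three ingredients highlighted in this section: (i) the surrogate-based regret decomposition from \pref{subsec:general-one-gradient}, (ii) the new empirical gradient variation bound in \pref{eq:VbT-de}, and (iii) the linearization-induced negative Bregman divergence in \pref{eq:negative-Bregman}. For each of the three function classes, I would instantiate the decomposition
\[
\Reg_T \le \underbrace{\sumT r_{t,\is} - \frac{c(\alpha,\lambda)}{2} \sumT q_{t,\is}}_{\meta} + \underbrace{\sumT h_{t,\is}(\x_{t,\is}) - \sumT h_{t,\is}(\xs)}_{\base} - \frac{1}{2} \sumT \D_{f_t}(\xs, \x_t),
\]
where $q_{t,\is}$ denotes the corresponding curvature-induced negative term ($\|\x_t-\x_{t,\is}\|^2$ for strongly convex and $\inner{\nabla f_t(\x_t)}{\x_t-\x_{t,\is}}^2$ for exp-concave, and $0$ for convex), obtained from the same algebraic derivation as in \pref{subsec:general-one-gradient}.

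For the meta regret, I would invoke the optimistic second-order bound of \omlprod (with the design of $\m_t$ in \pref{eq:Bregman-optimism}) to bound $\sumT r_{t,\is}$ by $\Ot\big(\sqrt{\sumT (r_{t,\is} - m_{t,\is})^2}\big)$. Following the aggregated argument of \pref{lem:universal-optimism}, this second-order quantity reduces, respectively, to $\sumT \|\x_t - \x_{t,\is}\|^2$, $\sumT \inner{\nabla f_t(\x_t)}{\x_t - \x_{t,\is}}^2$, and $\Vb_T$; in the first two cases, curvature-induced negatives cancel to give $\O(1)$, while in the convex case the $\Vb_T$ remains. For the base regret, I would use the fact that each base learner runs optimistic OMD on its surrogate with the last-round surrogate gradient as optimism, yielding an $\O(\frac{1}{\lambda}\log D^{\scvx}_{T,\is})$, $\O(\frac{d}{\alpha}\log D^{\exp}_{T,\is})$, or $\O(\sqrt{D^{\cvx}_T})$ bound respectively, where $D_T$ is the empirical gradient variation defined on the appropriate surrogate.

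The core work then lies in reducing these surrogate-variation quantities back to $V_T$. Using the expansion shown in the text for $D^{\scvx}_{T,i}$ (and analogous expansions for the exp-concave and convex surrogates), I would bound $D_{T,i} \lesssim \Vb_T + \text{(residual surrogate-induced terms)}$. For the residual terms of the form $\sumTT \|\x_{t,i} - \x_t\|^2$, I would re-index by summation aggregation (rather than per-round stability) and cancel them against the curvature-induced negative terms $\sumT q_{t,\is}$ left over after absorbing $\meta$, choosing base step sizes with enough slack. Simultaneously, I would bound $\Vb_T$ via \pref{eq:VbT-de}, producing the extra positive term $2L \sumT \D_{f_t}(\xs,\x_t)$ and the desired $V_T$. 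The crucial step is that the linearization-induced $-\frac{1}{2} \sumT \D_{f_t}(\xs,\x_t)$ pulled out of $\Reg_T$ exactly cancels this remainder (with constants absorbed into the $\lesssim$). A final application of the self-confident inequality (\pref{lem:AM-GM} together with $x \le a\sqrt{x} + b \Rightarrow x \le 2a^2 + 2b$) converts $\Vb_T$ inside a square root into $V_T$ up to constants.

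The main obstacle, and what distinguishes this proof from that of \pref{thm:unigrad-bregman}, is bookkeeping the cancellation in the exp-concave case: the surrogate gradient variation expansion generates terms scaling like $\alpha_i^2 \inner{\nabla f_t(\x_t)}{\x_{t,i} - \x_t}^2$, which do not naturally align with the curvature-induced negative $\inner{\nabla f_t(\x_t)}{\x_t-\x_{t,\is}}^2$ from the meta regret. I expect to resolve this using the aggregation-over-time trick hinted at in the footnote regarding the choice of optimism, together with choosing $\alpha_i \le \alpha \le 2\alpha_i$ so that the residual scales are compatible, and relying on \pref{def:expconcave} to keep the constants dimension-free. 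Once the cancellation is verified in the exp-concave case, the strongly convex case is strictly easier (the quadratic in $\|\cdot\|$ aligns with $\|\x_t-\x_{t,i}\|^2$), and the convex case is immediate since $q_{t,\is}=0$ and only $\Vb_T$ needs to be converted to $V_T$ through \pref{eq:VbT-de} and the Bregman-divergence cancellation.
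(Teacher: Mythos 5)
Your proposal is correct and follows essentially the same route as the paper: the surrogate-based decomposition retaining $-\frac{1}{2}\sumT\D_{f_t}(\xs,\x_t)$, the \omlprod second-order meta bound, the conversion $\Vb_T\lesssim V_T+L\sumT\D_{f_t}(\xs,\x_t)$ from \pref{eq:VbT-de}, and the aggregation-over-time bound for the surrogate-induced residuals. One small correction: the ``misalignment'' you flag in the exp-concave case is a phantom obstacle. The surrogate gradient variation produces $\alpha_\is^2 G^2 \sumT\inner{\g_t}{\x_{t,\is}-\x_t}^2$, which is \emph{exactly} the same quadratic form as the curvature-induced negative $\alpha_\is\sumT\inner{\g_t}{\x_t-\x_{t,\is}}^2$ (the sign disappears under squaring). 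No aggregation trick or special use of the footnote is needed to bridge them; the only bookkeeping is that the prefactor is $\alpha_\is^2 G^2$ rather than $\alpha_\is$, which is absorbed by noting $\alpha_\is\le 1$, letting the residual enter through $\log(1+x)\le x$ via \pref{lem:ln-de} with a large free constant $C$, and paying only an $\O(\log C)=\O(\log(1/\alpha))$ overhead. The paper then cancels $\frac{4G^2}{C_{14}}\sumT\inner{\g_t}{\x_t-\x_{t,\is}}^2$ (base residual) plus $\frac{C_0}{2C_4}\sumT\inner{\g_t}{\x_t-\x_{t,\is}}^2$ (meta AM-GM) against $-\frac{\alpha}{4}\sumT\inner{\g_t}{\x_t-\x_{t,\is}}^2$ by tuning $C_4, C_{14}$. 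Your final mention of a self-confident inequality is also unnecessary: since $\Vb_T$ enters only linearly (into a $\log$ for the curved cases, or under a $\sqrt{\cdot}$ in the convex case via $\sqrt{a+b}\le\sqrt a+\sqrt b$), the conversion to $V_T$ is direct once the Bregman terms cancel.
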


\section{Implications, Applications, and Extension}
\label{sec:applications}
In this section, we demonstrate the effectiveness of our methods through their implications for small-loss and gradient-variance regret in \pref{subsec:implication}, as well as their applications to the Stochastically Extended Adversarial (SEA) model (in \pref{subsec:applications-SEA}) and online games (in \pref{subsec:applications-games}).
Finally, in \pref{subsec:extension-anytime}, we establish optimal universal regret without requiring prior knowledge of the time horizon $T$ through an anytime variant of our method.

\subsection{Implications to Small-Loss and Gradient-Variance Bounds}
\label{subsec:implication}
In this subsection, we demonstrate that our universal gradient-variation regret bounds naturally yield other problem-dependent quantities such as small-loss~\citep{NIPS'10:smooth,AISTATS'12:Orabona} and gradient-variance~\citep{COLT'08:Hazan-variation,NIPS'09:Hazan-Kale} regret bounds directly through the analysis \emph{without} any algorithmic modifications.
This demonstrates that our methods can capture the complexity of the online learning problem from multiple perspectives, providing a more comprehensive understanding of the problem's behavior.

Specifically, the small-loss and gradient-variance quantities are formally defined as:
\begin{equation}
    \label{eq:FT-WT}
    \begin{gathered}
        F_T \define \min_{\x \in \X} \sumT f_t(\x) - \sumT \min_{\x \in \X_+} f_t(\x), \ \ \X_+ \define \bbr{\x + \z \given \x \in \X, \z \in \tfrac{G}{L} \cdot \Bb}\\
        W_T \define \sup_{\{\x_1,\ldots,\x_T\} \in \X} \bbr{\sumT \|\nabla f_t(\x_t) - \mub_T\|^2}, \ \ \mub_T \define \frac{1}{T} \sumT \nabla f_t(\x_t),
    \end{gathered}
\end{equation}
where $\X_+$ is a superset of the original domain $\X$ defined in \pref{assum:smoothness-X+} and $\mu_T$ represents the average gradient. 

In what follows, we demonstrate that both the small-loss and gradient-variance quantities can be derived from the empirical gradient variation through standard analytical techniques.
Specifically, for the small-loss quantity, we utilize the self-bounding property $\|\nabla f(\x)\|_2^2 \le 4 L (f(\x)  - \min_{\x \in \X_+} f(\x))$ for any $L$-smooth function $f:\X_+ \rightarrow \R$ and any $\x \in \X_+$,\footnote{This more restricted self-bounding property can be derived using the arguments in \pref{app:smooth-relax}.} which yields
\begin{equation}
    \label{eq:to-small-loss}
    \begin{aligned}
        \Vb_T \le {} & 2 \sumTT \|\nabla f_t(\x_t)\|^2 + 2 \sumTT \|\nabla f_{t-1}(\x_{t-1})\|^2 \le 4 \sumT \|\nabla f_t(\x_t)\|^2\\
        \le {} &  16 L \sbr{\sumT f_t(\x_t) - \min_{\x \in \X_+} f_t(\x)}.
    \end{aligned}
\end{equation}
Note that the right-hand side of \pref{eq:to-small-loss} can be directly transformed to the small-loss quantity using standard techniques~\citep{NIPS'10:smooth,AISTATS'12:Orabona}. 

Next, we demonstrate that the gradient-variance quantity can be derived from the empirical gradient variation through a standard analytical technique:
\begin{align}
    \Vb_T = {} & \sumTT \|\nabla f_t(\x_t) - \nabla f_{t-1}(\x_{t-1})\|^2 \le 2 \sumTT \|\nabla f_t(\x_t) - \mub_T\|^2 + 2 \sumTT \|\nabla f_{t-1}(\x_{t-1}) - \mub_T \|^2 \notag\\
    \le {} & 4 \sumT \|\nabla f_t(\x_t) - \mub_T\|^2 \le 4 W_T. \label{eq:to-variance}
\end{align}
To conclude, our universal gradient-variation regret can directly imply universal small-loss and gradient-variance guarantees without any algorithmic modifications.
We present the corresponding guarantees for \Correctpp and \Bregmanpp below.
The proofs are deferred to \pref{app:FT-Correct} and \pref{app:FT-bregman}.

\begin{myCor}
    \label{cor:FT-WT-Correct}
    Under Assumptions \ref{assum:domain-boundedness}, \ref{assum:gradient-boundedness}, \ref{assum:smoothness-X+}, by setting
    \begin{equation}
        \label{eq:FT-WT-Correct}
        \begin{gathered}
            C_0=\max\bbr{1,8D,4\gamma^{\Top},512D^2G^2,128D^2G^4},\\
            \gamma^\Top=\max\bbr{256D^2G^2,64D^2G^4},\quad
            \gamma^{\Mid}=\max\bbr{128G^2,32G^4},
        \end{gathered}
    \end{equation}
    \Correctpp (\pref{alg:UniGrad-Correct-1grad}) achieves the following universal regret guarantees: 
    \begin{equation*}
        \Reg_T \le  
        \begin{cases}
            \O\left(\min\left\{\tfrac{1}{\lambda} \log F_T,\ \tfrac{1}{\lambda} \log W_T\right\}\right), & \text{when } \{f_t\}_{t=1}^T \text{ are $\lambda$-strongly convex}, \\[2mm]
            \O\left(\min\left\{\tfrac{d}{\alpha} \log F_T,\ \tfrac{d}{\alpha} \log W_T\right\}\right), & \text{when } \{f_t\}_{t=1}^T \text{ are $\alpha$-exp-concave}, \\[2mm]
            \O\left(\min\left\{\sqrt{F_T \log F_T},\ \sqrt{W_T \log W_T}\right\}\right), & \text{when } \{f_t\}_{t=1}^T \text{ are convex}.
        \end{cases}
    \end{equation*}
\end{myCor}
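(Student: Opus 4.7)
The plan is to piggyback on the analysis of Theorem~\ref{thm:unigrad-correct-1grad}, intercepting it one step before the final conversion from empirical gradient variation to $V_T$. Concretely, I would re-examine the proof of Theorem~\ref{thm:unigrad-correct-1grad} and extract the intermediate regret bound expressed in terms of the empirical gradient variation $\Vb_T \define \sumTT \|\nabla f_t(\x_t) - \nabla f_{t-1}(\x_{t-1})\|^2$, namely $\Reg_T \leq \O(\tfrac{1}{\lambda}\log \Vb_T)$, $\O(\tfrac{d}{\alpha}\log \Vb_T)$, and $\O(\sqrt{\Vb_T \log \Vb_T})$ for the three function classes, respectively. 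In the original theorem this bound is then transformed via \pref{lemma:empirical-GV-stability}, which costs $L^2$ factors on the stability term; here we take a different route.

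Next, I would invoke the two conversions that the excerpt has already assembled. The self-bounding property of smooth nonnegative functions (valid under \pref{assum:smoothness-X+}) gives \eqref{eq:to-small-loss}, from which one directly obtains $\Vb_T \leq 16L\sbr{\Reg_T + F_T}$ by writing $\sumT \sbr{f_t(\x_t) - \min_{\x\in\X_+}f_t(\x)} = \Reg_T + F_T$. For the variance bound, inequality \eqref{eq:to-variance} yields $\Vb_T \leq 4 W_T$ with no algorithmic dependence. Substituting into the intermediate regret bounds produces, for each function class, an inequality in $\Reg_T$ only (for the $F_T$ branch) or a closed-form bound in $W_T$.

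The heart of the proof is then a simple self-bounding argument for the small-loss branch. For $\lambda$-strongly convex functions, $\Reg_T \leq \tfrac{c}{\lambda}\log\sbr{16L(\Reg_T+F_T)}$; splitting into the cases $\Reg_T \leq F_T$ (yielding $\Reg_T = \O(\tfrac{1}{\lambda}\log F_T)$) and $\Reg_T > F_T$ (which forces $\Reg_T$ to satisfy $\Reg_T \leq \tfrac{c}{\lambda}\log(32L\Reg_T)$, hence $\Reg_T = \O(1)$) handles both. The exp-concave branch follows the same dichotomy with the $\tfrac{d}{\alpha}$ prefactor. The convex branch requires solving $\Reg_T^2 \leq c^2\cdot 16L(\Reg_T+F_T)\log\sbr{16L(\Reg_T+F_T)}$; again a case split on whether $\Reg_T$ or $F_T$ dominates gives $\Reg_T = \O(\sqrt{F_T \log F_T})$. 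The $W_T$ branch is trivial since $\Vb_T \leq 4 W_T$ contains no $\Reg_T$.

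The main obstacle—and the reason the corollary specifies a different set of algorithmic constants \eqref{eq:FT-WT-Correct}—is verifying that the cascaded correction mechanism of \pref{alg:UniGrad-Correct-1grad} still cancels the right positive terms when we bypass \pref{lemma:empirical-GV-stability}. In the proof of Theorem~\ref{thm:unigrad-correct-1grad} the correction coefficients $(\gamma^\Top,\gamma^\Mid,C_0)$ must dominate contributions of order $L^2\|\x_t-\x_{t-1}\|^2$ arising from that conversion, forcing the $L$-dependent constants $C_1,C_{10},C_{11}$. Under the $F_T$/$W_T$ pipeline the only positive terms to absorb are those produced inside the base learners' surrogate gradient variations $\Vb_{T,i}^{\scvx/\exp/\cvx}$, whose scale is controlled by $G$ alone, so $L$ factors can be dropped and the smaller constants in \eqref{eq:FT-WT-Correct} suffice. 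I would carefully trace every inequality in the proof of Theorem~\ref{thm:unigrad-correct-1grad} to confirm that every $L^2$ occurrence was solely attributable to the conversion step, and tighten the cancellation budget accordingly; this bookkeeping, rather than any new algorithmic idea, is the delicate part.
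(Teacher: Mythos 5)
Your proposal is correct and reaches the same bounds by the same core mechanism as the paper: replace the $\Vb_T \to V_T$ conversion (Lemma~\ref{lemma:empirical-GV-stability}) by the $\Vb_T \to 16L F_T^{\x}$ and $\Vb_T \to 4W_T$ conversions (\eqref{eq:to-small-loss} and \eqref{eq:to-variance}), verify that the stability terms needing cancellation then carry only $G$-dependent prefactors so the new $\gamma$'s suffice, and close the $F_T^\x$ branch by a self-bounding argument because $F_T^{\x} = \Reg_T + F_T$. The only real difference from the paper's proof is ordering: the paper substitutes $\Vb_T$ inside the meta second-order quantity $\Vs$ and the surrogate gradient variations $\Vb_{T,\is}^{\{\scvx,\exp,\cvx\}}$ before the $\eta$-tuning and closes the loop with Lemma~\ref{lem:small-loss-log} (log cases) and the linear self-bounding trick~\eqref{eq:FT cvx} after AM--GM (sqrt case), whereas you would first tune to get a clean $\Reg_T \lesssim g(\Vb_T)$ and then substitute and case-split; the two orderings are equivalent. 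Two small caveats worth noting when you carry out the bookkeeping: the quantity you would ``intercept'' is not a single $\Vb_T$ but rather the surrogate gradient variations $\Vb_{T,\is}^{\{\scvx,\exp,\cvx\}}$ and the meta's $\Vs$, each of which must be separately reduced to $\Vb_T$ plus $G$-scale stability (exactly what the paper does at~\eqref{eq:cor eq1}, \eqref{eq:cor1 VbT-to-VT}, \eqref{eq:cor1 VbT-to-VT W_T}); and in the case-split for the log branches, the ``$\Reg_T = \O(1)$'' conclusion hides a $\frac{1}{\lambda}\log\frac{1}{\lambda}$ (resp.\ $\frac{d}{\alpha}\log\frac{1}{\alpha}$) factor, which the paper likewise absorbs by treating $\lambda,\alpha$ as $T$-independent constants in Lemma~\ref{lem:small-loss-log}, so this is not a gap specific to your argument.
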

\begin{myCor}
    \label{cor:FT-bregman}
    Under Assumptions \ref{assum:domain-boundedness}, \ref{assum:gradient-boundedness}, \ref{assum:smoothness-X+}, by setting the learning rate of meta algorithm as \pref{eq:omlprod-lr}, \Bregmanpp (\pref{alg:UniGrad-Bregman-1grad}) achieves the following universal regret: 
    \begin{equation*}
        \Reg_T \le  
        \begin{cases}
            \O\left(\min\left\{\tfrac{1}{\lambda} \log F_T,\ \tfrac{1}{\lambda} \log W_T\right\}\right), & \text{when } \{f_t\}_{t=1}^T \text{ are $\lambda$-strongly convex}, \\[2mm]
            \O\left(\min\left\{\tfrac{d}{\alpha} \log F_T,\ \tfrac{d}{\alpha} \log W_T\right\}\right), & \text{when } \{f_t\}_{t=1}^T \text{ are $\alpha$-exp-concave}, \\[2mm]
            \O\left(\min\left\{\sqrt{F_T},\ \sqrt{W_T}\right\}\right), & \text{when } \{f_t\}_{t=1}^T \text{ are convex}.
        \end{cases}
    \end{equation*}
\end{myCor}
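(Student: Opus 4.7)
The plan is to reuse the entire regret derivation of \pref{thm:unigrad-bregman-1grad} verbatim up to its very last step, and to replace only the conversion from empirical gradient variation $\Vb_T$ to $V_T$ with the two alternative conversions \pref{eq:to-small-loss} and \pref{eq:to-variance} established just before the corollary. In particular, no algorithmic change is required: the same \Bregmanpp with the same learning-rate schedule \pref{eq:omlprod-lr} is used, and only the final bookkeeping differs.

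Concretely, I would first retrace the proof of \pref{thm:unigrad-bregman-1grad} while keeping the upper bound on $\Vb_T$ symbolic. After the negative Bregman divergence $-\sumT \D_{f_t}(\xs,\x_t)$ from the linearization \pref{eq:negative-Bregman} absorbs the $2L\sumT \D_{f_t}(\xs,\x_t)$ term in \pref{eq:VbT-de} (which is exactly the cancellation already exploited in the proof of \pref{thm:unigrad-bregman-1grad}), and the surrogate-induced stability $\sumTT\|\x_{t,\is}-\x_t\|^2$ is handled by the curvature-induced negative term in the meta regret, the intermediate inequalities take the form $\Reg_T = \O(\sqrt{\Vb_T})$ (convex), $\Reg_T = \O((d/\alpha)\log \Vb_T)$ (exp-concave), and $\Reg_T = \O((1/\lambda)\log \Vb_T)$ (strongly convex). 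These are the same shapes that lead, after $\Vb_T \les V_T$, to the rates of \pref{thm:unigrad-bregman-1grad}.

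For the gradient-variance bound, I would directly plug in $\Vb_T \le 4 W_T$ from \pref{eq:to-variance}, which immediately yields the stated $\O(\sqrt{W_T})$, $\O((d/\alpha)\log W_T)$, and $\O((1/\lambda)\log W_T)$ rates without any self-reference. For the small-loss bound, I would instead use $\Vb_T \le 16L(\Reg_T + F_T)$, which follows from \pref{eq:to-small-loss} together with the identity $\sumT(f_t(\x_t)-\min_{\x\in\X_+}f_t(\x)) = \Reg_T + F_T$. Substituting this into the three intermediate inequalities produces the self-bounding relations $\Reg_T \les \sqrt{\Reg_T + F_T}$ in the convex case and $\Reg_T \les c_\star \log(\Reg_T + F_T)$ with $c_\star \in \{1/\lambda,\ d/\alpha\}$ in the curved cases. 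The quadratic one resolves via the standard square-root trick to give $\Reg_T = \O(\sqrt{F_T})$, while the logarithmic ones resolve by a case split on whether $\Reg_T \le F_T$ (if so, the conclusion is immediate; otherwise $\Reg_T \les c_\star \log(2\Reg_T)$, which forces $\Reg_T = \O(c_\star \log c_\star)$). Taking the minimum with the $W_T$-bound, and with the $V_T$-bound already proved in \pref{thm:unigrad-bregman-1grad}, completes the proof.

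The main technical care lies in the self-bounding step for the exp-concave and strongly convex cases, where the implicit inequality $\Reg_T \les c_\star \log(\Reg_T + F_T)$ must be unwound so that the final constants remain proportional to $1/\lambda$ or $d/\alpha$ rather than being inflated by a multiplicative $\log(1/\lambda)$ or $\log(d/\alpha)$ factor. Beyond this routine algebra, the argument is a pure substitution exercise, which is precisely why the statement can be packaged as a corollary requiring no modification to \Bregmanpp.
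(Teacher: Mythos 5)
Your proposal is correct and follows essentially the same route as the paper. Both proceed by re-running the regret derivation of \pref{thm:unigrad-bregman-1grad} and substituting the conversions $\Vb_T \le 16L F_T^{\x}$ (via the self-bounding property, with $F_T^{\x} = \Reg_T + F_T$ exactly as you note) and $\Vb_T \le 4W_T$ in place of the $\Vb_T \lesssim V_T + L\sum_t \D_{f_t}$ conversion; note that for the $F_T$ and $W_T$ targets the Bregman-divergence cancellation is not needed at all, since those upper bounds contain no positive Bregman terms to cancel, and the surrogate-induced stability $\sum_t\|\x_{t,\is}-\x_t\|^2$ is still absorbed by the curvature-induced meta negativity as you describe. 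The one stylistic difference is in resolving the resulting implicit inequalities: you propose a case split on $\Reg_T \lessgtr F_T$ for the logarithmic cases, whereas the paper invokes \pref{lem:small-loss-log} directly (and \pref{lem:small-loss-sqrt} for the convex case); these give the same $\O(c_\star\log(c_\star + F_T))$ outcome, and the inflation you worry about is additive ($c_\star\log c_\star$), not multiplicative, so it is harmless under the paper's convention of treating curvature parameters as constants in the $\O(\cdot)$. One small imprecision: when $\Reg_T \le F_T$ the conclusion is not ``immediate'' from that bound alone --- you still need to feed it back through the self-bounding inequality to get $\Reg_T \lesssim c_\star\log(2F_T)$ --- but this is clearly what you intend.
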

We would like to emphasize that the hyper-parameter configurations in \pref{eq:FT-WT-Correct} is employed only for the small-loss and gradient-variance regret. To achieve \emph{best-of-three-worlds} guarantees in terms of $\min\{V_T, F_T, W_T\}$, the hyper-parameters of $C_0, \gamma^\Top, \gamma^\Mid$ should be set as the union of configurations in \pref{eq:unigrad-correct-1grad} and \pref{eq:FT-WT-Correct}. Furthermore, because the hyper-parameter setups of \Bregmanpp for this problem are the same for those for the gradient-variation regret (in \pref{thm:unigrad-bregman-1grad}), \Bregmanpp directly enjoys \emph{best-of-three-worlds} guarantees.

\subsection{Application to Stochastically Extended Adversarial (SEA) Model}
\begin{table}[!t]
    \centering
    \caption{\small{Comparisons of our results with existing ones. The second column presents the regret bounds, where $\smash{\sigma_{1:T}^2}$ and $\smash{\Sigma_{1:T}^2}$ represent the stochastic and adversarial statistics of the SEA problem. The last column indicates whether the results can be achieved by a single algorithm (i.e., suitable in the universal setup). \Correctpp suffers and additional logarithmic factor compared with the best known guarantees of \citet{JMLR'24:OMD4SEA}, while \Bregmanpp achieves exactly the same state-of-the-art guarantees using one single algorithm.}}
    \vspace{2mm}
    \label{table:SEA}
    \renewcommand*{\arraystretch}{1.6}
    \resizebox{\textwidth}{!}{
        \begin{tabular}{c|ccc|c}
            \hline

            \hline
            \multirow{2}{*}{\textbf{Method}} & \multicolumn{3}{c|}{\textbf{Regret Bounds}} & \multirow{2}{*}{\makecell[c]{\textbf{Single} \\ \textbf{Algorithm?}}}\\  \cline{2-4}
             & Strongly Convex & Exp-concave & Convex \\ \hline
             \citet{NeurIPS'22:SEA} & $\O\sbr{\frac{1}{\lambda} \sbr{\sigma_{\max}^2 + \Sigma_{\max}^2} \log T}$ & N/A & $\O\sbr{\sqrt{\sigma_{1:T}^2 + \Sigma_{1:T}^2}}$ & \No \\ \hline
             \rule{0pt}{0.5cm}\citet{JMLR'24:OMD4SEA} & $\O\sbr{\frac{1}{\lambda} \sbr{\sigma_{\max}^2 + \Sigma_{\max}^2} \log \sbr{\frac{\sigma_{1:T}^2 + \Sigma_{1:T}^2}{\sigma_{\max}^2 + \Sigma_{\max}^2}}}$ & $\O\sbr{\frac{d}{\alpha} \log \sbr{\sigma_{1:T}^2 + \Sigma_{1:T}^2}}$ & $\O\sbr{\sqrt{\sigma_{1:T}^2 + \Sigma_{1:T}^2}}$ & \No \\[3pt] \hline \hline
            \citet{arXiv'23:SEA-Sachs} & $\O\sbr{\frac{1}{\lambda} \sbr{\sigma_{\max}^2 + \Sigma_{\max}^2 + D^2 L^2} \log^2 T}$ & N/A & $\O\sbr{\sqrt{T \log T}}$ & \Yes\\ \hline 
            \rowcolor{gray!13}\Correctpp & $\O\sbr{\frac{1}{\lambda} \sbr{\sigma_{\max}^2 + \Sigma_{\max}^2} \log \sbr{\sigma_{1:T}^2 + \Sigma_{1:T}^2}}$ & $\O\sbr{\frac{d}{\alpha} \log \sbr{\sigma_{1:T}^2 + \Sigma_{1:T}^2}}$ & $\O\sbr{\sqrt{\sbr{\sigma_{1:T}^2 + \Sigma_{1:T}^2} \log \sbr{\sigma_{1:T}^2 + \Sigma_{1:T}^2}}}$ & \Yes\\
            \hline 
            \rowcolor{gray!13}\rule{0pt}{0.65cm}\Bregmanpp & $\O\sbr{\frac{1}{\lambda} \sbr{\sigma_{\max}^2 + \Sigma_{\max}^2} \log \sbr{\frac{\sigma_{1:T}^2 + \Sigma_{1:T}^2}{\sigma_{\max}^2 + \Sigma_{\max}^2}}}$ & $\O\sbr{\frac{d}{\alpha} \log \sbr{\sigma_{1:T}^2 + \Sigma_{1:T}^2}}$ & $\O\sbr{\sqrt{\sigma_{1:T}^2 + \Sigma_{1:T}^2}}$ & \Yes \\[3pt] 
            \hline

            \hline
        \end{tabular}}
\end{table}

\label{subsec:applications-SEA}
Stochastically extended adversarial (SEA) model \citep{NeurIPS'22:SEA} interpolates between stochastic and adversarial online convex optimization.
Formally, it assumes that the online function $f_t(\cdot)$ is sampled stochastically from an adversarially chosen distribution $\mathfrak{D}_t$.
Denoting by $F_t(\cdot) \define \E_{f_t \sim \mathfrak{D}_t}[f_t(\cdot)]$ the expected function, two terms capture the essential characteristics of SEA model:
\begin{equation*}
    \sigma_{1:T}^2 \define \sum_{t=1}^T \max_{\x \in \X} \E_{f_t \sim \mathfrak{D}_t} \left[\|\nabla f_t(\x) - \nabla F_t(\x)\|^2\right], \ \ 
    \Sigma_{1:T}^2 \define \E\left[\sum_{t=1}^T \sup_{\x \in \X} \|\nabla F_t(\x) - \nabla F_{t-1}(\x)\|^2\right],
\end{equation*}
where $\sigma_{1:T}^2$ is the variance in sampling $f_t(\cdot)$ from $\mathfrak{D}_t(\cdot)$ and $\Sigma_{1:T}^2$ is the variation of $\{F_t(\cdot)\}_{t \in [T]}$.
Accordingly, we define the per-round maximum versions of the above quantities as
\begin{equation*}
    \sigma_{\max}^2 \define \max_{t \in [T]} \max_{\x \in \X} \E_{f_t \sim \mathfrak{D}_t} \left[\|\nabla f_t(\x) - \nabla F_t(\x)\|^2\right],
    \quad
    \Sigma_{\max}^2 \define \max_{t \in [T]} \sup_{\x \in \X} \|\nabla F_t(\x) - \nabla F_{t-1}(\x)\|^2.
\end{equation*}

For the SEA problem, \citet{NeurIPS'22:SEA} pioneered the study of the SEA model. 
For smooth expected functions $\seq{F_t(\cdot)}$, they established the optimal $\O(\sqrt{\sigma_{1:T}^2 + \Sigma_{1:T}^2})$ regret for convex expected functions and $\O(\frac{1}{\lambda}(\sigma_{\max}^2 + \Sigma_{\max}^2) \log T)$ in the strongly convex case.
Subsequently, \citet{JMLR'24:OMD4SEA} enhanced the strongly convex regret to $\O(\frac{1}{\lambda}(\sigma_{\max}^2 + \Sigma_{\max}^2) \log ((\sigma_{1:T}^2 + \Sigma_{1:T}^2)/ (\sigma_{\max}^2 + \Sigma_{\max}^2)))$ and derived a new $\O (\frac{d}{\alpha}\log (\sigma_{1:T}^2 + \Sigma_{1:T}^2))$ regret bound for exp-concave individual functions $\seq{f_t(\cdot)}$.

The gradient variation is essential in connecting the stochastic and adversarial optimization in the SEA problem~{\citep[Lemma 3]{JMLR'24:OMD4SEA}}. To see this, we can decompose the empirical gradient variation as:
\begin{equation}
    \label{eq:SEA-correction}
    \E\mbr{\sumTT \|\nabla f_t(\x_t) - \nabla f_{t-1}(\x_{t-1})\|^2} \le 4L^2 \E\mbr{\sumTT \|\x_t - \x_{t-1}\|^2} + 8 \sigma_{1:T}^2 + 4 \Sigma_{1:T}^2 + \O(1),
\end{equation}
which not only consists of the stochastic variation $\sigma_{1:T}^2$ and the adversarial variation $\Sigma_{1:T}^2$, but also the algorithmic stability $\|\x_t - \x_{t-1}\|^2$.
And the last term can be perfectly handled by \Correctpp as introduced in \pref{sec:method1-correct} and \pref{subsec:correct-one-gradient}.

For \Bregmanpp to solve the SEA problem, since it cannot directly deal with the stability terms of $\|\x_t - \x_{t-1}\|^2$, we provide it a different decomposition of the empirical gradient variation to let negative Bregman divergence terms in the analysis of \Bregmanpp to take effect.
Specifically, we decompose it as 
\begin{equation}
    \label{eq:SEA-bregman}
    \E\mbr{\sumTT \|\nabla f_t(\x_t) - \nabla f_{t-1}(\x_{t-1})\|^2} \le 10 \sigma_{1:T}^2 + 5 \Sigma_{1:T}^2 + 20L E\mbr{\sumT \D_{F_t}(\xs, \x_t)},
\end{equation}
where the positive Bregman divergence terms can be canceled accordingly.
A detailed derivation of this inequality is deferred to \pref{eq:SEA-bregman-detailed} in \pref{app:SEA-bregman}. 

Therefore, universal gradient-variation regret can be applied to in the SEA problem, therefore solving a major open problem from \citet{JMLR'24:OMD4SEA} about whether it is possible to get rid of different parameter configurations and obtain universal guarantees.
In the following, we show that our approaches of \Correctpp and \Bregmanpp can be both directly applied and achieve almost the same guarantees as those in \citet{JMLR'24:OMD4SEA}, with a \mbox{\emph{single}} algorithm.
We conclude our results in \pref{thm:SEA-correct} and \pref{thm:SEA-bregman} below and the proofs can be found in \pref{app:SEA-correct} and \pref{app:SEA-bregman}.
\begin{myThm}
    \label{thm:SEA-correct}
    Under Assumptions \ref{assum:domain-boundedness}, \ref{assum:gradient-boundedness}, \ref{assum:smoothness-X}, 
    by setting
    \begin{gather*}
        C_0=\max\bbr{1,8D,4\gamma^{\Top},8D^2C_{24},8D^2C_{23},8D^2C_{25}},\\
        \gamma^\Top=\max\bbr{4D^2C_{24},8D^2C_{23},4D^2\sbr{20DL^2+64G^2+128D^2L^2}},\\
        \gamma^{\Mid}=\max\bbr{2C_{24},4C_{23}, 20DL^2+64G^2+128D^2L^2},
    \end{gather*}
    where $C_{23}=8L^2 + 64D^2 G^2 L^2 + 8G^4$, $C_{24}=64D^2(1+L^2)^2$, and $C_{25}=20DL^2+\frac{64G^2}{Z}+128D^2L^2$, \Correctpp (\pref{alg:UniGrad-Correct-1grad}) achieves the following universal regret: 
    \begin{equation*}
        \Reg_T \le 
        \begin{cases}
        \O\left(\frac{1}{\lambda}(\sigma_{\max}^2 + \Sigma_{\max}^2) \log \left(\frac{\sigma_{1:T}^2 + \Sigma_{1:T}^2}{\sigma_{\max}^2 + \Sigma_{\max}^2}\right)\right), & \text{when } \{F_t\}_{t=1}^T \text{ are $\lambda$-strongly convex}, \\[2mm]
        \O\left(\frac{d}{\alpha} \log (\sigma_{1:T}^2 + \Sigma_{1:T}^2)\right), & \text{when } \{f_t\}_{t=1}^T \text{ are $\alpha$-exp-concave}, \\[2mm]
        \O\left(\sqrt{\sbr{\sigma_{1:T}^2 + \Sigma_{1:T}^2} \log \sbr{\sigma_{1:T}^2 + \Sigma_{1:T}^2}}\right), & \text{when } \{F_t\}_{t=1}^T \text{ are convex}.
        \end{cases}
    \end{equation*}
\end{myThm}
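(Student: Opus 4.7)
The plan is to invoke \pref{thm:unigrad-correct-1grad} essentially as a black box that already provides a universal regret bound in terms of the empirical gradient variation $\Vb_T$, and then take expectations and substitute the SEA-specific decomposition \pref{eq:SEA-correction}. The crucial observation is that the residual $4L^2\,\E[\sumTT\|\x_t-\x_{t-1}\|^2]$ term on the right-hand side of \pref{eq:SEA-correction} is structurally the same positive algorithmic-stability term that is already handled inside \Correctpp by the cascaded correction mechanism (cf.\ \pref{lem:decompose-three-layer} and the corrected losses in \pref{eq:top-loss-2}--\pref{eq:mid-loss}). Enlarging the correction coefficients to absorb the extra multiplicative factor of $4L^2$ is precisely what the new constants $C_{23},C_{24},C_{25}$ in the hyperparameter configuration encode; once these are in place, the stability contribution cancels in expectation, leaving only the SEA quantities $\sigma_{1:T}^2$ and $\Sigma_{1:T}^2$ to drive the bound.

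Concretely, I would first reopen the proof of \pref{thm:unigrad-correct-1grad} and extract the \emph{intermediate} bound that still contains $\Vb_T$, i.e., prior to invoking \pref{lemma:empirical-GV-stability}. This intermediate bound carries, on one hand, the meta's optimistic second-order guarantee together with the cascade-induced negative stability terms $-\sumTT\|\x_t-\x_{t-1}\|^2$, and, on the other hand, the surrogate base regrets whose internal decomposition (as in \pref{subsec:correct-one-gradient}) reduces to $\Vb_T$ plus base-decision stability already absorbed by the mid-layer corrections. Taking expectation over $\{f_t\}$, applying \pref{eq:SEA-correction}, and verifying that the retuned $\gamma^\Mid$ (containing in particular a $128D^2L^2$ piece matching $4L^2\cdot 2D^2$ from \pref{lem:decompose-three-layer}) absorbs the $4L^2\,\E[\sumTT\|\x_t-\x_{t-1}\|^2]$ term, the effective empirical variation becomes $\E[\Vb_T]\lesssim\sigma_{1:T}^2+\Sigma_{1:T}^2+\O(1)$. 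I would then plug this into each function class: for $\alpha$-exp-concave individual $f_t$, the $\O(\tfrac{d}{\alpha}\log\Vb_T)$ base bound immediately yields $\O(\tfrac{d}{\alpha}\log(\sigma_{1:T}^2+\Sigma_{1:T}^2))$; for convex $F_t$, the identity $\E[\sumT f_t(\x_t)-f_t(\xs)]=\E[\sumT F_t(\x_t)-F_t(\xs)]$ (from independence of the sampling noise and the determinism of $\xs$) combined with the $\O(\sqrt{\Vb_T\log\Vb_T})$ convex base bound yields the claimed convex rate; for $\lambda$-strongly convex $F_t$, the same identity together with strong convexity of $F_t$ feeds into the strongly convex base analysis.

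The main technical obstacle is matching the exact form $\O\bigl(\tfrac{\sigma_{\max}^2+\Sigma_{\max}^2}{\lambda}\log\tfrac{\sigma_{1:T}^2+\Sigma_{1:T}^2}{\sigma_{\max}^2+\Sigma_{\max}^2}\bigr)$ in the strongly convex case. A naive substitution into the $\O(\tfrac{1}{\lambda}\log\Vb_T)$ base bound would produce $\O(\tfrac{1}{\lambda}\log(\sigma_{1:T}^2+\Sigma_{1:T}^2))$, which has a different shape and in general is incomparable to the target. The claimed form signals that the strongly convex base learner must be analyzed with the per-round gradient-fluctuation scale pulled outside the logarithm, so that $\sigma_{\max}^2+\Sigma_{\max}^2$ appears both as a prefactor and inside the denominator of the logarithm. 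I would follow the strongly convex strategy of \citet{JMLR'24:OMD4SEA}: their base-level analysis shows that for \OOGD with $\eta_t\propto 1/(\lambda t)$ and last-round gradient optimism, the expected base regret in the SEA model scales as $\O(\tfrac{\sigma_{\max}^2+\Sigma_{\max}^2}{\lambda}\log\tfrac{\Vb_T}{\sigma_{\max}^2+\Sigma_{\max}^2})$, by bounding each squared per-round gradient increment by $\O(\sigma_{\max}^2+\Sigma_{\max}^2)$ plus stability and converting $\sum_t 1/t$ into a $\log$-of-ratio factor. Plugging this sharpened base bound into the meta-base decomposition, together with the variation cancellation described above, gives the claimed strongly convex rate; the convex and exp-concave cases, depending only on the cumulative empirical variation, follow from direct substitution once the stability cancellation is in place.
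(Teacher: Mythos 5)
Your proposal is correct and follows essentially the same path as the paper's proof: reopen the $\Vb_T$-level intermediate bound from \pref{thm:unigrad-correct-1grad}, take expectations, apply \pref{eq:SEA-correction}, and let the retuned cascaded corrections absorb the $4L^2\,\E[\sum_t\|\x_t-\x_{t-1}\|^2]$ term, with the strongly convex case requiring the sharper per-round base-learner argument (\`a la \citet{JMLR'24:OMD4SEA}; the paper achieves this via \pref{lem:sum}) so that $\sigma_{\max}^2+\Sigma_{\max}^2$ can be pulled out of the logarithm. Two minor imprecisions, neither fatal: the phrase ``$\E[\Vb_T]\lesssim\sigma_{1:T}^2+\Sigma_{1:T}^2+\O(1)$'' conflates bounding the gradient variation with cancelling its stability contribution inside the regret (the stability term survives in the $\Vb_T$ bound itself but is killed by negative terms in the regret bound), and the $\sqrt{\cdot\log\cdot}$ factor in the convex rate comes from the \msoms meta layer via \pref{lem:tune-eta-1}, not from the convex base learner, which is $\O(\sqrt{\Vb_T})$.
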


\begin{myThm}
    \label{thm:SEA-bregman}
    Under Assumptions \ref{assum:domain-boundedness}, \ref{assum:gradient-boundedness}, \ref{assum:smoothness-X+}, by setting the learning rate of meta algorithm as \pref{eq:omlprod-lr}, \Bregmanpp (\pref{alg:UniGrad-Bregman-1grad}) achieves the following universal gradient-variation regret guarantees: 
    \begin{equation*}
        \Reg_T \le 
        \begin{cases}
        \O\left(\frac{1}{\lambda}(\sigma_{\max}^2 + \Sigma_{\max}^2) \log \left(\frac{\sigma_{1:T}^2 + \Sigma_{1:T}^2}{\sigma_{\max}^2 + \Sigma_{\max}^2}\right)\right), & \text{when } \{F_t\}_{t=1}^T \text{ are $\lambda$-strongly convex}, \\[2mm]
        \O\left(\frac{d}{\alpha} \log (\sigma_{1:T}^2 + \Sigma_{1:T}^2)\right), & \text{when } \{f_t\}_{t=1}^T \text{ are $\alpha$-exp-concave}, \\[2mm]
        \O\left(\sqrt{\sigma_{1:T}^2 + \Sigma_{1:T}^2}\right), & \text{when } \{F_t\}_{t=1}^T \text{ are convex}.
        \end{cases}
    \end{equation*}
\end{myThm}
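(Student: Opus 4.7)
The plan is to compose the gradient-variation analysis underlying \pref{thm:unigrad-bregman-1grad} with the SEA-specific decomposition \pref{eq:SEA-bregman}, then pass to expectation and use the extracted Bregman divergence from linearization (the signature feature of \Bregmanpp) to absorb the residual positive term. The three curvature regimes are then handled by plugging in the corresponding curvature-induced negative terms.

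First, I would express the expected regret as $\E\mbr{\sumT F_t(\x_t) - \sumT F_t(\xs)}$ by conditioning on the history and using $\E[\nabla f_t(\x_t) \given \mathcal{F}_{t-1}] = \nabla F_t(\x_t)$; the comparator $\xs$ is taken to be the minimizer of $\sumT F_t$, and the swap from the true minimizer of $\sumT f_t$ is standard. Then I would reuse the regret decomposition in the proof of \pref{thm:unigrad-bregman-1grad}, which extracts the negative Bregman divergence $-\sumT \D_{f_t}(\xs, \x_t)$ from linearization; its expectation is $-\E\mbr{\sumT \D_{F_t}(\xs, \x_t)}$. The remaining linear part is handled by the same meta--base (with surrogate losses) machinery as in \pref{subsec:bregman-one-gradient}.

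Second, the base analysis yields a bound involving the empirical gradient variation $\Vb_T$. In the original proof this is converted to $V_T$ via \pref{eq:VbT-de}; here I would instead take expectation and invoke \pref{eq:SEA-bregman} to obtain $\E[\Vb_T] \le 10\sigma_{1:T}^2 + 5\Sigma_{1:T}^2 + 20L\, \E\mbr{\sumT \D_{F_t}(\xs, \x_t)}$. The residual $\D_{F_t}$ term is then annihilated by the negative Bregman divergence preserved in the previous step, after matching the constant $20L$ through the self-bounding calibration already used in the proof of \pref{thm:unigrad-bregman-1grad}. At this point the effective gradient-variation quantity is replaced by $\sigma_{1:T}^2 + \Sigma_{1:T}^2$. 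Specializing: for convex $F_t$ the meta-base analysis directly gives $\O(\sqrt{\sigma_{1:T}^2 + \Sigma_{1:T}^2})$; for $\lambda$-strongly convex $F_t$, the curvature-induced negative term $-\frac{\lambda}{2}\|\x_t - \xs\|^2$ absorbs the base surrogate analysis to yield $\O(\frac{1}{\lambda}(\sigma_{\max}^2 + \Sigma_{\max}^2) \log(\frac{\sigma_{1:T}^2 + \Sigma_{1:T}^2}{\sigma_{\max}^2 + \Sigma_{\max}^2}))$ through a concave-envelope argument analogous to \citet{JMLR'24:OMD4SEA}; for $\alpha$-exp-concave individual $f_t$, the negative term $-\frac{\alpha}{2}\inner{\nabla f_t(\x_t)}{\x_t-\xs}^2$ cancels the optimistic second-order meta regret in expectation, giving $\O(\frac{d}{\alpha}\log(\sigma_{1:T}^2 + \Sigma_{1:T}^2))$.

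The hard part will be the exp-concave case, since exp-concavity is an assumption on the sampled $f_t$ rather than on $F_t$: the curvature-induced negative term and the meta algorithm's second-order quantity both involve $\nabla f_t(\x_t)$, not $\nabla F_t(\x_t)$, so one must carry out all cancellation arguments \emph{before} taking expectations over $f_t$, while the decomposition \pref{eq:SEA-bregman} is naturally formulated \emph{after} expectation. Reconciling these requires carefully interleaving conditional expectations with the AM-GM cancellation between the optimistic second-order term and the exp-concavity negative term. A secondary obstacle is matching the sharp $\log((\sigma_{1:T}^2 + \Sigma_{1:T}^2)/(\sigma_{\max}^2 + \Sigma_{\max}^2))$ factor in the strongly convex case, which demands separating per-round worst-case variance via $\sigma_{\max}^2$ and $\Sigma_{\max}^2$ and applying Jensen's inequality to the logarithm exactly as in the strongly convex analysis of \citet{JMLR'24:OMD4SEA}; the rest of the bookkeeping is routine given the machinery already built in \pref{sec:method2-Bregman} and \pref{subsec:bregman-one-gradient}.
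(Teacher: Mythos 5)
Your proposal matches the paper's proof in its essential structure: express the expected regret via $F_t$, extract the negative Bregman divergence $-\E[\sum_t \D_{F_t}(\xs,\x_t)]$ from linearization, convert $\E[\Vb_T]$ via the SEA-specific decomposition $\E[\Vb_T] \lesssim \sigma_{1:T}^2 + \Sigma_{1:T}^2 + L\,\E[\sum_t \D_{F_t}(\xs,\x_t)]$ (the paper's \pref{eq:SEA-bregman}) and cancel the residual against the extracted negative, and the exp-concave interleaving you flag is resolved exactly as you anticipate—by applying exp-concavity to $\D_{f_t}$ pointwise and only afterward taking the conditional expectation that equates $\E[\D_{f_t}]$ with $\D_{F_t}$. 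One small terminological slip: the sharp $(\sigma_{\max}^2 + \Sigma_{\max}^2)$ prefactor in the strongly convex case comes not from a Jensen-on-the-logarithm or concave-envelope step but from unrolling the per-round base adaptivity $\sum_t a_t/(\lambda t)$ with $a_t \asymp \sigma_t^2 + \sigma_{t-1}^2 + \Sigma_t^2$ and applying the sum-splitting bound of \pref{lem:sum}, which is the mechanism in \citet{JMLR'24:OMD4SEA} that you are pointing at.
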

\begin{myRemark}
    \citet{arXiv'23:SEA-Sachs} also considered the problem of universal learning and obtained an $\O(\sqrt{T \log T})$ regret for convex functions and an $\O((\sigma_{\max}^2 + \Sigma_{\max}^2 + D^2 L^2) \log^2 T)$ regret for strongly convex functions simultaneously. We conclude the existing results in \pref{table:SEA}. Our results are better than theirs in two aspects: \textit{(i)} for strongly convex and convex functions, our guarantees are adaptive with the problem-dependent quantities $\sigma_{1:T}$ and $\Sigma_{1:T}$ while theirs depends on the time horizon $T$; and \textit{(ii)} our algorithm achieves an additional guarantee for exp-concave functions.
    \endenv
\end{myRemark}

\begin{myRemark}
    \pref{thm:SEA-correct} and \pref{thm:SEA-bregman} require the exp-concavity of the individual function $f_t(\cdot)$ rather than the expected function $F_t(\cdot)$. This assumption is also used by \citet{JMLR'24:OMD4SEA} and common in the studies of stochastic exp-concave optimization~\citep{COLT'15:Mahdavi,NIPS'15:Koren}.
    \endenv
\end{myRemark}

\subsection{Application to Faster-Rate Convergence in Online Games}
\label{subsec:applications-games}

Multi-player online games~\citep{book'06:PLG-Bianchi} is a versatile model that depicts the interaction of multiple players over time.
Since each player is facing similar players like herself, the theoretical guarantees, e.g., the summation of all players' regret, can be better than the minimax optimal $\O(\sqrt{T})$ in adversarial environments, thus achieving \emph{faster rates}.

The pioneering works of \citet{NIPS'13:optimism-games} and \citet{NIPS'15:fast-rate-game} investigated optimistic algorithms in multi-player online games and illuminated the importance of the gradient variation.
Specifically, \citet{NIPS'15:fast-rate-game} showed that optimistic algorithms, such as \OOMD or optimistic follow the regularized leader \citep{journals/ml/Shalev, thesis:shai2007}, possess a specific property known as ``Regret bounded by Variation in Utilities'' (RVU) property.
\begin{myDef}[RVU Property]
    An algorithm with a parameter $\eta>0$ satisfies the RVU property if there exist constants $\alpha,\beta,\gamma>0$ such that the regret $\Reg_T$ on decision sequence $\{\x_t\}_{t=1}^T$ and gradient sequence $\{\g_t\}_{t=1}^T$ is bounded by
    \begin{align}
        \Reg_T\le \frac{\alpha}{\eta}+ \beta\eta\sumTT \|\g_t-\g_{t-1}\|_\infty^2-\frac{\gamma}{\eta} \sumTT\|\x_t-\x_{t-1}\|_1^2.
    \label{eq:RVU}
    \end{align}
\end{myDef}
To illustrate the usefulness of the RVU property, we consider a simple bilinear zero-sum game of $\x^\top A \y$ where $\x,\y \in \Delta_d$ and $\max_{i,j \in [d]} \abs{A_{i,j}}\le 1$.
In this case, the gradients of the $\x$-player are given by $\g_t^\x=A\y_t$ for $t\in[T]$, which implies $\sumTT\|\g_t^\x-\g_{t-1}^\x\|_{\infty}^2=\sumTT\|A\y_t-A\y_{t-1}\|_\infty^2\le \sumTT\|\y_t-\y_{t-1}\|_1^2$.
Similarly, for the $\y$-player, we have $\sumTT\|\g_t^\y-\g_{t-1}^\y\|_{\infty}^2\le \sumTT\|\x_t-\x_{t-1}\|_1^2$.
By setting both learners' learning rates to $\eta^\x=\eta^\y=\sqrt{\gamma/\beta}$, the sum of the two players' regrets can be bounded by 
\begin{align*}
 \Reg_T^\x+\Reg_T^\y\le {} &  \frac{\alpha}{\eta^\x}+\frac{\alpha}{\eta^\y}+(\beta\eta^\x-\frac{\gamma}{\eta^\y})\sumTT\|\x_t-\x_{t-1}\|_1^2\\
 {}&\qquad\qquad+(\beta\eta^\y-\frac{\gamma}{\eta^\x})\sumTT\|\y_t-\y_{t-1}\|_1^2\le \O(1).
\end{align*}
This bound, in turn, enables the efficient computation of Nash equilibria.

The above results assume that the players are \emph{honest}, i.e., they agree to run the same algorithm. In the \emph{dishonest} case, where there exist players who do not follow the agreed protocol, the problem degenerates to two separate online adversarial convex optimization problems.
At a high level, online games can be regarded as a special instance of adaptive online learning. The goal is to ensure robust performance on hard problems (e.g., when facing a dishonest opponent) while achieving superior performance on easy problems (e.g., when the opponent is honest). In particular, the adaptivity (e.g., gradient variation) can yield faster-rate convergence as a direct consequence of the RVU property.

 \begin{algorithm}[!t]
    \caption{\Correctpp for $\x$-player}
    \label{alg:game}
    \begin{algorithmic}[1]
    \REQUIRE Base learner configurations $\{\B_i\}_{i \in [N]}\define \{\B_i^\scvx\}_{i \in [N_\scvx]}\cup \B^{\cvx}$, algorithm parameters $\gamma^\Mid$, $\gamma^\Top$, and $C_0$
    
    \STATE \textbf{Initialize}: $\M$~---~meta learner \msoms as shown in \pref{alg:2layer-msmwc} \\
    \makebox[1.8cm]{} $\{\B_i\}_{i \in [N]}$~---~base learners as specified in \pref{subsec:framework}
    \FOR{$t=1$ {\bfseries to} $T$}
        \STATE Submit $\x_t = \sumN p_{t,i} \x_{t,i}$, suffer $f_t(\x,\y)$, and observe $\g_t^\x$
        \STATE $\{\B_i^\scvx\}_{i \in [N_\scvx]}$ and $\B^{\cvx}$ update their own decisions to $\{\x_{t+1,i}\}_{i=1}^N$ using surrogate losses of $\{\hsc_{t,i}(\cdot)\}_{\lambda_i \in \H^{\scvx}}$~\eqref{eq:surrogate-scvx} and $h_t^{\cvx}(\cdot)$~\eqref{eq:surrogate-exp-cvx} \label{line:game-base}

        \STATE Compute $\{\ellb^\Mid_{t,j}, \m^\Mid_{t+1,j}\}_{j=1}^M$ via~\pref{eq:mid-loss} with gradients of $\nabla f_t(\cdot) = \g_t^{\x}$, send to $\M$, get $\{\q_{t+1,j}^\Mid\}_{j=1}^M \in (\Delta_N)^M$ \label{line:meta-mid-update-game}

        \STATE Compute $\ellb_t^\Top, \m_{t+1}^\Top$ via~\pref{eq:top-loss-2}, send to $\M$, and obtain $\q_{t+1}^\Top \in \Delta_M$ \label{line:meta-top-update-game}

        \STATE Aggregate the final meta weights $\p_{t+1} \in \Delta_N$ via \pref{eq:meta-output}
        \label{line:meta-combine-game}
    \ENDFOR
    \end{algorithmic}
\end{algorithm}

Since the faster-rate convergence requires the RVU property, in this part, we validate the effectiveness of our proposed \Correctpp in a simple two-player zero-sum game as an illustrating example. The game can be formulated as a min-max optimization problem of $\min_{\x \in \X} \max_{\y \in \Y} f(\x,\y)$, in which the $\x$-player aims to minimize and the $\y$-player aims to maximize the objective. To validate the universality of our method, we consider the case that the game is either \emph{bilinear}, i.e., $f(\x,\y) = \x^\top A \y$, or \emph{strongly-convex-strongly-concave}, i.e., $f(\x,\y)$ is $\lambda$-strongly convex in $\x$ and $\lambda$-strongly concave in $\y$. We denote the two players' gradients by $\g_t^\x = \nabla_\x f(\x,\y)$ and $\g_t^\y = \nabla_\y f(\x,\y)$. Besides, to validate the RVU property of our method, we investigate the players can be either honest or dishonest. We proceed under the following standard assumptions concerning the strategy domains and the gradients of the two players, following previous works \citep{farina2022near}. The second assumption, known as the smoothness assumption, is classical in online games.
\begin{myAssum}
    \label{assum:game}
    We make the following assumptions: 
    \begin{enumerate}[leftmargin=7mm, labelsep=5pt, itemsep=1pt, topsep=1mm, parsep=1mm]
        \item[\rom{1}] The $\x$-player's strategy set is the simplex $\Delta_{d_\x}$, and the $\y$-player's strategy set is the simplex $\Delta_{d_\y}$. Moreover, the gradients of both players are uniformly bounded by $G$, i.e., $\|\g_t^\x\|\le  G$ and $\|\g_t^\y\|\le  G$ for all $t \in [T]$.  
        \item[\rom{2}] For $t\in[T]$, both players' gradients satisfy $\max\{\|\g_t^\x - \g_{t-1}^\x\|, \|\g_t^\y - \g_{t-1}^\y\|\}\le \|\x_t - \x_{t-1}\|+\|\y_t - \y_{t-1}\|$. 
    \end{enumerate}    
\end{myAssum}
 In \pref{alg:game}, we present the online game variant of \Correctpp for the $\x$-player, which ensures regret summation guarantees in the honest case and individual regret guarantees in the dishonest case, \emph{without} requiring prior knowledge of the game type.
 Compared with the algorithm for the single-player setup, \pref{alg:game} leverages additional problem structures for effective learning.
 Specifically, $\x$-player uses $\g_t^\x$ instead of the general $\nabla f_t(\x_t)$ as the feedback.
 The analogous variant for the $\y$-player follows the same design and is omitted for brevity.
 Here, the total number of base learners for both the $\x$-player and the $\y$-player is $N = 1 + \abs{\H^{\scvx}}= \O(\log T)$.
 We conclude our results in \pref{thm:game} and defer the proof to \pref{app:game}.

\begin{table}[!t]
    \centering
    \caption{\small{Comparisons of our results with existing ones. In the honest case, the results are measured by the summation of all players' regret and in the dishonest case, the results are in terms of the individual regret of each player. Bilinear and strongly-convex-strongly-concave games are considered inside each case. $\star$ denotes the best result in each case (row).}}
    \vspace{1mm}
    \label{table:game}
    \renewcommand*{\arraystretch}{1.5}
    \resizebox{0.95\textwidth}{!}{
    \begin{tabular}{c|c|ccc}
    \hline
    
    \hline
    & \textbf{Games} & \citet{NIPS'15:fast-rate-game} & \citet{ICML'22:universal} & \textbf{Ours} \\ \hline
    \multirow{2}{*}{\textbf{Honest}} 
        & bilinear & $\O(1)^\star$ & $\O(\sqrt{T})$ & $\O(1)^\star$ \\ \cline{2-5}
        & \makecell[c]{strongly-convex \\ strongly-concave} & $\O(1)^\star$ & $\O(\log T)$ & $\O(1)^\star$ \\ \hline
    \multirow{2}{*}{\textbf{Dishonest}} 
        & bilinear & $\O(\sqrt{T})^\star$ & $\O(\sqrt{T})^\star$ & $\O(\sqrt{T \log T})^\star$ \\ \cline{2-5}
        & \makecell[c]{strongly-convex \\ strongly-concave} & $\O(\sqrt{T})$ & $\O(\log T)^\star$ & $\O(\log T)^\star$ \\
    \hline
    
    \hline
    \end{tabular}}
\end{table}

\begin{myThm}
    \label{thm:game}
    Under Assumption~\ref{assum:game}, by setting
    \begin{gather*}
        C_0=\max\bbr{1,8D,4\gamma^{\Top},16C_{31},16C_{32}, 4\gamma_{\x}^{\Top}, 4\gamma_{\y}^{\Top}},\\
        \gamma_{\x}^{\Mid}=\gamma_{\y}^{\Mid}=128+128G^2+40\sqrt{2},\quad \gamma_{\x}^{\Top}=\gamma_{\y}^{\Top}=512+512G^2+160\sqrt{2},
    \end{gather*}
where
\begin{gather*}
    C_{31} = \frac{32 + 64G^2}{Z^\x}+\frac{32}{Z^\y}+20\sqrt{2}, \ \ C_{32} = \frac{32 + 64G^2}{Z^\y}+\frac{32}{Z^\x}+20\sqrt{2}, \\
    Z^\x = \max\{GD + \gamma_{\x}^{\Mid} D^2, 1 + \gamma_{\x}^{\Mid} D^2 + 2\gamma_{\x}^{\Top}\}, \\ Z^\y = \max\{GD + \gamma_{\y}^{\Mid} D^2, 1 + \gamma_{\y}^{\Mid} D^2 + 2\gamma_{\y}^{\Top}\},
\end{gather*}
for bilinear and strongly-convex-strongly-concave games, \pref{alg:game} enjoys $\O(1)$ regret summation in the honest case, and achieves $\O(\sqrt{T \log T})$ and $\O(\log T)$ regret bounds for bilinear and strongly-convex-strongly-concave games respectively in the dishonest case.
\end{myThm}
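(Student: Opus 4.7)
The plan is to leverage the RVU-style structure already present in the analysis of \Correctpp (Theorem~\ref{thm:unigrad-correct-1grad}), and then apply the mutual stability-variation relation in \pref{assum:game}\,\rom{2} to convert positive variation terms into negative stability terms that can be canceled by appropriate learning rate choices.

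First, I would revisit the proof of \pref{thm:unigrad-correct-1grad} in the game setting and extract a refined bound that explicitly preserves the negative stability terms $-\sumTT\|\x_t-\x_{t-1}\|_1^2$ rather than absorbing them. Recall from \pref{lem:decompose-three-layer} that $\|\x_t - \x_{t-1}\|^2$ was upper-bounded using the three-layer decomposition involving $\|\q_t^\Top - \q_{t-1}^\Top\|_1^2$, $\sum_j q_{t,j}^\Top\|\q_{t,j}^\Mid - \q_{t-1,j}^\Mid\|_1^2$, and $\sum_{j,i} q_{t,j}^\Top q_{t,j,i}^\Mid\|\x_{t,i} - \x_{t-1,i}\|^2$, each of which is canceled by either an intrinsic negative term from \msoms, an injected correction, or the base learner's intrinsic negative term. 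Rather than using all of this budget to convert $\Vb_T^\x$ to $V_T^\x$, I would retain a constant fraction of these negative stability terms in the final regret bound. This yields an RVU-style guarantee of the form
\begin{equation*}
    \Reg_T^\x \le \O\!\left( \sqrt{(V_T^\x+1) \log T} - \Bottomcoef \sumTT \|\x_t - \x_{t-1}\|_1^2 \right)
\end{equation*}
for the convex base, and an analogous $\O(\log(V_T^\x + 1) - \Bottomcoef \sumTT \|\x_t - \x_{t-1}\|_1^2)$ for the strongly-convex base, where $\Bottomcoef > 0$ is a constant controlled by the correction coefficients $\gamma^\Mid, \gamma^\Top$ specified in the theorem statement. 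The universal nature of \Correctpp ensures this bound applies without prior knowledge of the game type.

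Next, for the \textbf{honest case}, I would sum the $\x$-player and $\y$-player regrets. Using \pref{assum:game}\,\rom{2}, each gradient variation satisfies
\begin{equation*}
    V_T^\x = \sumTT \|\g_t^\x - \g_{t-1}^\x\|^2 \les \sumTT \|\x_t - \x_{t-1}\|_1^2 + \sumTT \|\y_t - \y_{t-1}\|_1^2,
\end{equation*}
and symmetrically for $V_T^\y$. Applying AM-GM to the square-root term (for the bilinear/convex case) yields $\sqrt{(V_T^\x + 1)\log T} \le \eta V_T^\x + \frac{\log T}{4\eta}$ for any $\eta > 0$; by choosing $\eta$ small enough (via $C_0, \gamma^\Top, \gamma^\Mid$ in the theorem), the $V_T^\x$-term is dominated by the coupled stability terms from both players' RVU bounds, leaving a bounded constant---specifically, the logarithmic factor collapses because the stability-cancellation argument works uniformly across the meta weights (recall the RVU analysis reviewed in \pref{subsec:applications-games}). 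For the strongly-convex-strongly-concave case, $\log(V_T^\x+1)$ grows only mildly, and after the same cancellation, its contribution also becomes $\O(1)$. Summing the two players' refined RVU bounds and invoking the cross-cancellation, we obtain $\Reg_T^\x + \Reg_T^\y \le \O(1)$ for both game types.

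For the \textbf{dishonest case}, the mutual-stability relation no longer helps, and we simply upper-bound $V_T^\x \le \O(T)$ using \pref{assum:gradient-boundedness} and discard the negative stability term. The universal guarantee of \pref{thm:unigrad-correct-1grad} then directly yields $\O(\sqrt{T \log T})$ in the bilinear (convex) case and $\O(\log T)$ in the strongly-convex-strongly-concave case. The main obstacle is the first step: carefully tracking through the cascaded correction mechanism to verify that a constant fraction of the negative stability term $-\sumTT \|\x_t - \x_{t-1}\|_1^2$ survives all three layers of the online ensemble (not merely survives through the meta layer), so that it is available for cancellation against the game-theoretic variation bound. This requires reusing \pref{lem:two-layer-MsMwC} and \pref{lem:decompose-three-layer} with a tighter accounting that splits each negative term into two halves---one half used to convert $\Vb_T$ to $V_T$ as in the single-player analysis, and the other half retained for the honest-case cancellation---which the specified choices of $C_0, \gamma^\Top, \gamma^\Mid$ are tailored to support.
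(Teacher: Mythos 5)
Your dishonest-case treatment (bound $V_T^\x=\O(T)$, discard the negative stability terms, and invoke \pref{thm:unigrad-correct-1grad}) matches the paper. However, the honest case has a genuine gap: the RVU-style bound you extract,
\begin{equation*}
    \Reg_T^\x \le \O\!\left(\sqrt{(V_T^\x+1)\log T}\,\right) - \Bottomcoef \sumTT\|\x_t-\x_{t-1}\|_1^2,
\end{equation*}
is \emph{not} the RVU property of Eq.~\eqref{eq:RVU}. RVU requires regret to be bounded by a constant plus a \emph{linear} function of the gradient variation minus a linear function of the stability, and your square-root form carries an irreducible $\log T$ cost. Concretely, AM-GM gives $\sqrt{(V_T^\x+1)\log T}\le\eta(V_T^\x+1)+\log T/(4\eta)$; making $\eta$ small enough for the $\eta V_T$ terms to be dominated by $\Bottomcoef(S_T^\x+S_T^\y)$ necessarily makes $\log T/(4\eta)=\Omega(\log T)$, so the summed regret is $\O(\log T)$, not $\O(1)$. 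A self-bounding argument from $\Reg_T^\x+\Reg_T^\y\ge 0$ likewise only yields $S_T^\x+S_T^\y=\O(\log T)$ and hence $\O(\log T)$ total.

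The paper's honest-case proof avoids this by never invoking the step-size tuning that produces the $\sqrt{\Vs\log\Vs}$ form. Instead of using \pref{lem:tune-eta-1} to trade $\tfrac{Z^\x}{\varepsilon^\Top_\js}\log(\cdots)$ against $32Z^\x\varepsilon^\Top_\js\Vs$, it keeps a constant index $\js$ and uses only $\varepsilon^\Top_\js\le 1/2$, obtaining a bound \emph{linear} in $\Vs$. Applying Assumption~\ref{assum:game}\,\rom{2} to bound $\sumT\|\g_t^\x-\g_{t-1}^\x\|^2$ by $\O(S_T^\x+S_T^\y)$ then yields a genuine RVU bound, $\Reg_T^\x\le \O(1)+c_1 S_T^\x+c_2 S_T^\y - (\text{negative stability terms})$, and summing over both players with the stated correction coefficients cancels every positive stability term exactly, giving $\O(1)$ (cf.\ Eqs.~\eqref{eq:thm7 meta x}--\eqref{eq:thm7 y}). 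Your idea of splitting the negative stability budget in half is sound in spirit, but it does not by itself remove the $\log T$ penalty; the essential missing step is to replace the tuned square-root bound by the untuned linear-in-$\Vs$ bound throughout the honest-case analysis.
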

\pref{table:game} compares our approach with \citet{NIPS'13:optimism-games} and \citet{ICML'22:universal}.
Specifically, ours is better than \citet{NIPS'13:optimism-games} in the strongly-convex-strongly-concave games in the dishonest case due to its universality, and better than \citet{ICML'22:universal} in the honest case since our approach enjoys gradient-variation bounds that are essential in achieving fast rates for regret summation.

\subsection{Extension to Anytime Setting: Dynamic Online Ensemble}
\label{subsec:extension-anytime}
We start by observing that previous methods require the knowledge of the time horizon $T$ in advance, which is often unavailable in practice. To this end, we propose a \emph{dynamic online ensemble}~---~an anytime framework that avoids dependence on the time horizon $T$ and adjusts its candidate pools dynamically during the online learning process.

As established in \pref{subsec:notation-assumption-definition}, all base learners are \OOMD-type algorithms with adaptive step sizes, which means the base learners are inherently anytime given the choice of the curvature parameter. Therefore, we consider the dependence on the time horizon $T$ within the meta algorithm design as well as the scheduling of the curvature parameter.

\paragraph{Scheduling.}
To begin with, to make the curvature parameter scheduling suitable for the anytime setting, we consider maintaining \emph{infinite} candidates of the curvature coefficient and only activating a finite set of them at each round. This idea is inspired by \citet{COLT'19:Lipschitz-MetaGrad}.
Specifically, we define two candidate coefficient pools for the exp-concave and strongly convex cases, respectively:
\begin{equation}
    \label{eq:candidate-anytime}
    \H^\exp \define \{\alpha_i = 2^{-i}, \text{ for } i \in \{0\} \cup \mathbb{N} \},\quad  \H^\scvx \define \{\lambda_i = 2^{-i}, \text{ for } i \in \{0\} \cup \mathbb{N} \}.
\end{equation}
Intuitively, if the time horizon $T$ is known, we only need to discretize the possible range of $[1/T, 1]$ to get the candidate pool, as in \pref{eq:candidate-pool}.
In contrast, when $T$ is unknown, it may grow arbitrarily large, causing the lower bound of the feasible range to converge to zero.
Therefore, we deploy $2^{-i}$ for all $i \in \{0\} \cup \mathbb{N}$ as all possible candidates of the curvature coefficient.
For convex functions, we still maintain a single base learner $\B^\cvx$ as there is no unknown-curvature-coefficient issues. 

Furthermore, as we cannot implement infinite base learners in the actual running of the algorithm, we define two \emph{active} versions (denoted by $\H^\exp_t$ and $\H^\scvx_t$) of them which indicates that when $\lambda_i \in \H^\scvx_t$, the $i$-th base learner is active.
Formally, 
\begin{equation}
    \label{eq:candidate-active-anytime}
    \begin{gathered}
        \H^\scvx_t \define \bbr{\lambda_i = 2^{-i} \text{ and } t \ge \sbr{s^\scvx_i \define \frac{1}{\lambda_i}}, \text{ for } i \in \{0\} \cup \mathbb{N} },\\
        \H^\exp_t \define \bbr{\alpha_i = 2^{-i} \text{ and } t \ge \sbr{s^\exp_i \define \frac{1}{\alpha_i}}, \text{ for } i \in \{0\} \cup \mathbb{N} }.
    \end{gathered}
\end{equation}
We denote their sizes by $N^\scvx_t = |\H^\scvx_t|$ and $N^\exp_t = |\H^\exp_t|$. This means that the base learner with $\alpha_i$ is only activated from $t = s^\exp_i$. For example, the base learner with $\alpha_i = \frac{1}{8}$ is activated from $t = s^\exp_i = 8$.

\paragraph{Meta Algorithm.}
Subsequently, we consider making the meta algorithm anytime. Note that when the aforementioned \emph{infinitely many} base learners, the meta algorithms such as \msoms (used in \Correct and its one-gradient version) and \omlprod (used in \Bregmanpp and its one-gradient version) are \emph{not} applicable because they only support a fixed and number of base learners.

Fortunately, \citet{NeurIPS'24:Xie} proposed a variant of \omlprod that can handle the case of infinite base learners, which is perfectly suitable for our anytime setting.
Specifically, let $\A_t$ be the set of active experts at round $t$ and let $N_t=|\A_t|$ denote its size. 
We initialize $\A_0 = \{\B^\cvx, \B_0^\scvx, \B_0^\exp\}$, where $\B_0^\scvx$ and $\B_0^\exp$ are associated with the initial coefficients $\lambda_0$ and $\alpha_0$, respectively. At $t$-th round, a newly added expert $i$ is initialized with weight $W_{t,i} = 1$ and learning rate $\epsilon_{t,i} = \frac{1}{\sqrt{5}}$. The meta algorithm submits $\p_t \in \Delta_{N_t}$ as
\begin{equation}
    \label{eq:omlprod-variant-p}
    p_{t,i} = \frac{\epsilon_{t,i} W_{t,i} \exp(\epsilon_{t,i} m_{t,i})}{\sum_{j \in [N_t]} \epsilon_{t,j} W_{t,j} \exp(\epsilon_{t,j} m_{t,j})}, \text{ for all } i \in [N_t].
\end{equation}
The optimistic vector $\m_t \in \R^{N_t}$ is designed as 
\begin{equation}
    \label{eq:anytime-optimism}
   m_{t,i} = \inner{\nabla f_{t-1}(\x_{t-1})}{\x_t - \x_{t,i}}  \mbox{ for } i=1, \mbox{ and } m_{t,i} = 0 \mbox{ for } i > 1.
\end{equation} 
After receiving the loss vector $\r_t = (r_{t,1}, \ldots, r_{t,N_t}) \in \R^{N_t}$, where $r_{t,i} \define \inner{\ellb_t}{\p_t - \e_i}$ and $\ellb_{t,i}= \inner{\nabla f_t(\x_t)}{\x_{t,i}} $, for $i\in [N_t]$, it chooses the learning rate as
\begin{equation}
    \label{eq:anytime-lr}
    \epsilon_{t+1,i} = \sqrt{\frac{1}{5 + \sum_{s=s_i^{\{\exp, \scvx\}}}^t (r_{s,i} - m_{s,i})^2}}, \text{ for all } i \in [N_t].
\end{equation}
\begin{algorithm}[!t]
    \caption{Anytime Variant of \Bregmanpp}
    \label{alg:anytime}
    \begin{algorithmic}[1]
    \STATE \textbf{Initialize}: $\M$~---~meta learner \omlprod variant \\
    \makebox[1.8cm]{} $\{\B_i\}$~---~base learners as specified in \pref{subsec:framework}~(new scheduled in \pref{eq:candidate-anytime})\\
    \makebox[1.8cm]{} $\A_0 = \emptyset$~---~active set
    \FOR{$t=1$ {\bfseries to} $T$}
        \STATE Activate the base learners with $\lambda_i \in \H^\scvx_t$ or $\alpha_i \in \H^\exp_t$, initialize weights and learning rates as $W_{t,i} = 1$ and $\epsilon_{t,i} = \frac{1}{\sqrt{5}}$, and add them to $\A_{t-1}$ to obtain $\A_t$ \label{line:anytime-activate}
        \STATE Receive $\{\x_{t,i}\}_{i \in [N_t]}$ from $\{\B_i\}_{i \in [N_t]}$ and $\p_t \in \Delta_{N_t}$ from $\M$ via \pref{eq:omlprod-variant-p}

        \STATE Submit $\x_t = \sum_{i \in [N_t]} p_{t,i} \x_{t,i}$, suffer $f_t(\x_t)$, and observe $\nabla f_t(\x_t)$

        \STATE $\{\B_i^\scvx\}_{i \in [N^\scvx_t]}$, $\{\B_i^\exp\}_{i \in [N^\exp_t]}$ and $\B^{\cvx}$ update their own decisions to $\{\x_{t+1,i}\}_{i \in [N_t]}$ using surrogate losses of $\{\hsc_{t,i}(\cdot)\}_{\lambda_i \in \H_t^{\scvx}}$~\eqref{eq:surrogate-scvx}, $\{\hexp_{t,i}(\cdot)\}_{\alpha_i \in \H_t^{\exp}}$~\eqref{eq:surrogate-exp-cvx}, and $h_t^{\cvx}(\cdot)$~\eqref{eq:surrogate-exp-cvx} 
        \STATE Calculate $\m_{t+1}$~\eqref{eq:anytime-optimism} and $\r_t$ using $\{\x_{t,i}\}_{i=1}^{N_t}$, $\x_t$, $\nabla f_t(\x_t)$, and $\{\x_{t+1,i}\}_{i=1}^{N_t}$, send them to $\M$, and obtain $(W_{t+1,1}, \ldots, W_{t+1,N_t})$ via \pref{eq:omlprod-variant-weight}
        
    \ENDFOR
    \end{algorithmic}
\end{algorithm}
Finally, for each $i \in [N_t]$, the meta algorithm updates the weights as 
\begin{equation}
    \label{eq:omlprod-variant-weight}
    W_{t+1,i} = \sbr{W_{t,i} \exp \sbr{\epsilon_{t,i} r_{t,i} 
     - \epsilon_{t,i}^2 (r_{t,i} - m_{t,i})^2}}^{\frac{\epsilon_{t+1,i}}{\epsilon_{t,i}}}, \text{ for all } i \in [N_t].
\end{equation}
The corresponding guarantee is deferred to \pref{lem:anytime-optimistic-mlprod} in \pref{app:anytime}.

Note that the meta algorithm required here does not enjoy stability-induced negative terms as it falls in the \mlprod family.
Therefore, this extension cannot be applied to \Correct and \Correctpp because correction-based methods require stability-induced negative terms for effective cancellations.
On the contrary, \Bregman and \Bregmanpp can be made anytime by replacing their original meta algorithm, i.e., \omlprod, with this anytime variant in a straightforward manner algorithmically.
For simplicity, we only present the anytime extension of \Bregmanpp here, where the algorithm is concluded in \pref{alg:anytime} and the corresponding guarantee is presented in \pref{thm:anytime}.
The proof is deferred to \pref{app:anytime}.
\begin{myThm}
    \label{thm:anytime}
    Under Assumptions \ref{assum:domain-boundedness}, \ref{assum:gradient-boundedness}, \ref{assum:smoothness-X+}, and without the knowledge of time horizon $T$, by setting the learning rate of meta algorithm as \pref{eq:anytime-lr}, \Bregmanpp in \pref{alg:UniGrad-Bregman-1grad} achieves the following anytime universal gradient-variation regret guarantees using only one gradient per round: for any $\tau\in[T]$, we have
    \begin{equation*}
        \Reg_\tau\define \sumtau f_t(\x_t) - \min_{\x \in \X} \sumtau f_\tau(\x) \le  
        \begin{cases}
            \O\left(\frac{1}{\lambda}\log V_\tau\right), & \text{when } \{f_t\}_{t=1}^T \text{ are $\lambda$-strongly convex}, \\[2mm]
            \O\left(\frac{d}{\alpha}\log V_\tau\right), & \text{when } \{f_t\}_{t=1}^T \text{ are $\alpha$-exp-concave}, \\[2mm]
            \O(\sqrt{V_\tau}), & \text{when } \{f_t\}_{t=1}^T \text{ are convex}.
        \end{cases}
    \end{equation*}
\end{myThm}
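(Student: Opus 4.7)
The plan is to reduce the anytime analysis to the horizon-dependent analysis behind~\pref{thm:unigrad-bregman-1grad} via a careful activation-time splitting, then invoke the anytime optimistic second-order regret of the modified \omlprod from~\citet{NeurIPS'24:Xie}. Fix any $\tau\in[T]$ and identify the target ``best'' base learner $\is$: the always-active convex learner $\B^\cvx$ in the convex case; the strongly convex learner $\B_{\is}^\scvx$ with $\lambda_{\is}\le\lambda\le 2\lambda_{\is}$, activated at round $s_{\is}=1/\lambda_{\is}\le 2/\lambda$, in the strongly convex case; and analogously $\B_{\is}^\exp$ with $s_{\is}\le 2/\alpha$ in the exp-concave case. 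I split the regret as
\[
\Reg_\tau \;=\; \sum_{t=1}^{s_{\is}-1}\bigl(f_t(\x_t)-f_t(\xs)\bigr) \;+\; \sum_{t=s_{\is}}^{\tau}\bigl(f_t(\x_t)-f_t(\xs)\bigr).
\]
The pre-activation term is bounded by $s_{\is}\cdot GD$ using Assumptions~\ref{assum:domain-boundedness} and~\ref{assum:gradient-boundedness}, giving an $\O(1/\lambda)$ or $\O(1/\alpha)$ constant that is subsumed by the target $\O(\frac{1}{\lambda}\log V_\tau)$ or $\O(\frac{d}{\alpha}\log V_\tau)$ bound; in the convex case this term is vacuous since $\B^\cvx\in\A_0$.

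For the post-activation portion, I apply the surrogate-loss decomposition from~\pref{subsec:general-one-gradient} to write it as $\meta+\base-\tfrac12\sum_{t=s_{\is}}^\tau \D_{f_t}(\xs,\x_t)$ in each curvature regime. The meta regret is controlled by the anytime optimistic ML-Prod guarantee (\pref{lem:anytime-optimistic-mlprod} in~\pref{app:anytime}), yielding
\[
\meta \;\les\; \sqrt{\log N_\tau \cdot \sum_{t=s_{\is}}^\tau (r_{t,\is}-m_{t,\is})^2},
\]
where $N_\tau=\O(\log\tau)$ is the number of active experts at round $\tau$, so $\log N_\tau$ contributes only an iterated-log factor absorbed by the $\O(\cdot)$ convention. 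Following \pref{lem:universal-optimism}, this second-order quantity is cancelled by the curvature-induced negative terms in the strongly convex and exp-concave regimes to give a constant meta regret, while in the convex regime it reduces to the empirical gradient variation $\Vb_\tau$, which by \pref{lem:emp-VT-de-breg} splits into $V_\tau$ plus a Bregman-divergence positive term that is precisely annihilated by the negative $-\tfrac12\sum_t \D_{f_t}(\xs,\x_t)$ extracted from linearization, together with a base-stability term handled by the base learner's own intrinsic negative terms.

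The base regret $\base$ is then bounded by the standard anytime \OOMD analysis applied to the surrogate losses \eqref{eq:surrogate-scvx}--\eqref{eq:surrogate-exp-cvx} starting from round $s_{\is}$ (the base learners are already anytime thanks to their adaptive step sizes), producing $\O(\frac{1}{\lambda}\log V_\tau)$, $\O(\frac{d}{\alpha}\log V_\tau)$, and $\O(\sqrt{V_\tau})$ in the three regimes, exactly as in the proof of \pref{thm:unigrad-bregman-1grad}. Summing the pre- and post-activation contributions delivers the claimed anytime bounds for every $\tau\in[T]$. The main obstacle I anticipate is rigorously establishing the anytime optimistic second-order regret of the modified \omlprod in the presence of dynamically entering experts: one must show that the second-order quantity runs only over rounds $t\ge s_{\is}$ (so rounds before the best base learner was active do not contaminate the adaptivity) and that the time-growing expert set inflates the leading factor only by an iterated-log overhead; the potential-based argument must accommodate newly introduced experts with initial weight $W=1$ and learning rate $1/\sqrt{5}$ without breaking the telescoping that yields the second-order bound.
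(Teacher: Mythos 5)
Your proposal follows essentially the same route as the paper's proof: activation-time splitting at $s_{\is}$, bounding the pre-activation segment by $s_{\is}\cdot GD\le \O(1/\lambda)$ (resp.\ $\O(1/\alpha)$), invoking the anytime optimistic second-order regret of the modified \omlprod (\pref{lem:anytime-optimistic-mlprod}) on the post-activation window $[s_{\is},\tau]$, and combining with the surrogate-based base regret analysis from \pref{thm:unigrad-bregman-1grad}. The paper even explicitly notes that the convex case needs no re-derivation since $\B^\cvx$ is always active and the base learners are already anytime, which matches your treatment.

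One small imprecision worth flagging: in the convex case you cite \pref{lem:emp-VT-de-breg} as the source of the Bregman-divergence decomposition that is annihilated by the negative linearization term $-\sum_t \D_{f_t}(\xs,\x_t)$. But \pref{lem:emp-VT-de-breg} yields terms of the form $\D_{f_t}(\x_{t,\is},\x_t)$, which would instead be cancelled by the \emph{meta-level} linearization negative term (as in the multi-gradient \Bregman of \pref{thm:unigrad-bregman}). In the one-gradient setting, the relevant decomposition is the $\xs$-centered one of \pref{eq:empirical-VT-de}, which inserts $\nabla f_t(\xs)$ and $\nabla f_{t-1}(\xs)$ as intermediates and produces exactly the $\D_{f_t}(\xs,\x_t)$ that the overall-regret linearization negative term can absorb. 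This is a citation mismatch rather than a gap in the argument, since your narrative otherwise correctly identifies the correct negative term; the structure of the proof is sound.
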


\section{Discussions}
\label{sec:comparison-discussion}
This section provides further discussions to better understand our proposed methods, which is organized along two directions: \rom{1} a comprehensive comparison between our two proposed methods (\Correct and \Bregman; as well as their respective one-gradient variants), and \rom{2} a detailed technical comparison of \Correct with its previous conference version~\citep{NeurIPS'23:universal}.

\subsection{Comparison of \textsf{UniGrad.Correct} and \textsf{UniGrad.Bregman}}
\label{subsec:comparison-correct-Bregman}
To achieve the desired gradient-variation universal regret in online learning, we propose two different methods called \Correct and \Bregman, each with its own merits and characteristics.
Here we provide a systematic comparison along several key dimensions.

\begin{figure*}[!t]
  \centering
  \includegraphics[clip, trim=45mm 17mm 35mm 15mm, width=0.8\textwidth]{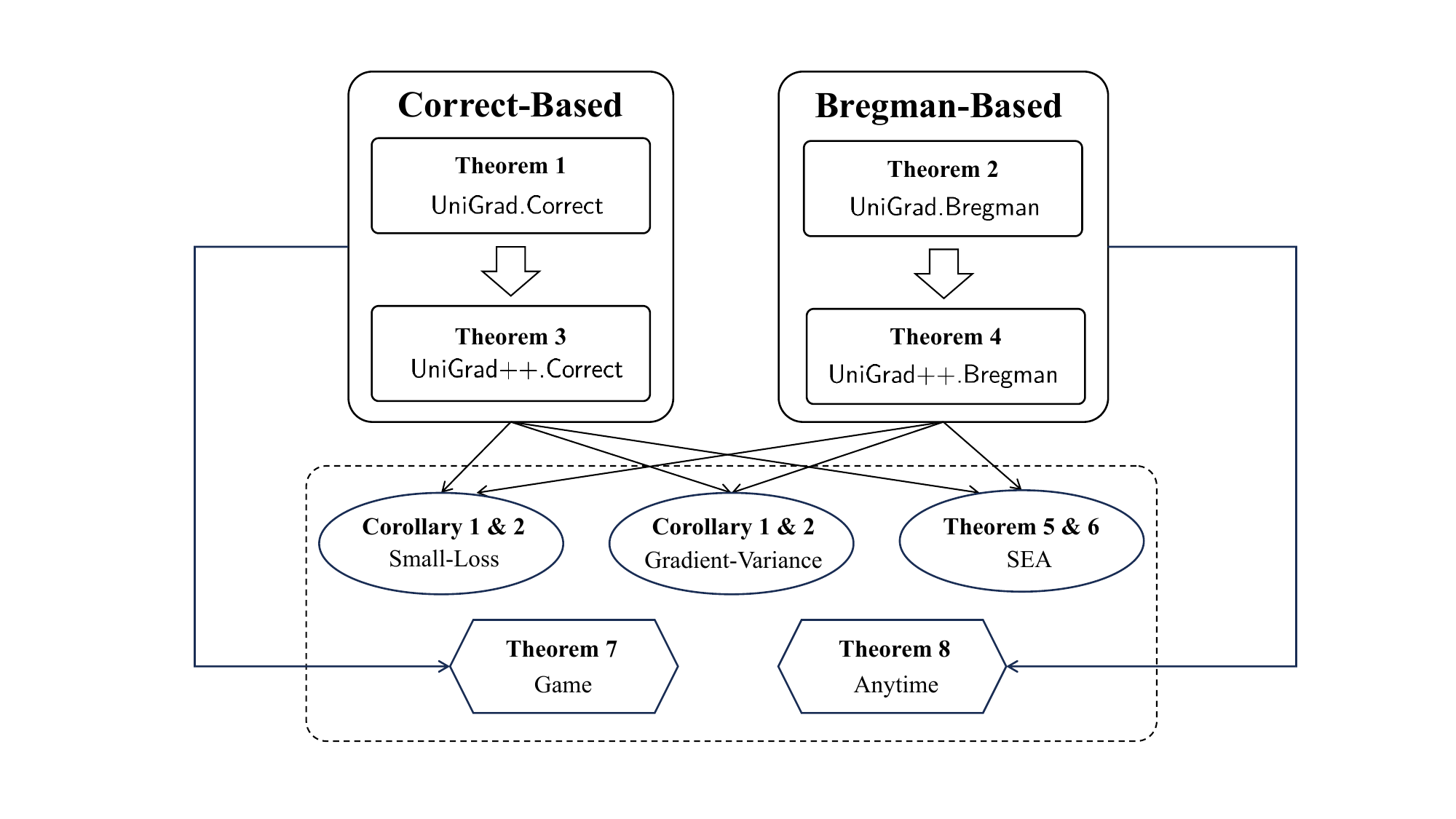}
  \caption{\small{The summary of the theoretical results in our work. Specifically, we propose two methods named \Correct and \Bregman (\pref{thm:unigrad-correct} and \pref{thm:unigrad-bregman}) to achieve gradient-variation universal regret. Both methods can be strengthened to the one-gradient feedback scenario (\pref{thm:unigrad-correct-1grad} and \pref{thm:unigrad-bregman-1grad}). Besides, our results find important implications in small-loss and gradient-variance problem-dependent regret (\pref{cor:FT-WT-Correct} and \pref{cor:FT-bregman}), stochastically extended adversarial (SEA) model (\pref{thm:SEA-correct} and \pref{thm:SEA-bregman}), and game theory (\pref{thm:game}). Furthermore, our results can be extended to the anytime setup (without knowing the time horizon $T$) in \pref{thm:anytime}.}}
  \label{fig:theorems}
\end{figure*}

\paragraph{Overview of Results.}
The theoretical contributions of this paper can be organized into two distinct methodological approaches.
As demonstrated in \pref{fig:theorems}, we introduce two principal algorithms: \Correct and \Bregman, both of which achieve gradient-variation universal regret bounds (see~\pref{thm:unigrad-correct} and \pref{thm:unigrad-bregman}).
Each method can be further extended to the one-gradient feedback setting (\pref{thm:unigrad-correct-1grad} and \pref{thm:unigrad-bregman-1grad}), requiring only a single gradient query per round.
Beyond these core regret guarantees, both methods support a range of important implications and applications.
These include problem-dependent bounds for small-loss and gradient-variance settings (\pref{cor:FT-WT-Correct} and \pref{cor:FT-bregman}), adaptive guarantees for the SEA model (\pref{thm:SEA-correct} and \pref{thm:SEA-bregman}).
\Correct is particularly well-suited for faster convergence in online games as it can preserve the its RVU property (\pref{thm:game}).
In contrast, \Bregman features a simpler structure and is more amenable to extension to the anytime setting, where the time horizon $T$ is unknown in advance (\pref{thm:anytime}).

\paragraph{Regret Bounds and Algorithmic Structures.}
While both methods achieve gradient-variation universal regret, they differ in the regret bound for convex functions.
\Correct provides an $\O(\sqrt{V_T \log V_T})$ regret bound for convex functions, while \Bregman can achieve the optimal $\O(\sqrt{V_T})$ bound for convex functions, matching the lower bound established in~\citet{COLT'12:VT}.
A concrete comparison of the regret bounds can be found in \pref{table:main}.
The most fundamental difference lies in  algorithmic structures due to their different methodologies.
As summarized in Table~\ref{table:unigrad-comparison}, \Correct employs a three-layer online ensemble with $N = \Theta(\log T)$ base learners, while \Bregman uses a more streamlined two-layer structure with the same number of base learners.
For both \Correct and \Bregman, they require $N$ gradient queries per round, and their one-gradient variants successfully reduce the number of gradient queries to $1$ per round.

\begin{table}[!t]
  \centering
  \caption{Comparison of the algorithmic structures of different methods. \Correct and \Bregman are two different methods to achieve the gradient-variation universal regret, and \Correctpp and \Bregmanpp are their one-gradient variants.}
  \label{table:unigrad-comparison}
  \vspace{2mm}
  \renewcommand*{\arraystretch}{1.25}
  \resizebox{0.98\textwidth}{!}{
  \begin{tabular}{@{}l|lll@{}}
  \hline
  
  \hline
  \textbf{Method} & \textbf{Meta Algorithm} & \textbf{Base Algorithm} & \textbf{Remark} \\
  \hline
  \textsf{UniGrad.Correct} & MoM (2-layer) & 
  \begin{tabular}[c]{@{}l@{}} \OOMD with $\{\nabla f_t(\x_{t,i})\}_{i=1}^N$ \end{tabular} &
  \begin{tabular}[c]{@{}l@{}}three layers;\\ $N = \Theta(\log T)$ base learners,\\ $N$ gradient queries per round\end{tabular} \\
  \hline 
  \textsf{UniGrad++.Correct} & MoM (2-layer) & 
  \begin{tabular}[c]{@{}l@{}} \OOMD with $\nabla f_t(\x_t)$ \end{tabular} &
  \begin{tabular}[c]{@{}l@{}}three layers;\\ $N = \Theta(\log T)$ base learners,\\ 1 gradient query per round\end{tabular} \\
  \hline 
  \textsf{UniGrad.Bregman} & Optimistic-Adapt-ML-Prod & 
  \begin{tabular}[c]{@{}l@{}} \OOMD with $\{\nabla f_t(\x_{t,i})\}_{i=1}^N$ \end{tabular} &
  \begin{tabular}[c]{@{}l@{}}two layers;\\ $N = \Theta(\log T)$ base learners,\\ $N$ gradient queries per round\end{tabular} \\
  \hline 
  \textsf{UniGrad++.Bregman} & Optimistic-Adapt-ML-Prod & 
  \begin{tabular}[c]{@{}l@{}} \OOMD with $\nabla f_t(\x_t)$ \\ \end{tabular} &
  \begin{tabular}[c]{@{}l@{}}two layers;\\ $N = \Theta(\log T)$ base learners,\\ 1 gradient query per round\end{tabular} \\
  \hline
  
  \hline
  \end{tabular}}
\end{table}

\paragraph{Technical Differences.}
The two methods differ significantly in their core technical approaches, particularly in how they convert the empirical gradient variation term $\bar{V}_T$ to the gradient variation term $V_T$ (see \pref{lemma:empirical-GV-stability} and \pref{lem:emp-VT-de-breg}).
\begin{itemize}[leftmargin=*]
    \item \textbf{UniGrad.Correct:} The most important technical feature of \Correct is the \emph{cancellation argument} based on the (positive and negative) stability term and the curvature-induced negative term.
    By carefully exploiting these negative terms together with the cascade correction scheme, \Correct attains the desired universal regret with a three-layer online ensemble structure.
    The development significantly advances the adaptivity of the online ensemble framework, providing a principled basis for analyzing the stability of multi‑layer online ensembles.
    \item \textbf{UniGrad.Bregman:} 
    The key innovation of \Bregman is to eliminate the meta level stability term when converting empirical gradient variation to the desired gradient variation (see \pref{lem:emp-VT-de-breg}).
    This is achieved by extracting the negative Bregman divergence arising from the linearization of the regret from the beginning.
    This mechanism greatly simplifies the algorithmic analysis and design and provides a useful tool for future research in adaptive online learning.
\end{itemize}

\begin{figure}[!t]
    \centering
        \subfigure[{Method of \citet{NeurIPS'23:universal}}]{ 
        \label{fig:left}
        \vspace{-2mm}
        \includegraphics[clip, trim=2mm 23mm 5mm 15mm,height=0.23\textwidth]{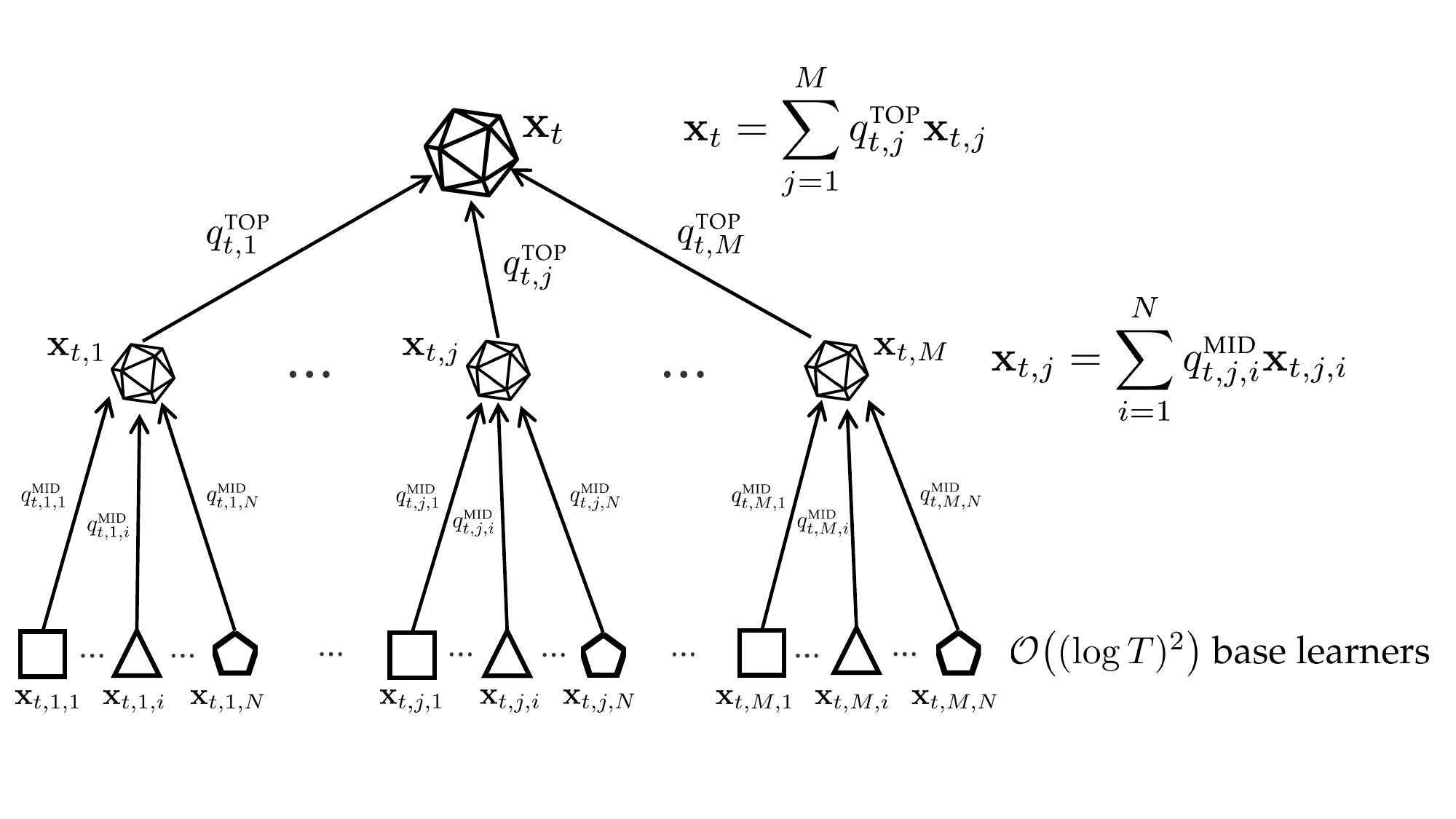}}
        \subfigure[{\Correct}]{ 
        \label{fig:right} \hspace{-4mm}
        \includegraphics[clip, trim=20mm 20mm 40mm 15mm,height=0.22\textwidth]{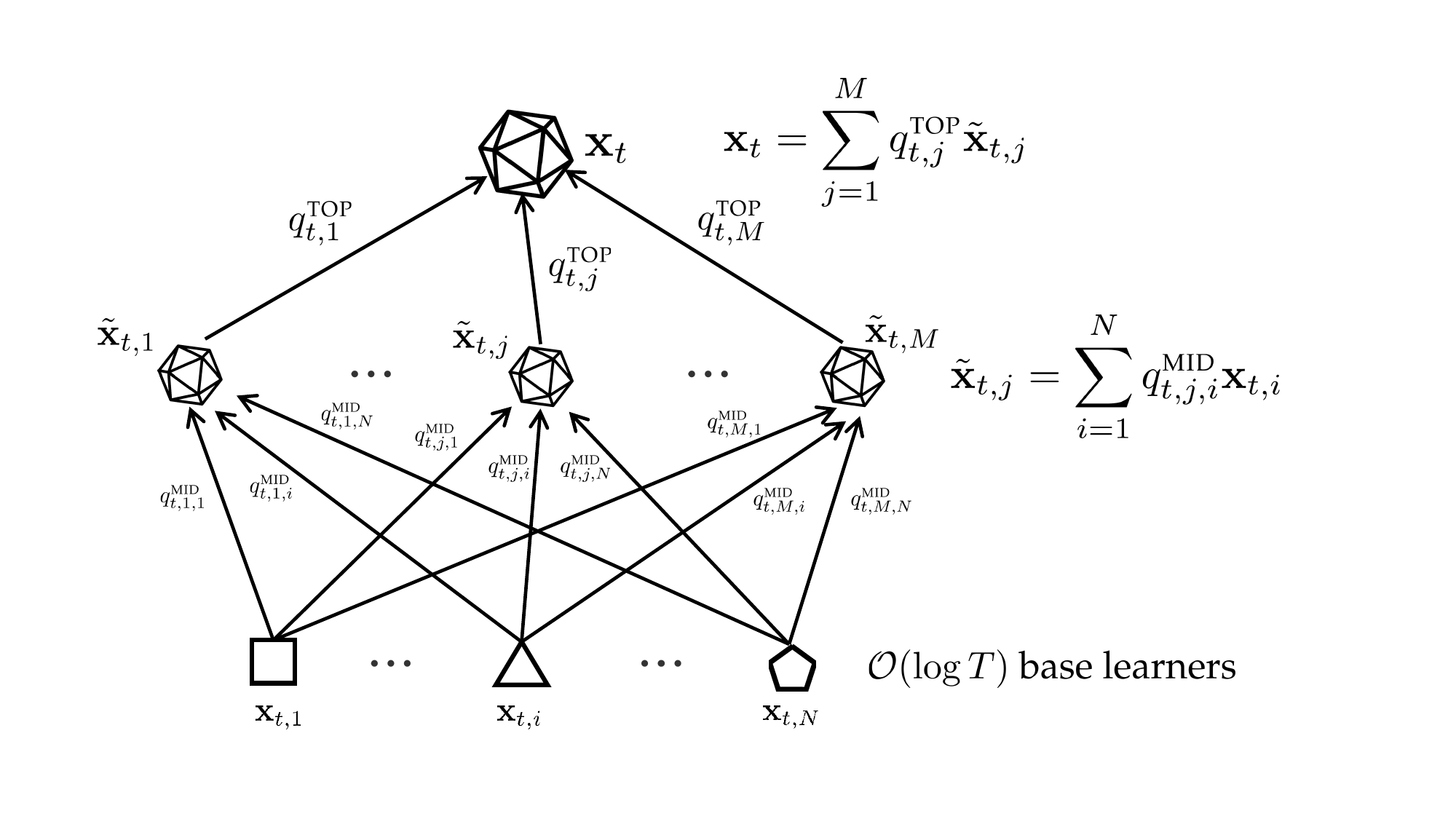}}
        \vspace{-2mm}
    \caption{Comparison of the three-layer online ensemble structures between the conference version~\citep{NeurIPS'23:universal} and \Correct. The key difference lies in how base learners are managed: \citet{NeurIPS'23:universal} maintain a separate group of base learners for each \MSMWCmid, whereas \Correct employs \emph{shared} base learners across all \MSMWCmid's, thereby reducing the total number of base learners from $\O((\log T)^2)$ to $\O(\log T)$.}
    \label{fig:comparison-correction}
\end{figure}

\subsection{Comparison of \textsf{UniGrad.Correct} with Conference Version}
\label{subsec:comparison-correct-conference}
The design of \Correct builds upon and significantly improves the correction-based method presented in our conference version~\citep{NeurIPS'23:universal}.
While both share the fundamental idea of using correction terms to handle positive stability terms, our current methods achieve superior theoretical guarantees and computational efficiency.

\paragraph{Conference Version.}
The algorithm of~\citet{NeurIPS'23:universal} employs a three-layer online ensemble, with the final output at each round computed as:
\begin{equation*}
  \x_t = \sumM q^\Top_{t,j} \x_{t,j},\quad \x_{t,j} = \sumN q^\Mid_{t,j,i} \x_{t,j,i}.
\end{equation*}
Here, $\q_t^\Top$ is the decision of \MSMWCtop, which connects with $M = \O(\log T)$ \MSMWCmid.
Similarly, $\q^\Mid_{t,j}$ denotes the decisions of \MSMWCmid, which further connects with $N = \O(\log T)$ base learners.
As a result, the algorithm requires maintaining $\O(MN) = \O((\log T)^2)$ base learners in total, as illustrated in \pref{fig:left}.
To handle the positive stability term $\norm{\x_t - \x_{t-1}}^2$, \citet{NeurIPS'23:universal} leveraged the following \emph{cascaded} stability decomposition:
\begin{gather}
    \|\x_t - \x_{t-1}\|^2 \lesssim \|\q_t^\Top - \q^\Top_{t-1}\|_1^2 + \sumM q^\Top_{t,j} \|\x_{t,j} - \x_{t-1,j}\|^2, \label{eq:sta-de-23-top}\\
    \|\x_{t,\js} - \x_{t-1,\js}\|^2 \les \|\q^\Mid_{t,\js} - \q^\Mid_{t-1,\js}\|_1^2 + \sumN q^\Mid_{t,\js,i} \|\x_{t,\js,i} - \x_{t-1,\js,i}\|^2. \label{eq:sta-de-23-mid}
\end{gather}
According to the second term in the right-hand side of~\pref{eq:sta-de-23-top}, the top-layer correction term is set as $q^\Top_{t,j} \|\x_{t,j} - \x_{t-1,j}\|^2$, which generates additional positive term $\|\x_{t,\js} - \x_{t-1,\js}\|^2$.
This term is further decomposed using~\pref{eq:sta-de-23-mid}, whose second term in the right-hand side motives the injection of the middle-layer correction term $q^\Mid_{t,\js,i} \|\x_{t,\js,i} - \x_{t-1,\js,i}\|^2$ to $\js$-th \MSMWCmid to ensure a property cancellation.
While this three-layer ensemble and correction scheme is intuitive and conceptually straightforward, it has a limitation: each \MSMWCmid in the middle layer requires maintaining its own set of base learners, leading to a total complexity of $\O(\log T) \times \O(\log T) = \O((\log T)^2)$ base learners.
This design introduces redundancy and unnecessary complexity.

\paragraph{Improved Version in Current Paper.}
\Correct reduces the number of base learners to $\O(\log T)$ by carefully restructuring the framework, though the three-layer structure and cascade corrections are still necessary.
As shown  in \pref{fig:right}, the proposed \Correct algorithm  produces the final output at each round as:
\begin{equation*}
    \x_t = \sumN p_{t,i} \x_{t,i}, \quad \p_t = \sumM q_{t,j}^\Top \q_{t,j}^\Mid,
\end{equation*}
where $\{\x_{t,i}\}_{i=1}^N$ are the local decisions returned by the base learners, with $N = \O(\log T)$.
The meta combination weight $p_{t,i}$ is calculated based on a two-layer \msmwc.
In fact, this update can be equivalently understood as follows:
\begin{equation}
    \label{eq:new-understd}
    \x_t = \sumN p_{t,i} \x_{t,i} = \sumN \Big(\sumM q_{t,j}^\Top q_{t,j,i}^\Mid\Big) \x_{t,i} = \sumM q_{t,j}^\Top \Big(\sumN q_{t,j,i}^\Mid \x_{t,i}\Big) \define \sumM q_{t,j}^\Top \xt_{t,j},
\end{equation}
where the last equality defines new hidden aggregation nodes $\xt_{t,j} \define \sumN q_{t,j,i}^\Mid \x_{t,i}$.
Compared with the aggregation of $\x_t = \sumM q^\Top_{t,j} \x_{t,j}$ and $\x_{t,j} = \sumN q^\Mid_{t,j,i} \x_{t,j,i}$ in the conference version~\citep{NeurIPS'23:universal}, it can be seen that hidden aggregations $\{\xt_{t,j}\}_{j=1}^M$ use the \emph{shared} base learner group, namely, $\x_{t,j,i} = \x_{t,i}$ for any $j \in [M]$.
This ``base learner sharing'' design is the key improvement over the conference version: by requiring the middle-layer meta algorithm to use a shared set of base learners across all \MSMWCmid, \Correct reduces the number of base learners from $\O((\log T)^2)$ to $\O(\log T)$, significantly enhancing efficiency.

The new understanding in \pref{eq:new-understd} can also benefit and simplify the stability analysis for \Correct in a similar cascade way to the conference version~\citep{NeurIPS'23:universal}:
\begin{gather*}
  \|\x_t - \x_{t-1}\|^2 \lesssim \|\q_t^\Top - \q^\Top_{t-1}\|_1^2 + \sumM q^\Top_{t,j} \|\xt_{t,j} - \xt_{t-1,j}\|^2, \\
  \|\xt_{t,\js} - \xt_{t-1,\js}\|^2 \les \|\q^\Mid_{t,\js} - \q^\Mid_{t-1,\js}\|_1^2 + \sumN q^\Mid_{t,\js,i} \|\x_{t,i} - \x_{t-1,i}\|^2.
\end{gather*}
This decomposition not only provides a conceptually more straightforward proof for \pref{lem:decompose-three-layer}, but also establishes a principled framework for analyzing even deeper online ensemble structures.
We believe these insights could be of interest to the community.

\section{Experiments}
\label{sec:experiments}
This section provides empirical studies to validate the effectiveness of our algorithms. Through empirical evaluations, we aim to answer the following three questions:
\begin{itemize}[leftmargin=*]
    \item \textbf{Universality}: Can our methods automatically adapt to the unknown curvature of online functions and achieve comparable performance with the optimal algorithm specifically designed for each problem instance?
    \item \textbf{Adaptivity}: Can our methods adapt to the gradient variation $V_T$ and achieve better performance than the methods that are not fully gradient-variation adaptive, e.g., the method of \citet{ICML'22:universal}, when $V_T$ is small?
    \item \textbf{Efficiency}: Can our one-gradient improvements \textsf{UniGrad++} (\textsf{Correct/Bregman}) achieve comparable performance to their vanilla versions \textsf{UniGrad} (\textsf{Correct/Bregman}) while with significantly reduced gradient query cost?
\end{itemize}

\begin{figure}[!t]
    \centering
    \subfigure[\texttt{ijcnn1} - convex]{
        \label{fig:universality-cvx-ijcnn}
        \includegraphics[clip, trim=2mm 3mm 2mm 2mm, height=2.80cm]{./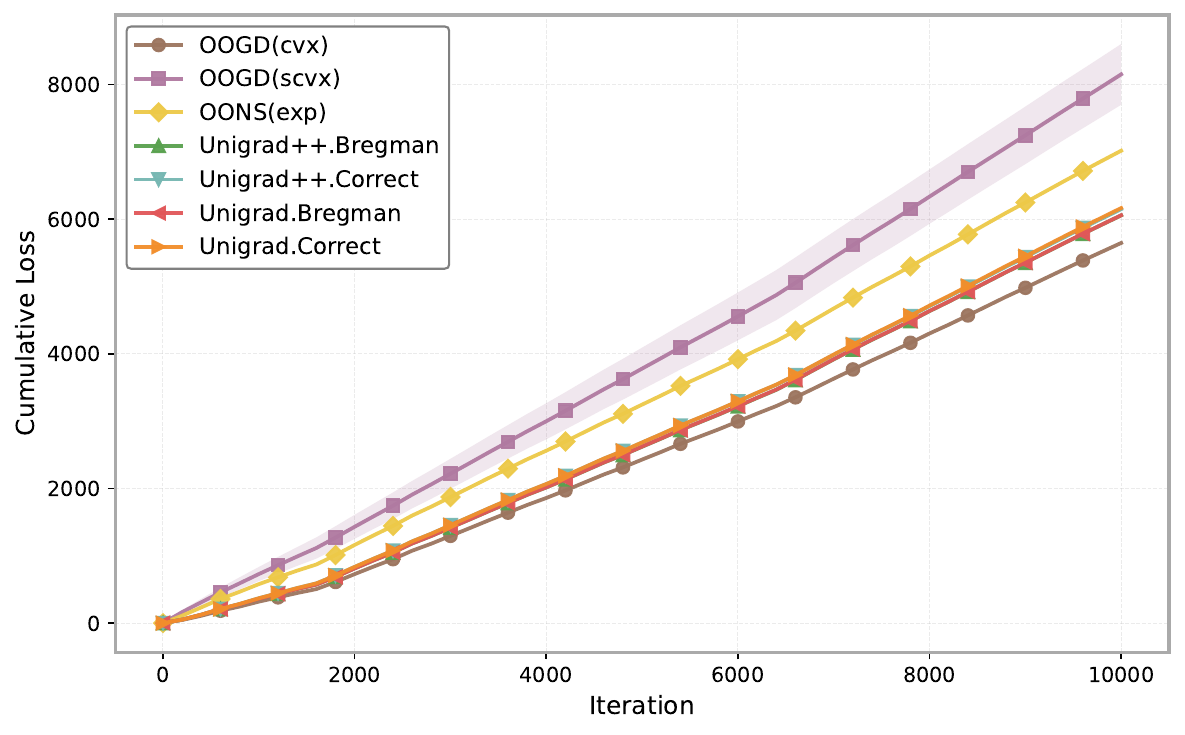}}
    \hfill
    \subfigure[\texttt{ijcnn1} - exp-concave]{
        \label{fig:universality-exp-ijcnn}
        \includegraphics[clip, trim=2mm 3mm 2mm 2mm, height=2.80cm]{./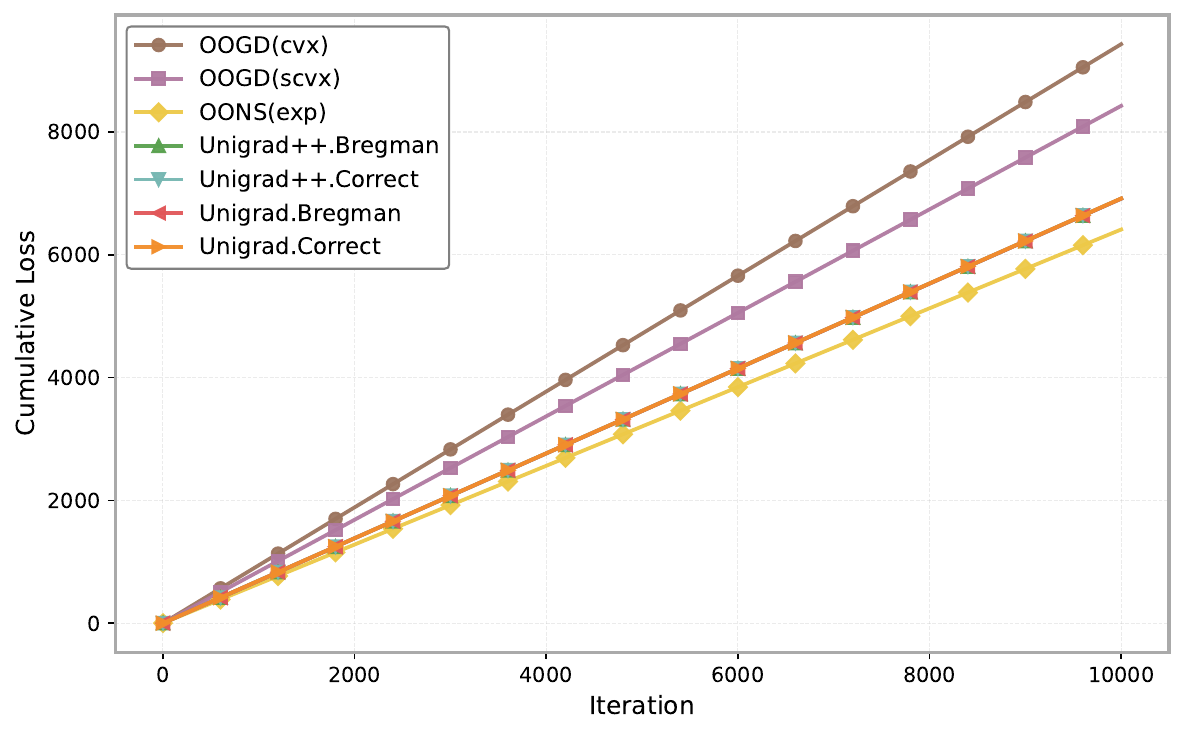}}
    \hfill
    \subfigure[\texttt{ijcnn1} - str-convex]{
        \label{fig:universality-scvx-ijcnn}
        \includegraphics[clip, trim=2mm 3mm 2mm 2mm, height=2.80cm]{./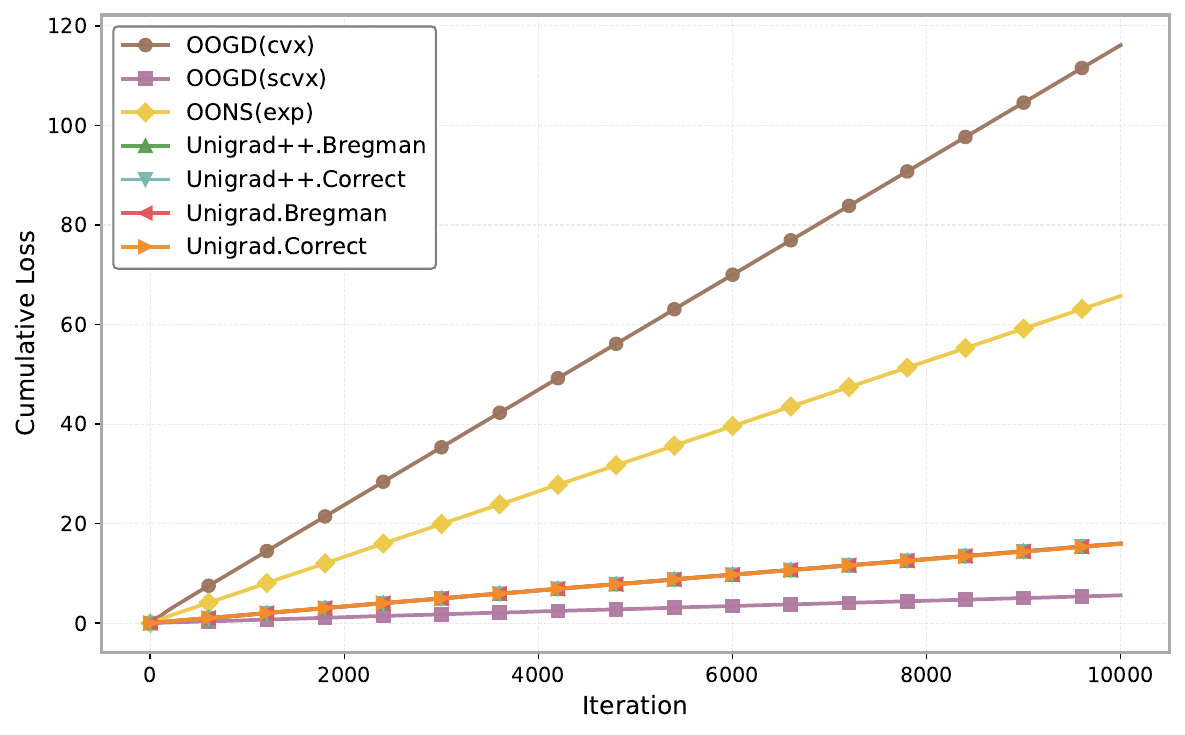}} \\
    \vspace{0.1cm}
    \subfigure[\texttt{svmguide1} - convex]{
        \label{fig:universality-cvx-poker}
        \includegraphics[clip, trim=2mm 3mm 2mm 2mm, height=2.80cm]{./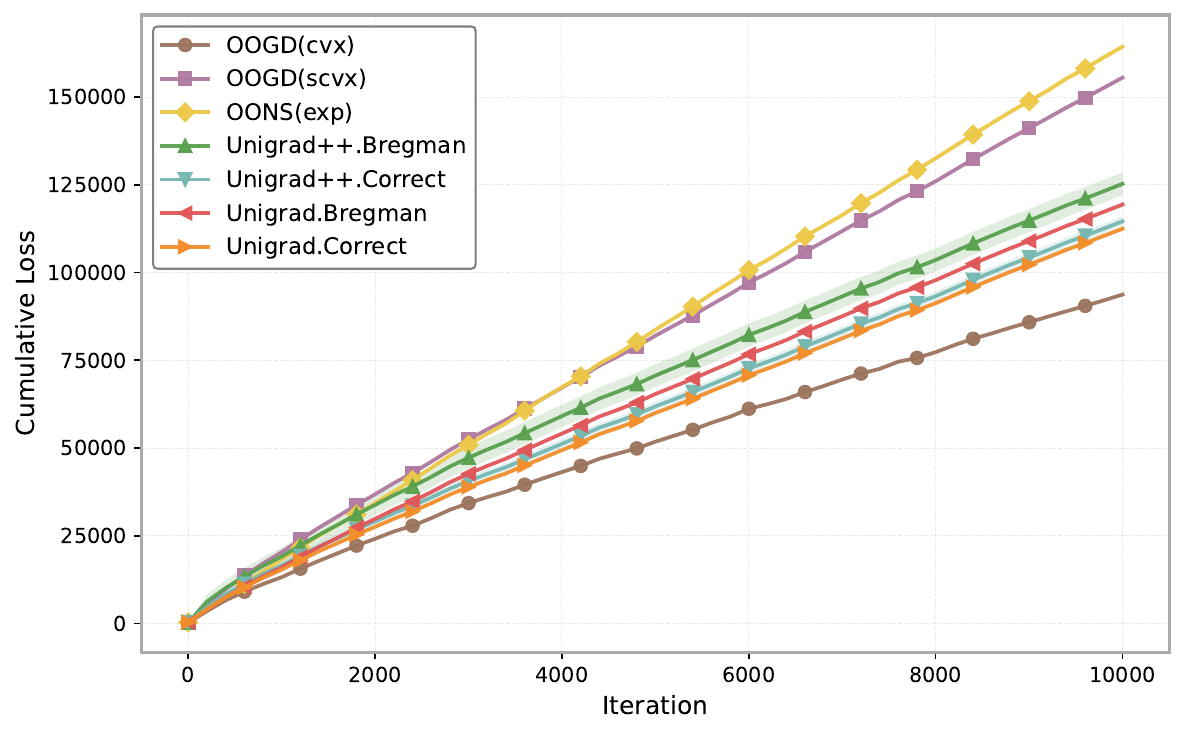}}
    \hfill
    \subfigure[\texttt{svmguide1} - exp-concave]{
        \label{fig:universality-exp-poker}
        \includegraphics[clip, trim=2mm 3mm 2mm 2mm, height=2.80cm]{./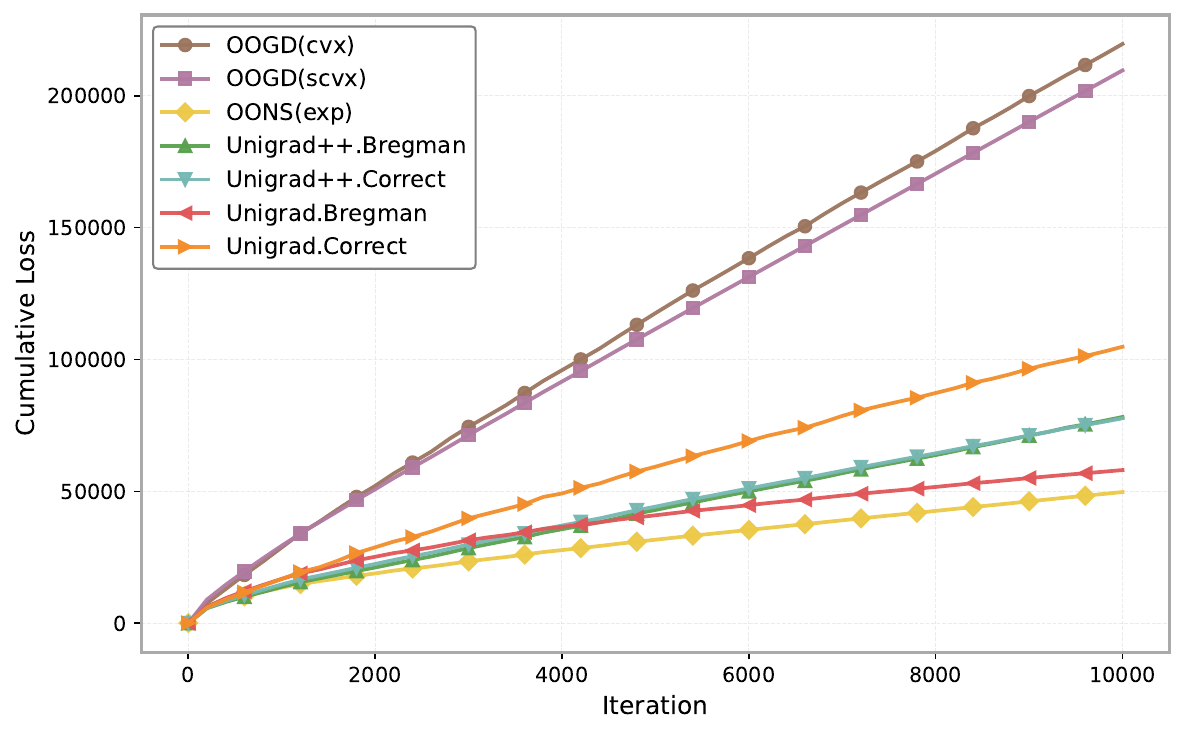}}
    \hfill
    \subfigure[\texttt{svmguide1} - str-convex]{
        \label{fig:universality-scvx-poker}
        \includegraphics[clip, trim=2mm 3mm 2mm 2mm, height=2.80cm]{./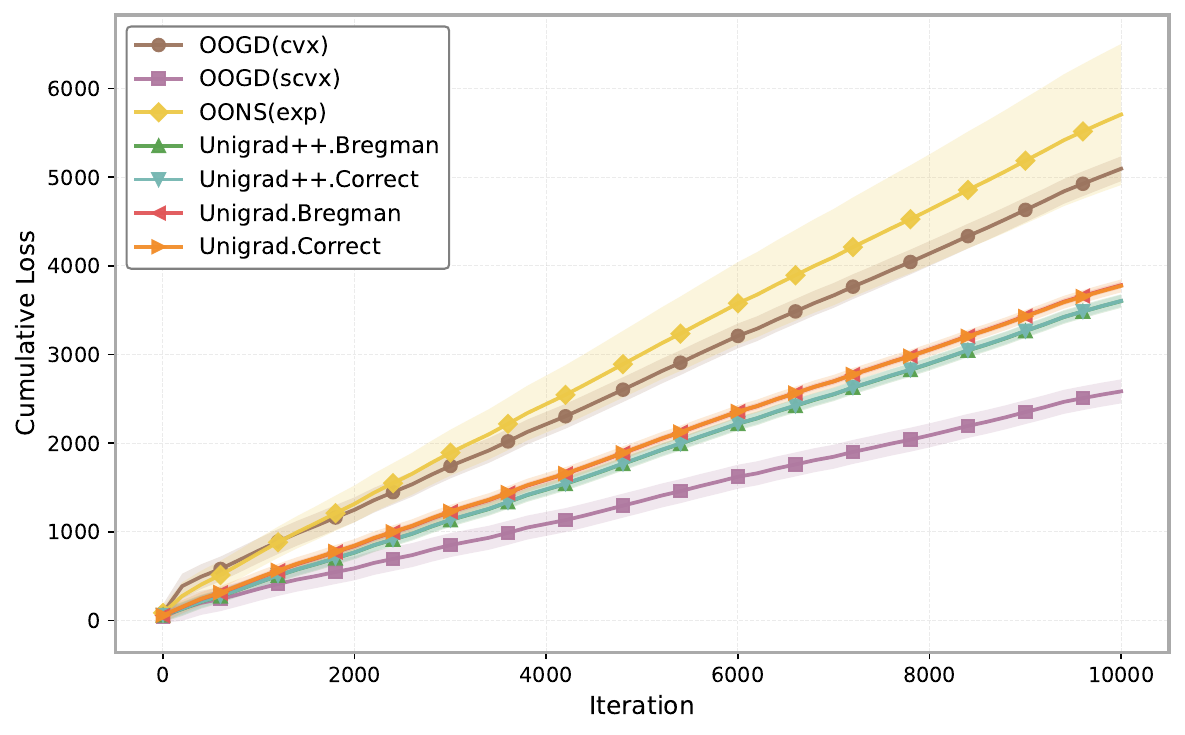}} \\
    \vspace{0.1cm}
    \subfigure[\texttt{skin}\_\texttt{nonskin} - convex]{
        \label{fig:universality-cvx-shuttle}
        \includegraphics[clip, trim=2mm 3mm 2mm 2mm, height=2.80cm]{./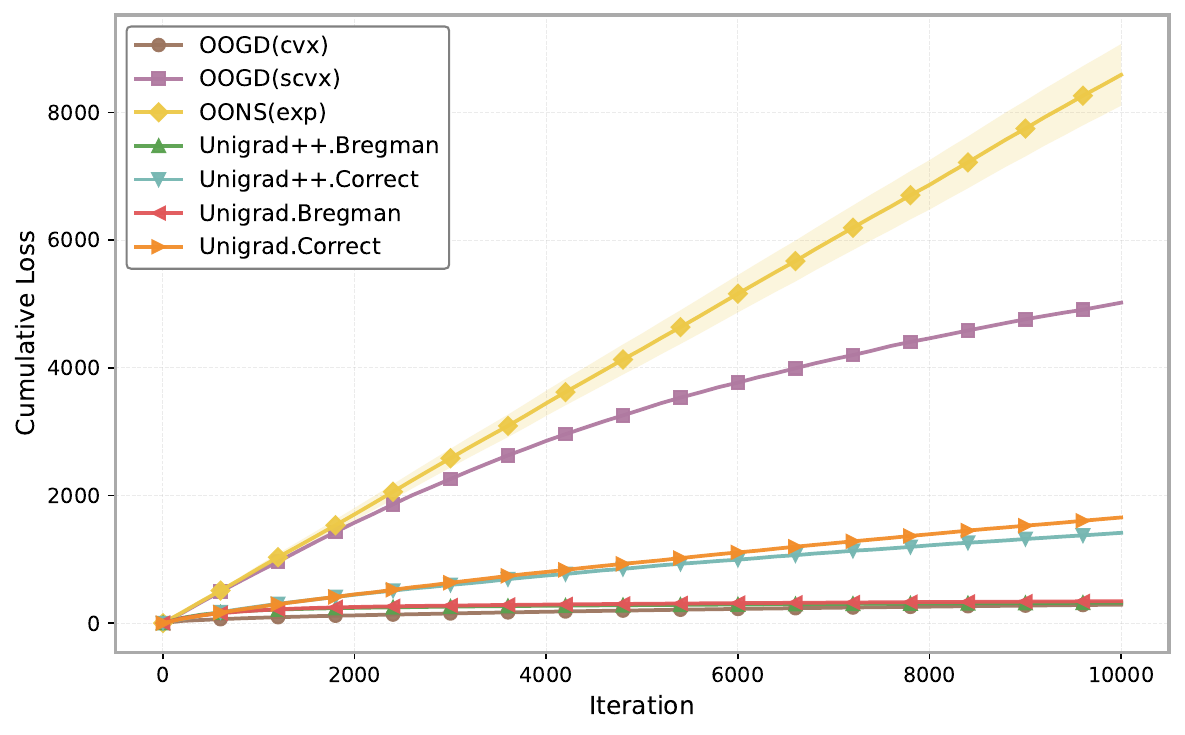}}
    \hfill
    \subfigure[\texttt{skin}\_\texttt{nonskin} - exp-concave]{
        \label{fig:universality-exp-shuttle}
        \includegraphics[clip, trim=2mm 3mm 2mm 2mm, height=2.80cm]{./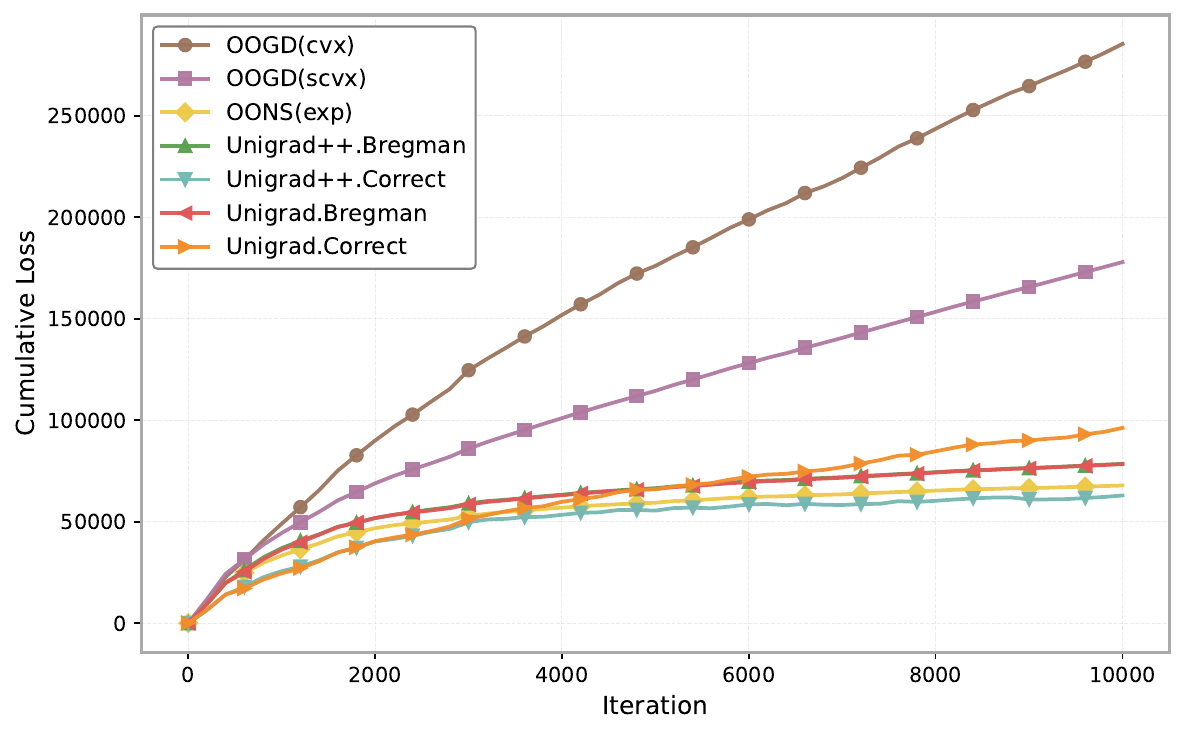}}
    \hfill
    \subfigure[\texttt{skin}\_\texttt{nonskin} - str-convex]{
        \label{fig:universality-scvx-shuttle}
        \includegraphics[clip, trim=2mm 3mm 2mm 2mm, height=2.80cm]{./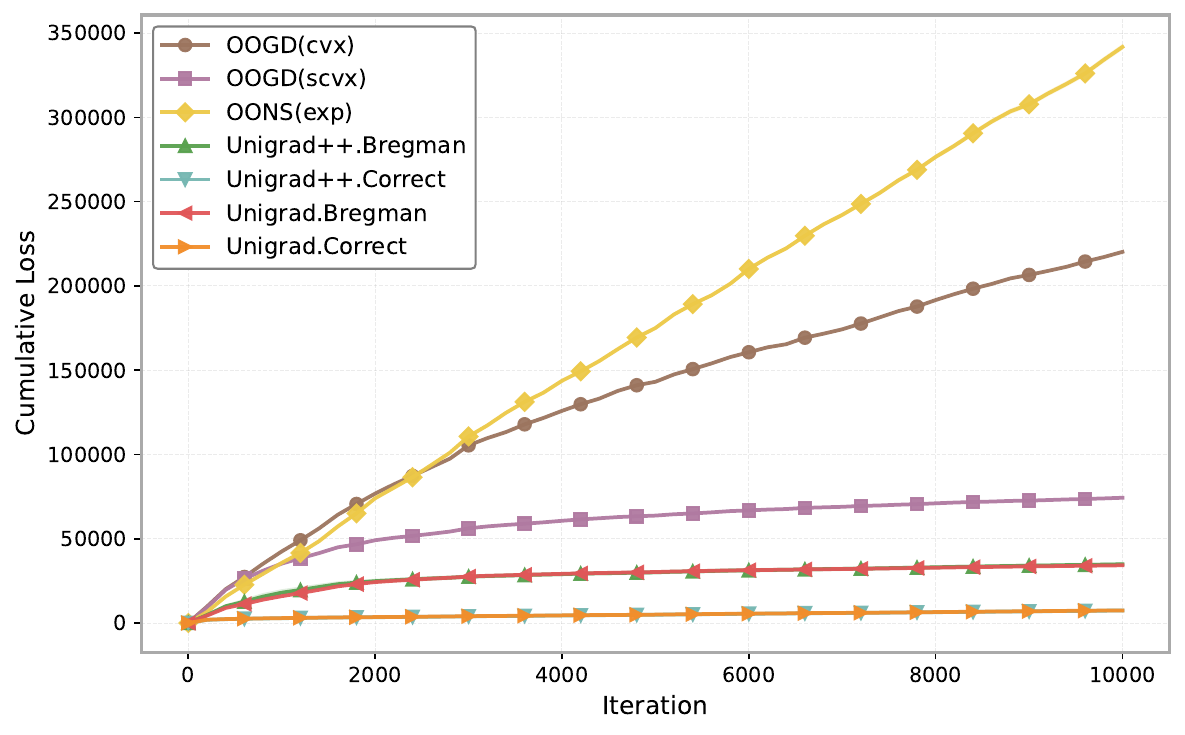}}
    \caption{\small{\textbf{Universality}: comparisons on three problem classes---convex, exp-concave, and strongly convex---across three datasets (\texttt{ijcnn1}, \texttt{svmguide1}, \texttt{skin}\_\texttt{nonskin}). Rows correspond to datasets, columns correspond to problem classes. Our methods are evaluated against the optimal algorithm specifically designed for each class, showing comparable regret performance.}}
    \label{fig:universality}
\end{figure}

\paragraph{Contenders and Configurations.}
To validate the universality, we compare our methods with the optimal algorithm specifically designed for each problem instance, as specified in \pref{subsec:framework}. For the adaptivity validation, we compare our methods with the USC algorithm~\Citep{ICML'22:universal}, which enjoys the universal regret of $\O(\sqrt{T})$ for convex functions, $\O(\frac{d}{\alpha} \log V_T)$ for exp-concave functions, and $\O(\frac{1}{\lambda} \log V_T)$ for strongly convex functions, respectively. Finally, for efficiency validation, we compare the one-gradient methods (\Correctpp and \Bregmanpp) with their vanilla versions (\Correct and \Bregman), which require $\O(\log T)$ gradient queries per round.

In all experiments, we set the total time horizon to $T = 10{,}000$ and choose the decision domain $\mathcal{X}$ as the unit ball.
All hyper-parameters are set to be theoretically optimal.

\begin{itemize}[leftmargin=*]
    \item All algorithm hyper-parameters are set to be theoretically optimal.
    To validate the universality of our approach, we conduct experiments on three datasets from the \texttt{LIBSVM} repository~\Citep{CC01a}: \texttt{ijcnn1}, \texttt{svmguide1}, and \texttt{skin}\_\texttt{nonskin}. All of these are binary classification datasets, and we transform the labels to $\{-1,1\}$.
    At each round $t\in[T]$, we randomly sample a data point $(\mathbf{a}_t, y_t)$ from the chosen dataset to construct the loss function $f_t(\mathbf{x})$.
    Specifically, for the convex setting, we choose the linear loss function $f_t(\x) =\max(0,1-y_t\cdot \abf_t^\top\x) $.
    For the exp-concave setting, we use the logistic loss function $f_t(\x) = \log(1 + \exp(-y_t\cdot\abf_t^\top \x))$.
    For the strongly convex setting, we choose the loss function $f_t(\x) = \max(0,1-y_t\cdot \abf_t^\top\x)+\frac{1}{2}\|\x\|^2_2$.
    \item To validate the adaptivity of our approach, we first compare our method with \textsf{USC} \citep{ICML'22:universal} on the \texttt{ijcnn1} dataset, where the gradient variation satisfies $V_T = \O(T)$.
    We then construct an online function sequence with $V_T = \O(1)$ and perform the same comparison on this sequence.
    Specifically, we choose the loss functions as $f_t(\x)= \abf_{i_t}^\top\x+b_{i_t}$, where $\mathbf{a}_0 = [0.2, 0.2]^\top$, $b_0=0$, and $i_t = \lfloor 10t /T\rfloor$. 
    The parameters evolve gradually as $\abf_i=\abf_{i-1}+0.1 \cdot \epsilon_i$ and $b_i=b_{i-1}+0.1 \cdot \xi_i$ for $i\in[10]$, with noise $\epsilon_i\sim \mathcal{N}(\mathbf{0},I_2)$ and $\xi_i\sim\mathcal{N}(0,1)$, such that the total gradient variation of the online functions sequence can be treated as a constant. 
    \item For efficiency evaluation, we compare the total running time of the one-gradient variants \textsf{UniGrad++}.(\textsf{Correct/Bregman})  with their vanilla counterparts \textsf{UniGrad}.(\textsf{Correct/Bregman}) across all three datasets (\texttt{ijcnn1}, \texttt{svmguide1}, and \texttt{skin}\_\texttt{nonskin}) in the exp-concave case. 
\end{itemize}
We report the average cumulative losses with standard deviations of 5 independent runs to obtain convincing results.
Only the randomness of the initialization is preserved.

\paragraph{Numeric Results.}
\pref{fig:universality} shows the universality comparison results. Our methods are compared with the optimal algorithm specifically designed for each problem instance, validating the universality of our methods, and show comparable performance across different problem classes and datasets.

\pref{fig:adaptivity} presents the adaptivity comparison results. Our methods outperform the USC algorithm~\citep{ICML'22:universal} when the gradient variation $V_T$ is small, e.g., $V_T = \O(1) $ in \pref{fig:adaptivity-vt-o1}, and show comparable performance when $V_T = \O(T)$ in \pref{fig:adaptivity-vt-ot}.

\pref{fig:time-comparison} shows the efficiency comparison results. Our one-gradient variants are more efficient than their vanilla multi-gradient versions in time complexity, while maintaining comparable performance, as shown in \pref{fig:universality} and \pref{fig:adaptivity}.

\begin{figure}[!t]
    \centering
    \subfigure[$V_T = \O(1)$]{
    \label{fig:adaptivity-vt-o1}
    \includegraphics[clip, trim=2mm 3mm 2mm 2mm, height=3.90cm]{./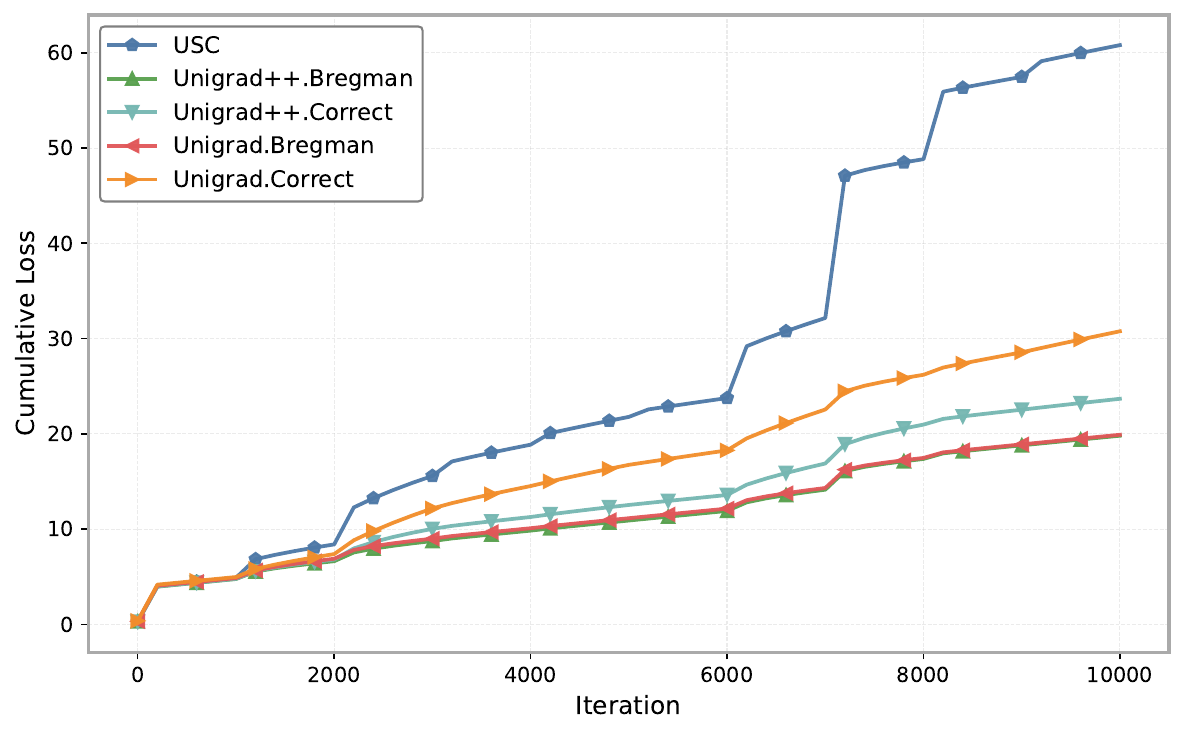}}
    \subfigure[$V_T = \O(T)$]{
    \label{fig:adaptivity-vt-ot}
    \includegraphics[clip, trim=2mm 3mm 2mm 2mm, height=3.90cm]{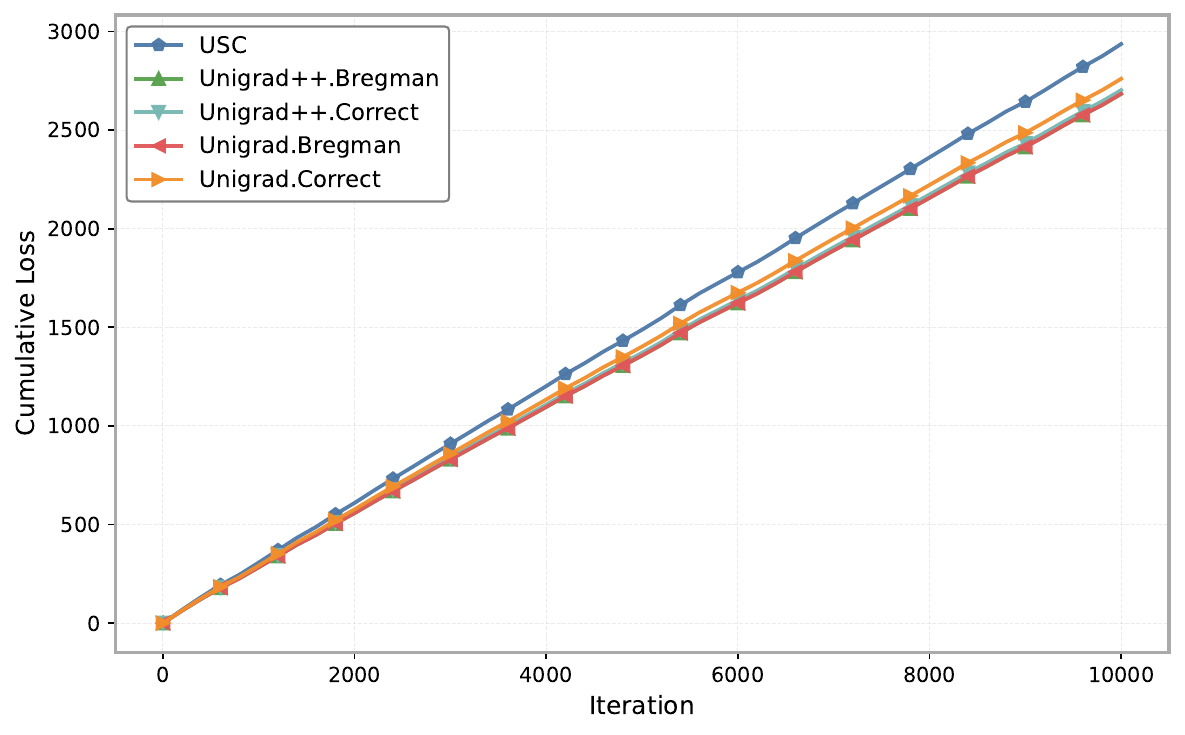}}
    \caption{\small{\textbf{Adaptivity}: comparisons on the adaptivity of our methods against \textsf{USC} of \citet{ICML'22:universal}. Our methods outperform \textsf{USC} when the gradient variation $V_T$ is small, e.g., $V_T = \O(1) $ in \pref{fig:adaptivity-vt-o1}, and show comparable performance when $V_T = \O(T)$ in \pref{fig:adaptivity-vt-ot}.}}
    \label{fig:adaptivity}
\end{figure}

\begin{figure}[!t]
    \centering
    \hspace*{15mm}\includegraphics[clip, trim=2mm 3mm 2mm 2mm, height=3.90cm]{./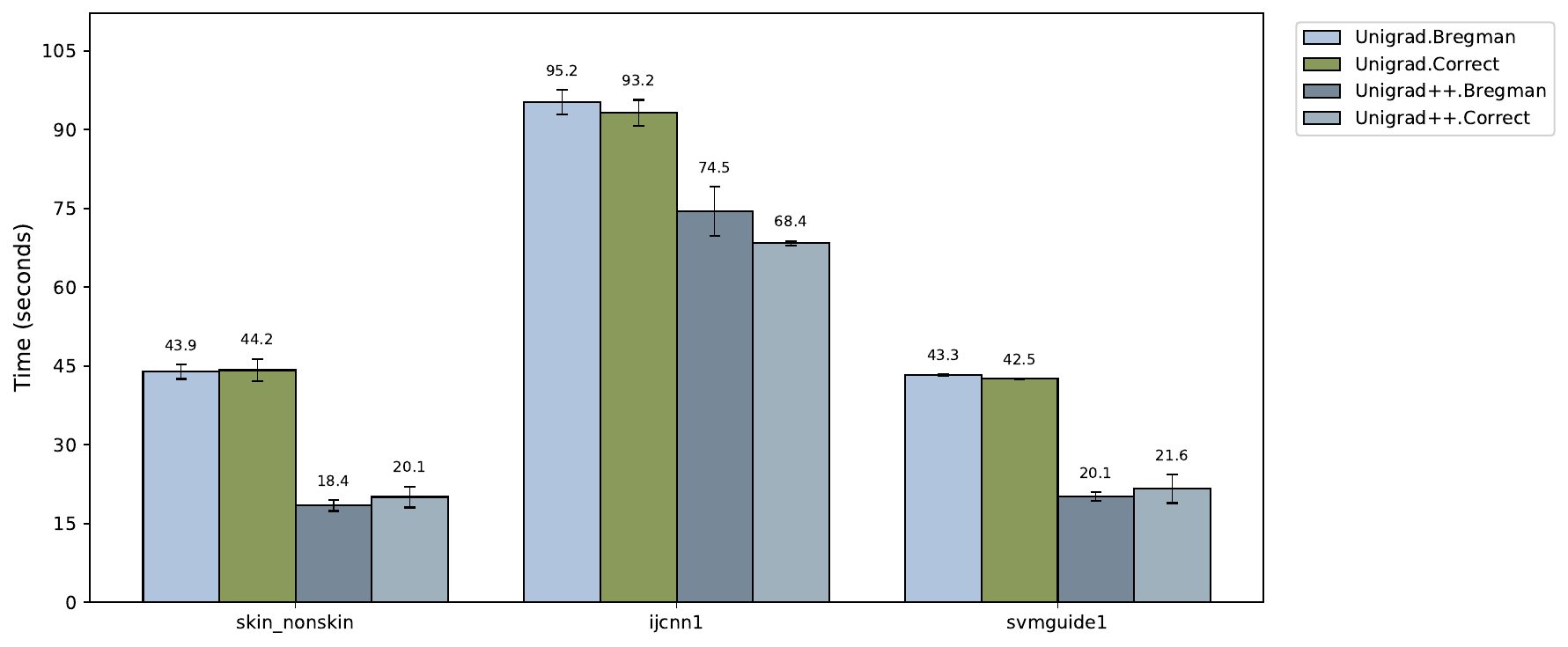}
    \caption{\small{\textbf{Efficiency}: comparisons on the efficiency of our one-gradient improvements \textsf{UniGrad++} (\textsf{Correct/Bregman}) against their vanilla versions \textsf{UniGrad} (\textsf{Correct/Bregman}). Our one-gradient improvements are more efficient than their vanilla versions in terms of the time complexity.}}
    \label{fig:time-comparison}
\end{figure}

\section{Conclusion}
\label{sec:conclusion}
In this paper, we addressed the fundamental challenge of achieving both universality and adaptivity in online learning by introducing \UniGrad, a new approach that obtains gradient-variation universal regret guarantees.
We proposed two distinct methods, \Correct and \Bregman, each with its own technical innovations.
UniGrad.Correct employs a three-layer online ensemble with cascaded correction terms, achieving regret bounds of $\O(\tfrac{1}{\lambda} \log V_T)$ for $\lambda$-strongly convex functions, $\O(\tfrac{d}{\alpha} \log V_T)$ for $\alpha$-exp-concave functions, and $\O(\sqrt{V_T \log V_T})$ for convex functions, while preserving the RVU property crucial for fast convergence in online games.
In contrast, UniGrad.Bregman leverages a novel Bregman divergence analysis to achieve the same bounds for strongly convex and exp-concave functions, but improves upon the convex case with the optimal $\O(\sqrt{V_T})$ regret.
Both methods maintain $\O(\log T)$ base learners and require $\O(\log T)$ gradient queries per round.

Building on these results, we further developed \UniGradpp, which preserves the same regret guarantees while reducing the gradient query cost to just one per round via a surrogate optimization technique.
We also extended our method to an anytime variant that removes the need to know the horizon $T$ in advance, using a dynamic online ensemble framework that adjusts the number of base learners based on monitoring metrics.
Importantly, our results lead to broader implications and applications, including optimal small-loss and gradient-variance bounds, novel guarantees for the stochastically extended adversarial model, and faster convergence in online games.

There are several interesting future directions worthy investigating.
The first is to explore whether the computational overhead can be further reduced by requiring only $1$ projection per round for gradient-variation universal regret~\citep{NeurIPS'22:efficient, NeurIPS'24:universalProjection}.
A second direction is to extend our results to unconstrained domains in order to achieve parameter-free online learning~\citep{COLT'18:black-box-reduction}, thereby broadening its applicability.
Finally, current universal online learning methods assume a homogeneous setting where all online functions share the same curvature class (convex, $\lambda$-strongly convex, or $\alpha$-exp-concave).
A more challenging and realistic goal is to handle heterogeneous environments where the curvature may vary over time. One possible starting point is the recently proposed contaminated OCO setting~\citep{TMLR'24:contaminated-OCO}, which assumes that the objective functions are mostly uniform but may be contaminated by a small fraction of rounds -- up to some unknown $k$ -- where the curvature class differs.

\newpage

\bibliography{online}

\newpage
\clearpage 
\setcounter{tocdepth}{2}
\tableofcontents 
\clearpage 

\appendix
\section{Omitted Proofs for Section~\ref{sec:method1-correct}}
\label{appendix:method1-proofs}
In this section, we provide the proofs for the results in Section~\ref{sec:method1-correct}, including \pref{lem:MsMwC-refine}, \pref{lem:universal-optimism} and \pref{lem:two-layer-MsMwC}.
For simplicity, we introduce the following notations denoting the stability of the final and intermediate decisions of the algorithm.
Specifically, for any $j \in [M], i \in [N]$, we define
\begin{equation}
  \label{eq:short}
\begin{gathered}
  S^\x_T \define \sumTT \|\x_t - \x_{t-1}\|^2,\ S^\x_{T,i} \define \sumTT \|\x_{t,i} - \x_{t-1,i}\|^2,\\
  S_T^\Top \define \sumTT \|\q_t^\Top - \q_{t-1}^\Top\|_1^2,\quad \text{and}\quad S_{T,j}^\Mid \define \sumTT \|\q_{t,j}^\Mid - \q_{t-1,j}^\Mid\|_1^2. 
\end{gathered}
\end{equation}

\subsection{Proof of Lemma~\ref{lem:MsMwC-refine}}
\label{app:msmwc}
In this part, we analyze the negative stability terms in the \msmwc algorithm~\citep{COLT'21:impossible-tuning}.
For self-containedness, we restate its update rule in the following general form:
\begin{equation*}
  \p_t = \argmin_{\p \in \Delta_d}\ \bbr{\inner{\m_t}{\p} + \D_{\psi_t}(\p, \pbh_t)},\quad \pbh_{t+1} = \argmin_{\p \in \Delta_d}\ \bbr{\inner{\ellb_t + \b_t}{\p} + \D_{\psi_t}(\p, \pbh_t)},
\end{equation*}
where $\Delta_d$ denotes a $d$-dimension simplex, $\psi_t(\p) = \sum_{i=1}^d \epsilon_{t,i}^{-1} p_i \log p_i$ is the weighted negative entropy regularizer with time-coordinate-varying learning rate $\epsilon_{t,i}$, and the bias term $a_{t,i} = 16 \epsilon_{t,i} (\ell_{t,i} - m_{t,i})^2$.
Below, we give a detailed proof of \pref{lem:MsMwC-refine}, following a similar logic flow as Lemma 1 of \citet{COLT'21:impossible-tuning}, while illustrating the negative stability terms.
Moreover, for generality, we investigate a more general setting of an arbitrary comparator $\uu \in \Delta_d$ and changing step sizes $\epsilon_{t,i}$.
This was done hoping that the negative stability term analysis would be comprehensive enough for readers interested solely in the MsMwC algorithm. 

\begin{proof}[of \pref{lem:MsMwC-refine}]
  To begin with, the regret with correction can be analyzed as follows:
  \begin{align*}
      & \sumT \inner{\ellb_t + \b_t}{\p_t - \uu} \le \sumT \sbr{\D_{\psi_t}(\uu,\pbh_t) - \D_{\psi_t}(\uu,\pbh_{t+1})} + \sumT \inner{\ellb_t + \b_t - \m_t}{\p_t - \pbh_{t+1}}\\
      & \makebox[3.5cm]{} - \sumT \sbr{\D_{\psi_t}(\pbh_{t+1}, \p_t) + \D_{\psi_t}(\p_t, \pbh_t)}\\
      & \le \underbrace{\sumT \sbr{\D_{\psi_t}(\uu,\pbh_t) - \D_{\psi_t}(\uu,\pbh_{t+1})}}_{\term{a}} + \underbrace{\sumT \sbr{\inner{\ellb_t + \b_t - \m_t}{\p_t - \pbh_{t+1}} - \frac{1}{2} \D_{\psi_t}(\pbh_{t+1}, \p_t)}}_{\term{b}}\\
      & \makebox[3.5cm]{} - \frac{1}{2} \underbrace{\sumT \sbr{\D_{\psi_t}(\pbh_{t+1}, \p_t) + \D_{\psi_t}(\p_t, \pbh_t)}}_{\term{c}},
  \end{align*}
  where the first step follows the standard analysis of \OOMD, e.g., Theorem 1 of~\citet{JMLR'24:Sword++}.
  One difference of our analysis from the previous one lies in the second step, where previous work dropped the $\D_{\psi_t}(\p_t, \pbh_t)$ term while we keep it for negative terms.

  To begin with, we require an upper bound of $\epsilon_{t,i} \le \frac{1}{32}$ for the step sizes.
  To give a lower bound for $\term{c}$, we notice that for any $\a, \b \in \Delta_d$,
  \begin{align}
      \D_{\psi_t}(\a, \b) = {} & \sum_{i=1}^d \frac{1}{\epsilon_{t,i}} \sbr{a_i \log \frac{a_i}{b_i} -a_i + b_i} = \sum_{i=1}^d \frac{b_i}{\epsilon_{t,i}} \sbr{\frac{a_i}{b_i} \log \frac{a_i}{b_i} - \frac{a_i}{b_i} + 1}\notag\\
      \ge {} & \min_{t, i} \frac{1}{\epsilon_{t,i}}  \sum_{i=1}^d \sbr{a_i \log \frac{a_i}{b_i} -a_i + b_i} \ge 32 \KL(\a, \b), \label{eq:lemma2 eq1}
  \end{align}
  where the first inequality is due to $x\log x - x + 1 \ge 0$ for all $x > 0$ and the last step is by $\epsilon_{t,i} \le \frac{1}{32}$.
  Thus, we have 
  \begin{align*}
      \term{c} \ge {} & 32 \sumT (\KL(\pbh_{t+1}, \p_t) + \KL(\p_t, \pbh_t)) \ge \frac{32}{2\log 2} \sumT \sbr{\|\pbh_{t+1} - \p_t\|_1^2 + \|\p_t - \pbh_t\|_1^2}\\
      \ge {} & 16 \sumTT \sbr{\|\pbh_t - \p_{t-1}\|_1^2 + \|\p_t - \pbh_t\|_1^2} \ge 8 \sumTT \|\p_t - \p_{t-1}\|_1^2,
  \end{align*}
  where the first step is from the above derivation, the second step is due to the Pinsker's inequality~\citep{Pinsker}: $\KL(\a, \b) \ge \frac{1}{2 \log 2} \|\a - \b\|_1^2$ for any $\a, \b \in \Delta_d$.

  For $\term{b}$, the proof is similar to the previous work, where only some constants are modified.
  For self-containedness, we give the analysis below.
  Treating $\pbh_{t+1}$ as a free variable and defining
  \begin{equation*}
    \p^\star \in \argmax_{\p} \inner{\ellb_t + \b_t - \m_t}{\p_t - \p} - \frac{1}{2} \D_{\psi_t}(\p, \p_t),
  \end{equation*}
  by the optimality of $\p^\star$, we have 
  \begin{equation*}
    \ellb_t + \b_t - \m_t = \frac{1}{2}(\nabla \psi_t(\p_t) - \nabla \psi_t(\p^\star)).
  \end{equation*}
  Since $[\nabla \psi_t(\p)]_i = \frac{1}{\epsilon_{t,i}} (\log p_i + 1)$, it holds that
  \begin{equation*}
    \ell_{t,i} - m_{t,i} + b_{t,i} = \frac{1}{2 \epsilon_{t,i}} \log \frac{p_{t,i}}{p^\star_i} \Leftrightarrow p_i^\star = p_{t,i} \exp(-2 \epsilon_{t,i} (\ell_{t,i} - m_{t,i} + b_{t,i})).
  \end{equation*}
  Therefore we have 
  \begin{align*}
    & \inner{\ellb_t + \b_t - \m_t}{\p_t - \pbh_{t+1}} - \frac{1}{2} \D_{\psi_t}(\pbh_{t+1}, \p_t) \le \inner{\ellb_t + \b_t - \m_t}{\p_t - \p^\star} - \frac{1}{2} \D_{\psi_t}(\p^\star, \p_t)\\
    = {} & \frac{1}{2} \inner{\nabla \psi_t(\p_t) - \nabla \psi_t(\p^\star)}{\p_t - \p^\star} - \frac{1}{2} \D_{\psi_t}(\p^\star, \p_t) = \frac{1}{2} \D_{\psi_t}(\p_t, \p^\star) \tag*{(by definition)}\\
    = {} & \frac{1}{2} \sum_{i=1}^d \frac{1}{\epsilon_{t,i}} \sbr{p_{t,i} \log \frac{p_{t,i}}{p^\star_i} - p_{t,i} + p^\star_i}\\
    = {} & \frac{1}{2} \sum_{i=1}^d \frac{p_{t,i}}{\epsilon_{t,i}} \big( 2 \epsilon_{t,i} (\ell_{t,i} - m_{t,i} + b_{t,i}) - 1 + \exp(-2 \epsilon_{t,i} (\ell_{t,i} - m_{t,i} + b_{t,i}))\big)\\
    \le {} & \frac{1}{2} \sum_{i=1}^d \frac{p_{t,i}}{\epsilon_{t,i}} 4 \epsilon_{t,i}^2 (\ell_{t,i} - m_{t,i} + b_{t,i})^2 = 2 \sum_{i=1}^d \epsilon_{t,i} p_{t,i} (\ell_{t,i} - m_{t,i} + b_{t,i})^2,
  \end{align*}
    where the first and second steps use the optimality of $\p^\star$, the last inequality uses $e^{-x} - 1 + x \le x^2$ for all $x \ge -1$, requiring $|2 \epsilon_{t,i} (\ell_{t,i} - m_{t,i} + b_{t,i})| \le 1$.
    It can be satisfied by $\epsilon_{t,i} \le 1/32$ and $|\ell_{t,i} - m_{t,i} + b_{t,i}| \le 16$, where the latter requirement can be satisfied by setting $b_{t,i} = 16 \epsilon_{t,i} (\ell_{t,i} - m_{t,i})^2$:
    \begin{equation*}
      |\ell_{t,i} - m_{t,i} + b_{t,i}| \le 2 + 16 \cdot \frac{1}{32} (2 \ell_{t,i}^2 + 2 m_{t,i}^2) \le 4 \le 16.
    \end{equation*}
    As a result, we have 
    \begin{equation*}
      (\ell_{t,i} - m_{t,i} + b_{t,i})^2 = \sbr{\ell_{t,i} - m_{t,i} + 16 \epsilon_{t,i} (\ell_{t,i} - m_{t,i})^2}^2 \le 4 (\ell_{t,i} - m_{t,i})^2,
    \end{equation*}
    where the last step holds because $|\ell_{t,i}|, |m_{t,i}| \le 1$ and $\epsilon_{t,i} \le 1/32$.
    Finally, it holds that 
    \begin{equation*}
        \term{b} \le 2 \sumT \sum_{i=1}^d \epsilon_{t,i} p_{t,i} (\ell_{t,i} - m_{t,i} + b_{t,i})^2 \le 8 \sumT \sum_{i=1}^d \epsilon_{t,i} p_{t,i} (\ell_{t,i} - m_{t,i})^2.
    \end{equation*}
    As for $\term{a}$, following the same argument as Lemma 1 of \citet{COLT'21:impossible-tuning}, we have 
    \begin{equation*}
        \term{a} \le \sum_{i=1}^d \frac{1}{\epsilon_{1,i}} f_\KL(u_i, \ph_{1,i}) + \sumTT \sum_{i=1}^d \sbr{\frac{1}{\epsilon_{t,i}} - \frac{1}{\epsilon_{t-1,i}}} f_\KL(u_i, \ph_{t,i}),
    \end{equation*}
    where $f_\KL(a,b) \define a \log (a/b) - a + b$.
    Combining all three terms, we have 
    \begin{align*}
        \sumT \inner{\ellb_t + \b_t}{\p_t - \uu} \le {} & \sum_{i=1}^d \frac{1}{\epsilon_{1,i}} f_\KL(u_i, \ph_{1,i}) + \sumTT \sum_{i=1}^d \sbr{\frac{1}{\epsilon_{t,i}} - \frac{1}{\epsilon_{t-1,i}}} f_\KL(u_i, \ph_{t,i}) \\
        & + 8 \sumT \sum_{i=1}^d \epsilon_{t,i} p_{t,i} (\ell_{t,i} - m_{t,i})^2 - 4 \sumTT \|\p_t - \p_{t-1}\|_1^2.
    \end{align*}
    Moving the correction term $\sumT \inner{\b_t}{\p_t - \uu}$ to the right-hand side gives: 
    \begin{align*}
        & \sumT \inner{\ellb_t}{\p_t - \uu} \le \sum_{i=1}^d \frac{1}{\epsilon_{1,i}} f_\KL(u_i, \ph_{1,i}) + \sumTT \sum_{i=1}^d \sbr{\frac{1}{\epsilon_{t,i}} - \frac{1}{\epsilon_{t-1,i}}} f_\KL(u_i, \ph_{t,i}) \\
        & - 8 \sumT \sum_{i=1}^d \epsilon_{t,i} p_{t,i} (\ell_{t,i} - m_{t,i})^2 + 16 \sumT \sum_{i=1}^d \epsilon_{t,i} u_i (\ell_{t,i} - m_{t,i})^2 - 4 \sumTT \|\p_t - \p_{t-1}\|_1^2.
    \end{align*}
    Finally, choosing $\uu = \e_\is$ and $\epsilon_{t,i} = \varepsilon_i$ for all $t \in [T]$ finishes the proof.
\end{proof}

\subsection{Proof of Lemma~\ref{lem:universal-optimism}}
\label{app:universal-optimism}
\begin{proof}
  For simplicity, we introduce the notation $\g_t = \nabla f_t(\x_t)$ to denote the gradient at each round.
  Below we consider the three function families respectively.

  For \emph{exp-concave} and \emph{strongly convex} functions, we have
  \begin{align*}
    \sumT (r_{t, \is} - m_{t, \is})^2 = {} & \sumT (\inner{\g_t}{\x_t - \x_{t,\is}} - \inner{\g_{t-1}}{\x_{t-1} - \x_{t-1,\is}})^2\\
    \le {} & 2 \sumT \inner{\g_t}{\x_t - \x_{t,\is}}^2 + 2 \sumT \inner{\g_{t-1}}{\x_{t-1} - \x_{t-1,\is}}^2\\
    \le {} & 4 \sumT \inner{\g_t}{\x_t - \x_{t,\is}}^2 + \O(1).
  \end{align*}
  For \emph{strongly convex} functions, using the boundedness of gradients, we further obtain
  \begin{equation*}
    \sumT (r_{t, \is} - m_{t, \is})^2 = 4 \sumT \inner{\g_t}{\x_t - \x_{t,\is}}^2 + \O(1) \le 4 G^2 \sumT \|\x_t - \x_{t,\is}\|^2 + \O(1).
  \end{equation*}
  For \emph{convex} function, it holds that
  \begin{align}
    & \sumT (r_{t, \is} - m_{t, \is})^2 = \sumT \Big(\inner{\g_t}{\x_t - \x_{t,\is}} - \inner{\g_{t-1}}{\x_{t-1} - \x_{t-1,\is}}\Big)^2\notag\\
    \le {} & 2 \sumT \inner{\g_t - \g_{t-1}}{\x_t - \x_{t,\is}}^2 + 2 \sumT \inner{\g_{t-1}}{\x_t - \x_{t-1} + \x_{t,\is} - \x_{t-1,\is}}^2\notag\\
    \le {} & 2 D^2 \sumT \|\g_t - \g_{t-1}\|^2 + 2 G^2 \sumT \|\x_t - \x_{t-1} + \x_{t,\is} - \x_{t-1,\is}\|^2\label{eq:lem1 eq1}\\
    \le {} & 4D^2 V_T + 4 (D^2 L^2 + G^2) S_T^{\x} + 4 G^2 S_{T,\is}^{\x}\notag,
  \end{align}
  where the fourth step is by Assumption~\ref{assum:domain-boundedness} and Assumption~\ref{assum:gradient-boundedness} and the last step is due to the definition of the gradient variation, finishing the proof.
\end{proof}

\subsection{Proof of Lemma~\ref{lem:two-layer-MsMwC}}
\label{app:two-layer-MsMwC}
\begin{proof}
  By \eqref{eq:lemma2 eq1}, the regret of \MSMWCtop can be bounded as
  \begin{align*}
    \sumT \inner{\ellb_t^\Top}{\q_t^\Top - \e_\js} \le {} & \sbr{\frac{1}{\varepsilon_{\js}^\Top} \log \frac{1}{\qh_{1, \js}^\Top} + \sumM \frac{\qh_{1,j}^\Top}{\varepsilon_j^\Top}} + 16 \varepsilon_\js^\Top \sumT (\ell^\Top_{t,\js} - m^\Top_{t,\js})^2 - \min_{j \in [M]} \frac{1}{4\varepsilon_j^\Top} S_T^\Top,
  \end{align*}
  where the first step comes from $f_\KL(a,b) = a \log (a/b) - a + b \le a \log (a/b) + b$ for $a,b > 0$.
  The first term above can be further bounded as
  \begin{equation*}
    \frac{1}{\varepsilon_{\js}^\Top} \log \frac{1}{\qh_{1, \js}^\Top} + \sumM \frac{\qh_{1,j}^\Top}{\varepsilon_j^\Top} = \frac{1}{\varepsilon_{\js}^\Top} \log \frac{\sumM (\varepsilon_j^\Top)^2}{(\varepsilon_{\js}^\Top)^2} + \frac{\sumM \varepsilon_j^\Top}{\sumM (\varepsilon_j^\Top)^2} \le \frac{1}{\varepsilon_{\js}^\Top} \log \frac{1}{3C_0^2 (\varepsilon_{\js}^\Top)^2} + 4C_0,
  \end{equation*}
  where the first step is due to the initialization of $\qh_{1,j}^\Top = (\varepsilon_j^\Top)^2/\sumM (\varepsilon_j^\Top)^2$.
  Plugging in the setting of $\varepsilon_j^\Top = 1/(C_0 \cdot 2^j)$, the second step holds since
  \begin{equation*}
      \frac{1}{4C_0^2}=\sumM (\varepsilon_j^\Top)^2 \le \sumM \frac{1}{C_0^2 \cdot 4^j} \le \frac{1}{3C_0^2}, \quad \sumM \varepsilon_j^\Top =\sumM\frac{1}{C_0 \cdot 2^j} \le \frac{1}{C_0}.
  \end{equation*}
  Since $1/\varepsilon_j^\Top = C_0 \cdot 2^j \ge 2 C_0$, the regret of \MSMWCtop can be bounded by 
  \begin{equation*}
    \sumT \inner{\ellb_t^\Top}{\q_t^\Top - \e_\js} \le \frac{1}{\varepsilon_{\js}^\Top} \log \frac{1}{3 C_0^2 (\varepsilon_{\js}^\Top)^2} + 16 \varepsilon_\js^\Top \sumT (\ell^\Top_{t,\js} - m^\Top_{t,\js})^2 - \frac{C_0}{2} S_T^\Top + \O(1).
  \end{equation*}
  Next, using \pref{lem:MsMwC-refine} again, the regret of the $\js$-th \MSMWCmid, whose step size is $\varepsilon^\Mid_{t,j,i} = 2 \varepsilon_j^\Top$ for all $t \in [T]$ and $i \in [N]$, can be bounded as
  \begin{align*}
    \sumT \inner{\ellb_{t,\js}^\Mid}{\q_{t,\js}^\Mid - \e_\is} \le {} & \frac{\log N}{2 \varepsilon_{\js}^\Top} + 32 \varepsilon_{\js}^\Top \sumT (\ell_{t,\js,\is}^\Mid - m_{t,\js,\is}^\Mid)^2 - \frac{C_0}{4} S_{T,\js}^\Mid\\
    & - 16 \varepsilon_{\js}^\Top \sumT \sumN q^\Mid_{t,\js,i} (\ell_{t,j,i}^\Mid - m_{t,j,i}^\Mid)^2,
  \end{align*}
  where the first step is due to the initialization of $\ph_{1,j,i}^\Mid = 1/N$.
  Based on the observation of
  \begin{equation*}
    (\ell^\Top_{t,j} - m^\Top_{t,j})^2 = \inner{\ellb^\Mid_{t,j} - \m^\Mid_{t,j}}{\q^\Mid_{t,j}}^2 \le \sumN q^\Mid_{t,\js,i} (\ell_{t,j,i}^\Mid - m_{t,j,i}^\Mid)^2,
  \end{equation*}
  where the last step uses the Cauchy-Schwarz inequality, combining the regret of \MSMWCtop and the $\js$-th \MSMWCmid finishes the proof.
\end{proof}

\subsection{Proof of Theorem~\ref{thm:unigrad-correct}}
\label{app:unigrad-correct}
\begin{proof}
  The proof proceeds in three steps: we first decompose the total regret into meta and base regret, then analyze the meta regret and base regret separately, and finally combines them to achieve the final regret guarantees.

  \paragraph{Regret Decomposition.}
  For simplicity, we let $\xs=\argmin_\X f_t(\x)$ and $\g_t = \nabla f_t(\x_t)$.
  For \emph{$\lambda$-strongly convex} functions, we decompose the regret as
    \begin{equation*}
        \Reg_T \le \underbrace{\sumT \inner{\g_t}{\x_t - \x_{t,\is}} - \frac{\lambda}{2} \sumT \|\x_t - \x_{t,\is}\|^2}_{\meta} + \underbrace{\sumT f_t(\x_{t,\is})-\sumT f_t(\xs)}_{\base},
    \end{equation*}
    where $\lambda_{\is}\le  \lambda\le  2\lambda_{\is}$.
    For \emph{$\alpha$-exp-concave} functions, we decompose the regret as
    \begin{align*}
      \Reg_T
          \le {} \underbrace{\sumT \inner{\g_t}{\x_t - \x_{t,i}}-\frac{\alpha}{2}\sumT \inner{\g_t}{\x_t - \x_{t,\is}}^2}_{\meta} +\underbrace{\sumT f_t(\x_{t,\is})-\sumT f_t(\xs)}_{\base},
      \end{align*}
      where $\alpha_{\is}\le  \alpha\le  2\alpha_{\is}$.
    For \emph{convex} functions, we decompose the regret as
    \begin{align*}
      \Reg_T=\underbrace{\sumT f_t(\x_{t})-\sumT f_t(\x_{t,\is})}_{\meta}+\underbrace{\sumT f_t(\x_{t,\is})-\sumT f_t(\xs)}_{\base}.
    \end{align*}

    \paragraph{Meta Regret Analysis.}
    
    Recall that the normalization factor $Z=\max\{GD+\gamma^\Mid D^2, 1+\gamma^\Mid D^2+2\gamma^\Top\}$. We focus on the linearized term $\sumT \inner{\g_t}{\x_t - \x_{t,\is}}$ and let $\Vs=\sumTT(\ell^\Mid_{t,\js,\is} - m^\Mid_{t,\js,\is})^2$.  Specifically, 
    \begin{align}
        & \sumT \inner{\g_t}{\x_t - \x_{t,\is}} = Z\sumT \inner{\ellb_t}{\p_t - \e_\is} = Z\sumT\inner{\ellb_t}{\p_t - \q^\Mid_{t,\js}} + Z\sumT \inner{\ellb_t}{\q^\Mid_{t,\js} - \e_\is}\notag\\
        = {} & Z\sumT \inner{\ellb^\Top_t}{\q^\Top_t - \e_{\js}} + Z\sumT \inner{\ellb^\Mid_t}{\q^\Mid_{t,\js} - \e_\is} - \gamma^\Top \sumTT \sumM q_{t,j}^\Top \|\q_{t,j}^\Mid - \q_{t-1,j}^\Mid\|_1^2\notag\\
        & \qquad + \gamma^\Top S_{T,\js}^\Mid - \gamma^\Mid \sumTT \sumM q_{t,j}^\Top \sumN q_{t,j,i}^\Mid\norm{\x_{t,i} - \x_{t-1,i}}^2 + \gamma^\Mid S_{T,\is}^{\x}\notag\\
        & \qquad + \gamma^\Mid \sumTT \sumN q_{t,\js,i}^\Mid \norm{\x_{t,i} - \x_{t-1,i}}^2 - \gamma^\Mid \sumTT \sumN q_{t,\js,i}^\Mid \|\x_{t,i} - \x_{t-1,i}\|^2 \notag\\
        \le {} & \frac{Z}{\varepsilon^\Top_{\js}} \log \frac{N}{3 C_0^2 (\varepsilon^\Top_{\js})^2} + 32Z \varepsilon^\Top_{\js} \Vs - \frac{C_0}{2} S_T^\Top + \sbr{\gamma^\Top - \frac{C_0}{4}} S_{T,\js}^\Mid + \gamma^\Mid S_{T,\is}^{\x}\notag\\
        & \qquad - \gamma^\Mid \sumTT \sumM q_{t,j}^\Top \sumN q_{t,j,i}^\Mid\norm{\x_{t,i} - \x_{t-1,i}}^2  - \gamma^\Top \sumTT \sumM q_{t,j}^\Top \|\q_{t,j}^\Mid - \q_{t-1,j}^\Mid\|_1^2\notag\\
        \le {} & \frac{Z}{\varepsilon^\Top_{\js}} \log \frac{N}{3 C_0^2 (\varepsilon^\Top_{\js})^2} + 32Z \varepsilon^\Top_{\js} \Vs - \frac{C_0}{2} S_T^\Top + \gamma^\Mid S_{T,\is}^{\x} \tag*{(requiring $C_0 \ge 4 \gamma^\Top$)}\\
        & \quad - \gamma^\Mid \sumTT \sumM q_{t,j}^\Top \sumN q_{t,j,i}^\Mid\norm{\x_{t,i} - \x_{t-1,i}}^2  - \gamma^\Top \sumTT \sumM q_{t,j}^\Top \|\q_{t,j}^\Mid - \q_{t-1,j}^\Mid\|_1^2,\label{eq:thm1 eq}
    \end{align}
    where the first step is due to $\x_t = \sumN p_{t,i} \x_{t,i}$ and defines $\ell_{t,i} \define \frac{1}{Z}\inner{\nabla f_t(\x_t)}{\x_{t,i}}$.
    The third step is due to the definition of $\ellb^\Top_t$ and $\ellb^\Mid_t$ as defined in \pref{eq:top-loss-2} and \pref{eq:mid-loss}.
    The fourth step uses the analysis of \msoms as show in in \pref{lem:two-layer-MsMwC}. 

For \emph{$\lambda$-strongly convex} functions, applying \pref{eq:thm1 eq} and omitting the stability and curvature-induced negative terms, we bound the meta regret by
\begin{align}
    & \meta \le \sumT \inner{\g_t}{\x_t - \x_{t,\is}} - \frac{\lambda}{2} \sumT \|\x_t - \x_{t,\is}\|^2 \notag\\
    \le {} & \frac{Z}{\varepsilon^\Top_{\js}} \log \frac{N}{3 C_0^2 (\varepsilon^\Top_{\js})^2} + 32Z\varepsilon^\Top_{\js} \Vs+ \gamma^\Mid S_{T,\is}^{\x} - \frac{\lambda}{2} \sumT \|\x_t - \x_{t,\is}\|^2 \notag\\
    \le {} & \frac{Z}{\varepsilon^\Top_{\js}} \log \frac{N}{3 C_0^2 (\varepsilon^\Top_{\js})^2} + \sbr{\frac{128 G^2}{Z} \varepsilon^\Top_{\js} - \frac{\lambda}{2}} \sumT \|\x_t - \x_{t,\is}\|^2 + \gamma^\Mid S_{T,\is}^{\x} \notag\\
    \le {} & 2ZC_0\log\frac{4N}{3}+\frac{512ZG^2}{\lambda}\log \frac{2^{20}G^2N}{3C_0^2\lambda^2}+ \gamma^\Mid S_{T,\is}^{\x} \label{eq:meta-scvx}, 
\end{align}
    where the third step leverages the property of our universal optimism design as given in \pref{lem:universal-optimism} and the last step again follows from \pref{lem:tune-eta-2} and requires $\epsilon^\Top_\js\le\epsilon^\Top_{\star}\define \frac{\lambda Z}{256G^2}$.

    For \emph{$\alpha$-exp-concave} functions, applying \pref{eq:thm1 eq} and omitting the stability and curvature-induced negative terms, we bound the meta regret by
    \begin{align}
      & \meta \le \sumT \inner{\g_t}{\x_t - \x_{t,\is}} - \frac{\alpha}{2} \sumT \inner{\g_t}{\x_t - \x_{t,\is}}^2 \notag\\
      \le {} & \frac{Z}{\varepsilon^\Top_{\js}} \log \frac{N}{3 C_0^2 (\varepsilon^\Top_{\js})^2} + 32Z\varepsilon^\Top_{\js} \Vs + \gamma^\Mid S_{T,\is}^{\x} - \frac{\alpha}{2} \sumT \inner{\g_t}{\x_t - \x_{t,\is}}^2 \notag\\
      \le {} & \frac{Z}{\varepsilon^\Top_{\js}} \log \frac{N}{3 C_0^2 (\varepsilon^\Top_{\js})^2} + \sbr{\frac{128\varepsilon^\Top_{\js}}{Z} - \frac{\alpha}{2}} \sumT \inner{\g_t}{\x_t - \x_{t,\is}}^2 + \gamma^\Mid S_{T,\is}^{\x} \notag\\
      \le {} & 2ZC_0\log\frac{4N}{3}+\frac{512Z}{\alpha}\log \frac{2^{20}N}{3C_0^2\alpha^2}+ \gamma^\Mid S_{T,\is}^{\x}, \label{eq:meta-exp}
  \end{align}
  where the third step leverages the property of our universal optimism design as given in \pref{lem:universal-optimism} and the last step follows from \pref{lem:tune-eta-2} and requires $\epsilon^\Top_\js\le\epsilon^\Top_{\star}\define \frac{\alpha Z}{256}$. 

For \emph{convex} functions, applying \pref{eq:thm1 eq} while \emph{retaining} the crucial stability and curvature-induced negative terms, the meta regret can be bounded by
  \begin{align}
    & \meta \le \sumT \inner{\g_t}{\x_t - \x_{t,\is}}\notag \\
    \overset{\eqref{eq:thm1 eq}}{\le} {} & \frac{Z}{\varepsilon^\Top_{\js}} \log \frac{N}{3 C_0^2 (\varepsilon^\Top_{\js})^2} + 32Z\varepsilon^\Top_{\js} \Vs+ \gamma^\Mid S_{T,\is}^{\x}-\frac{C_0}{2}S_T^{\Top}\notag\\
    {}&-\gamma^\Mid \sumTT \sumM q_{t,j}^\Top \sumN q_{t,j,i}^\Mid\norm{\x_{t,i} - \x_{t-1,i}}^2  - \gamma^\Top \sumTT \sumM q_{t,j}^\Top \|\q_{t,j}^\Mid - \q_{t-1,j}^\Mid\|_1^2\label{eq:thm1 eq0}\\
    \le {} & \frac{Z}{\varepsilon^\Top_{\js}} \log \frac{N}{3 C_0^2 (\varepsilon^\Top_{\js})^2} +\frac{128D^2 \varepsilon^\Top_{\js}}{Z} V_T+\sbr{\frac{64G^2}{Z}+\gamma^\Mid} S_{T,\is}^{\x}+\frac{64(D^2L^2+G^2)}{Z}S_T^{\x}\notag\\
    {}&-\frac{C_0}{2}S_T^{\Top}- \gamma^\Mid \sumTT \sumM q_{t,j}^\Top \sumN q_{t,j,i}^\Mid\norm{\x_{t,i} - \x_{t-1,i}}^2  - \gamma^\Top \sumTT \sumM q_{t,j}^\Top \|\q_{t,j}^\Mid - \q_{t-1,j}^\Mid\|_1^2\notag \\
    \le {} & 2ZC_0\log \frac{4N}{3 }+32D\sqrt{2V_T\log\sbr{512ND^2V_T/Z^2}}+\sbr{\frac{64G^2}{Z}+\gamma^\Mid} S_{T,\is}^{\x}+\frac{64(D^2L^2+G^2)}{Z}S_T^{\x}\notag\\
    {}&-\frac{C_0}{2}S_T^{\Top}- \gamma^\Mid \sumTT \sumM q_{t,j}^\Top \sumN q_{t,j,i}^\Mid\norm{\x_{t,i} - \x_{t-1,i}}^2  - \gamma^\Top \sumTT \sumM q_{t,j}^\Top \|\q_{t,j}^\Mid - \q_{t-1,j}^\Mid\|_1^2\label{eq:thm1 eq2} \\
    \le{}&2ZC_0\log \frac{4N}{3 }+32D\sqrt{2V_T\log\sbr{512ND^2V_T/Z^2}}+\sbr{\frac{64G^2}{Z}+\gamma^\Mid}S_{T,\is}^{\x}\notag\\
    {}&+\sbr{\frac{C_1}{Z}-\gamma^{\Mid}}\sumTT \sumM q_{t,j}^\Top \sumN q_{t,j,i}^\Mid\norm{\x_{t,i} - \x_{t-1,i}}^2\notag\\
    {}&+\sbr{\frac{2D^2C_1}{Z}-\frac{C_0}{2}}S_T^{\Top}+\sbr{\frac{2D^2C_1}{Z}-\gamma^{\Top}}\sumTT \sumM q_{t,j}^\Top \|\q_{t,j}^\Mid - \q_{t-1,j}^\Mid\|_1^2\tag*{(denoting $C_1=128(D^2L^2+G^2)$)}\\
    \le{}& 2ZC_0\log \frac{4N}{3 }+32D\sqrt{2V_T\log\sbr{512ND^2V_T/Z^2}}+\sbr{\frac{64G^2}{Z}+\gamma^\Mid}S_{T,\is}^{\x}, \tag*{$\sbr{\text{requiring }\gamma^{\Top} \ge  2D^2C_1/Z,  \gamma^{\Mid} \ge  C_1/Z\text{, and }C_0 \ge  4D^2C_1
    /Z}$}
\end{align}
where the third step is due to \pref{lem:universal-optimism}, and $\epsilon_{\js}^{\Top}\le  1/2$ under the requirement $C_0 \ge 1$. The fourth step is by \pref{lem:tune-eta-1} and requiring $C_0 \ge  8D$ and the fifth step is by \pref{lem:decompose-three-layer}.

    \paragraph{Base Regret Analysis.}
    For \emph{$\lambda$-strongly convex} functions,  the $\is$-th base learner guarantees the following:
    \begin{align*}
      &\base
    \le \frac{16G^2}{\lambda_{\is}} \log \sbr{1 + 2\lambda_{\is} V_T+2\lambda_{\is}L^2 S_{T,\is}^{\x} } + \frac{1}{4}\Bottomcoef D^2  - \frac{1}{8} \Bottomcoef S_{T,\is}^{\x} + \O(1)\\
    \le {}& \frac{32G^2}{\lambda} \log \sbr{1 +  2\lambda V_T}+\sbr{32L^2G^2-\frac{1}{8}\Bottomcoef}S_{T,\is}^{\x}+\frac{1}{4}\Bottomcoef D^2+\O(1),
    \end{align*}
    where the first step is due to \pref{lem:str-convex-base} and  $\sumTT\norm{\nabla f_t(\x_{t,\is})-\nabla f_{t-1}(\x_{t-1,\is})}^2\le  2V_T+2L^2 S_{T,\is}^{\x}$ and the last step follows from $\log (1+x)\le  x$ for $x \ge  0$ and $\lambda_{\is}\le  \lambda \le  2\lambda_{\is}$.

  For \emph{$\alpha$-exp-concave} functions, the $\is$-th base learner guarantees the following:
  \begin{align*}
    &\base
  \le \frac{16d}{\alpha_{\is}} \log \sbr{1 + \frac{\alpha_{\is}}{4\Bottomcoef d}V_T+\frac{\alpha_{\is}L^2}{4\Bottomcoef d} S_{T,\is}^{\x} } + \frac{1}{2}\Bottomcoef D^2  - \frac{1}{4} \Bottomcoef S_{T,\is}^{\x} + \O(1)\\
  \le {}& \frac{32d}{\alpha} \log \sbr{1 + \frac{\alpha}{4\Bottomcoef d}V_T}+\sbr{\frac{4L^2}{\Bottomcoef}-\frac{1}{4}\Bottomcoef}S_{T,\is}^{\x}+\frac{1}{2}\Bottomcoef D^2+\O(1),
  \end{align*}
  where the first step is due to \pref{lem:exp-concave-base} and $\sumTT\norm{\nabla f_t(\x_{t,\is})-\nabla f_{t-1}(\x_{t-1,\is})}^2\le  2V_T+2L^2S_{T,\is}^{\x}$ and the last step follows from $\log (1+x)\le  x$ for $x \ge  0$ and $\alpha_{\is}\le  \alpha \le  2\alpha_{\is}$.

For \emph{convex} functions, by \pref{lem:convex-base}, the $\is$-th base learner guarantees the following:
\begin{align*}
  \base\le& 5D\sqrt{1+2 V_T+2L^2 S^{\x}_{T,\is}} + \Bottomcoef D^2  - \frac{1}{4} \Bottomcoef S_{T,\is}^{\x} + \O(1)\\
  \le {}&5D\sqrt{1+2 V_T}+\Bottomcoef D^2+\sbr{ 10DL^2-\frac{1}{4} \Bottomcoef} S_{T,\is}^{\x} + \O(1),
\end{align*}
where the first step is due to $\sumTT\norm{\nabla f_t(\x_{t,\is})-\nabla f_{t-1}(\x_{t-1,\is})}^2\le  2V_T+2L^2 S_{T,\is}^{\x}$.

\paragraph{Overall Regret Analysis.}
  For \emph{$\lambda$-strongly convex} functions, the overall regret can be bounded by
  \begin{align*}
    \Reg_T \le {} & 2ZC_0\log\frac{4N}{3}+\frac{512ZG^2}{\lambda}\log \frac{2^{20}G^2N}{3C_0^2\lambda^2}+\frac{32G^2}{\lambda} \log \sbr{1 +  2\lambda V_T}\\
    & \qquad +\sbr{\gamma^\Mid+32L^2G^2-\frac{\Bottomcoef}{4}}S_{T,\is}^{\x}+\frac{\Bottomcoef D^2}{4}+\O(1) \le \O\sbr{\frac{1}{\lambda}\log V_T},
  \end{align*}
  where the last step requires $\Bottomcoef \ge  4\gamma^{\Mid}+128L^2G^2 $.

  For \emph{$\alpha$-exp-concave} functions, the overall regret can be bounded by
  \begin{align*}
    \Reg_T \le {} & 2ZC_0\log\frac{4N}{3}+\frac{512Z}{\alpha}\log \frac{2^{20}N}{3C_0^2\alpha^2}+\frac{32d}{\alpha} \log \sbr{1 + \frac{\alpha}{4\Bottomcoef d}V_T}\\
    & \qquad +\sbr{\gamma^\Mid+\frac{4L^2}{\Bottomcoef}-\frac{1}{4}\Bottomcoef}S_{T,\is}^{\x}+\frac{1}{2}\Bottomcoef D^2+\O(1) \le \O\sbr{\frac{d}{\alpha}\log V_T},
  \end{align*}
  where the last step requires $\Bottomcoef \ge  4\gamma^{\Mid}+8L^2$.

  For \emph{convex} functions, the overall regret can be bounded by
  \begin{align*}
    \Reg_T \le {} & 2ZC_0\log \frac{4N}{3 }+32D\sqrt{2V_T\log\sbr{512ND^2V_T/Z^2}}+5D\sqrt{1+2 V_T}\\
    & \qquad +\sbr{\frac{64G^2}{Z}+\gamma^\Mid+10DL^2-\frac{1}{4}\Bottomcoef}S_{T,\is}^{\x}+\Bottomcoef D^2+\O(1) \le \O\sbr{\sqrt{V_T\log V_T}},
  \end{align*}
  where the last step requires $\Bottomcoef \ge  4\gamma^{\Mid}+40DL^2+256G^2/Z$.

At last, we determine the specific values of $C_0$, $\gamma^{\Top}$, and $\gamma^{\Mid}$. These parameters need to satisfy the following requirements:
\begin{gather*}
  C_0 \ge  1,\ C_0 \ge  8D,\ C_0 \ge  4\gamma^{\Top},\ C_0 \ge  4D^2C_1/Z,\ \gamma^{\Top} \ge  C_1/Z,\ \text{and}\ \gamma^{\Mid} \ge  2D^2C_1/Z.
\end{gather*}
As a result, we set 
\begin{equation}
    \label{eq:Thm1-params}
    C_0=\max\bbr{1,8D,4\gamma^{\Top},4D^2C_1},\ 
    \gamma^\Top=C_1,\ 
    \gamma^{\Mid}=2D^2C_1,
\end{equation}
where $Z=\max\{GD+\gamma^\Mid D^2, 1+\gamma^\Mid D^2+2\gamma^\Top\}$ and $C_1=128(D^2L^2+G^2)$.
\end{proof}

\section{Omitted Proofs for Section~\ref{sec:method2-Bregman}}
\label{app:method2-proofs}
In this section, we provide the omitted details for \pref{sec:method2-Bregman}, including the proofs of \pref{lem:emp-VT-de-breg}, the correctness of \pref{lem:emp-VT-de-breg} under \pref{assum:smoothness-X+}, and \pref{thm:unigrad-bregman}.

\subsection{Proof of Lemma~\ref{lem:emp-VT-de-breg}}
\label{app:emp-VT-de-breg}
\begin{proof}
    By inserting intermediate terms, we have
    \begin{align}
        \Vb_T \le {} & 4 \sumTT \|\nabla f_t(\x_t) - \nabla f_t(\x_{t,\is})\|^2 + 4 \sumTT \|\nabla f_t(\x_{t,\is}) - \nabla f_{t-1}(\x_{t,\is})\|^2 \notag\\
        & \quad + 4 \sumTT \|\nabla f_{t-1}(\x_{t,\is}) - \nabla f_{t-1}(\x_{t-1,\is})\|^2 + 4 \sumTT \|\nabla f_{t-1}(\x_{t-1,\is}) - \nabla f_{t-1}(\x_{t-1})\|^2 \notag\\
        \overset{\eqref{eq:smoothness-property}}{\le} {} & 8L \sumTT \D_{f_t}(\x_{t,\is}, \x_t) + 4 V_T + 4 L^2 \sumTT \|\x_{t,\is} - \x_{t-1,\is}\|^2 + 8 L \sumTT \D_{f_{t-1}}(\x_{t-1,\is}, \x_{t-1}) \notag\\
        \le {} & 4V_T + 16L \sumT \D_{f_t}(\x_{t,\is}, \x_t) + 4L^2 \sumTT \|\x_{t,\is} - \x_{t-1,\is}\|^2,
    \end{align}
    where the first step introduces intermediate terms $\nabla f_t(\x_{t,\is})$, $\nabla f_{t-1}(\x_{t,\is})$, and $\nabla f_{t-1}(\x_{t-1,\is})$, the second step uses \mbox{\pref{prop:smoothness}} and the standard smoothness \pref{assum:smoothness}, and the last step combines two summations into one by shifting the indexes of $t$. 
\end{proof}

\subsection{Proof about Relaxed Smoothness}
\label{app:smooth-relax}
In this part, we show that \pref{lem:emp-VT-de-breg} also holds under the relaxed \pref{assum:smoothness-X+}.
To begin with, we present the following lemma, which shows that \pref{assum:smoothness-X+} is a sufficient condition for \pref{eq:smoothness-property} on $\X$.
Using \pref{lem:smooth-relax}, we can see that \pref{lem:emp-VT-de-breg} also holds under \pref{assum:smoothness-X+}.
\begin{myLemma}
    \label{lem:smooth-relax}
    Under \pref{assum:smoothness-X+}, for any online function $f(\cdot)$ satisfying \pref{assum:smoothness-X+}, it holds that $\|\nabla f(\x) - \nabla f(\y)\|^2 \le 2 L \D_f(\y,\x)$ for any $\x, \y \in \X$.
\end{myLemma}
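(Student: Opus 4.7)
The plan is to adapt the classical proof of Nesterov's Theorem~2.1.5 (the ``$2L$ self-bounding'' identity, restated as \pref{prop:smoothness}) to the restricted-smoothness setting, in which $L$-smoothness is only available on $\X_+$ rather than on all of $\R^d$. The whole point of introducing $\X_+$ as the $G/L$-enlargement of $\X$ is precisely to accommodate the single auxiliary descent step used in Nesterov's proof; the rest of the argument is then routine.

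Concretely, fix $\x,\y \in \X$ and define $\phi_\x(\z) \define f(\z) - \langle \nabla f(\x), \z\rangle$. Since $f$ is convex (this is implied by any of the three curvature classes of interest) and $L$-smooth on $\X_+$, so is $\phi_\x$; moreover $\nabla\phi_\x(\x)=\mathbf{0}$, so by convexity $\x$ is a global minimizer of $\phi_\x$ on $\X_+$, giving $\phi_\x(\x) \le \phi_\x(\w)$ for every $\w\in\X_+$. Writing $\g \define \nabla f(\y) - \nabla f(\x) = \nabla\phi_\x(\y)$, I would consider Nesterov's descent point $\w \define \y - (1/L)\g$, apply the $L$-smoothness inequality $\phi_\x(\w) \le \phi_\x(\y) - (1/2L)\|\g\|^2$, and combine with $\phi_\x(\x) \le \phi_\x(\w)$ to read off $\|\g\|^2 \le 2L\big(\phi_\x(\y) - \phi_\x(\x)\big) = 2L\,\D_f(\y,\x)$, which is exactly the claim. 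For this to be valid I only need $\w \in \X_+$, and since $\y\in\X$ this is equivalent to $\|\g\|/L \le G/L$, i.e., $\|\g\|\le G$.

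The main obstacle is therefore verifying $\|\g\|\le G$; by \pref{assum:gradient-boundedness} only the weaker bound $\|\g\|\le 2G$ is immediate. I would handle this by a case split. In the regime $\|\g\|\le G$ the full Nesterov step fits inside $\X_+$ and the argument above applies verbatim. In the remaining regime $G<\|\g\|\le 2G$, I would replace the full step by a truncated one $\w' \define \y - \big(G/(L\|\g\|)\big)\g$, which satisfies $\|\w'-\y\|=G/L$ and hence lies in $\X_+$; applying smoothness of $\phi_\x$ at $\w'$ together with $\phi_\x(\x)\le \phi_\x(\w')$ yields $G\|\g\| \le L\,\D_f(\y,\x) + G^2/2$, and combining this with $\|\g\|\le 2G$ delivers an inequality of the same form $\|\g\|^2 \le c\,L\,\D_f(\y,\x)$ for a universal constant $c$ of order one. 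Since all downstream uses of this lemma (in particular \pref{lem:emp-VT-de-breg} and the proofs of the main theorems) absorb such absolute constants into the $\O(\cdot)$ and $\lesssim$ notation, the statement displayed in the lemma suffices. The only genuinely nonstandard ingredient is the truncation argument in the second case, which is what makes the radius $G/L$ the ``right'' choice in the definition of $\X_+$.
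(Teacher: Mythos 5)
Your proof takes the same route as the paper's: define the shifted auxiliary function $\phi_\x(\z)=f(\z)-\langle\nabla f(\x),\z\rangle$ (the paper's $g$), note that $\x$ is its minimizer, and compare $\phi_\x(\x)$ with $\phi_\x$ evaluated at a descent step from $\y$. The obstacle you flag is the crux of the matter: the Nesterov step from $\y$ has length $\|\nabla\phi_\x(\y)\|/L=\|\nabla f(\y)-\nabla f(\x)\|/L$, which is bounded only by $2G/L$, not $G/L$, so it need not remain inside $\X_+$. This is a genuine issue, and one the paper's own proof of \pref{lem:smooth-relax} appears to gloss over: the paper first shows that the self-bounding inequality for $f$ (whose gradient is bounded by $G$ on $\X$) needs smoothness only on the $G/L$-enlargement, and then applies that same conclusion to the shifted function $g$, whose gradient on $\X$ is bounded only by $2G$, without observing that the corresponding descent step can exit $\X_+$. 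Your truncation argument is the right way to repair this.

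The one caveat is quantitative. With the truncated step of length $G/L$ (in the regime $\|\nabla f(\y)-\nabla f(\x)\|>G$) one obtains $\|\nabla f(\y)-\nabla f(\x)\|^2 \le cL\,\D_f(\y,\x)$ for an absolute constant $c>2$; a careful rearrangement of your inequality $G\|\g\|\le L\D_f(\y,\x)+G^2/2$ together with $G<\|\g\|\le 2G$ gives $c=4$. This is strictly weaker than the constant $2L$ stated in \pref{lem:smooth-relax}. You correctly observe that every downstream use (\pref{lem:emp-VT-de-breg}, \pref{thm:unigrad-bregman}, and so on) hides absolute constants inside $\les$ and $\O(\cdot)$, so none of the paper's conclusions are affected; but strictly as a proof of the lemma verbatim, the constant falls short. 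To recover the literal $2L$ one would need $\X_+$ defined with the $2G/L$-ball, at which point the untruncated descent step suffices and your first case handles everything.
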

\begin{proof}
    To begin with, we present the self-bounding property~\citep{NIPS'10:smooth}, which is useful in proving our result --- if a function $f: \R^d \rightarrow \R$ is $L$-smooth and bounded from below, then for any $\x \in \R^d$, it holds that
    \begin{equation}
        \label{eq:self-bounding}
        \|\nabla f(\x)\|^2 \le 2L \sbr{f(\x) - \inf_{\y \in \R^d} f(\y)}.
    \end{equation}
    Next, we aim to prove that if we only need \eqref{eq:self-bounding} on a bounded domain $\X$, we require smoothness only on a slightly larger domain than $\X$.
    To see this, we delve into the proof of the self-bounding property.
    Specifically, for any $\x,\v \in \R^d$, it holds that 
    \begin{equation*}
        \inner{-\nabla f(\x)}{\v} - \frac{L}{2} \|\v\|^2 \le f(\x) - f(\x + \v) \le f(\x) - \inf_{\y \in \R^d} f(\y),
    \end{equation*}
    where the first step requires smoothness on $\x$ and $\x+\v$.
    Consequently, by taking maximization over $\v$, it holds that
    \begin{equation*}
        f(\x) - \inf_{\y \in \R^d} f(\y) \ge \sup_{\v \in \R^d} \inner{-\nabla f(\x)}{\v} - \frac{L}{2} \|\v\|^2 = \frac{1}{2L} \|\nabla f(\x)\|^2,
    \end{equation*}
    which leads to the self-bounding property~\eqref{eq:self-bounding} by taking $\v = -\frac{1}{L} \nabla f(\x)$.
    The above proof is from Theorem 4.23 of \citet{book'22:FO-book}.
    This means that for the self-bounding property, we only require the smoothness to hold for any $\x \in \X$ and $\x -\frac{1}{L} \nabla f(\x)$.
    Under \pref{assum:smoothness-X}, this can be satisfied by requiring smoothness on a slightly larger domain than $\X$, namely, $\X_+ \define \{\x + \b \given \x \in \X, \b \in G/L \cdot \Bb\}$.

    Now we are ready to prove the final result.
    To begin with, we define a surrogate function of $g(\x) \define f(\x) - \inner{\nabla f(\x_0)}{\x}$ for any $\x \in \X$, where $\x_0 \in \X$.
    Due to the above property we have just proved, by requiring smoothness on $\X_+$, we have
    \begin{equation*}
        \|\nabla g(\x)\|^2 \le 2L \sbr{g(\x) - \inf_{\y \in \R^d} g(\y)}.
    \end{equation*}
    Denoting by $\y^\star \in \argmin_{\y \in \R^d} g(\y)$, the above inequality equals to
    \begin{align*}
        \|\nabla f(\x) - \nabla f(\x_0)\|^2 \le {} & 2L \sbr{f(\x) - \inner{\nabla f(\x_0)}{\x} - f(\y^\star) + \inner{\nabla f(\x_0)}{\y^\star}}\\
        = {} & 2L (f(\x) - f(\y^\star) - \inner{\nabla f(\x_0)}{\x - \y^\star}),
    \end{align*}
    due to the definition of $g(\cdot)$.
    The proof using the self-bounding property is from Theorem 2.1.5 of \citet{book'18:Nesterov-Convex}.
    Finally, we note that $g(\cdot)$ is minimized at $\y^\star = \x_0$, leading to $\|\nabla f(\x) - \nabla f(\x_0)\|^2 \le 2L \D_f(\x_0, \x)$ for any $\x, \x_0 \in \X$, which finishes the proof.
\end{proof}

\subsection{Proof of Theorem~\ref{thm:unigrad-bregman}}
\label{app:main-Bregman}
\begin{proof}
    For simplicity, we denote by $\g_t \define \nabla f_t(\x_t)$.
    We start by decomposing the total regret into the meta regret and the base regret.
    We then analyze the meta regret separately, followed by tailored proofs for different classes of loss functions. 
    
    To start with, the meta empirical gradient variation $\Vb_T$ can be bounded as 
    \begin{align}
        &\Vb_T=\sumTT \norm{\nabla f_t(\x_t)-\nabla f_{t-1}(\x_{t-1})}^2\notag\\
         \le{}& 4 \sumTT \|\nabla f_t(\x_t) - \nabla f_t(\x_{t,i})\|^2+4\sumTT \|\nabla f_t(\x_{t,i}) - \nabla f_{t-1}(\x_{t,i})\|^2  \notag\\
         &+ 4 \sumTT \|\nabla f_{t-1}(\x_{t,i}) - \nabla f_{t-1}(\x_{t-1,i})\|^2+4\sumTT \|\nabla f_{t-1}(\x_{t-1,i}) - \nabla f_{t-1}(\x_{t-1})\|^2 \notag\\
        \le{}& 8L \sumTT \D_{f_t}(\x_{t,i},\x_t) + 4 V_T + 4L^2 \sumTT \|\x_{t,i}-\x_{t-1,i}\|^2+8L \sumTT \D_{f_t}(\x_{t-1,i},\x_{t-1})\notag \\
        \le{}& 4V_T +  4L^2 S_{T,i}^{\x}+16L \sumT \D_{f_t}(\x_{t,i},\x_t).\label{eq:meta-decompose} 
    \end{align}
   Similarly, we denote by $\Vb_{T,i} = \sumTT \norm{\nabla f_t(\x_{t,i}) - \nabla f_{t-1}(\x_{t-1,i})}^2$ the empirical gradient variation of the $i$-th expert, for $i \in [N]$. Then $\Vb_{T,i}$ can be bounded as
    \begin{align}
    \Vb_{T,i}\le{}& 3 \sumTT \|\nabla f_t(\x_{t,i}) - \nabla f_t(\xs)\|^2 + 3 \sumTT \|\nabla f_t(\xs) - \nabla f_{t-1}(\xs)\|^2 \notag\\
    & + 3 \sumTT \|\nabla f_{t-1}(\xs) - \nabla f_{t-1}(\x_{t-1,i})\|^2\notag\\
    \le{}& 6L \sumTT \D_{f_t}(\xs, \x_{t,i}) + 3 V_T + 6L \sumTT \D_{f_{t-1}}(\xs, \x_{t-1,i})\notag\\
    \le{}& 3V_T + 12L \sumT \D_{f_t}(\xs, \x_{t,i}). \label{eq:base-decompose}
    \end{align}

    \paragraph{Regret Decomposition.} For \emph{$\lambda$-strongly convex} functions, we decompose the regret as
    \begin{align}
        \Reg_T \le {} & \underbrace{\sumT \inner{\g_t}{\x_t - \x_{t,\is}} - \frac{\lambda}{2} \sumT \|\x_t - \x_{t,\is}\|^2}_{\meta}\notag \\
         & \quad+ \underbrace{\sumT \inner{\nabla f_t(\x_{t,\is})}{\x_{t,\is} - \xs}- \frac{\lambda_{\is}}{4} \sumT \|\xs - \x_{t,\is}\|^2}_{\base} - \frac{1}{2}\sumT \D_{f_t}(\xs, \x_{t,\is}),
        \label{eq:multi-sc-reg-de}
    \end{align}
    where $\lambda_{\is}\le  \lambda\le  2\lambda_{\is}$. For \emph{$\alpha$-exp-concave} functions, we decompose the regret as 
    \begin{align}
    \Reg_T
       \le {} & \underbrace{\sumT \inner{\g_t}{\x_t - \x_{t,\is}}-\frac{\alpha}{2}\sumT \inner{\g_t}{\x_t - \x_{t,\is}}^2}_{\meta} \notag\\
        & + \underbrace{\sumT \inner{\nabla f_t(\x_{t,\is})}{\x_{t,\is} - \xs}-\frac{\alpha_{\is}}{4}\sumT \inner{\nabla f_t(\x_{t,\is})}{\xs - \x_{t,\is}}^2}_{\base} - \frac{1}{2}\sumT \D_{f_t}(\xs, \x_{t,\is}),
        \label{eq:multi-exp-reg-de}
    \end{align}
    where $\alpha_{\is}\le  \alpha\le  2\alpha_{\is}$. For \emph{convex} functions, we decompose the regret as 
    \begin{align}
        &\Reg_T=\sumT f_t(\x_t)-\sumT f_t(\x_{t,\is})+\sumT f_t(\x_{t,\is})-\sumT f_t(\xs) \label{eq:multi-cvx-reg-de}\\
        = {} & \underbrace{\sumT\inner{\g_t}{\x_t - \x_{t,\is}}}_{\meta} + \underbrace{\sumT \inner{\nabla f_t(\x_{t,\is})}{\x_{t,\is} - \xs}}_{\base}-\sumT \D_{f_t}(\x_{t,\is}, \x_t) - \sumT \D_{f_t}(\xs, \x_{t,\is}). \notag        
    \end{align}

    \paragraph{Meta Regret Analysis.}
    We adopt \omlprod~\citep{NIPS'16:Optimistic-ML-Prod} as the meta learner, and present its regret analysis below for self-containedness.
    \begin{myLemma}[Theorem 3.4 of \citet{NIPS'16:Optimistic-ML-Prod}]
        \label{lem:optimistic-mlprod}
        Denote by $\p_t \in \Delta_N$ the algorithm's weights, $\ellb_t \in [0,1]^N$ the loss vector, and $m_{t,i}$ the optimism. With the learning rate in \eqref{eq:omlprod-lr}, the regret of \omlprod~\eqref{eq:omlprod} with respect to any expert $i \in [N]$ satisfies
        \begin{equation*}
            \sumT \inner{\ellb_t}{\p_t - \e_i} \le C_0 \sqrt{1 + \sumT (r_{t,i} - m_{t,i})^2} + C_2,
        \end{equation*}
        where $\e_i$ is the $\ith$ standard basis vector, $C_0 = \sqrt{\log N} + \log (1 + \frac{N}{e} (1 + \log (T+1))) / \sqrt{\log N}$, and $C_2 = \frac{1}{4}(\log N + \log (1 + \frac{N}{e} (1 + \log (T+1)))) + 2 \sqrt{\log N} + 16 \log N$.
    \end{myLemma}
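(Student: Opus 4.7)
The plan is to establish this regret bound for Optimistic-Adapt-ML-Prod via the standard ``adaptive potential'' argument, adapted to incorporate the optimism $m_{t,i}$. First I would introduce the transformed potential $Z_{t,i} \define W_{t,i}^{1/\epsilon_{t,i}}$. The weight update in \eqref{eq:omlprod} is crafted precisely so that raising $W_{t,i}$ to the power $1/\epsilon_{t,i}$ cancels the exponent $\epsilon_{t,i}/\epsilon_{t-1,i}$, yielding the clean recursion $\log Z_{t,i} = \log Z_{t-1,i} + \epsilon_{t-1,i} r_{t,i} - \epsilon_{t-1,i}^2 (r_{t,i} - m_{t,i})^2$ (after a unit rescaling). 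Telescoping then produces the lower bound
\[
\log W_{T,i} \;\ge\; \epsilon_{T,i}\sumT r_{t,i} \;-\; \epsilon_{T,i}\sumT \epsilon_{t-1,i}(r_{t,i} - m_{t,i})^2 \;-\; \log N,
\]
valid for every expert $i$, in particular the comparator.

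The complementary step is an upper bound on $\log W_{T,i}$ coming from the fact that the $\p_{t+1}$'s are probabilities. I would argue by induction on $t$ that the normalizer $\Phi_t \define \sum_i \epsilon_{t,i} \exp(\epsilon_{t,i} m_{t+1,i})\, W_{t,i}$ does not grow beyond a $\poly(\log T, \log N)$ factor. The inductive step uses two ingredients: (i) the inequality $x^{\beta} \le 1 + \beta(x-1)$ for $\beta \in [0,1]$ and $x \ge 0$, applied with $\beta = \epsilon_{t,i}/\epsilon_{t-1,i} \le 1$ since $\epsilon_{t,i}$ is non-increasing in $t$, to ``strip'' the time-varying exponent from the update of $W_{t,i}$; and (ii) the inequality $\exp(x) \le 1 + x + x^2$ for $|x|\le 1$, applied to $x = \epsilon_{t,i}(r_{t,i} - m_{t,i})$, which holds because the clipping $\epsilon_{t,i} \le 1/8$ in \eqref{eq:omlprod-lr} together with $|r_{t,i}|, |m_{t,i}| \le 1$ keeps the argument in range. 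Using $\inner{\p_t}{\r_t - \m_t} = 0$ (from the definition of $\p_t$ via $m_{t,i}$-shifted weights), the $\r_t - \m_t$ first-order term vanishes in expectation under $\p_t$, leaving only the correction $\epsilon_{t,i}^2 (r_{t,i} - m_{t,i})^2$, which is exactly canceled by the negative term in the weight update.

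Combining the two bounds gives
\[
\sumT r_{t,i} \;\le\; \frac{1}{\epsilon_{T,i}}\log N \;+\; \sumT \epsilon_{t-1,i}(r_{t,i} - m_{t,i})^2 \;+\; O\bigl(\log(1 + N\log T)\bigr).
\]
Finally, I would plug in the concrete learning-rate schedule $\epsilon_{t,i} = \min\{1/8,\, \sqrt{\log N / \sum_{s\le t}(r_{s,i}-m_{s,i})^2}\}$ and invoke a standard ``adaptive sum'' lemma (e.g.\ Lemma~3.5 of \citet{NIPS'16:Optimistic-ML-Prod}, which controls $\sum_t a_t / \sqrt{\sum_{s\le t} a_s}$ by $2\sqrt{\sum_t a_t}$) to convert both $1/\epsilon_{T,i}$ and $\sumT \epsilon_{t-1,i}(r_{t,i}-m_{t,i})^2$ into the single data-dependent quantity $\sqrt{1 + \sumT(r_{t,i} - m_{t,i})^2}$, matching the stated constants $C_0$ and $C_2$.

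The main obstacle I anticipate is handling the \emph{time-varying} learning rate $\epsilon_{t,i}$ cleanly in the induction on $\Phi_t$; unlike fixed-rate Prod, one cannot simply telescope because the exponent in $W_{t,i}$ changes each round. The $x^\beta \le 1 + \beta(x-1)$ trick is the standard workaround, but verifying that the clipping $\epsilon_{t,i}\le 1/8$ together with $|r_{t,i}-m_{t,i}|\le 2$ keeps \emph{all} exponential arguments in the validity range of $e^x \le 1+x+x^2$ at \emph{every} step — and doing so while preserving the optimism shift inside $\exp(\epsilon_{t,i} m_{t+1,i})$ — is the delicate bookkeeping that the proof must execute carefully.
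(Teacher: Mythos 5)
First, a framing remark: the paper does not prove this lemma at all --- it is imported as Theorem~3.4 of \citet{NIPS'16:Optimistic-ML-Prod} and restated only for self-containedness --- so your proposal is effectively measured against the original potential-based analysis. Your skeleton matches that analysis: lower-bound the comparator's potential $\log W_{T,i}$ by telescoping, upper-bound the normalizer by induction while stripping the time-varying exponent with an elementary inequality, then tune via \pref{eq:omlprod-lr} and an adaptive-sum lemma. (You also silently and correctly fix the typo in \pref{eq:omlprod}, where the correction should read $\epsilon_{t-1,i}^2(r_{t,i}-m_{t,i})^2$.)

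However, there is a genuine gap at the central cancellation step. You assert $\inner{\p_t}{\r_t-\m_t}=0$ ``from the definition of $\p_t$ via $m_{t,i}$-shifted weights.'' Only $\inner{\p_t}{\r_t}=0$ holds (since $r_{t,i}=\inner{\p_t}{\ellb_t}-\ell_{t,i}$); $\inner{\p_t}{\m_t}$ is \emph{not} zero for a general optimism sequence, and in this paper's own instantiation \pref{eq:Bregman-optimism} it is certainly nonzero, because $m_{t,i}$ vanishes on all but the convex expert. After applying $e^x\le 1+x+x^2$ and cancelling the square against $-\epsilon_{t-1,i}^2(r_{t,i}-m_{t,i})^2$, your induction is left with $\sum_i W_{t-1,i}\,e^{\epsilon_{t-1,i}m_{t,i}}\,\epsilon_{t-1,i}(r_{t,i}-m_{t,i})$: its $r$-part does cancel, precisely because the tilt $\epsilon_{t-1,i}e^{\epsilon_{t-1,i}m_{t,i}}W_{t-1,i}$ in the definition of $\p_t$ matches the factor pulled down from the exponent, but its $-m$-part equals $-\inner{\p_t}{\m_t}$ times the normalizer and does not vanish. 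The way the induction actually closes --- and the way the tilt $\exp(\epsilon_{t,i}m_{t+1,i})$, which you flag as ``delicate bookkeeping'' but never control, is disposed of --- is to combine this leftover with the zeroth-order term and use the pointwise inequality $(1-x)e^{x}\le 1$ with $x=\epsilon_{t-1,i}m_{t,i}$, giving $\sum_i W_{t-1,i}e^{\epsilon_{t-1,i}m_{t,i}}\bigl(1-\epsilon_{t-1,i}m_{t,i}\bigr)\le\sum_i W_{t-1,i}$; without this (or an equivalent) the normalizer can grow by an unaccounted factor of order $\E[e^{\epsilon m}]$ per round, and the bound degrades to one involving $\sum_t m_{t,i}^2$ rather than $\sum_t(r_{t,i}-m_{t,i})^2$. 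A secondary issue: your inequality (i), $x^\beta\le 1+\beta(x-1)$, re-weights the cross term by $\epsilon_{t,i}$ instead of $\epsilon_{t-1,i}$, breaking even the $r$-part cancellation; the standard choice $x^\beta\le x+(1-\beta)/e$ keeps the weighting intact and is also what produces the $\frac{N}{e}(1+\log(T+1))$ terms in $C_0$ and $C_2$. With these two repairs the rest of your plan goes through.
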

    Here we adopt $\ell_{t,i} = \frac{1}{2GD}\inner{\g_t}{\x_{t,i}} + \frac12 \in[0,1]$ such that $\inner{\ellb_t}{\p_t - \e_i} = \frac{1}{2GD} \inner{\g_t}{\x_t - \x_{t,i}}$.
    Besides, since the number of base learners $N = \O(\log T)$ as explained in \pref{sec:preliminary}, the constants $C_0$ and $C_2$ are in the order of $\O(\log \log T)$ and can be ignored, following previous convention~\citep{COLT'15:Luo-AdaNormalHedge,COLT'14:ML-Prod}.

    For \emph{$\lambda$-strongly convex} functions, according to \pref{eq:Bregman-optimism}, we have $m_{t,i} = 0$ where $\lambda_i \in \H^{\scvx}$.
    By \pref{lem:optimistic-mlprod}, the meta regret in \eqref{eq:multi-sc-reg-de} can be bounded as 
    \begin{align}
        & \meta =2GD\sumT \inner{\ellb_t}{\p_t - \e_i} - \frac{\lambda}{2} \sumT \|\x_t - \x_{t,\is}\|^2 \notag\\
        \le{}& C_0 \sqrt{4G^2D^2+ \sumT \inner{\g_t}{\x_t - \x_{t,\is}}^2} - \frac{\lambda}{2} \sumT \|\x_t - \x_{t,\is}\|^2 + 2GDC_2  \notag\\
        \le {} & C_0 \sqrt{4G^2D^2+G^2 \sumT \|\x_t - \x_{t,\is}\|^2} - \frac{\lambda}{2} \sumT \|\x_t - \x_{t,\is}\|^2 + 2GDC_2 \notag\\
        \le {} & \O(C_3) + \sbr{\frac{C_0 G^2}{2 C_3} - \frac{\lambda}{2}} \sumT \|\x_t - \x_{t,\is}\|^2, \label{eq:scvx-meta}
    \end{align}
    where the last step omits the ignorable additive $C_0$ or $C_2$ terms and is due to AM-GM inequality (\pref{lem:AM-GM}).
    $C_3$ is a constant to be specified.

    For  \emph{$\alpha$-exp-concave} functions, according to \pref{eq:Bregman-optimism}, we have $m_{t,i} = 0$ where $\alpha_i \in \H^{\exp}$.
    By \pref{lem:optimistic-mlprod}, the meta regret in \eqref{eq:multi-exp-reg-de} can be bounded as 
    \begin{align}
        \meta \le {} & C_0 \sqrt{4G^2D^2 + \sumT \inner{\g_t}{\x_t - \x_{t,\is}}^2} - \frac{\alpha}{2} \sumT \inner{\g_t}{\x_t - \x_{t,\is}}^2 + 2GDC_2 \notag\\
        \le {} & \O(C_4) + \sbr{\frac{C_0}{2 C_4} - \frac{\alpha}{2}} \sumT \inner{\g_t}{\x_t - \x_{t,\is}}^2, \label{eq:exp-meta}
    \end{align}
    where the last step omits the ignorable additive $C_0$ or $C_2$ terms and is due to AM-GM inequality (\pref{lem:AM-GM}).
    $C_4$ is a constant to be specified.

    For \emph{convex} functions, according to \pref{eq:Bregman-optimism}, we have $m_{t,i}= \inner{\g_{t-1}}{\x_t - \x_{t,i}}/(2GD)$ for the convex base learner.
    As explained in \pref{sec:method2-Bregman}, although $\x_t$ is unknown for now, we only require the scalar value of $\inner{\g_{t-1}}{\x_t}$.
    Denoting by $z = \inner{\g_{t-1}}{\x_t}$, it actually forms a fixed-point problem of $z = \inner{\g_{t-1}}{\x_t(z)}$, where $\x_t$ is a function of $z$ since $\x_t$ depends on $p_{t,i}$, $p_{t,i}$ relies on $m_{t,i}$, and $m_{t,i}$ depends on $z$.
    Such a one-dimensional fixed-point problem can be solved with an $\O(1/T)$ approximation error through $\O(\log T)$ binary searches, and aggregating the approximate error over the whole time horizon will only incur an additive constant to the final regret.
    As a result, such an optimism setup is valid.
    Consequently, the meta regret in \eqref{eq:multi-cvx-reg-de} can be bounded as 
    \begin{align}
        \meta \le {}& C_0 \sqrt{4G^2D^2 + \sumT \inner{\g_t - \g_{t-1}}{\x_t - \x_{t,\is}}^2} + 2GDC_2 \le C_0 \sqrt{1 + D^2 \Vb_T} + C_2 \notag\\
        \overset{\eqref{eq:meta-decompose}}{\le} {}&C_0 \sqrt{4G^2D^2 + 4D^2 V_T + 4L^2D^2 S_{T,\is}^{\x}+ 16LD^2 \sumT \D_{f_t}(\x_{t,\is}, \x_t)} + 2GDC_2\notag\\
        \le {} & \O(\sqrt{V_T}) + C_0\sqrt{4L^2D^2 S_{T,\is}^{\x}+ 16LD^2 \sumT \D_{f_t}(\x_{t,\is}, \x_t)} \notag\\
        \le{}& \O(\sqrt{V_T}) + \O(C_5) + \frac{C_0}{2C_5} \sumT \D_{f_t}(\x_{t,\is},\x_t)+\frac{LC_0}{8C_5} S_{T,\is}^{\x},
        \label{eq:cvx-meta}
    \end{align}
    where the second step adopts \pref{assum:domain-boundedness}.
    Note that $C_5$ is used to ensure the positive Bregman divergence term to be canceled and will be specified in the end.

    \paragraph{Base Regret Analysis.}

    For \emph{$\lambda$-strongly convex} functions, according to \pref{lem:str-convex-base}, the base regret can be bounded by
    \begin{align}
        \base\le{}&\frac{16G^2}{\lambda_\is} \log \sbr{1 + \lambda_{\is}G^2\Vb_{T,\is}}+\O(1)\notag\\
        \le{}&\frac{16G^2}{\lambda_\is} \log \sbr{1 + 3\lambda_{\is}G^2V_T+12\lambda_{\is}LG^2\sumT \D_{f_t}(\xs, \x_{t,\is})}+\O(1)\notag\\
        \le{}&\O\sbr{\frac{1}{\lambda}\log V_T}+\O(\log C_6)+ \frac{192LG^2}{C_6}\sumT\D_{f_t}(\x^{\star},\x_{t,\is}),
    \label{eq:sc-base}
    \end{align}
    where the last step uses $\log(1+x) \le x$ for any $x > -1$. 

    For \emph{$\alpha$-exp-concave} functions, by \pref{lem:exp-concave-base}, the base regret can be bounded by
    \begin{align}
        \base\le{}&\frac{16d}{\alpha_{\is}} \log \sbr{1 + \frac{\alpha_{\is}}{d}\Vb_{T,\is}}+ \O(1)\notag\\
        \le{}&\frac{16d}{\alpha_{\is}} \log \sbr{1 + \frac{3\alpha_{\is}}{d}V_T+\frac{12\alpha_{\is}L}{ d}\sumT\D_{f_t}(\x^{\star},\x_{t,\is})}+ \O(1)\notag\\
        \le{}& \O\sbr{\frac{d}{\alpha}\log {V_T}}+\O(\log C_7)+\frac{192L}{C_7}\sumT\D_{f_t}(\x^{\star},\x_{t,\is}),\label{eq:exp-base1}
    \end{align}
   where the last step is by $\log(1+x)\leq x$.

    For \emph{convex} functions, by \pref{lem:convex-base}, the base regret can be bounded by
    \begin{align}
        \base\le& 5D\sqrt{1+\Vb_{T,\is} }-\frac{\Bottomcoef}{4}S_{T,\is}^{\x}+\O(1)\notag\\
        \le& 5D\sqrt{1+3V_T+12L\sumT\D_{f_t}(\x^{\star},\x_{t,\is}) }-\frac{\Bottomcoef}{4}S_{T,\is}^{\x} + \O(1)\notag\\
        \le {}&\O(\sqrt{V_T})+\O(C_8)+\frac{5DL}{2C_8}\sumT\D_{f_t}(\x^{\star},\x_{t,\is})-\frac{\Bottomcoef}{4}S_{T,\is}^{\x}\label{eq:cvx-base}.
    \end{align}

\paragraph{Overall Regret Analysis.}
For \emph{$\lambda$-strongly convex} functions, plugging \pref{eq:scvx-meta} and \pref{eq:sc-base} into \pref{eq:multi-sc-reg-de}, we obtain
    \begin{align*}
        \Reg_T\le{}& \O\sbr{\frac{1}{\lambda}\log {V_T}}+\O(C_3+\log C_6)+\sbr{\frac{192LG^2}{C_6}-\frac{1}{2}} \sumT \D_{f_t}(\xs, \x_{t,\is})\\
        &+\sbr{\frac{C_0D^2}{2 C_3} - \frac{\lambda}{2}}\sumT \norm{\x_t - \x_{t,\is}}^2
        \le\O\sbr{\frac{1}{\lambda}\log {V_T}},
     \end{align*}
    by choosing $C_3={2C_0}/{\lambda}$ and $C_6=384LG^2$.
    
    For \emph{$\alpha$-exp-concave} functions, plugging \pref{eq:exp-meta} and \pref{eq:exp-base1} into \pref{eq:multi-exp-reg-de}, we obtain
    \begin{align*}
        \Reg_T\le{}& \O\sbr{\frac{d}{\alpha}\log {V_T}}+\O(C_4+\log C_7)+\sbr{\frac{192L}{C_7}-\frac{1}{2}} \sumT \D_{f_t}(\xs, \x_{t,\is})\\
        &+\sbr{\frac{C_0}{2 C_4} - \frac{\alpha}{2}} \sumT \inner{\g_t}{\x_t - \x_{t,\is}}^2
        \le\O\sbr{\frac{d}{\alpha}\log {V_T}},
    \end{align*}
    by choosing $C_4={2C_0}/{\alpha}$ and $C_7=384L$. 
    
    For \emph{convex} functions, plugging \pref{eq:cvx-meta} and \pref{eq:cvx-base} into \pref{eq:multi-cvx-reg-de}, we obtain
    \begin{align*}
    \Reg_T\le{}& \O(\sqrt{V_T})+\O(C_5+C_8)+\sbr{\frac{5DL}{2C_8}-1} \sumT \D_{f_t}(\xs, \x_{t,\is})+\sbr{\frac{LC_0}{8C_5}-\frac{\Bottomcoef}{4}}S_{T,\is}^{\x}\\
    &+\sbr{\frac{C_0}{2C_5}-1}\sumT \D_{f_t}(\x_{t,\is},\x_t) \le  \O(\sqrt{V_T}),
    \end{align*}
    by choosing $C_5=C_0$, $C_8=5DL$, and $\Bottomcoef \ge  L/2$.

Note that the constants $C_3, C_4, C_5, C_6, C_7, C_8$ appear only in the analysis, and hence our choices of them are feasible.
\end{proof}
\section{Omitted Proofs for Section~\ref{sec:one-gradient}}
\label{appendix:proof-one-gradient}

In this section, we provide the proofs for the regret guarantees of the efficient algorithms presented in Section~\ref{sec:one-gradient}, including the details of base learners' updates on surrogates, the proofs of \pref{prop:recover-MetaGrad}, \pref{thm:unigrad-correct-1grad}, and \pref{thm:unigrad-bregman-1grad}.

\subsection{Details of Base Learners' Update}
\label{app:base-update-1grad}
\paragraph{Base Learners.}
To begin with, we duplicate the candidate coefficient pool \eqref{eq:candidate-pool} for both the exp-concave coefficient $\alpha$ and the strongly convex coefficient $\lambda$, denoted by $\H^\exp \define \H$ and $\H^\scvx \define \H$.
Consequently, denoting by $N^\exp = N^\scvx \define |\H|$ the size of candidate pool, for each $\alpha_i \in \H^\exp$ and $\lambda_j \in \H^\scvx$, where $i \in [N^\exp]$ and $j \in [N^\scvx]$, we define corresponding groups of base learners for optimizing exp-concave and strongly convex functions.
Specifically, for \emph{$\alpha$-exp-concave} functions, we define a group of base learners $\{\B_i^\exp\}_{i \in [N^]}$, where the $i$-th base learner runs the algorithm below:
\begin{equation}
    \label{eq:base-exp}
    \begin{gathered}
        \x_{t,i} = \argmin_{\x \in \X} \bbr{\inner{\nabla \hexp_{t-1,i}(\x_{t-1,i})}{\x} + \D_{\psi_{t,i}}(\x, \xh_{t,i})},\\
        \xh_{t+1,i} = \argmin_{\x \in \X} \bbr{\inner{\nabla \hexp_{t,i}(\x_{t,i})}{\x} + \D_{\psi_{t,i}}(\x, \xh_{t,i})},
    \end{gathered}
\end{equation}
where $\psi_{t,i}(\x) \define \frac{1}{2} \x^\top U_{t,i} \x$, $U_{t,i} = (1+\frac{\alpha_i G^2}{2}) I + \frac{\alpha_i}{2} \sum_{s=1}^{t-1} \nabla \hexp_{s,i}(\x_{s,i}) \hexp_{s,i}(\x_{s,i})^\top$, $\alpha_i$ is the $\ith$ element in $\H^\exp$, and $\hexp_{t,i}(\cdot)$ is a surrogate loss function for $\B_i^\exp$, defined as 
\begin{equation*}
    \hexp_{t,i}(\x) \define \inner{\nabla f_t(\x_t)}{\x} + \frac{\alpha_i}{4} \inner{\nabla f_t(\x_t)}{\x - \x_t}^2.
\end{equation*}
Similarly, for \emph{$\lambda$-strongly convex} functions, we define a group of base learners $\{\B_i^\scvx\}_{i \in [N^\scvx]}$, where the $i$-th base learner runs the algorithm below:
\begin{equation}
    \label{eq:base-sc}
    \x_{t,i} = \Pi_\X [\xh_{t,i} - \eta_{t,i} \nabla \hsc_{t-1,i}(\x_{t-1,i})],\quad 
    \xh_{t+1,i} = \Pi_\X [\xh_{t,i} - \eta_{t,i} \nabla \hsc_{t,i}(\x_{t,i})],
\end{equation}
where $\eta_{t,i} = 2/(1 + \lambda_i t)$, $\lambda_i$ is the $\ith$ element in $\H^\scvx$, and $\hsc_{t,i}(\cdot)$ is a surrogate loss function for $\B_i^\scvx$, defined as 
\begin{equation*}
    \hsc_{t,i}(\x) \define \inner{\nabla f_t(\x_t)}{\x} + \frac{\lambda_i}{4}\|\x - \x_t\|^2.
\end{equation*}
For \emph{convex} functions, we only have to define one base learner $\B^{\text{c}}$, which updates as 
\begin{equation}
    \label{eq:base-cvx}
    \x_{t,i} = \Pi_\X [\xh_{t,i} - \eta_{t,i} \nabla f_{t-1}(\x_{t-1})],\quad 
    \xh_{t+1,i} = \Pi_\X [\xh_{t,i} - \eta_{t,i} \nabla f_t(\x_t)],
\end{equation}
where $\eta_{t,i} = \min\{D / \sqrt{1 + \sum_{s=2}^{t-1} \|\nabla f_t(\x_{t,i}) - \nabla f_{t-1}(\x_{t-1,i})\|^2}, 1\}$.
Finally, we conclude the configurations of base learners.
Specifically, we deploy 
\begin{equation}
    \label{eq:base-total}
    \{\B_i\}_{i \in [N]} \define \{\B_i^\exp\}_{i \in [N^\exp]} \cup \{\B_i^\scvx\}_{i \in [N^\scvx]} \cup \{\B^{\text{c}}\}, \text{ where } N \define N^\exp + N^\scvx + 1,
\end{equation}
as the total set of base learners.

\subsection{Proof of Proposition~\ref{prop:recover-MetaGrad}}
\label{app:recover-MetaGrad}
\begin{proof}
    For simplicity, we use $\g_t \define \nabla f_t(\x_t)$.
    For the meta regret, we use \mlprod~\citep{COLT'14:ML-Prod} to optimize the linear loss $\ellb_t = (\ell_{t,1}, \ldots, \ell_{t,N})$, where $\ell_{t,i} \define \frac{1}{2GD}\inner{\g_t}{\x_{t,i}} + \frac12 \in[0,1]$, and obtain the following second-order bound by Corollary 4 of \citet{COLT'14:ML-Prod},
    \begin{align*}
      \sumT \inner{\g_t}{\x_t - \x_{t,\is}}={}&2GD\sumT \inner{\ellb_t}{\p_t - \e_i}\\
       \lesssim{}&\sqrt{\log \log T \sumT \inner{\ellb_t}{\p_t - \e_i}^2}\lesssim \sqrt{\log \log T \sumT \inner{\g_t}{\x_t - \x_{t,\is}}^2}.
    \end{align*}
    For \emph{$\alpha$-exp-concave} functions, it holds that
    \begin{align*}
        \meta \lesssim {} & \sqrt{\log \log T \sumT \inner{\g_t}{\x_t - \x_{t,\is}}^2} - \frac{\alpha_\is}{2} \sumT \inner{\g_t}{\x_t - \x_{t,\is}}^2\\
        \le {} & \frac{\log \log T}{2 \alpha_\is} \le \frac{\log \log T}{\alpha} \tag*{(by $\alpha_\is \le \alpha \le 2 \alpha_\is$)},
    \end{align*}
    where the second step uses AM-GM inequality (\pref{lem:AM-GM}) with $a = \alpha_\is$.
    To handle the base regret, by optimizing the surrogate loss function $\hexp_{t, \is}$ using Online Newton Step (ONS), it holds that
    \begin{equation*}
        \base \lesssim \frac{d D G_\exp}{\alpha_\is} \log T \le \frac{2 d D (G + GD)}{\alpha} \log T,
    \end{equation*}
    where $G_\exp \define \max_{\x \in \X, t \in [T], i \in [N]} \|\nabla \hexp_{t,i}(\x)\| \le G + GD$ represents the maximum gradient norm the last step is because $\alpha \le 2 \alpha_\is$.
    Combining the meta and base regret, the regret can be bounded by $\O(d\log T)$. 
    
    For \emph{$\lambda$-strongly convex} functions, since it is also $\alpha = \lambda/G^2$ exp-concave under \pref{assum:gradient-boundedness}~{\citep[Section 2.2]{MLJ'07:Hazan-logT}}, the above meta regret analysis is still applicable.
    To optimize the base regret, by optimizing the surrogate loss function $\hsc_{t,\is}$ using Online Gradient Descent (OGD), it holds that
    \begin{equation*}
      \base \le \frac{\Gsc^2}{\lambda_\is} (1+\log T) \le \frac{2 (G+D)^2}{\lambda} (1+\log T),
    \end{equation*}
    where $\Gsc \define \max_{\x \in \X, t \in [T], i \in [N]} \|\nabla \hsc_{t,i}(\x)\| \le G+D$ represents the maximum gradient norm and the last step is because $\lambda \le 2 \lambda_\is$.
    Thus the overall regret can be bounded by $\O(\log T)$.
    
    For \emph{convex} functions, the meta regret can be bounded by $\O(\sqrt{T \log \log T})$, where the $\log \log T$ factor can be omitted in the $\O(\cdot)$-notation, and the base regret can be bounded by $\O(\sqrt{T})$ using OGD, resulting in an $\O(\sqrt{T})$ regret overall, which completes the proof.  
\end{proof}

\subsection{Proof of Theorem~\ref{thm:unigrad-correct-1grad}}
\label{app:unigrad-correct-1grad}
\begin{proof}
    Recall that we denote by $\g_t = \nabla f_t(\x_t)$ for simplicity.
    We first give different regret decompositions, then analyze the meta regret, and finally provide the proofs for different kinds of loss functions.
    Some abbreviations of the stability terms are defined in \eqref{eq:short}.

  \paragraph{Regret Decomposition.} 
    For \emph{$\lambda$-strongly convex} functions, we have
    \begin{align*}
        \Reg_T \le {} & \sumT f_t(\x_t) - \sumT f_t(\xs) \le \sumT \inner{\g_t}{\x_t - \xs} - \frac{\lambda_\is}{2} \sumT  \|\x_t - \xs\|^2 \\
        = {} & \underbrace{\sumT \inner{\g_t}{\x_t - \x_{t,\is}} - \frac{\lambda_\is}{2} \|\x_t - \x_{t,\is}\|^2}_{\meta} + \underbrace{\sumT \hsc_{t,\is}(\x_{t,\is}) - \sumT \hsc_{t,\is}(\xs)}_{\base},
    \end{align*}
  where the first step is by $\lambda_\is \le \lambda \le 2 \lambda_\is$ and the last step holds by defining surrogate loss functions $\hsc_{t,i}(\x) \define \inner{\g_t}{\x} + \frac{\lambda_i}{2} \|\x - \x_t\|^2$.
  Similarly, for \emph{$\alpha$-exp-concave} functions, the regret can be upper-bounded by  
  \begin{equation*}
    \Reg_T \le \underbrace{\sumT \inner{\g_t}{\x_t - \x_{t,\is}} - \frac{\alpha_\is}{2} \inner{\g_t}{\x_t - \xs}^2}_{\meta} + \underbrace{\sumT \hexp_{t,\is}(\x_{t,\is}) - \sumT \hexp_{t,\is}(\xs)}_{\base},
  \end{equation*}
  by defining surrogate loss functions $\hexp_{t,i}(\x) \define \inner{\g_t}{\x} + \frac{\alpha_i}{2}\inner{\g_t}{\x - \x_t}^2$. 

  For \emph{convex} functions, the regret can be decomposed as:
  \begin{equation*}
    \Reg_T \le \underbrace{\sumT \inner{\g_t}{\x_t - \x_{t,\is}}}_{\meta} + \underbrace{\sumT \inner{\g_t}{\x_{t,\is} - \xs}}_{\base}.
  \end{equation*}

  \paragraph{Meta Regret Analysis.}
  The structure of the meta regret analysis in this part parallels that in \pref{app:unigrad-correct}.
  Recall that, we let $\Vs=\sumTT(\ell^\Mid_{t,\js,\is} - m^\Mid_{t,\js,\is})^2$ for simplicity.

For \emph{$\lambda$-strongly convex} functions, applying \pref{eq:meta-scvx} with the stability and correction-induced negative terms, the meta regret satisfies 
\begin{align}
    & \meta \overset{\eqref{eq:meta-scvx}}{\le}  2ZC_0\log\frac{4N}{3}+\frac{512ZG^2}{\lambda}\log \frac{2^{20}G^2N}{3C_0^2\lambda^2}+ \gamma^\Mid S_{T,\is}^{\x}-\frac{C_0}{2}S_T^{\Top} \notag\\
    {}&- \gamma^\Mid \sumTT \sumM q_{t,j}^\Top \sumN q_{t,j,i}^\Mid\norm{\x_{t,i} - \x_{t-1,i}}^2  - \gamma^\Top \sumTT \sumM q_{t,j}^\Top \|\q_{t,j}^\Mid - \q_{t-1,j}^\Mid\|_1^2, \label{eq:thm3 scvx meta}
\end{align}
using \pref{lem:tune-eta-2} and requiring $\epsilon^\Top_{\js}\le\epsilon^\Top_{\star}\define \frac{\lambda_\is Z}{256G^2}$. 

For \emph{$\alpha$-exp-concave} functions, applying \pref{eq:meta-exp} with the stability and correction-induced negative terms, the meta regret can be similarly bounded by
\begin{align}
  & \meta \overset{\eqref{eq:meta-exp}}{\le} 2ZC_0\log\frac{4N}{3}+\frac{512Z}{\alpha}\log \frac{2^{20}N}{3C_0^2\alpha^2}+ \gamma^\Mid S_{T,\is}^{\x}-\frac{C_0}{2}S_T^{\Top} \notag\\
  {}&- \gamma^\Mid \sumTT \sumM q_{t,j}^\Top \sumN q_{t,j,i}^\Mid\norm{\x_{t,i} - \x_{t-1,i}}^2  - \gamma^\Top \sumTT \sumM q_{t,j}^\Top \|\q_{t,j}^\Mid - \q_{t-1,j}^\Mid\|_1^2,
  \label{eq:thm3 exp-concave meta}
\end{align}
using \pref{lem:tune-eta-2} and requiring $\epsilon^\Top_{\js}\le\epsilon^\Top_{\star}\define \frac{\alpha_\is Z}{512}$.

  For \emph{convex} functions, according to \pref{eq:thm1 eq2} in \pref{app:unigrad-correct}, the meta regret satisfies
  \begin{align}
    & \meta \le \sumT \inner{\g_t}{\x_t - \x_{t,\is}}\notag \\
    \le{}& 2ZC_0\log \frac{4N}{3 }+32D\sqrt{2V_T\log\sbr{512ND^2V_T/Z^2}}+\sbr{\frac{64G^2}{Z}+\gamma^\Mid} S_{T,\is}^{\x}+\frac{C_1}{2Z}S_T^{\x} \label{eq:thm3 cvx meta}\\
    - {} & \frac{C_0}{2}S_T^{\Top}- \gamma^\Mid \sumTT \sumM q_{t,j}^\Top \sumN q_{t,j,i}^\Mid\norm{\x_{t,i} - \x_{t-1,i}}^2  - \gamma^\Top \sumTT \sumM q_{t,j}^\Top \|\q_{t,j}^\Mid - \q_{t-1,j}^\Mid\|_1^2, \notag
\end{align}
where $C_1=128(D^2L^2+G^2)$.

\paragraph{Base Regret Analysis.}
In this part, we first provide different decompositions of the empirical gradient variation defined on surrogates for strongly convex, exp-concave, and convex functions, respectively, and then analyze the base regret in the corresponding cases.

For \emph{$\lambda$-strongly convex} functions, we bound the empirical gradient variation on surrogates, i.e., $\Vb_{T,\is}^{\scvx} \define \sumTT \|\nabla \hsc_{t,\is}(\x_{t,\is}) - \nabla \hsc_{t-1,\is}(\x_{t-1,\is})\|^2$, by
\begin{align*}
  \Vb_{T,\is}^{\scvx} = {} & \sumTT \|(\g_t + \lambda_\is (\x_{t,\is} - \x_t)) - (\g_{t-1} + \lambda_\is (\x_{t-1,\is} - \x_{t-1}))\|^2\\
  \le {} & 2 \sumTT \|\g_t - \g_{t-1}\|^2 + 2 \lambda_\is^2 \sumTT \|(\x_{t,\is} - \x_t) - (\x_{t-1,\is} - \x_{t-1})\|^2\\
  \le {} & 4V_T + (4 + 4L^2) S_T^\x + 4 S^\x_{T,\is} \tag*{(by~$\lambda \in [1/T, 1]$)},
\end{align*}
where the first step uses the definition of $\nabla \hsc_{t,i}(\x) = \g_t + \lambda_i (\x-\x_{t})$.
For \emph{$\alpha$-exp-concave} functions, we control the $\Vb^{\exp}_{T,\is} \define \sumTT \|\nabla \hexp_{t,\is}(\x_{t,\is}) - \nabla \hexp_{t-1,\is}(\x_{t-1,\is})\|^2$ as 
\begin{align*}
    \Vb^{\exp}_{T,\is} = {} & \sumTT \|(\g_t + \alpha_\is \g_t \g_t^\top (\x_{t,\is} - \x_t)) - (\g_{t-1} + \alpha_\is \g_{t-1} \g_{t-1}^\top (\x_{t-1,\is} - \x_{t-1}))\|^2\\
    \le {} & 2 \sumTT \|\g_t - \g_{t-1}\|^2 + 2 \alpha_\is^2 \sumTT \|\g_t \g_t^\top (\x_{t,\is} - \x_t) - \g_{t-1} \g_{t-1}^\top (\x_{t-1,\is} - \x_{t-1})\|^2\\
    \le {} & 4V_T + 4L^2 S_T^\x + 4 D^2 \sumTT \|\g_t \g_t^\top - \g_{t-1} \g_{t-1}^\top\|^2 \tag*{(by~\eqref{eq:empirical-gradient-variation})}\\
    & \qquad + 4 G^4 \sumTT \|(\x_{t,\is} - \x_t) - (\x_{t-1,\is} - \x_{t-1})\|^2 \tag*{(by~$\alpha \in [1/T, 1]$)}\\
    \le {} & C_9 V_T + C_{10} S_T^\x + 8G^4 S^\x_{T,\is},
  \end{align*}
  where the first step uses the definition of $\nabla \hexp_{t,i}(\x) = \g_t + \alpha_i \g_t \g_t^\top (\x - \x_t)$ and the last step holds by setting $C_9 = 4 + 32D^2 G^2$ and $C_{10} = 4L^2 + 32D^2 G^2 L^2 + 8G^4$. For \emph{convex} functions, the empirical gradient variation $\Vb^{\cvx}_{T,\is} \define \sumTT \|\nabla f_t(\x_t) - \nabla f_{t-1}(\x_{t-1})\|^2$ can be bounded by $\Vb^{\cvx}_{T,\is} \le 2V_{T}+2L^2S_T^{\x}$. To conclude, for different curvature types, we provide correspondingly different analysis of the empirical gradient variation on surrogates:
  \begin{equation}
    \label{eq:thm3 VbT-to-VT}
    \Vb_{T,\is}^{\{\scvx,\exp,\cvx\}} \le 
    \begin{cases}
        4V_T + (4 + 4L^2) S_T^\x + 4 S^\x_{T,\is}, & \text{when } \{f_t\}_{t=1}^T \text{ are $\lambda$-strongly convex}, \\[2mm]
        C_9 V_T + C_{10} S_T^\x + 8G^4 S^\x_{T,\is}, & \text{when } \{f_t\}_{t=1}^T \text{ are $\alpha$-exp-concave}, \\[2mm]
        2V_{T}+2L^2S_T^{\x}, & \text{when } \{f_t\}_{t=1}^T \text{ are convex}.
    \end{cases}
    \end{equation}
    In the following, we analyze the base regret for different curvature types.
    For \emph{$\lambda$-strongly convex} functions, by \pref{lem:str-convex-base}, the $\is$-th base learner guarantees the following:
    \begin{align}
        & \base \le \frac{16G^2}{\lambda_{\is}} \log \sbr{1 +\lambda_{\is} \Vb_{T,\is}^{\scvx} } + \frac{1}{4}\Bottomcoef D^2  - \frac{1}{8} \Bottomcoef S_{T,\is}^{\x} + \O(1)\label{eq:thm3 scvx}\\
        \le {} & \frac{16G^2}{\lambda_{\is}}\log (1 + 4\lambda_\is V_T + (4+ 4L^2) \lambda_\is S_T^\x + 4\lambda_\is S^\x_{T,\is})+ \frac{1}{4}\Bottomcoef D^2  - \frac{1}{8} \Bottomcoef S_{T,\is}^{\x}  \notag\\
        \le {} & \frac{32G^2}{\lambda}\log (1 + 4 \lambda V_T) + (64 + 64L^2)G^2 S_T^\x + 64G^2 S^\x_{T,\is}+ \frac{1}{4}\Bottomcoef D^2  - \frac{1}{8} \Bottomcoef S_{T,\is}^{\x} ,\label{eq:thm3 scvx base}
    \end{align}
  where the constant $\O(1)$ is omitted from the second step and the last step is due to $\log (1+x) \le x$ for $x \ge 0$. 
  
  For \emph{$\alpha$-exp-concave} functions, by \pref{lem:exp-concave-base}, the $\is$-th base learner guarantees:
  \begin{align}
    & \base \le \frac{16d}{\alpha_{\is}} \log \sbr{1 + \frac{\alpha_{\is}}{8\Bottomcoef d}\Vb^{\exp}_{T,\is} } + \frac{1}{2}\Bottomcoef D^2  - \frac{1}{4} \Bottomcoef S_{T,\is}^{\x} + \O(1)\label{eq: thm3 exp-concave}\\
    \le{}& \frac{16d}{\alpha_{\is}}\log \sbr{1 + \frac{\alpha_\is C_9}{8 \Bottomcoef d} V_T + \frac{\alpha_\is C_{10}}{8 \Bottomcoef d} S_T^\x + \frac{\alpha_\is G^4}{\Bottomcoef d} S^\x_{T,\is}} + \frac{1}{2}\Bottomcoef D^2  - \frac{1}{4} \Bottomcoef S_{T,\is}^{\x} + \O(1)\notag\\
    \le{}& \frac{32d}{\alpha} \log \sbr{1 + \frac{\alpha C_9}{8 \Bottomcoef d} V_T} + \frac{2C_{10}}{\Bottomcoef} S_T^\x + \sbr{\frac{16G^4}{\Bottomcoef} - \frac{\Bottomcoef}{4}} S^\x_{T,\is} + \frac{1}{2}\Bottomcoef D^2 + \O(1),\label{eq:thm3 exp-concave base}
  \end{align}
  where the last step is due to $\log (1+x) \le x$ for $x \ge 0$. 
  
  For \emph{convex} functions, by \pref{lem:convex-base}, the convex base learner guarantees the following:
  \begin{align}
    &\base\le 5D\sqrt{1+ \Vb^{\cvx}_{T,\is}} + \Bottomcoef D^2  - \frac{1}{4} \Bottomcoef S_{T,\is}^{\x} + \O(1)\notag\\
    \le {} &  5D\sqrt{1+ 2V_{T}+2L^2S_T^{\x}} + \Bottomcoef D^2  - \frac{1}{4} \Bottomcoef S_{T,\is}^{\x} + \O(1)\notag\\
    &\le5D\sqrt{1+2 V_{T}}+10DL^2S_T^{\x} + \Bottomcoef D^2  - \frac{1}{4} \Bottomcoef S_{T,\is}^{\x} + \O(1).\label{eq:thm3 cvx base}
  \end{align}

  \paragraph{Overall Regret Analysis.}
  For \emph{$\lambda$-strongly convex} functions, combining \pref{eq:thm3 scvx meta} and \pref{eq:thm3 scvx base}  and denoting  $C_{11}=128G^2(1+L^2)$, we obtain
  \begin{align*}
    &\Reg_T\le \O\sbr{\frac{1}{\lambda}\log V_T} + (64 + 64L^2)G^2 S_T^\x + \sbr{64G^2+\gamma^\Mid-\frac{1}{8} \Bottomcoef }S^\x_{T,\is}+ \frac{1}{4}\Bottomcoef D^2 \\
    &-\frac{C_0}{2}S_T^{\Top}- \gamma^\Mid \sumTT \sumM q_{t,j}^\Top \sumN q_{t,j,i}^\Mid\norm{\x_{t,i} - \x_{t-1,i}}^2  - \gamma^\Top \sumTT \sumM q_{t,j}^\Top \|\q_{t,j}^\Mid - \q_{t-1,j}^\Mid\|_1^2\\
    \le {} &  \O\sbr{\frac{1}{\lambda}\log V_T}  + \sbr{64G^2+\gamma^\Mid-\frac{1}{8} \Bottomcoef }S^\x_{T,\is}+ \frac{1}{4}\Bottomcoef D^2  \\
    & \qquad +\sbr{2D^2C_{11}-\frac{C_0}{2}}S_T^{\Top}+\sbr{C_{11}-\gamma^\Mid} \sumTT \sumM q_{t,j}^\Top \sumN q_{t,j,i}^\Mid\norm{\x_{t,i} - \x_{t-1,i}}^2 \\
    & \qquad +\sbr{ 2D^2C_{11}-\gamma^\Top} \sumTT \sumM q_{t,j}^\Top \|\q_{t,j}^\Mid - \q_{t-1,j}^\Mid\|_1^2\le \O\sbr{\frac{1}{\lambda}\log V_T},
  \end{align*}
  where the second step follows from \pref{lem:decompose-three-layer} and the last step requires $\gamma^{\Top} \ge  2D^2C_{11}$, $\gamma^{\Mid} \ge  C_{11}$, $\Bottomcoef \ge  512G^2+8\gamma^{\Mid}$, and $C_0 \ge  4D^2C_{11}$.

  For \emph{$\alpha$-exp-concave} functions, combining \pref{eq:thm3 exp-concave meta} and \pref{eq:thm3 exp-concave base}, we obtain
  \begin{align*}
    &\Reg_T\le \O\sbr{\frac{d}{\alpha}\log V_T}+ \frac{2C_{10}}{\Bottomcoef} S_T^\x + \sbr{\frac{16G^4}{\Bottomcoef}+\gamma^{\Mid} - \frac{\Bottomcoef}{4}} S^\x_{T,\is} + \frac{1}{2}\Bottomcoef D^2   \\
    &-\frac{C_0}{2}S_T^{\Top}- \gamma^\Mid \sumTT \sumM q_{t,j}^\Top \sumN q_{t,j,i}^\Mid\norm{\x_{t,i} - \x_{t-1,i}}^2  - \gamma^\Top \sumTT \sumM q_{t,j}^\Top \|\q_{t,j}^\Mid - \q_{t-1,j}^\Mid\|_1^2\\
    \le {} &  \O\sbr{\frac{d}{\alpha}\log V_T} +  \sbr{\frac{16G^4}{\Bottomcoef}+\gamma^{\Mid} - \frac{\Bottomcoef}{4}}S^\x_{T,\is}+ \frac{1}{2}\Bottomcoef D^2  \\
    &+\sbr{\frac{8D^2C_{10}}{\Bottomcoef}-\frac{C_0}{2}}S_T^{\Top}+\sbr{\frac{4C_{10}}{\Bottomcoef}-\gamma^\Mid} \sumTT \sumM q_{t,j}^\Top \sumN q_{t,j,i}^\Mid\norm{\x_{t,i} - \x_{t-1,i}}^2 \\
    & +\sbr{ \frac{8D^2C_{10}}{\Bottomcoef}-\gamma^\Top} \sumTT \sumM q_{t,j}^\Top \|\q_{t,j}^\Mid - \q_{t-1,j}^\Mid\|_1^2\le\O\sbr{\frac{d}{\alpha}\log V_T},
  \end{align*}
  where the second step follows from \pref{lem:decompose-three-layer} and the last step requires $\gamma^{\Top} \ge  8D^2C_{10}$, $\gamma^{\Mid} \ge  4C_{10}$, $\Bottomcoef \ge  64G^4+4\gamma^{\Mid}$, and $C_0 \ge  16D^2C_{10}$.

  For \emph{convex} functions, combining \pref{eq:thm3 cvx meta} and \pref{eq:thm3 cvx base}, we obtain
  \begin{align*}
    &\Reg_T\le \O\sbr{\sqrt{V_T\log V_T}}+\sbr{10DL^2+\frac{C_1}{2Z}}S_T^{\x}+\sbr{\frac{64G^2}{Z}+\gamma^{\Mid}-\frac{\Bottomcoef}{4}}S_{T,\is}^{\x}+\Bottomcoef D^2\\
    &-\frac{C_0}{2}S_T^{\Top}- \gamma^\Mid \sumTT \sumM q_{t,j}^\Top \sumN q_{t,j,i}^\Mid\norm{\x_{t,i} - \x_{t-1,i}}^2  - \gamma^\Top \sumTT \sumM q_{t,j}^\Top \|\q_{t,j}^\Mid - \q_{t-1,j}^\Mid\|_1^2\\ 
    \le {} & \O\sbr{\sqrt{V_T\log V_T}}+\sbr{\frac{64G^2}{Z}+\gamma^{\Mid}-\frac{\Bottomcoef}{4}}S_{T,\is}^{\x}+\sbr{40D^3L^2+\frac{2D^2C_1}{Z}-\frac{C_0}{2}}S_T^{\Top}\\
    &+\Bottomcoef D^2+\sbr{20DL^2+\frac{C_1}{Z}-\gamma^\Mid}\sumTT \sumM q_{t,j}^\Top\sumN q_{t,j,i}^\Mid\norm{\x_{t,i} - \x_{t-1,i}}^2\\
    & +\sbr{40D^3L^2+\frac{2D^2C_1}{Z}- \gamma^\Top} \sumTT \sumM q_{t,j}^\Top \|\q_{t,j}^\Mid - \q_{t-1,j}^\Mid\|_1^2\le\O\sbr{\sqrt{V_T\log V_T}},
  \end{align*}
  where the second step follows from \pref{lem:decompose-three-layer} and the last step requires $\gamma^{\Top} \ge  40D^3L^2+\frac{2D^2C_1}{Z}$, $\gamma^{\Mid} \ge  20DL^2+\frac{C_1}{Z}$, $\Bottomcoef \ge  256G^2+4\gamma^{\Mid}$, and $C_0 \ge  80D^3L^2+\frac{4D^2C_1}{Z}$.

At last, we determine the specific values of $C_0$, $\gamma^{\Top}$, and $\gamma^{\Mid}$.
These parameters need to satisfy the following requirements:
\begin{gather*}
  C_0 \ge  1,\ C_0 \ge  8D,\ C_0 \ge  4\gamma^{\Top},\ C_0 \ge  4D^2C_{11}, \ C_0 \ge  16D^2C_{10}, \ C_0 \ge  80D^3L^2+\frac{4D^2C_1}{Z},\\ 
  \gamma^{\Top} \ge  2D^2C_{11},\ \gamma^{\Top} \ge  8D^2C_{10},\ \gamma^{\Top} \ge  40D^3L^2+\frac{2D^2C_1}{Z},\ \gamma^{\Mid} \ge  C_{11},\\ \gamma^{\Mid} \ge  4C_{10},\text{ and } \gamma^{\Mid} \ge  20DL^2+\frac{C_1}{Z}.
\end{gather*}
As a result, we set 
\begin{gather*}
  C_0=\max\bbr{1,8D,4\gamma^{\Top},4D^2C_{11},16D^2C_{10},80D^3L^2+4D^2C_1},\\
  \gamma^\Top=\max\bbr{2D^2C_{11},8D^2C_{10},40D^3L^2+2D^2C_1},
  \gamma^{\Mid}=\max\bbr{C_{11},4C_{10}, 20DL^2+C_1},
\end{gather*}
where $Z=\max\{GD+\gamma^\Mid D^2, 1+\gamma^\Mid D^2+2\gamma^\Top\}$, $C_1=128(D^2L^2+G^2)$, $C_{10}=4L^2 + 32D^2 G^2 L^2 + 8G^4$, and $C_{11}=128G^2(1+L^2)$.

\end{proof}

\subsection{Proof of Theorem~\ref{thm:unigrad-bregman-1grad}}
\label{app:unigrad-bregman-1grad}
\begin{proof}
    Recall that, for simplicity, we denote by $\g_t \define \nabla f_t(\x_t)$.
    As shown in \pref{subsec:bregman-one-gradient}, the empirical gradient variation can bounded as 
    \begin{align}
        & \Vb_T \le 3 \sumTT \|\nabla f_t(\x_t) - \nabla f_t(\xs)\|^2 + 3 \sumTT \|\nabla f_t(\xs) - \nabla f_{t-1}(\xs)\|^2 \notag\\
        & + 3 \sumTT \|\nabla f_{t-1}(\xs) - \nabla f_{t-1}(\x_{t-1})\|^2 \le 6L \sumTT \D_{f_t}(\xs, \x_t) + 3 V_T + 6L \sumTT \D_{f_{t-1}}(\xs, \x_{t-1})\notag\\
        & \le 3V_T + 12L \sumT \D_{f_t}(\xs, \x_t). \label{eq:empirical-VT-de}
    \end{align}

    \paragraph{Regret Decomposition.} 
    For \emph{$\lambda$-strongly convex} functions, we decompose the regret as
    \begin{align}
        \Reg_T \le {} & \underbrace{\sumT \inner{\g_t}{\x_t - \x_{t,\is}} - \frac{\lambda_\is}{4} \sumT \|\x_t - \x_{t,\is}\|^2}_{\meta} \tag*{(by $\lambda_\is \le \lambda \le 2 \lambda_\is$)}\\
        & \qquad\qquad + \underbrace{\sumT \hsc_{t,\is}(\x_{t,\is}) - \sumT \hsc_{t,\is}(\xs)}_{\base} - \frac{1}{2}\sumT \D_{f_t}(\xs, \x_t), \label{eq:thm4 sc-reg-de}
    \end{align}
    due to the definition of the surrogate $\hsc_{t,i}(\x) \define \inner{\g_t}{\x} + \frac{\lambda_i}{4} \|\x - \x_t\|^2$, where $\lambda_i \in \H$ in~\eqref{eq:candidate-pool}.
    For \emph{$\alpha$-exp-concave} functions, we decompose the regret as 
    \begin{align}
        \Reg_T = {} & \sumT \inner{\g_t}{\x_t - \xs} - \frac{1}{2}\sumT \D_{f_t}(\xs, \x_t) - \frac{1}{2}\sumT \D_{f_t}(\xs, \x_t) \tag*{(by (\ref{eq:negative-Bregman}))}\\
        \le {} & \sumT \inner{\g_t}{\x_t - \xs} - \frac{\alpha}{4}\sumT \inner{\g_t}{\x_t - \xs}^2 - \frac{1}{2}\sumT \D_{f_t}(\xs, \x_t) \notag\\
        \le {} & \underbrace{\sumT \inner{\g_t}{\x_t - \x_{t,\is}} - \frac{\alpha_\is}{4} \sumT \inner{\g_t}{\x_t - \x_{t,\is}}^2}_{\meta} \tag*{(by $\alpha_\is \le \alpha \le 2 \alpha_\is$)}\\
        & \qquad\qquad + \underbrace{\sumT \hexp_{t,\is}(\x_{t,\is}) - \sumT \hexp_{t,\is}(\xs)}_{\base} - \frac{1}{2}\sumT \D_{f_t}(\xs, \x_t), \label{eq:thm4 exp-reg-de}
    \end{align}
    where the second step is due to the definitions of exp-concavity and Bregman divergence and the last step is due to the definition of the surrogate $\hexp_{t,i}(\x) \define \inner{\g_t}{\x} + \frac{\alpha_i}{4} \inner{\nabla f_t(\x_t)}{\x - \x_t}^2$, where $\alpha_i \in \H$, defined in~\eqref{eq:candidate-pool}.
    For \emph{convex} functions, by defining $\hc_{t,i}(\x) \define \inner{\g_t}{\x}$, we have
    \begin{align}
        \Reg_T = {} & \sumT \inner{\g_t}{\x_t - \xs} - \sumT \D_{f_t}(\xs, \x_t) \tag*{(by (\ref{eq:negative-Bregman}))}\\
        = {} & \underbrace{\sumT \inner{\g_t}{\x_t - \x_{t,\is}}}_{\meta} + \underbrace{\sumT \hc_{t,\is}(\x_{t,\is}) - \sumT \hc_{t,\is}(\xs)}_{\base} - \sumT \D_{f_t}(\xs, \x_t). \label{eq:thm4 cvx-reg-de}
    \end{align}

    \paragraph{Meta Regret Analysis.} 
    For \emph{strongly convex} functions and \emph{exp-concave} functions, we adopt a similar meta regret analysis as used in \pref{app:main-Bregman}.

    For \emph{convex} functions, similar to \pref{eq:cvx-meta}, we obtain
    \begin{align}
      & \meta \le C_0 \sqrt{4G^2D^2 + \sumT \inner{\g_t - \g_{t-1}}{\x_t - \x_{t,\is}}^2} + 2GDC_2\notag\\
      \le{}& C_0 \sqrt{1 + D^2 \Vb_T} + 2GDC_2 \le C_0 \sqrt{4G^2D^2 + 3D^2 V_T + 12LD^2 \sumT \D_{f_t}(\xs, \x_t)} + 2GDC_2\notag\\
      \le {} & \O(\sqrt{V_T}) + C_0\sqrt{12LD^2 \sumT \D_{f_t}(\xs, \x_t)} \le \O(\sqrt{V_T}) + \O(C_{12}) + \frac{C_0}{2C_{12}} \sumT \D_{f_t}(\xs,\x_t),
      \label{eq:thm4 cvx-meta}
  \end{align}
  where the second step adopts \pref{assum:domain-boundedness} and the third step is by \pref{eq:meta-decompose}.
  $C_{12}$ is used to ensure the positive Bregman divergence term to be canceled and will be specified finally.
  
    \paragraph{Base Regret Analysis.} 
    Following the analysis structure of \pref{app:unigrad-correct-1grad}, we first provide different decompositions of the empirical gradient variation defined on surrogates for strongly convex, exp-concave, and convex functions, respectively, and then analyze the base regret in the corresponding cases.

    For \emph{$\lambda$-strongly convex} functions, we bound the empirical gradient variation on surrogates, i.e.,  $\Vb_{T,\is}^{\scvx} \define \sumTT \|\nabla \hsc_{t,\is}(\x_{t,\is}) - \nabla \hsc_{t-1,\is}(\x_{t-1,\is})\|^2$, by
    \begin{align}
      \Vb_{T,\is}^{\scvx}= {} & \sumTT \norm{\g_t + \frac{\lambda_\is}{2} (\x_{t,\is} - \x_t) - \g_{t-1} - \frac{\lambda_\is}{2} (\x_{t-1,\is} - \x_{t-1})}^2 \notag\\
      \le {} & 3\Vb_T + 3 \sumTT \norm{\frac{\lambda_\is}{2} (\x_{t,\is} - \x_t)}^2 + 3 \sumTT \norm{\frac{\lambda_\is}{2} (\x_{t-1,\is} - \x_{t-1})}^2 \label{eq:scvx-base-2}\\
      \le {} & 9 V_T + 36L \sumT \D_{f_t}(\xs, \x_t) + 2 \lambda_\is^2 \sumTT \norm{\x_{t,\is} - \x_t}^2, \tag*{(by \eqref{eq:empirical-VT-de})}
    \end{align}
    where the first step is due to the property of the surrogate: $\nabla \hsc_{t,i}(\x_{t,i}) = \g_t + \frac{\lambda_i}{2} (\x_{t,i} - \x_t)$, and the second step is due to the Cauchy-Schwarz inequality.
    For \emph{$\alpha$-exp-concave} functions, we control the $\Vb^{\exp}_{T,\is} \define \sumTT \|\nabla \hexp_{t,\is}(\x_{t,\is}) - \nabla \hexp_{t-1,\is}(\x_{t-1,\is})\|^2$ as 
    \begin{align}
      \Vb^{\exp}_{T,\is} 
      = {} & \sumTT \norm{\g_t + \frac{\alpha_\is}{2} \g_t \inner{\g_t}{\x_t - \x_{t,\is}} - \g_{t-1} - \frac{\alpha_\is}{2} \g_{t-1} \inner{\g_{t-1}}{\x_{t-1} - \x_{t-1,\is}}}^2 \notag\\
      \le {} & 3 \Vb_T + 3 \sumTT \norm{\frac{\alpha_\is}{2} \g_t \inner{\g_t}{\x_t - \x_{t,\is}}}^2 + 3 \sumTT \norm{\frac{\alpha_\is}{2} \g_{t-1} \inner{\g_{t-1}}{\x_{t-1} - \x_{t-1,\is}}}^2 \notag\\
      \le {} & 3 \Vb_T + 6 \sumT \norm{\frac{\alpha_\is}{2} \g_t \inner{\g_t}{\x_t - \x_{t,\is}}}^2 \label{eq:exp-base-2}\\
      \overset{\eqref{eq:empirical-VT-de}}{\le} {} & 9 V_T + 36L \sumT \D_{f_t}(\xs, \x_t) + 2 \alpha_\is^2 G^2 \sumT \inner{\g_t}{\x_t - \x_{t,\is}}^2, \tag*{(by \pref{assum:gradient-boundedness})}
    \end{align}
      where the first step is due to the property of the surrogate function: $\nabla \hexp_{t,i}(\x_{t,i}) = \g_t + \frac{\alpha_i}{2} \g_t \inner{\g_t}{\x_t - \x_{t,i}}$, the second step is by the Cauchy-Schwarz inequality.
      For \emph{convex} functions, the empirical gradient variation $\Vb^{\cvx}_{T,\is} \define \sumTT \|\nabla f_t(\x_t) - \nabla f_{t-1}(\x_{t-1})\|^2$ can be bounded by $\Vb^{\cvx}_{T,\is} \le 2V_{T}+2L^2S_T^{\x}$.
      To conclude, for different curvature types, we provide correspondingly different analysis of the empirical gradient variation on surrogates:
\begin{equation}
  \label{eq:thm4 VbT-to-VT}
  \Vb_{T,\is}^{\{\scvx,\exp,\cvx\}} \le
  \begin{cases}
      9 V_T + 36L \sumT \D_{f_t}(\xs, \x_t) + 2 \lambda_\is^2 \sumTT \norm{\x_{t,\is} - \x_t}^2,
      & \text{ ($\lambda$-strongly convex)}, \\[2.5ex]
      
      9 V_T + 36L \sumT \D_{f_t}(\xs, \x_t) + 2 \alpha_\is^2 G^2 \sumT \inner{\g_t}{\x_t - \x_{t,\is}}^2,
      & \text{ ($\alpha$-exp-concave)}, \\[2.5ex]

      3V_{T} + 12L\sumT \D_{f_t}(\xs, \x_t),
      & \text{(convex)}.
  \end{cases}
  \end{equation}

        In the following, we analyze the base regret for different curvature types.
    For \emph{$\lambda$-strongly convex} functions, when using the update rule~\eqref{eq:base-sc}, according to \pref{lem:str-convex-base}, the base regret can be bounded as 
    \begin{align}
      &\base \le \frac{16G^2}{\lambda_\is} \log \sbr{1 + \lambda_\is \Vb_{T,\is}^{\scvx}} + \O(1)\notag\\
      \le {} & \frac{16G^2}{\lambda_\is} \log \sbr{1 + 9 \lambda_\is V_T + 36L \lambda_\is \sumT \D_{f_t}(\xs, \x_t) + 2 \lambda_\is^3 \sumT \norm{\x_{t,\is} - \x_t}^2}\notag\\
      \le {} & \O\sbr{\frac{1}{\lambda}\log (C_{13} V_T)} + \frac{16 G^2}{C_{13} \lambda_\is} \sbr{36L \lambda_\is \sumT \D_{f_t}(\xs, \x_t) + 2 \lambda_\is^3 \sumT \norm{\x_{t,\is} - \x_t}^2}\notag\\
      \le {} &   \O\sbr{\frac{1}{\lambda}\log V_T} + \frac{576 G^2 L}{C_{13}} \sumT \D_{f_t}(\xs, \x_t) + \frac{32G^2}{C_{13}} \sumT \norm{\x_{t,\is} - \x_t}^2 + \O(\log C_{13}),\label{eq:thm4 scvx base}
    \end{align}
    where the third step requires $C_{13} \ge 1$ by \pref{lem:ln-de} and uses the property of the best base learner, i.e., $\lambda_\is \le \lambda \le 2 \lambda_\is$.
    The last step is due to $\lambda_i \le 1$.
    For \emph{exp-concave} functions, when using the update rule~\eqref{eq:base-exp}, by \pref{lem:exp-concave-base}, the base regret can be bounded as
      \begin{align}
        &\base\le \frac{16d}{\alpha_\is} \log \sbr{1 + \frac{\alpha_\is}{8d} \Vb_{T,\is}^{\exp}} + \O(1) \notag\\
        \le {} & \frac{16d}{\alpha_\is} \log \sbr{1 + \frac{9\alpha_\is}{8d} V_T + \frac{9\alpha_\is L}{2d} \sumT \D_{f_t}(\xs, \x_t) + \frac{\alpha_\is^3 G^2}{4d} \sumT \inner{\g_t}{\x_t - \x_{t,\is}}^2}\notag\\
        \le {} & \O\sbr{\frac{d}{\alpha} \log (C_{14} V_T)} + \frac{16d}{C_{14} \alpha_\is} \sbr{\frac{9\alpha_\is L}{2d} \sumT \D_{f_t}(\xs, \x_t) + \frac{\alpha_\is^3 G^2}{4d} \sumT \inner{\g_t}{\x_t - \x_{t,\is}}^2}\notag\\
        \le {} & \O\sbr{\frac{d}{\alpha}\log V_T} + \frac{72L}{C_{14}} \sumT \D_{f_t}(\xs, \x_t) + \frac{4G^2}{C_{14}} \sumT \inner{\g_t}{\x_t - \x_{t,\is}}^2 + \O(\log C_{14}).\label{eq:thm4 exp base}
      \end{align}
      The third step requires $C_{14} \ge 1$ by \pref{lem:ln-de} and uses the property of the best base learner, i.e., $\alpha_\is \le \alpha \le 2 \alpha_\is$.
      The last step is because of $\alpha_i \le 1$.
      For \emph{convex} functions, when using the update rule~\eqref{eq:base-cvx}, by \pref{lem:convex-base}, we obtain 
    \begin{align}
        & \base \le 5D \sqrt{1 + \Vb_{T,\is}^\cvx } + \O(1)\notag\\
        = {} & 5D \sqrt{1 + \Vb_T} + \O(1) \le \O(\sqrt{V_T}) + 5D \sqrt{12L \sumT \D_{f_t}(\xs, \x_t)} \tag*{(by \eqref{eq:empirical-VT-de})}\\
        \le {} & \O(\sqrt{V_T}) + \O(C_{15}) + \frac{5DL}{2C_{15}} \sumT \D_{f_t}(\xs, \x_t),\label{eq:thm4 cvx base}
    \end{align}
    where the  second step is due to the property of the surrogate function: $\nabla \hc_{t,i}(\x_{t,i}) = \g_t$, and the last step uses AM-GM inequality (\pref{lem:AM-GM}).
    $C_{15}$ is a constant to be specified.
    
    \paragraph{Overall Regret Analysis.} 
    For \emph{$\lambda$-strongly convex} functions, plugging \pref{eq:scvx-meta} and \pref{eq:thm4 scvx base} into \pref{eq:thm4 sc-reg-de}, we obtain
    \begin{align*}
        \Reg_T\le{}& \O\sbr{\frac{1}{\lambda}\log {V_T}}+\O(C_3+\log C_{13})+\sbr{\frac{576G^2L}{C_{13}}-\frac{1}{2}} \sumT \D_{f_t}(\xs, \x_{t})\\
        &+\sbr{\frac{C_0D^2}{2 C_3}+\frac{32G^2}{C_{13}} - \frac{\lambda}{4}}\sumT \norm{\x_t - \x_{t,\is}}^2
        \le\O\sbr{\frac{1}{\lambda}\log {V_T}},
     \end{align*}
    by choosing $C_3={4C_0D^2}/{\lambda}$ and $C_{13}=\max\{1,256G^2/\lambda, 1152G^2L\}$.
    For \emph{$\alpha$-exp-concave} functions, plugging \pref{eq:exp-meta} and \pref{eq:thm4 exp base} into \pref{eq:thm4 exp-reg-de}, we obtain
    \begin{align*}
        \Reg_T\le{}& \O\sbr{\frac{d}{\alpha}\log {V_T}}+\O(C_4+\log C_{14})+\sbr{\frac{72L}{C_{14}}-\frac{1}{2}} \sumT \D_{f_t}(\xs, \x_{t})\\
        &+\sbr{\frac{C_0}{2 C_4}+\frac{4G^2}{C_{14}} - \frac{\alpha}{4}} \sumT \inner{\g_t}{\x_t - \x_{t,\is}}^2
        \le\O\sbr{\frac{d}{\alpha}\log {V_T}},
    \end{align*}
    by choosing $C_4={4C_0}/{\alpha}$ and $C_{14}=\max\{1,144L,32G^2/\alpha\}$.
    For \emph{convex} functions, plugging \pref{eq:thm4 cvx-meta} and \pref{eq:thm4 cvx base} into \pref{eq:thm4 cvx-reg-de}, we obtain
    \begin{align*}
    \Reg_T\le{}& \O(\sqrt{V_T})+\O(C_{12}+C_{15})+\sbr{\frac{5DL}{2C_{15}}+\frac{C_0}{2C_{12}}-1} \sumT \D_{f_t}(\xs, \x_{t}) \le  \O(\sqrt{V_T}),
    \end{align*}
    by choosing $C_{12}=C_0$ and $C_{15}=5DL$.

    Note that the constants $C_3, C_4, C_{12}, C_{13}, C_{14},C_{15}$ only exist in analysis and hence our choices of them are feasible.
\end{proof}
\section{Omitted Proofs for Section~\ref{sec:applications}}
\label{appendix:proof-applications}

In this section, we present the formal proofs supporting the theoretical implications of our methods, specifically regarding the small-loss and gradient-variance bounds discussed in Section~\ref{subsec:implication}.
We also provide complete proofs for the applications of our results to the SEA model (Section~\ref{subsec:applications-SEA}) and to online games (Section~\ref{subsec:applications-games}), including detailed proofs of \pref{cor:FT-WT-Correct}, \pref{cor:FT-bregman}, \pref{thm:SEA-correct}, \pref{thm:SEA-bregman}, and \pref{thm:game}.
Finally, we provide the proof of the extension to anytime variant (\pref{subsec:extension-anytime}).

\subsection{Proof of Corollary~\ref{cor:FT-WT-Correct}}
\label{app:FT-Correct}
    We prove the small-loss regret guarantees of \Correctpp in \pref{app:small-loss-correct-proof} and the gradient-variance regret guarantees of \Correctpp in \pref{app:gradient-variance-correct-proof}.

    \subsubsection{Small-Loss Regret}
    \label{app:small-loss-correct-proof}
    \begin{proof}
    For simplicity, we define $F_T^{\x} \triangleq \sumT f_t(\x_t)-\sumT \min_{\x\in\X_+}f_t(\x)$.
    We adopt the same regret decomposition strategy as utilized in \pref{app:unigrad-correct-1grad}.

    \paragraph{Meta Regret Analysis.}
    Recall that the normalization factor $Z=\max\{GD+\gamma^\Mid D^2, 1+\gamma^\Mid D^2+2\gamma^\Top\}$.
    For \emph{strongly convex} and \emph{exp-concave} functions, we follow the same meta regret analysis as used in \pref{app:unigrad-correct-1grad}.

    For \emph{convex} functions, we give a different analysis for $\Vs=\sumTT (\ell^\Mid_{t,\js,\is} - m^\Mid_{t,\js,\is})^2$. From \pref{lem:universal-optimism}, it holds that
    \begin{align}
    \Vs \le{}& \frac{2 D^2}{Z^2} \sumT\norm{\g_t-\g_{t-1}}^2+\frac{2 G^2}{Z^2} \sumT\norm{\mathbf{x}_{t, i^{\star}}-\mathbf{x}_{t-1, i^{\star}}+\mathbf{x}_{t-1}-\mathbf{x}_t}^2\notag\\
    \le{}&\frac{8D^2}{Z^2}\sumT \norm{\g_t}^2 + \frac{4G^2}{Z^2} S_{T,\is}^{\x} +\frac{4G^2}{Z^2} S_T^{\x} \le \frac{32D^2L}{Z^2} F_T^{\x} + \frac{4G^2}{Z^2} S_{T,\is}^{\x} +\frac{4G^2}{Z^2} S_T^{\x},\label{eq:cor eq1}
    \end{align}
    where the last step is by the self-bounding property of $\|\nabla f(\x)\|_2^2 \le 4 L (f(\x)  - \min_{\x \in \X_+} f(\x))$ for any $\x \in \X$.
    Thus, the meta regret is bounded as
    \begin{align}
        & \meta 
        \le \frac{Z}{\varepsilon^\Top_{\js}} \log \frac{N}{3 C_0^2 (\varepsilon^\Top_{\js})^2} + 32 Z\varepsilon^\Top_{\js} \Vs+ \gamma^\Mid S_{T,\is}^{\x}-\frac{C_0}{2}S_T^{\Top}\notag\\
        {}&-\frac{C_0}{2}S_T^{\Top}- \gamma^\Mid \sumTT \sumM q_{t,j}^\Top \sumN q_{t,j,i}^\Mid\norm{\x_{t,i} - \x_{t-1,i}}^2  - \gamma^\Top \sumTT \sumM q_{t,j}^\Top \|\q_{t,j}^\Mid - \q_{t-1,j}^\Mid\|_1^2\notag \\
        \le {} &  \frac{Z}{\varepsilon^\Top_{\js}} \log \frac{N}{3 C_0^2 (\varepsilon^\Top_{\js})^2} +\frac{1024D^2L}{Z} \varepsilon^\Top_{\js} F_T^{\x}+\sbr{\frac{64G^2}{Z}+\gamma^\Mid} S_{T,\is}^{\x}+\frac{64G^2}{Z}S_T^{\x}\notag\\
        {}&-\frac{C_0}{2}S_T^{\Top}- \gamma^\Mid \sumTT \sumM q_{t,j}^\Top \sumN q_{t,j,i}^\Mid\norm{\x_{t,i} - \x_{t-1,i}}^2  - \gamma^\Top \sumTT \sumM q_{t,j}^\Top \|\q_{t,j}^\Mid - \q_{t-1,j}^\Mid\|_1^2\notag \\
        \le{}& \frac{Z}{\varepsilon^\Top_{\js}} \log \frac{N}{3 C_0^2 (\varepsilon^\Top_{\js})^2} +\frac{1024D^2L}{Z} \varepsilon^\Top_{\js} F_T^{\x}+\sbr{\frac{64G^2}{Z}+\gamma^\Mid}S_{T,\is}^{\x}\notag\\
        {}&+\sbr{\frac{256D^2G^2}{Z}-\frac{C_0}{2}}S_T^{\Top}+\sbr{\frac{128G^2}{Z}-\gamma^{\Mid}}\sumTT \sumM q_{t,j}^\Top \sumN q_{t,j,i}^\Mid\norm{\x_{t,i} - \x_{t-1,i}}^2\notag\\
        {}&+\sbr{\frac{256D^2G^2}{Z}-\gamma^{\Top}}\sumTT \sumM q_{t,j}^\Top \|\q_{t,j}^\Mid - \q_{t-1,j}^\Mid\|_1^2\notag\\
        \le{}&  \frac{Z}{\varepsilon^\Top_{\js}} \log \frac{N}{3 C_0^2 (\varepsilon^\Top_{\js})^2} +\frac{1024D^2L}{Z} \varepsilon^\Top_{\js} F_T^{\x} +\sbr{\frac{64G^2}{Z}+\gamma^\Mid}S_{T,\is}^{\x}, \label{eq:cor1 cvx meta F_T}
    \end{align}
    where the second step is by \pref{lem:tune-eta-1} and requiring $C_0 \ge  8D$, the third step is by \pref{lem:decompose-three-layer}, and the final step requires $\gamma^{\Top} \ge  \frac{256D^2G^2}{Z},  \gamma^{\Mid} \ge  \frac{128G^2}{Z}\text{, and }C_0 \ge  \frac{512D^2G^2}{Z}$.
    \paragraph{Base Regret Analysis.}
    We first provide different decompositions of the empirical gradient variation defined on surrogates for strongly convex, exp-concave, and convex functions, respectively, and then analyze the base regret in the corresponding cases.
    For \emph{$\lambda$-strongly convex} functions, $\Vb_{T,i}^{\scvx} \define \sumTT \|\nabla \hsc_{t,i}(\x_{t,\is}) - \nabla \hsc_{t-1,i}(\x_{t-1,\is})\|^2$ can be bounded by
    \begin{align*}
        \Vb_{T,\is}^{\scvx} = {} & \sumTT \|(\g_t + \lambda_\is (\x_{t,\is} - \x_t)) - (\g_{t-1} + \lambda_\is (\x_{t-1,\is} - \x_{t-1}))\|^2\\
        \le {} & 2 \sumTT \|\g_t - \g_{t-1}\|^2 + 2 \lambda_\is^2 \sumTT \|(\x_{t,\is} - \x_t) - (\x_{t-1,\is} - \x_{t-1})\|^2\\
        \le {} & 32LF_T^{\x} + 4S_T^\x + 4 S^\x_{T,\is} \tag*{(by~$\lambda \in [1/T, 1]$)},
    \end{align*}
    where the first step uses the definition of $\nabla \hsc_{t,i}(\x)$.
    For \emph{$\alpha$-exp-concave} functions, $\Vb^{\exp}_{T,\is} \define \sumTT \|\nabla \hexp_{t,\is}(\x_{t,\is}) - \nabla \hexp_{t-1,\is}(\x_{t-1,\is})\|^2$ can be bounded by
    \begin{align*}
        \Vb^{\exp}_{T,\is} = {} & \sumTT \|(\g_t + \alpha_\is \g_t \g_t^\top (\x_{t,\is} - \x_t)) - (\g_{t-1} + \alpha_\is \g_{t-1} \g_{t-1}^\top (\x_{t-1,\is} - \x_{t-1}))\|^2\notag\\
        \le {} & 2 \sumTT \|\g_t - \g_{t-1}\|^2 + 2 \alpha_\is^2 \sumTT \|\g_t \g_t^\top (\x_{t,\is} - \x_t) - \g_{t-1} \g_{t-1}^\top (\x_{t-1,\is} - \x_{t-1})\|^2\notag\\
        \le {} & 32L F_T^{\x} +  4 D^2 \sumTT \|\g_t \g_t^\top - \g_{t-1} \g_{t-1}^\top\|^2 
        + 4 G^4 \sumTT \|(\x_{t,\is} - \x_t) - (\x_{t-1,\is} - \x_{t-1})\|^2 \tag*{(by~$\alpha \in [1/T, 1]$)}\notag\\
        \le {} & C_{29}F_T^{\x} + 8G^4 S_T^\x + 8G^4 S^\x_{T,\is},
    \end{align*}
    where the first step uses the definition of $\nabla \hexp_{t,i}(\x) = \g_t + \alpha_i \g_t \g_t^\top (\x - \x_t)$ and the last step holds by setting $C_{29} = 32L+256D^2G^2L$.
    
    For \emph{convex} functions, $\Vb_{T,\is}^{\cvx}$ can be bounded by
    $\Vb^{\cvx}_{T,\is} \define \sumTT \|\g_t - \g_{t-1}\|^2\le  16LF_T^{\x}$. 
    To conclude, for different curvature types, we provide correspondingly different analysis of the empirical gradient variation on surrogates:
    \begin{equation}
      \label{eq:cor1 VbT-to-VT}
      \Vb_{T,\is}^{\{\scvx,\exp,\cvx\}} \le 
      \begin{cases}
        32LF_T^{\x} + 4S_T^\x + 4 S^\x_{T,\is}, & \text{when } \{f_t\}_{t=1}^T \text{ are $\lambda$-strongly convex}, \\[2mm]
        C_{29}F_T^{\x} + 8G^4 S_T^\x + 8G^4 S^\x_{T,\is}, & \text{when } \{f_t\}_{t=1}^T \text{ are $\alpha$-exp-concave}, \\[2mm]
        16LF_T^{\x}, & \text{when } \{f_t\}_{t=1}^T \text{ are convex}.
      \end{cases}
      \end{equation}

      In the following, we analyze the base regret for different curvature types. For \emph{$\lambda$-strongly convex} functions, by \pref{lem:str-convex-base}, the $\is$-th base learner guarantees the following:
    \begin{align}
        &\base \le \frac{16G^2}{\lambda_{\is}}\log (1 + 32\lambda_\is L F_T^{\x} + 4 \lambda_\is S_T^\x + 4\lambda_\is S^\x_{T,\is})+ \frac{1}{4}\Bottomcoef D^2  - \frac{1}{8} \Bottomcoef S_{T,\is}^{\x} + \O(1)\notag\\
        \le {} & \frac{32G^2}{\lambda}\log (1 + 32\lambda L F_T^{\x}) + 64G^2 S_T^\x + 64G^2 S^\x_{T,\is}+ \frac{1}{4}\Bottomcoef D^2  - \frac{1}{8} \Bottomcoef S_{T,\is}^{\x} + \O(1),\label{eq:cor1 scvx base F_T}
    \end{align}
    where the last step is due to $\log (1+x) \le x$ for $x \ge 0$.
    For \emph{$\alpha$-exp-concave} functions, by \pref{lem:exp-concave-base}, the $\is$-th base learner guarantees the following:
    \begin{align}
        &\base \le \frac{16d}{\alpha_{\is}}\log \sbr{1 + \frac{\alpha_\is C_{29}}{8 \Bottomcoef d} F_T^{\x} + \frac{\alpha_\is G^4}{ \Bottomcoef d} S_T^\x + \frac{\alpha_\is G^4}{\Bottomcoef d} S^\x_{T,\is}} + \frac{1}{2}\Bottomcoef D^2  - \frac{1}{4} \Bottomcoef S_{T,\is}^{\x} + \O(1)\notag\\
        \le{}& \frac{32d}{\alpha} \log \sbr{1 + \frac{\alpha C_{29}}{8 \Bottomcoef d} F_T^{\x}} + \frac{16G^4}{\Bottomcoef} S_T^\x + \sbr{\frac{16G^4}{\Bottomcoef} - \frac{\Bottomcoef}{4}} S^\x_{T,\is} + \frac{1}{2}\Bottomcoef D^2 + \O(1),\label{eq:cor1 exp-concave base F_T}
    \end{align}
    where the last step is due to $\log (1+x) \le x$ for $x \ge 0$.
    
    For \emph{convex} functions, by \pref{lem:convex-base}, the convex base learner guarantees the following:
    \begin{align}
        \base\le{}& 5D\sqrt{1+ \Vb^{\cvx}_{T,\is}} + \Bottomcoef D^2  - \frac{1}{4} \Bottomcoef S_{T,\is}^{\x} + \O(1)\notag\\
        \le{}& 5D\sqrt{1+ 16LF_T^{\x}} + \Bottomcoef D^2  - \frac{1}{4} \Bottomcoef S_{T,\is}^{\x} + \O(1).\label{eq:cor1 cvx base F_T}
    \end{align}

    \paragraph{Overall Regret Analysis.}
    For \emph{$\lambda$-strongly convex} functions, combining \pref{eq:thm3 scvx meta} and \pref{eq:cor1 scvx base F_T}, we obtain
    \begin{align*}
        &\Reg_T\le \O\sbr{\frac{1}{\lambda}\log F_T^{\x}} + 64G^2 S_T^\x + \sbr{64G^2+\gamma^\Mid-\frac{1}{8} \Bottomcoef }S^\x_{T,\is}+ \frac{1}{4}\Bottomcoef D^2 \\
        &-\frac{C_0}{2}S_T^{\Top}- \gamma^\Mid \sumTT \sumM q_{t,j}^\Top \sumN q_{t,j,i}^\Mid\norm{\x_{t,i} - \x_{t-1,i}}^2  - \gamma^\Top \sumTT \sumM q_{t,j}^\Top \|\q_{t,j}^\Mid - \q_{t-1,j}^\Mid\|_1^2\\
        \le {} &  \O\sbr{\frac{1}{\lambda}\log F_T^{\x}}  + \sbr{64G^2+\gamma^\Mid-\frac{1}{8} \Bottomcoef }S^\x_{T,\is}+ \frac{1}{4}\Bottomcoef D^2  \\
        & \qquad +\sbr{256D^2G^2-\frac{C_0}{2}}S_T^{\Top}+\sbr{128G^2-\gamma^\Mid} \sumTT \sumM q_{t,j}^\Top \sumN q_{t,j,i}^\Mid\norm{\x_{t,i} - \x_{t-1,i}}^2 \\
        & \qquad +\sbr{ 256D^2G^2-\gamma^\Top} \sumTT \sumM q_{t,j}^\Top \|\q_{t,j}^\Mid - \q_{t-1,j}^\Mid\|_1^2\le \O\sbr{\frac{1}{\lambda}\log F_T^{\x}}\le \O\sbr{\frac{1}{\lambda}\log F_T},
    \end{align*}
    where the second step follows from \pref{lem:decompose-three-layer}, the third step requires $\gamma^{\Top} \ge  256D^2G^2$, $\gamma^{\Mid} \ge  128G^2$, $\Bottomcoef \ge  512G^2+8\gamma^{\Mid}$, and $C_0 \ge  512D^2G^2$, and the last step uses \pref{lem:small-loss-log} by choosing $a,b,c$ as some $T$-independent constants and setting
    \begin{equation*}
        x=\sumT f_t(\x_t)-\sumT\min_{\x\in\X_+}f_t(\x), \text{ and } d=\min_{\x\in\X_+}\sumT f_t(\x)-\sumT\min_{\x\in\X_+}f_t(\x).
    \end{equation*}
    For \emph{$\alpha$-exp-concave} functions, combining \pref{eq:thm3 exp-concave meta} and \pref{eq:cor1 exp-concave base F_T}, we obtain
    \begin{align*}
        &\Reg_T\le \O\sbr{\frac{d}{\alpha}\log F_T^{\x}}+ \frac{16G^4}{\Bottomcoef} S_T^\x + \sbr{\frac{16G^4}{\Bottomcoef}+\gamma^{\Mid} - \frac{\Bottomcoef}{4}} S^\x_{T,\is} + \frac{1}{2}\Bottomcoef D^2   \\
        &-\frac{C_0}{2}S_T^{\Top}- \gamma^\Mid \sumTT \sumM q_{t,j}^\Top \sumN q_{t,j,i}^\Mid\norm{\x_{t,i} - \x_{t-1,i}}^2  - \gamma^\Top \sumTT \sumM q_{t,j}^\Top \|\q_{t,j}^\Mid - \q_{t-1,j}^\Mid\|_1^2\\
        \le {} &  \O\sbr{\frac{d}{\alpha}\log F_T^{\x}} +  \sbr{\frac{16G^4}{\Bottomcoef}+\gamma^{\Mid} - \frac{\Bottomcoef}{4}}S^\x_{T,\is}+ \frac{1}{2}\Bottomcoef D^2  \\
        &+\sbr{\frac{64D^2G^4}{\Bottomcoef}-\frac{C_0}{2}}S_T^{\Top}+\sbr{\frac{32G^4}{\Bottomcoef}-\gamma^\Mid} \sumTT \sumM q_{t,j}^\Top \sumN q_{t,j,i}^\Mid\norm{\x_{t,i} - \x_{t-1,i}}^2 \\
        & +\sbr{ \frac{64D^2G^4}{\Bottomcoef}-\gamma^\Top} \sumTT \sumM q_{t,j}^\Top \|\q_{t,j}^\Mid - \q_{t-1,j}^\Mid\|_1^2\le\O\sbr{\frac{d}{\alpha}\log F_T^{\x}}\le\O\sbr{\frac{d}{\alpha}\log F_T},
    \end{align*}
    where the second step follows from \pref{lem:decompose-three-layer} and the third step requires $\gamma^{\Top} \ge  64D^2G^4$, $\gamma^{\Mid} \ge  32G^4$, $\Bottomcoef \ge  64G^4+4\gamma^{\Mid}$, and $C_0 \ge  128D^2G^4$. Similar to the \emph{strongly convex} case, the final step follows from \pref{lem:small-loss-log}, where we choose $a,b,c$ as some $T$-independent constants, and set $x$ and $d$ to the same values as in the \emph{strongly convex} case.

    For \emph{convex} functions, combining \pref{eq:cor1 cvx meta F_T} and \pref{eq:cor1 cvx base F_T}, we obtain
    \begin{align*}
        &\Reg_T\le \frac{Z}{\varepsilon^\Top_{\js}} \log \frac{N}{3 C_0^2 (\varepsilon^\Top_{\js})^2} +\frac{1024D^2L \varepsilon^\Top_{\js}}{Z} F_T^{\x}+5D\sqrt{1+ 16LF_T^{\x}}\\
        & +\sbr{\frac{64G^2}{Z}+\gamma^{\Mid}-\frac{\Bottomcoef}{4}}S_{T,\is}^{\x}+\Bottomcoef D^2 \\
        \le {} &   \frac{Z}{\varepsilon^\Top_{\js}} \log \frac{N}{3 C_0^2 (\varepsilon^\Top_{\js})^2} +\frac{1024D^2L \varepsilon^\Top_{\js}}{Z} F_T^{\x}+5D\sqrt{1+ 16LF_T^{\x}}+\Bottomcoef D^2 \\
        \le {} &   \frac{Z}{\varepsilon^\Top_{\js}} \log \frac{Ne^5}{3 C_0^2 (\varepsilon^\Top_{\js})^2} +\sbr{\frac{1024}{Z}+20} D^2L \varepsilon^\Top_{\js} F_T^{\x}+\Bottomcoef D^2 +\O(1) \\
        \le {} &   \frac{2Z}{\varepsilon^\Top_{\js}} \log \frac{Ne^5}{3 C_0^2 (\varepsilon^\Top_{\js})^2} +\sbr{\frac{2048}{Z}+40}D^2L \varepsilon^\Top_{\js} F_T+\Bottomcoef D^2 +\O(1) \le\O\sbr{\sqrt{F_T\log F_T}},
    \end{align*}
    where the second step requires $\Bottomcoef \ge  \frac{256G^2}{Z}+4\gamma^{\Mid}$, the third step follows from the AM-GM inequality: $\sqrt{ab} \le \frac{a}{2} + \frac{b}{2}$ for any $a,b> 0$ with $a=1/(2D \varepsilon^\Top_{\js})$ and $b=2D\varepsilon^\Top_{\js}LF_T^{\x}$, the fourth step follows from
    \begin{equation}
        x-d\le  c(x-b)+e\Rightarrow x-d\le  \frac{c(d-b)+e}{1-c}\label{eq:FT cvx}
    \end{equation}
    for $x,b,d,e \ge 0$ and $0<c<1$  where we choose $x=\sumT f_t(\x_t)$, $d=\min_{\x\in\X}\sumT f_t(\x)$, $b=\sumT \min_{\x\in\X_+} f_t(\x)$, $ c=\sbr{\frac{1024}{Z}+20} D^2L \varepsilon^\Top_{\js} \le  1/2$, and $e=\O(1)$, and the final step is due to \pref{lem:tune-eta-1}.
\end{proof}

\subsubsection{Gradient-variance Regret}
\label{app:gradient-variance-correct-proof}
\begin{proof}
    We adopt the same regret decomposition strategy as utilized in \pref{app:unigrad-correct-1grad}.

    \paragraph{Meta Regret Analysis.}
     Recall that the normalization factor $Z=\max\{GD+\gamma^\Mid D^2, 1+\gamma^\Mid D^2+2\gamma^\Top\}$. For \emph{strongly convex} and \emph{exp-concave} functions, we follow the same meta regret analysis as used in \pref{app:unigrad-correct-1grad}.

    For \emph{convex} functions, we give a different analysis for $\Vs =\sumTT (\ell^\Mid_{t,\js,\is} - m^\Mid_{t,\js,\is})^2$. From \pref{lem:universal-optimism}, it holds that
    \begin{align*}
    \Vs \le{}& \frac{2 D^2}{Z^2} \sumT\norm{\g_t-\g_{t-1}}^2+\frac{2 G^2}{Z^2} \sumT\norm{\x_{t,\is}-\x_{t-1, \is}+\x_{t-1}-\x_t}^2\\
    \le{}& \frac{8D^2}{Z^2}W_T + \frac{4G^2}{Z^2} S_{T,\is}^{\x} +\frac{4G^2}{Z^2} S_T^{\x}.
    \end{align*}
    Similar to \pref{eq:cor1 cvx meta F_T}, the meta regret can be bounded as
    \begin{align}
        & \meta 
        \le \frac{Z}{\varepsilon^\Top_{\js}} \log \frac{N}{3 C_0^2 (\varepsilon^\Top_{\js})^2} + 32Z\varepsilon^\Top_{\js} \Vs+ \gamma^\Mid S_{T,\is}^{\x}-\frac{C_0}{2}S_T^{\Top}\notag\\
        {}&-\gamma^\Mid \sumTT \sumM q_{t,j}^\Top \sumN q_{t,j,i}^\Mid\norm{\x_{t,i} - \x_{t-1,i}}^2  - \gamma^\Top \sumTT \sumM q_{t,j}^\Top \|\q_{t,j}^\Mid - \q_{t-1,j}^\Mid\|_1^2\notag\\
        \le {} & 2ZC_0\log \frac{4N}{3}+64D\sqrt{W_T\log\sbr{\frac{1024ND^2W_T}{Z^2}}}+\sbr{\frac{64G^2}{Z}+\gamma^\Mid}S_{T,\is}^{\x}\notag\\
        {}&+\sbr{\frac{256D^2G^2}{Z}-\frac{C_0}{2}}S_T^{\Top}+\sbr{\frac{128G^2}{Z}-\gamma^{\Mid}}\sumTT \sumM q_{t,j}^\Top \sumN q_{t,j,i}^\Mid\norm{\x_{t,i} - \x_{t-1,i}}^2\notag\\
        {}&+\sbr{\frac{256D^2G^2}{Z}-\gamma^{\Top}}\sumTT \sumM q_{t,j}^\Top \|\q_{t,j}^\Mid - \q_{t-1,j}^\Mid\|_1^2\notag\\
        \le{}& 2ZC_0\log \frac{4N}{3} + \O\sbr{\sqrt{W_T\log W_T}} +\sbr{\frac{64G^2}{Z}+\gamma^\Mid}S_{T,\is}^{\x}, \label{eq:cor1 cvx meta W_T}
    \end{align}
    where the final step requires $\gamma^{\Top} \ge  \frac{256D^2G^2}{Z},  \gamma^{\Mid} \ge  \frac{128G^2}{Z}$, and $C_0 \ge  \frac{512D^2G^2}{Z}$.
    \paragraph{Base Regret Analysis.}
    We first provide different decompositions of the empirical gradient variation defined on surrogates for strongly convex, exp-concave, and convex functions, respectively, and then analyze the base regret in the corresponding cases.

    For \emph{$\lambda$-strongly convex} functions, $\Vb_{T,i}^{\scvx} \define \sumTT \|\nabla \hsc_{t,\is}(\x_{t,\is}) - \nabla \hsc_{t-1,\is}(\x_{t-1,\is})\|^2$ satisfies
    \begin{align*}
        \Vb_{T,\is}^{\scvx} = {} & \sumTT \|(\g_t + \lambda_\is (\x_{t,\is} - \x_t)) - (\g_{t-1} + \lambda_\is (\x_{t-1,\is} - \x_{t-1}))\|^2\\
        \le{}&  8W_T + 4S_T^\x + 4 S^\x_{T,\is}.
    \end{align*}
    For \emph{$\alpha$-exp-concave} functions, $\Vb^{\exp}_{T,\is} \define \sumTT \|\nabla \hexp_{t,\is}(\x_{t,\is}) - \nabla \hexp_{t-1,\is}(\x_{t-1,\is})\|^2$ satisfies
    \begin{align*}
        \Vb^{\exp}_{T,\is} 
        \le {} & 2 \sumTT \|\g_t - \g_{t-1}\|^2 + 2 \alpha_\is^2 \sumTT \|\g_t \g_t^\top (\x_{t,\is} - \x_t) - \g_{t-1} \g_{t-1}^\top (\x_{t-1,\is} - \x_{t-1})\|^2\notag\\
        \le {} & 8 W_T +  4 D^2 \sumTT \|\g_t \g_t^\top - \g_{t-1} \g_{t-1}^\top\|^2 
        + 4 G^4 \sumTT \|(\x_{t,\is} - \x_t) - (\x_{t-1,\is} - \x_{t-1})\|^2 \tag*{(by~$\alpha \in [1/T, 1]$)}\notag\\
        \le {} & C_{27}W_T + 8G^4 S_T^\x + 8G^4 S^\x_{T,\is},
    \end{align*}
    where the first step uses the definition of $\nabla \hexp_{t,i}(\x) = \g_t + \alpha_i \g_t \g_t^\top (\x - \x_t)$ and the last step holds by setting $C_{27} = 8+64D^2G^2$.   For \emph{convex} functions, $\Vb_{T,\is}^{\cvx}$ can be bounded by $\Vb^{\cvx}_{T,\is} \define \sumTT \|\g_t - \g_{t-1}\|^2\le  4W_T$. To conclude, for different curvature types, we provide correspondingly different analysis of the empirical gradient variation on surrogates:
    \begin{equation}
      \label{eq:cor1 VbT-to-VT W_T}
      \Vb_{T,\is}^{\{\scvx,\exp,\cvx\}} \le 
      \begin{cases}
        8W_T + 4S_T^\x + 4 S^\x_{T,\is}, & \text{when } \{f_t\}_{t=1}^T \text{ are $\lambda$-strongly convex}, \\[2mm]
        C_{27}W_T + 8G^4 S_T^\x + 8G^4 S^\x_{T,\is}, & \text{when } \{f_t\}_{t=1}^T \text{ are $\alpha$-exp-concave}, \\[2mm]
        4W_T, & \text{when } \{f_t\}_{t=1}^T \text{ are convex}.
      \end{cases}
      \end{equation}

      In the following, we analyze the base regret for different curvature types. For \emph{$\lambda$-strongly convex} functions, by \pref{lem:str-convex-base}, the $\is$-th base learner guarantees the following:
    \begin{align}
        &\base \le \frac{16G^2}{\lambda_{\is}}\log (1 + 8\lambda_\is  W_T + 4 \lambda_\is S_T^\x + 4\lambda_\is S^\x_{T,\is})+ \frac{1}{4}\Bottomcoef D^2  - \frac{1}{8} \Bottomcoef S_{T,\is}^{\x} + \O(1)\notag\\
        \le {} & \frac{32G^2}{\lambda}\log (1 + 8\lambda W_T) + 64G^2 S_T^\x + 64G^2 S^\x_{T,\is}+ \frac{1}{4}\Bottomcoef D^2  - \frac{1}{8} \Bottomcoef S_{T,\is}^{\x} + \O(1),\label{eq:cor1 scvx base W_T}
    \end{align}
    where the last step is due to $\log (1+x) \le x$ for $x \ge 0$.
    For \emph{$\alpha$-exp-concave} functions, by \pref{lem:exp-concave-base}, the $\is$-th base learner guarantees the following:
    \begin{align}
        &\base \le \frac{16d}{\alpha_{\is}}\log \sbr{1 + \frac{\alpha_\is C_{27}}{8 \Bottomcoef d} W_T + \frac{\alpha_\is G^4}{ \Bottomcoef d} S_T^\x + \frac{\alpha_\is G^4}{\Bottomcoef d} S^\x_{T,\is}} + \frac{1}{2}\Bottomcoef D^2  - \frac{1}{4} \Bottomcoef S_{T,\is}^{\x} + \O(1)\notag\\
        \le{}& \frac{32d}{\alpha} \log \sbr{1 + \frac{\alpha C_{27}}{8 \Bottomcoef d} W_T} + \frac{16G^4}{\Bottomcoef} S_T^\x + \sbr{\frac{16G^4}{\Bottomcoef} - \frac{\Bottomcoef}{4}} S^\x_{T,\is} + \frac{1}{2}\Bottomcoef D^2 + \O(1),\label{eq:cor1 exp-concave base W_T}
    \end{align}
    where the last step is due to $\log (1+x) \le x$ for $x \ge 0$.
    For \emph{convex} functions, by \pref{lem:convex-base}, the convex base learner guarantees the following:
    \begin{align}
        \base\le{}& 5D\sqrt{1+ \Vb^{\cvx}_{T,\is}} + \Bottomcoef D^2  - \frac{1}{4} \Bottomcoef S_{T,\is}^{\x} + \O(1)\notag\\
        \le{}& 5D\sqrt{1+ 4W_{T}} + \Bottomcoef D^2  - \frac{1}{4} \Bottomcoef S_{T,\is}^{\x} + \O(1).\label{eq:cor1 cvx base W_T}
    \end{align}

    \paragraph{Overall Regret Analysis.}
    For \emph{$\lambda$-strongly convex} functions, combining \eqref{eq:thm3 scvx meta} and \eqref{eq:cor1 scvx base W_T},
    \begin{align*}
        &\Reg_T\le \O\sbr{\frac{1}{\lambda}\log W_T} + 64G^2 S_T^\x + \sbr{64G^2+\gamma^\Mid-\frac{1}{8} \Bottomcoef }S^\x_{T,\is}+ \frac{1}{4}\Bottomcoef D^2 \\
        &-\frac{C_0}{2}S_T^{\Top}- \gamma^\Mid \sumTT \sumM q_{t,j}^\Top \sumN q_{t,j,i}^\Mid\norm{\x_{t,i} - \x_{t-1,i}}^2  - \gamma^\Top \sumTT \sumM q_{t,j}^\Top \|\q_{t,j}^\Mid - \q_{t-1,j}^\Mid\|_1^2\\
        \le {} &  \O\sbr{\frac{1}{\lambda}\log W_T}  + \sbr{64G^2+\gamma^\Mid-\frac{1}{8} \Bottomcoef }S^\x_{T,\is}+ \frac{1}{4}\Bottomcoef D^2  \\
        & \qquad +\sbr{256D^2G^2-\frac{C_0}{2}}S_T^{\Top}+\sbr{128G^2-\gamma^\Mid} \sumTT \sumM q_{t,j}^\Top \sumN q_{t,j,i}^\Mid\norm{\x_{t,i} - \x_{t-1,i}}^2 \\
        & \qquad +\sbr{ 256D^2G^2-\gamma^\Top} \sumTT \sumM q_{t,j}^\Top \|\q_{t,j}^\Mid - \q_{t-1,j}^\Mid\|_1^2\le \O\sbr{\frac{1}{\lambda}\log W_T},
    \end{align*}
    where the second step follows from \pref{lem:decompose-three-layer} and the last step requires $\gamma^{\Top} \ge  256D^2G^2$, $\gamma^{\Mid} \ge  128G^2$, $\Bottomcoef \ge  512G^2+8\gamma^{\Mid}$, and $C_0 \ge  512D^2G^2$.

    For \emph{$\alpha$-exp-concave} functions, combining \eqref{eq:thm3 exp-concave meta} and \eqref{eq:cor1 exp-concave base W_T}, we obtain
    \begin{align*}
        &\Reg_T\le \O\sbr{\frac{d}{\alpha}\log W_T}+ \frac{16G^4}{\Bottomcoef} S_T^\x + \sbr{\frac{16G^4}{\Bottomcoef}+\gamma^{\Mid} - \frac{\Bottomcoef}{4}} S^\x_{T,\is} + \frac{1}{2}\Bottomcoef D^2   \\
        &-\frac{C_0}{2}S_T^{\Top}- \gamma^\Mid \sumTT \sumM q_{t,j}^\Top \sumN q_{t,j,i}^\Mid\norm{\x_{t,i} - \x_{t-1,i}}^2  - \gamma^\Top \sumTT \sumM q_{t,j}^\Top \|\q_{t,j}^\Mid - \q_{t-1,j}^\Mid\|_1^2\\
        \le {} &  \O\sbr{\frac{d}{\alpha}\log W_T} +  \sbr{\frac{16G^4}{\Bottomcoef}+\gamma^{\Mid} - \frac{\Bottomcoef}{4}}S^\x_{T,\is}+ \frac{1}{2}\Bottomcoef D^2  \\
        &+\sbr{\frac{64D^2G^4}{\Bottomcoef}-\frac{C_0}{2}}S_T^{\Top}+\sbr{\frac{32G^4}{\Bottomcoef}-\gamma^\Mid} \sumTT \sumM q_{t,j}^\Top \sumN q_{t,j,i}^\Mid\norm{\x_{t,i} - \x_{t-1,i}}^2 \\
        & +\sbr{ \frac{64D^2G^4}{\Bottomcoef}-\gamma^\Top} \sumTT \sumM q_{t,j}^\Top \|\q_{t,j}^\Mid - \q_{t-1,j}^\Mid\|_1^2\le\O\sbr{\frac{d}{\alpha}\log W_T},
    \end{align*}
    where the second step follows from \pref{lem:decompose-three-layer} and the last step requires $\gamma^{\Top} \ge  64D^2G^4$, $\gamma^{\Mid} \ge  32G^4$, $\Bottomcoef \ge  64G^4+4\gamma^{\Mid}$, and $C_0 \ge  128D^2G^4$.

    For \emph{convex} functions, combining \eqref{eq:cor1 cvx meta W_T} and \eqref{eq:cor1 cvx base W_T}, we obtain
    \begin{align*}
        \Reg_T\le \O\sbr{\sqrt{W_T\log W_T}}+\sbr{\frac{64G^2}{Z}+\gamma^{\Mid}-\frac{\Bottomcoef}{4}}S_{T,\is}^{\x}+\Bottomcoef D^2 \le\O\sbr{\sqrt{W_T\log W_T}},
    \end{align*}
    where the second step requires $\Bottomcoef \ge  \frac{256G^2}{Z}+4\gamma^{\Mid}$.

  At last, we determine the specific values of $C_0$, $\gamma^{\Top}$, and $\gamma^{\Mid}$. These parameters need to satisfy the following requirements:
\begin{gather*}
  C_0 \ge  1,\ C_0 \ge  8D,\ C_0 \ge  4\gamma^{\Top},\ C_0 \ge  512D^2G^2, \ C_0 \ge \frac{512D^2G^2}{Z}, \ C_0 \ge 128D^2G^4,\\ 
  \gamma^{\Top} \ge  256D^2G^2,\ \gamma^{\Top} \ge  \frac{256D^2G^2}{Z},\ \gamma^{\Top} \ge  64D^2G^4,\ \gamma^{\Mid} \ge  128G^2,\\ \gamma^{\Mid} \ge  \frac{128G^2}{Z},\text{ and } \gamma^{\Mid} \ge  32G^4.
\end{gather*}
As a result, we set 
\begin{gather*}
  C_0=\max\bbr{1,8D,4\gamma^{\Top},512D^2G^2,128D^2G^4},\\
  \gamma^\Top=\max\bbr{256D^2G^2,64D^2G^4},\quad
  \gamma^{\Mid}=\max\bbr{128G^2,32G^4},
\end{gather*}
where $Z=\max\{GD+\gamma^\Mid D^2, 1+\gamma^\Mid D^2+2\gamma^\Top\}$.
\end{proof}

\subsection{Proof of Corollary~\ref{cor:FT-bregman}}
    \label{app:FT-bregman}
    We prove the small-loss regret guarantees of \Bregmanpp in \pref{app:small-loss-bregman-proof} and the gradient-variance regret guarantees of \Bregmanpp in \pref{app:gradient-variance-bregman-proof}.
    \subsubsection{Small-Loss Regret}
    \label{app:small-loss-bregman-proof}

    \begin{proof}
    We adopt the same regret decomposition strategy as utilized in \pref{app:unigrad-bregman-1grad}.

    \paragraph{Meta Regret Analysis.}
    For \emph{strongly convex} and \emph{exp-concave} functions, we follow the same meta regret analysis as used in \pref{app:unigrad-bregman-1grad}.

    For \emph{convex} functions, we give a different analysis for $\Vs=\sumTT (\ell^\Mid_{t,\js,\is} - m^\Mid_{t,\js,\is})^2$. From \pref{lem:universal-optimism}, it holds that
    \begin{align}
    &\meta \le C_0 \sqrt{4G^2D^2 + \sumT \inner{\nabla f_t(\x_t) - \nabla f_{t-1}(\x_{t-1})}{\x_t - \x_{t,\is}}^2} + 2GDC_2  \notag\\
    \le {} &  C_0 \sqrt{4G^2D^2 + D^2 \Vb_T} + C_2\le C_0 \sqrt{4G^2D^2 + 16D^2L F_T^{\x}} + 2GDC_2,
    \label{eq:cor2 cvx-meta}
    \end{align}
    where the last step is by the self-bounding property of $\|\nabla f(\x)\|_2^2 \le 4 L (f(\x) - \min_{\x \in \X_+} f(\x))$ for any $\x \in \X_+$.

    \paragraph{Base Regret Analysis.} 
    We first provide different decompositions of the empirical gradient variation defined on surrogates for strongly convex, exp-concave, and convex functions, respectively, and then analyze the base regret in the corresponding cases.
    For \emph{$\lambda$-strongly convex} functions, $\Vb_{T,\is}^{\scvx} \define \sumTT \|\nabla \hsc_{t,\is}(\x_{t,\is}) - \nabla \hsc_{t-1,\is}(\x_{t-1,\is})\|^2$ can be bounded by
    \begin{align*}
        \Vb_{T,\is}^{\scvx}= {} & \sumTT \norm{\g_t + \frac{\lambda_\is}{2} (\x_{t,\is} - \x_t) - \g_{t-1} - \frac{\lambda_\is}{2} (\x_{t-1,\is} - \x_{t-1})}^2 \notag\\
      \le {} & 3\Vb_T + 3 \sumTT \norm{\frac{\lambda_\is}{2} (\x_{t,\is} - \x_t)}^2 + 3 \sumTT \norm{\frac{\lambda_\is}{2} (\x_{t-1,\is} - \x_{t-1})}^2 \\
        \le {} & 48L F_T^{\x} + 2 \lambda_\is^2 \sumT \|\x_{t,\is} - \x_t\|^2 \tag*{(by \eqref{eq:to-small-loss})},
    \end{align*}
where the first step is due to the property of the surrogate: $\nabla \hsc_{t,i}(\x_{t,i}) = \g_t + \frac{\lambda_i}{2} (\x_{t,i} - \x_t)$, and the second step is due to the Cauchy-Schwarz inequality.
For \emph{$\alpha$-exp-concave} functions, $\Vb^{\exp}_{T,\is} \define \sumTT \|\nabla \hexp_{t,\is}(\x_{t,\is}) - \nabla \hexp_{t-1,\is}(\x_{t-1,\is})\|^2$ can be bounded by
\begin{align*}
    \Vb^{\exp}_{T,\is} 
    = {} & \sumTT \norm{\g_t + \frac{\alpha_\is}{2} \g_t \inner{\g_t}{\x_t - \x_{t,\is}} - \g_{t-1} - \frac{\alpha_\is}{2} \g_{t-1} \inner{\g_{t-1}}{\x_{t-1} - \x_{t-1,\is}}}^2 \notag\\
        \le {} & 3 \Vb_T + 3 \sumTT \norm{\frac{\alpha_\is}{2} \g_t \inner{\g_t}{\x_t - \x_{t,\is}}}^2 + 3 \sumTT \norm{\frac{\alpha_\is}{2} \g_{t-1} \inner{\g_{t-1}}{\x_{t-1} - \x_{t-1,\is}}}^2 \notag\\
        \le {} & 3 \Vb_T + 6 \sumT \norm{\frac{\alpha_\is}{2} \g_t \inner{\g_t}{\x_t - \x_{t,\is}}}^2\\
    \le {} & 48L F_T^{\x} + 2 \alpha_\is^2 G^2 \sumT \inner{\g_t}{\x_t - \x_{t,\is}}^2, \tag*{(by \pref{assum:gradient-boundedness} and \eqref{eq:to-small-loss})}
\end{align*}
    where the first step uses the definition of $\nabla \hexp_{t,i}(\x) = \g_t + \frac{\alpha_i}{2} \g_t \g_t^\top (\x - \x_t)$, and the second step is due to the Cauchy-Schwarz inequality. For \emph{convex} functions, the empirical gradient variation $\Vb^{\cvx}_{T,\is} \define \sumTT \|\nabla f_t(\x_t) - \nabla f_{t-1}(\x_{t-1})\|^2$ can be bounded by $\Vb^{\cvx}_{T,\is} \le  16L F_T^{\x}$. To conclude, for different curvature types, we provide correspondingly different analysis of the empirical gradient variation on surrogates:
\begin{equation}
  \label{eq:cor2 VbT-to-VT}
  \Vb_{T,\is}^{\{\scvx,\exp,\cvx\}} \le
  \begin{cases}
      \begin{aligned}
         48L F_T^{\x} + 2 \lambda_\is^2 \sumT \|\x_{t,\is} - \x_t\|^2, 
      \end{aligned}
      & \text{when } \seq{f_t} \text{ are $\lambda$-strongly convex}, \\[2.5ex]
      
      \begin{aligned}
        48L F_T^{\x} + 2 \alpha_\is^2 G^2 \sumT \inner{\g_t}{\x_t - \x_{t,\is}}^2, 
      \end{aligned}
      & \text{when} \seq{f_t} \text{ are $\alpha$-exp-concave}, \\[2.5ex]
  
      16L F_T^{\x},
      & \text{when} \seq{f_t} \text{ are convex}.
  \end{cases}
  \end{equation}
  In the following, we analyze the base regret for different curvature types.
  For \emph{$\lambda$-strongly convex} functions, when using the update rule~\eqref{eq:base-sc}, according to \pref{lem:str-convex-base}, the base regret can be bounded as 
    \begin{align*}
        & \base \le \frac{16G^2}{\lambda_\is} \log \sbr{1 + \lambda_\is \Vb_{T,\is}^{\scvx}} + \O(1)\notag\\
        \le{}& \frac{16G^2}{\lambda_\is} \log \sbr{1 + 48L \lambda_\is F_T^{\x} + 2 \lambda_\is^3 \sumT \|\x_{t,\is} - \x_t\|^2} + \O(1)\\
        \le {} & \frac{16G^2}{\lambda_\is} \log \sbr{C_{18} \sbr{1 + 48L \lambda_\is F_T^{\x}}} + \frac{16G^2}{C_{18} \lambda_\is} \sbr{2 \lambda_\is^3 \sumT \|\x_{t,\is} - \x_t\|^2}\\
        \le {} & \frac{32G^2}{\lambda} \log (1 + 48L F_T^{\x}) + \frac{32G^2}{C_{18}} \sumT \|\x_{t,\is} - \x_t\|^2 + \O(\log C_{18}),
    \end{align*}
    where the third step requires $C_{18} \ge 1$ by \pref{lem:ln-de} and uses the property of the best base learner, i.e., $\lambda_\is \le \lambda \le 2 \lambda_\is$. The last step is due to $\lambda_i \le 1$. For \emph{$\alpha$-exp-concave} functions, by \pref{lem:exp-concave-base}, the $\is$-th base learner guarantees the following:
    \begin{align*}
        &\base\le \frac{16d}{\alpha_\is} \log \sbr{1 + \frac{\alpha_\is}{8d} \Vb_{T,\is}^{\exp}} + \O(1) \notag\\
        \le{}& \frac{16d}{\alpha_\is} \log \sbr{1 + \frac{6L \alpha_\is}{d} F_T^{\x} + \frac{\alpha_\is^3 G^2}{4d} \sumT \inner{\g_t}{\x_t - \x_{t,\is}}^2} + \O(1)\\
        \le {} & \frac{16d}{\alpha_\is} \log \sbr{C_{19} \sbr{1 + \frac{6L \alpha_\is}{d} F_T^{\x}}} + \frac{16d}{C_{19} \alpha_\is} \sbr{\frac{\alpha_\is^3 G^2}{4d} \sumT \inner{\g_t}{\x_t - \x_{t,\is}}^2}\\
        \le {} & \frac{32d}{\alpha} \log \sbr{1 + \frac{6L}{d} F_T^{\x}} + \frac{4G^2}{C_{19}} \sumT \inner{\g_t}{\x_t - \x_{t,\is}}^2 + \O(\log C_{19}),
    \end{align*}
    where the third step requires $C_{19} \ge 1$ by \pref{lem:ln-de} and uses the property of the best base learner, i.e., $\alpha_\is \le \alpha \le 2 \alpha_\is$. The last step is due to $\alpha_i \le 1$.
    For \emph{convex} functions, by \pref{lem:convex-base}, the base regret can be bounded as 
    \begin{align*}
        \base \le {} & 5D \sqrt{1 + \Vb_T} + \O(1) \le 5D \sqrt{1 + 16L F_T^{\x}} + \O(1).
    \end{align*}
    
    \paragraph{Overall Regret Analysis.} 
    For \emph{$\lambda$-strongly convex} functions, by combining the meta and base regret, it holds that
    \begin{align*}
        \Reg_T \le {} & \sbr{\frac{C_0 D^2}{2 C_3} + \frac{32G^2}{C_{18}} - \frac{\lambda_\is}{4}} \sumT \|\x_t - \x_{t,\is}\|^2 + \frac{32G^2}{\lambda} \log (1 + 48L F_T^{\x})  + \O(C_3 + \log C_{18})\\
        \le {} & \frac{32G^2}{\lambda} \log (1 + 48L F_T^{\x}) \le \O\sbr{\frac{1}{\lambda} \log F_T},
    \end{align*}
    where the second step is by choosing $C_3 = {4 C_0 D^2}/{\lambda_\is}$ and $C_{18} = \max\{1, {256G^2}/{\lambda_\is}\}$ and the last step is due to \pref{lem:small-loss-log} by choosing $a,b,c$ as some $T$-independent constants and setting $x=\sumT f_t(\x_t)-\sumT\min_{\x\in\X_+}f_t(\x)$ and $d=\min_{\x\in\X}\sumT f_t(\x)-\sumT\min_{\x\in\X_+}f_t(\x)$. Note that such a parameter configuration will only add an $\O(1/\lambda)$ factor to the final regret bound, which can be absorbed.

    For \emph{$\alpha$-exp-concave} functions, by combining the meta and base regret, it holds that
    \begin{align*}
        \Reg_T \le {} & \sbr{\frac{C_0}{2 C_4} + \frac{4G^2}{C_{19}} - \frac{\alpha_\is}{4}} \sumT \inner{\g_t}{\x_t - \x_{t,\is}}^2 + \frac{32d}{\alpha} \log \sbr{1 + \frac{6L}{d} F_T^{\x}} + \O(C_4 + \log C_{19})\\
        \le {} & \frac{32d}{\alpha} \log \sbr{1 + \frac{6L}{d} F_T^{\x}} + \O(1) \le \O\sbr{\frac{d}{\alpha} \log F_T}, 
    \end{align*}
    where the second step chooses $C_4 = {4C_0}/{\alpha_\is}$ and $C_{19} = \max\{1, {32G^2}/{\alpha_\is}\}$. Similar to the \emph{strongly convex} case, the final step follows from \pref{lem:small-loss-log}, where we choose $a,b,c$ as some $T$-independent constants, and set $x$ and $d$ to the same values as in the \emph{strongly convex} case. Meanwhile, such a parameter configuration will only add an $\O(1/\alpha)$ factor to the final regret bound, which can be absorbed.

    For \emph{convex} functions, by combining the meta and base regret, it holds that
    \begin{equation*}
        \Reg_T \le C_0 \sqrt{4G^2D^2 + 16 D^2 L F_T^{\x}} + 5D \sqrt{1 + 16L F_T^{\x}} + 2GDC_2+\O(1) \le \O(\sqrt{F_T}),
    \end{equation*}
    where  the final step follows from \pref{lem:small-loss-sqrt} by setting $a,b$ as some $T$-independent constants and choosing $x=\sumT f_t(\x_t)-\sumT \min_{\x\in\X_+} f_t(\x)$ and $d=\min_{\x\in\X}\sumT f_t(\x)-\sumT \min_{\x\in\X_+} f_t(\x)$. 
    \end{proof}

    \subsubsection{Gradient-variance Regret}
    \label{app:gradient-variance-bregman-proof}
    \begin{proof}
    We adopt the same regret decomposition strategy as utilized in \pref{app:unigrad-bregman-1grad}.
    
    \paragraph{Meta Regret Analysis.}
    For \emph{strongly convex} and \emph{exp-concave} functions, we follow the same meta regret analysis as used in \pref{app:unigrad-bregman-1grad}.
    
    For \emph{convex} functions, we give a different analysis for $\Vs=\sumTT (\ell^\Mid_{t,\js,\is} - m^\Mid_{t,\js,\is})^2$. From \pref{lem:universal-optimism}, it holds that
    \begin{align}
      &\meta \le C_0 \sqrt{4G^2D^2 + \sumT \inner{\nabla f_t(\x_t) - \nabla f_{t-1}(\x_{t-1})}{\x_t - \x_{t,\is}}^2} + 2GDC_2  \notag\\
      \le {} &  C_0 \sqrt{1 + D^2 \Vb_T} + 2GDC_2\le C_0 \sqrt{1 + 4D^2 W_T} + 2GDC_2.
      \label{eq:cor2 cvx-meta W_T}
    \end{align}
    
        \paragraph{Base Regret Analysis.} 
        We first provide different decompositions of the empirical gradient variation defined on surrogates for strongly convex, exp-concave, and convex functions, respectively, and then analyze the base regret in the corresponding cases.
        For \emph{$\lambda$-strongly convex} functions, $\Vb_{T,i}^{\scvx} \define \sumTT \|\nabla \hsc_{t,i}(\x_{t,\is}) - \nabla \hsc_{t-1,i}(\x_{t-1,\is})\|^2$ can be bounded by
        \begin{align*}
            \Vb_{T,\is}^{\scvx}= {} & \sumTT \norm{\g_t + \frac{\lambda_\is}{2} (\x_{t,\is} - \x_t) - \g_{t-1} - \frac{\lambda_\is}{2} (\x_{t-1,\is} - \x_{t-1})}^2 \notag\\
          \le {} & 3\Vb_T + 3 \sumTT \norm{\frac{\lambda_\is}{2} (\x_{t,\is} - \x_t)}^2 + 3 \sumTT \norm{\frac{\lambda_\is}{2} (\x_{t-1,\is} - \x_{t-1})}^2 \\
            \le {} & 12W_T + 2 \lambda_\is^2 \sumT \|\x_{t,\is} - \x_t\|^2  \tag*{(by \eqref{eq:to-variance})},
        \end{align*}
    where the first step is due to the property of the surrogate: $\nabla \hsc_{t,i}(\x_{t,i}) = \g_t + \frac{\lambda_i}{2} (\x_{t,i} - \x_t)$, and the second step is due to the Cauchy-Schwarz inequality.
    For \emph{$\alpha$-exp-concave} functions, $\Vb^{\exp}_{T,\is} \define \sumTT \|\nabla \hexp_{t,\is}(\x_{t,\is}) - \nabla \hexp_{t-1,\is}(\x_{t-1,\is})\|^2$ can be bounded by
    \begin{align*}
        \Vb^{\exp}_{T,\is} 
        = {} & \sumTT \norm{\g_t + \frac{\alpha_\is}{2} \g_t \inner{\g_t}{\x_t - \x_{t,\is}} - \g_{t-1} - \frac{\alpha_\is}{2} \g_{t-1} \inner{\g_{t-1}}{\x_{t-1} - \x_{t-1,\is}}}^2 \notag\\
            \le {} & 3 \Vb_T + 3 \sumTT \norm{\frac{\alpha_\is}{2} \g_t \inner{\g_t}{\x_t - \x_{t,\is}}}^2 + 3 \sumTT \norm{\frac{\alpha_\is}{2} \g_{t-1} \inner{\g_{t-1}}{\x_{t-1} - \x_{t-1,\is}}}^2 \notag\\
            \le {} & 3 \Vb_T + 6 \sumT \norm{\frac{\alpha_\is}{2} \g_t \inner{\g_t}{\x_t - \x_{t,\is}}}^2\\
        \le {} & 12W_T + 2 \alpha_\is^2 G^2 \sumT \inner{\g_t}{\x_t - \x_{t,\is}}^2,\tag*{(by \pref{assum:gradient-boundedness} and \eqref{eq:to-variance})}
    \end{align*}
        where the first step uses the definition of $\nabla \hexp_{t,i}(\x) = \g_t + \frac{\alpha_i}{2} \g_t \g_t^\top (\x - \x_t)$, and the second step is due to the Cauchy-Schwarz inequality. For \emph{convex} functions, the empirical gradient variation $\Vb^{\cvx}_{T,\is} \define \sumTT \|\nabla f_t(\x_t) - \nabla f_{t-1}(\x_{t-1})\|^2$ can be bounded by $\Vb^{\cvx}_{T,\is} \le  4 W_T$. To conclude, for different curvature types, we provide correspondingly different analysis of the empirical gradient variation on surrogates:
    \begin{equation}
      \label{eq:cor2 VbT-to-VT W_T}
      \Vb_{T,\is}^{\{\scvx,\exp,\cvx\}} \le
      \begin{cases}
          \begin{aligned}
             12W_T + 2 \lambda_\is^2 \sumT \|\x_{t,\is} - \x_t\|^2, 
          \end{aligned}
          & \text{when } \seq{f_t} \text{ are $\lambda$-strongly convex}, \\[2.5ex]
          
          \begin{aligned}
            12W_T+ 2 \alpha_\is^2 G^2 \sumT \inner{\g_t}{\x_t - \x_{t,\is}}^2, 
          \end{aligned}
          & \text{when} \seq{f_t} \text{ are $\alpha$-exp-concave}, \\[2.5ex]
      
          4 W_T,
          & \text{when} \seq{f_t} \text{ are convex}.
      \end{cases}
      \end{equation}

      In the following, we analyze the base regret for different curvature types.
      For \emph{$\lambda$-strongly convex} functions, when using the update rule~\eqref{eq:base-sc}, according to \pref{lem:str-convex-base}, the base regret can be bounded as 
      \begin{align*}
        & \base \le \frac{16G^2}{\lambda_\is} \log \sbr{1 + 12 \lambda_\is W_T + 2 \lambda_\is^3 \sumT \|\x_{t,\is} - \x_t\|^2} + \O(1)\\
        \le {} & \frac{16G^2}{\lambda_\is} \log \sbr{C_{20} \sbr{1 + 12 \lambda_\is W_T}} + \frac{16G^2}{C_{20} \lambda_\is} \sbr{2 \lambda_\is^3 \sumT \|\x_{t,\is} - \x_t\|^2}\\
        \le {} & \frac{32G^2}{\lambda} \log (1 + 12W_T) + \frac{32G^2}{C_{20}} \sumT \|\x_{t,\is} - \x_t\|^2 + \O(\log C_{20}),
    \end{align*}
    where the second step requires $C_{20} \ge 1$ by \pref{lem:ln-de} and uses the property of the best base learner, i.e., $\lambda_\is \le \lambda \le 2 \lambda_\is$. The last step is due to $\lambda_i \le 1$. For \emph{$\alpha$-exp-concave} functions, by \pref{lem:exp-concave-base}, the $\is$-th base learner guarantees the following:
        \begin{align*}
            & \base \le \frac{16d}{\alpha_\is} \log \sbr{1 + \frac{3 \alpha_\is}{2d} W_T + \frac{\alpha_\is^3 G^2}{4d} \sumT \inner{\g_t}{\x_t - \x_{t,\is}}^2} + \O(1)\\
            \le {} & \frac{16d}{\alpha_\is} \log \sbr{C_{21} \sbr{1 + \frac{3 \alpha_\is}{2d} W_T}} + \frac{16d}{C_{21} \alpha_\is} \sbr{\frac{\alpha_\is^3 G^2}{4d} \sumT \inner{\g_t}{\x_t - \x_{t,\is}}^2}\\
            \le {} & \frac{32d}{\alpha} \log \sbr{1 + \frac{3}{2d} W_T} + \frac{4G^2}{C_{21}} \sumT \inner{\g_t}{\x_t - \x_{t,\is}}^2 + \O(\log C_{21}),
        \end{align*}
        where the second step requires $C_{21} \ge 1$ by \pref{lem:ln-de} and uses the property of the best base learner, i.e., $\alpha_\is \le \alpha \le 2 \alpha_\is$. The last step is due to $\alpha_i \le 1$.
        For \emph{convex} functions, by \pref{lem:convex-base}, the base regret can be bounded as 
        \begin{align*}
            \base \le {} & 5D \sqrt{1 + \Vb_T} + \O(1) \le 5D \sqrt{1 + 4W_T} + \O(1).
        \end{align*}

        \paragraph{Overall Regret Analysis.} 
        For \emph{$\lambda$-strongly convex} functions, by combining the meta and base regret, it holds that
        \begin{align*}
            \Reg_T \le {} & \sbr{\frac{C_0 D^2}{2 C_3} + \frac{32G^2}{C_{20}} - \frac{\lambda_\is}{4}} \sumT \|\x_t - \x_{t,\is}\|^2 + \frac{32G^2}{\lambda} \log (1 + 12 W_T)  + \O(C_3 + \log C_{20})\\
            \le {} & \frac{32G^2}{\lambda} \log (1 + 12 W_T) \le \O\sbr{\frac{1}{\lambda} \log W_T}, 
        \end{align*}
        where the second step is by choosing $C_3 = {4 C_0 D^2}/{\lambda_\is}$ and $C_{20} = \max\{1, {256G^2}/{\lambda_\is}\}$. For \emph{$\alpha$-exp-concave} functions, by combining the meta and base regret, it holds that
        \begin{align*}
            \Reg_T \le {} & \sbr{\frac{C_0}{2 C_4} + \frac{4G^2}{C_{21}} - \frac{\alpha_\is}{4}} \sumT \inner{\g_t}{\x_t - \x_{t,\is}}^2 + \frac{32d}{\alpha} \log \sbr{1 + \frac{3W_T}{2d} } + \O(C_4 + \log C_{21})\\
            \le {} & \frac{32d}{\alpha} \log \sbr{1 + \frac{3}{2d} W_T} + \O(1) \le \O\sbr{\frac{d}{\alpha} \log W_T}, 
        \end{align*}
        where the second step chooses $C_4 = {4C_0}/{\alpha_\is}$ and $C_{21} = \max\{1, {32G^2}/{\alpha_\is}\}$. 
        For \emph{convex} functions, by combining the meta and base regret, it holds that
        \begin{equation*}
            \Reg_T \le C_0 \sqrt{4G^2D^2 + 4 D^2 W_T} + 5D \sqrt{4G^2D^2 + 4W_T} + 2GDC_2+\O(1) \le \O(\sqrt{W_T}).
        \end{equation*}
    Finally, we note that the constants $C_3, C_4, C_{18}.C_{19}, C_{20},C_{21}$ only exist in analysis and and hence our choices of them are feasible.
    
\end{proof}

\subsection{Proof of Theorem~\ref{thm:SEA-correct}}
\label{app:SEA-correct}
\begin{proof}
    Recall that we denote by $\g_t=\nabla f_t(\x_t)$ for simplicity. To begin with, we provide a proof for \pref{eq:SEA-correction}:
    \begin{align}
        \E[\Vb_T]&{} = \E\mbr{\sumTT\norm{\nabla f_t(\mathbf{x}_t)-\nabla f_{t-1}(\mathbf{x}_{t-1})}_2^2 }\notag\\
        \le  & 4\E\mbr{\sumTT\|\nabla f_t(\mathbf{x}_t)-\nabla F_t(\mathbf{x}_t)\|_2^2}+4\E\mbr{\sumTT\|\nabla F_t(\mathbf{x}_t)-\nabla F_t(\mathbf{x}_{t-1})\|_2^2} \notag\\
        & +4\E\mbr{\sumTT\|\nabla F_t(\mathbf{x}_{t-1})-\nabla F_{t-1}(\mathbf{x}_{t-1})\|_2^2}+4\E\mbr{\sumTT\|\nabla F_{t-1}(\mathbf{x}_{t-1})-\nabla f_{t-1}(\mathbf{x}_{t-1})\|_2^2} \notag \\
        \le  &8\sigma_{1:T}^2 + 4 \Sigma_{1:T}^2+4L^2 \E\mbr{\sumTT \|\mathbf{x}_t-\mathbf{x}_{t-1}\|_2^2},\label{eq:SEA-correction-detailed}
        \end{align}
  where the second step is due to Cauchy-Schwarz inequality and the last step is because of the definitions of $\sigma_{1:T}^2$ and $\Sigma_{1:T}^2$ (given in \pref{subsec:applications-SEA}).

 In the following, we present regret decompositions tailored to different curvature regimes, proceed to analyze both the meta and base regret components, and finally combine these results to derive the overall regret bounds.

  \paragraph{Regret Decomposition.}    
  For \emph{$\lambda$-strongly convex} functions, similar to the decomposition in \pref{app:unigrad-correct-1grad}, we have
  \begin{align}
      &\E[\Reg_T] \le \E\mbr{\sumT \inner{\nabla F_t(\x_t)}{\x_t - \xs}} - \frac{\lambda}{2} \E\mbr{\sumT \|\x_t - \xs\|^2}\notag\\
      &= \E\mbr{\sumT \inner{\g_t}{\x_t - \xs}} - \frac{\lambda}{2} \E\mbr{\sumT \|\x_t - \xs\|^2}\notag\\
     & \le \underbrace{\E\mbr{\sumT \inner{\g_t}{\x_t - \x_{t,\is}}} - \frac{\lambda_\is}{2}\E\mbr{ \sumT \|\x_t - \x_{t,\is}\|^2}}_{\meta} 
      + \underbrace{\E\mbr{\sumT \hsc_{t,\is}(\x_{t,\is}) - \hsc_{t,\is}(\xs)}}_{\base},\label{eq:thm5 scvx decompose}
  \end{align}
  where the first and second steps rely on the expected loss function \( F_t(\x) = \mathbb{E}[f_t(\x)] \); in particular, the second step additionally requires that \( F_t(\cdot) \) be \emph{strongly convex}.
  
  For \emph{$\alpha$-exp-concave} functions, following the similar decomposition as in the proof of \pref{thm:unigrad-correct-1grad} in \pref{app:unigrad-correct-1grad}, we decompose the regret as 
  \begin{align}
    &\E[\Reg_T] = \E\mbr{\sumT \inner{\g_t}{\x_t - \xs}} - \frac{\alpha}{2} \E\mbr{\sumT \inner{\g_t}{\x_t - \xs}^2}\notag\\
    \le {} & \underbrace{\E\mbr{\sumT \inner{\g_t}{\x_t - \x_{t,\is}}} - \frac{\alpha_\is}{2} \E\mbr{\sumT \inner{\g_t}{\x_t - \xs}^2}}_{\meta} 
    + \underbrace{\E\mbr{\sumT \hexp_{t,\is}(\x_{t,\is}) - \hexp_{t,\is}(\xs)}}_{\base} ,\label{eq:thm5 exp decompose}
\end{align}
where the first step is due to the exp-concavity and defining surrogate loss functions  $\hexp_{t,i}(\x)= \inner{\g_t}{\x} + \frac{\alpha_i}{2} \inner{\g_t}{\x - \x_t}^2$.
For \emph{convex} functions, we decompose the regret as
  \begin{align}
      \E[\Reg_T]\le   \underbrace{\E\mbr{\sumT \inner{\g_t}{\x_t - \x_{t,\is}}}}_{\meta} + \underbrace{\E\mbr{\sumT \hc_{t,\is}(\x_{t,\is}) - \hc_{t,\is}(\xs)}}_{\base} ,
      \label{eq:thm5 cvx decompose}
  \end{align}
  where we have $\hc_{t,i}(\x)=\inner{\g_t}{\x}$.

  \paragraph{Meta Regret Analysis.} 
   Recall that the normalization factor $Z=\max\{GD+\gamma^\Mid D^2, 1+\gamma^\Mid D^2+2\gamma^\Top\}$. Our \pref{alg:UniGrad-Correct-1grad} can be applied to the SEA model without any algorithm modifications. As a result, we directly use the same parameter configurations as in the proof of \pref{thm:unigrad-correct-1grad} (i.e., in \pref{app:unigrad-correct-1grad}).

  For  \emph{strongly convex} and \emph{exp-concave} functions, the meta regret is bounded in a similar way as \eqref{eq:thm3 scvx meta} and \eqref{eq:thm3 exp-concave meta}, and thus omitted here.

  For \emph{convex} functions, by \pref{lem:universal-optimism}, the meta regret can be bounded as 
  \begin{align}
    & \meta 
    \le \frac{Z}{\varepsilon^\Top_{\js}} \log \frac{N}{3 C_0^2 (\varepsilon^\Top_{\js})^2} +\frac{128D^2\varepsilon^\Top_{\js}}{Z} \E[\Vb_T]+\sbr{\frac{64G^2}{Z}+\gamma^\Mid} \E[S_{T,\is}^{\x}]\notag\\
    {}& \qquad\quad+\frac{64G^2}{Z}\E[S_T^{\x}] -\frac{C_0}{2}\E[S_T^{\Top}]- \gamma^\Mid \E\mbr{\sumTT \sumM q_{t,j}^\Top \sumN q_{t,j,i}^\Mid\norm{\x_{t,i} - \x_{t-1,i}}^2}\notag \\
    {}& \qquad\qquad - \gamma^\Top \E\mbr{\sumTT \sumM q_{t,j}^\Top \|\q_{t,j}^\Mid - \q_{t-1,j}^\Mid\|_1^2}\notag\\
    \le {} &  \frac{Z}{\varepsilon^\Top_{\js}} \log \frac{N}{3 C_0^2 (\varepsilon^\Top_{\js})^2} +\frac{512D^2\varepsilon^\Top_{\js}}{Z}\sbr{2\sigma_{1:T}^2 + \Sigma_{1:T}^2}+\sbr{\frac{64G^2}{Z}+\gamma^\Mid} \E[S_{T,\is}^{\x}]\notag\\
    {}& \qquad\qquad -\frac{C_0}{2}\E[S_T^{\Top}]- \gamma^\Mid \E\mbr{\sumTT \sumM q_{t,j}^\Top \sumN q_{t,j,i}^\Mid\norm{\x_{t,i} - \x_{t-1,i}}^2}\notag \\
    {}& \qquad\qquad +\sbr{\frac{64G^2}{Z}+128D^2L^2}\E[S_T^{\x}] - \gamma^\Top \E\mbr{\sumTT \sumM q_{t,j}^\Top \|\q_{t,j}^\Mid - \q_{t-1,j}^\Mid\|_1^2}\notag\\
    \le {} &   \O\sbr{\sqrt{\sbr{\sigma_{1:T}^2 + \Sigma_{1:T}^2}\log \sbr{\sigma_{1:T}^2 + \Sigma_{1:T}^2}}} + \sbr{\frac{64G^2}{Z}+\gamma^\Mid} \E[S_{T,\is}^{\x}]\notag\\
    {}& \qquad\qquad -\frac{C_0}{2}\E[S_T^{\Top}]- \gamma^\Mid \E\mbr{\sumTT \sumM q_{t,j}^\Top \sumN q_{t,j,i}^\Mid\norm{\x_{t,i} - \x_{t-1,i}}^2}\notag \\
    {}& \qquad\qquad +\sbr{\frac{64G^2}{Z}+128D^2L^2}\E[S_T^{\x}] - \gamma^\Top \E\mbr{\sumTT \sumM q_{t,j}^\Top \|\q_{t,j}^\Mid - \q_{t-1,j}^\Mid\|_1^2},
    \label{eq:thm5 cvx meta}
\end{align}
  where the second step is due to the decomposition of $\Vs$, the third step is by \pref{eq:SEA-correction} and the final step follows from \pref{lem:tune-eta-1}.

  \paragraph{Base Regret Analysis.}
  For \emph{$\lambda$-strongly convex} functions, we need to delve into the proof details of the base algorithm, i.e., \OOMD~\eqref{eq:base-sc} for strongly convex functions with step size $\eta_t = 2 / (\Bottomcoef + \lambda_i t)$. For example, from Lemma 12 of \citet{NeurIPS'23:universal}, the base regret can be bounded as 
  \begin{align*}
      \base \le 4 \sumTT \frac{1}{\lambda_\is t} \E\mbr{\norm{\nabla \hsc_{t,\is}(\x_{t,\is}) - \nabla \hsc_{t-1,\is}(\x_{t-1,\is})}^2} -\frac{\Bottomcoef}{8}\E[S_{T,\is}^{\x}] + \O(1).
  \end{align*}
  Subsequently, we analyze the empirical gradient variation defined on surrogates in each round. Denoting by $\sigma_t^2 \define \max_{\x \in \X}\E_{f_t \sim \mathfrak{D}_t} [\|\nabla f_t(\x) - \nabla F_t(\x)\|^2]$ and $\Sigma_t^2 \define \E[\sup_{\x \in \X} \|\nabla F_t(\x) - \nabla F_{t-1}(\x)\|^2]$ for simplicity,
  \begin{align*}
      & \E\mbr{\norm{\nabla \hsc_{t,\is}(\x_{t,\is}) - \nabla \hsc_{t-1,\is}(\x_{t-1,\is})}^2}\\
      = {} & \E\mbr{\norm{\g_t + \lambda_\is (\x_{t,\is} - \x_t) - \g_{t-1} - \lambda_\is (\x_{t-1,\is} - \x_{t-1})}^2}\\
      \le {} & 2 \E \mbr{\|\g_t - \g_{t-1}\|^2} + 2 \norm{\lambda_\is (\x_{t,\is} - \x_t)+\lambda_\is (\x_{t-1,\is} - \x_{t-1})}^2\\
      \le {} &  4\sbr{2\sigma_t^2 + 2\sigma_{t-1}^2 +(1+2L^2)\E\mbr{\norm{\x_t-\x_{t-1}}^2}+\E\mbr{\norm{\x_{t,\is}-\x_{t-1,\is}}^2} + 2\Sigma^2_t}, \tag*{(by \eqref{eq:SEA-correction})}
  \end{align*}
  where the first step is due to the property of the surrogate: $\nabla \hsc_{t,i}(\x_{t,i}) = \g_t + \lambda_i (\x_{t,i} - \x_t)$, and the second step is due to the Cauchy-Schwarz inequality. Plugging the above term back into the base regret and omitting the ignorable $\O(1)$ term, we achieve 
  \begin{align*}
      \base \le {} & \frac{32}{\lambda_\is} \sumTT \frac{\sigma_t^2 + \sigma_{t-1}^2 + \Sigma^2_t}{t} + 16(1+2L^2) \sumTT \frac{\E\mbr{\norm{\x_t-\x_{t-1}}^2}}{\lambda_\is t}\\
      & \qquad + 16\sumTT \frac{ \E\mbr{\|\x_{t,\is} - \x_{t-1,\is}\|^2 }}{\lambda_\is t}.
  \end{align*}
  Using \pref{lem:sum}, we control the base regret as 
  \begin{align}
      & \base \le \O\sbr{\frac{1}{\lambda} \sbr{\sigma_{\max}^2 + \Sigma_{\max}^2} \log \frac{\sigma_{1:T}^2 + \Sigma_{1:T}^2}{\sigma_{\max}^2 + \Sigma_{\max}^2}}\notag \\
      &\quad+\frac{32D^2(L^2+1)}{\lambda_{\is}}\log \sbr{ 1+(1+2L^2)\lambda_{\is}E[S_T^{\x}]+\lambda_{\is}\E[S_{T,\is}^{\x}]}\notag\\
      \le& \O\sbr{\frac{1}{\lambda} \sbr{\sigma_{\max}^2 + \Sigma_{\max}^2} \log \frac{\sigma_{1:T}^2 + \Sigma_{1:T}^2}{\sigma_{\max}^2 + \Sigma_{\max}^2}}+32D^2(L^2+1)\sbr{(1+2L^2)E[S_T^{\x}]+E[S_{T,\is}^{\x}]},\label{eq:thm5 scvx base}
  \end{align}
  where the first term initializes \pref{lem:sum} as $a_t = \sigma_t^2 + \sigma_{t-1}^2 + \Sigma^2_t$ (i.e., $a_{\max} = \O(\sigma_{\max}^2 + \Sigma_{\max}^2)$) and $b = 1 / (\sigma_{\max}^2 + \Sigma_{\max}^2)$, the second term initializes \pref{lem:sum} as $a_t = (1+2L^2)\E\mbr{\norm{\x_{t-1}-\x_t}^2} + \E\mbr{\norm{\x_{t,\is}-\x_{t-1,\is}}^2}$ (i.e., $a_{\max} = \sbr{2+2L^2}D^2$ due to \pref{assum:domain-boundedness}) and $b = \lambda_\is$. The second step is due to $\log(1+x)\le  x$ for $x \ge 0$.

  For \emph{$\alpha$-exp-concave} functions, the base regret is bounded by \eqref{eq: thm3 exp-concave}. Following \eqref{eq:SEA-correction}, we control the empirical gradient variation defined on surrogates as
  \begin{align*}
      & \E\mbr{\sumTT \norm{\nabla \hexp_{t,\is}(\x_{t,\is}) - \nabla \hexp_{t-1,\is}(\x_{t-1,\is})}^2} \\
      \le {} & 2 \E\mbr{\sumTT \|\g_t - \g_{t-1}\|^2} + 2 \alpha_\is^2 \mbr{\sumTT \E\|\g_t \g_t^\top (\x_{t,\is} - \x_t) - \g_{t-1} \g_{t-1}^\top (\x_{t-1,\is} - \x_{t-1})\|^2}\\
      \le {} & 8(2\sigma_{1:T}^2 + \Sigma_{1:T}^2)+8L^2\E[S_T^{\x}]  + 4 D^2\E \mbr{\sumTT \|\g_t \g_t^\top - \g_{t-1} \g_{t-1}^\top\|^2} \\
      & \qquad + 4 G^4 \E\mbr{\sumTT \|(\x_{t,\is} - \x_t) - (\x_{t-1,\is} - \x_{t-1})\|^2} \tag*{(by~$\alpha \in [1/T, 1]$)}\\
      \le {} & C_{22} (2\sigma_{1:T}^2 + \Sigma_{1:T}^2) + C_{23} \E[S_T^\x] + 8G^4 \E[S^\x_{T,\is}],
  \end{align*}
  where the first step uses the definition of $\nabla \hexp_{t,i}(\x) = \g_t + \alpha_i \g_t \g_t^\top (\x - \x_t)$ and the last step holds by setting $C_{22} = 8 + 64D^2 G^2$ and $C_{23} = 8L^2 + 64D^2 G^2 L^2 + 8G^4$. Then we obtain
  \begin{align}
    &\base\notag\\
    \le{}& \frac{16d}{\alpha_{\is}}\log \sbr{1 + \frac{\alpha_\is C_{22}}{8 \Bottomcoef d} (2\sigma_{1:T}^2 + \Sigma_{1:T}^2) + \frac{\alpha_\is C_{23}}{8 \Bottomcoef d}\E[S_T^\x] + \frac{\alpha_\is G^4}{\Bottomcoef d} \E[S^\x_{T,\is}]}\notag\\
    &\qquad + \frac{1}{2}\Bottomcoef D^2  - \frac{1}{4} \Bottomcoef \E[S_{T,\is}^{\x}]+\O(1) \notag\\
    \le{}& \frac{32d}{\alpha} \log \sbr{1 + \frac{\alpha C_{22}}{8 \Bottomcoef d}(2\sigma_{1:T}^2 + \Sigma_{1:T}^2)} + \frac{2C_{23}}{\Bottomcoef} \E[S_T^\x]\notag\\
    &\qquad + \sbr{\frac{16G^4}{\Bottomcoef} - \frac{\Bottomcoef}{4}} \E[S^\x_{T,\is}] + \frac{1}{2}\Bottomcoef D^2 +\O(1),\label{eq:thm5 exp-concave base}
  \end{align}
  where the last step is due to $\log (1+x) \le x$ for $x \ge 0$. 
  
  For \emph{convex} functions, \pref{lem:convex-base} guarantees the following:
    \begin{align}
        &\base\le 5D\sqrt{1+ \E[\Vb^{\cvx}_{T,\is}]} + \Bottomcoef D^2  - \frac{1}{4} \Bottomcoef \E[S_{T,\is}^{\x}] + \O(1)\notag\\
        \le {} &  5D\sqrt{1+ 4(2\sigma_{1:T}^2 + \Sigma_{1:T}^2)+4L^2\E[S_T^{\x}]} + \Bottomcoef D^2  - \frac{1}{4} \Bottomcoef \E[S_{T,\is}^{\x}] + \O(1)\notag\\
        &\le5D\sqrt{1+4(2\sigma_{1:T}^2 + \Sigma_{1:T}^2)}+20DL^2\E[S_T^{\x}] + \Bottomcoef D^2  - \frac{1}{4} \Bottomcoef \E[S_{T,\is}^{\x}] + \O(1),\label{eq:thm5 cvx base}
  \end{align}
  where the second step is due to \pref{eq:SEA-correction}.
  \paragraph{Overall Regret Analysis.}
  For \emph{$\lambda$-strongly convex} functions, plugging \pref{eq:thm3 scvx meta} and \pref{eq:thm5 scvx base} into \pref{eq:thm5 scvx decompose} and letting  $C_{24}=64D^2(1+L^2)^2$, we obtain
  \begin{align*}
    &\Reg_T\le \O\sbr{\frac{1}{\lambda} \sbr{\sigma_{\max}^2 + \Sigma_{\max}^2} \log \frac{\sigma_{1:T}^2 + \Sigma_{1:T}^2}{\sigma_{\max}^2 + \Sigma_{\max}^2}}+C_{24}\E[S_T^{\x}] \\
    &+ \sbr{32D^2(1+L^2)+\gamma^\Mid-\frac{1}{8} \Bottomcoef}\E[S^\x_{T,\is}]- \gamma^\Mid \E\mbr{\sumTT \sumM q_{t,j}^\Top \sumN q_{t,j,i}^\Mid\norm{\x_{t,i} - \x_{t-1,i}}^2} \\
    &+ \frac{1}{4}\Bottomcoef D^2-\frac{C_0}{2}\E[S_T^{\Top}]-\gamma^\Top \E\mbr{\sumTT \sumM q_{t,j}^\Top \|\q_{t,j}^\Mid - \q_{t-1,j}^\Mid\|_1^2}\\
    \le {} &  \O\sbr{\frac{1}{\lambda} \sbr{\sigma_{\max}^2 + \Sigma_{\max}^2} \log \frac{\sigma_{1:T}^2 + \Sigma_{1:T}^2}{\sigma_{\max}^2 + \Sigma_{\max}^2}}  + \sbr{32D^2(1+L^2)+\gamma^\Mid-\frac{1}{8} \Bottomcoef }\E[S^\x_{T,\is}]  \\
    & \quad+\sbr{4D^2C_{24}-\frac{C_0}{2}}\E[S_T^{\Top}]+\sbr{2C_{24}-\gamma^\Mid}\E\mbr{ \sumTT \sumM q_{t,j}^\Top \sumN q_{t,j,i}^\Mid\norm{\x_{t,i} - \x_{t-1,i}}^2} \\
    & \qquad + \frac{1}{4}\Bottomcoef D^2+\sbr{ 4D^2C_{24}-\gamma^\Top} \E\mbr{\sumTT \sumM q_{t,j}^\Top \|\q_{t,j}^\Mid - \q_{t-1,j}^\Mid\|_1^2}\\
    \le {} &   \O\sbr{\frac{1}{\lambda} \sbr{\sigma_{\max}^2 + \Sigma_{\max}^2} \log \frac{\sigma_{1:T}^2 + \Sigma_{1:T}^2}{\sigma_{\max}^2 + \Sigma_{\max}^2}},
  \end{align*}
  where the second step follows from \pref{lem:decompose-three-layer} and the last step requires $\gamma^{\Top} \ge  4D^2C_{24}$, $\gamma^{\Mid} \ge  2C_{24}$, $\Bottomcoef \ge  256D^2(1+L^2)+8\gamma^{\Mid}$, and $C_0 \ge  8D^2C_{24}$.
    For \emph{$\alpha$-exp-concave} functions, plugging \pref{eq:thm3 exp-concave meta} and \pref{eq:thm5 exp-concave base} into \pref{eq:thm5 exp decompose}, we obtain
  \begin{align*}
    &\Reg_T\le \O\sbr{\frac{d}{\alpha}\log \sbr{\sigma_{1:T}^2 + \Sigma_{1:T}^2}}+ \frac{2C_{23}}{\Bottomcoef} \E[S_T^\x] + \sbr{\frac{16G^4}{\Bottomcoef}+\gamma^{\Mid} - \frac{\Bottomcoef}{4}} \E[S^\x_{T,\is}]   \\
    &+ \frac{1}{2}\Bottomcoef D^2 -\frac{C_0}{2}\E[S_T^{\Top}]- \gamma^\Mid \E\mbr{\sumTT \sumM q_{t,j}^\Top \sumN q_{t,j,i}^\Mid\norm{\x_{t,i} - \x_{t-1,i}}^2} \notag\\
    & - \gamma^\Top \E\mbr{\sumTT \sumM q_{t,j}^\Top \|\q_{t,j}^\Mid - \q_{t-1,j}^\Mid\|_1^2}\\
    \le {} &  \O\sbr{\frac{d}{\alpha}\log \sbr{\sigma_{1:T}^2 + \Sigma_{1:T}^2}} +  \sbr{\frac{16G^4}{\Bottomcoef}+\gamma^{\Mid} - \frac{\Bottomcoef}{4}}\E[S^\x_{T,\is}]+ \frac{1}{2}\Bottomcoef D^2  \\
    &+\sbr{\frac{8D^2C_{23}}{\Bottomcoef}-\frac{C_0}{2}}\E[S_T^{\Top}]+\sbr{\frac{4C_{23}}{\Bottomcoef}-\gamma^\Mid} \E\mbr{\sumTT \sumM q_{t,j}^\Top \sumN q_{t,j,i}^\Mid\norm{\x_{t,i} - \x_{t-1,i}}^2} \\
    & +\sbr{ \frac{8D^2C_{23}}{\Bottomcoef}-\gamma^\Top} \E\mbr{\sumTT \sumM q_{t,j}^\Top \|\q_{t,j}^\Mid - \q_{t-1,j}^\Mid\|_1^2}\le\O\sbr{\frac{d}{\alpha}\log \sbr{\sigma_{1:T}^2 + \Sigma_{1:T}^2}},
  \end{align*}
  where the second step follows from \pref{lem:decompose-three-layer} and the last step requires $\gamma^{\Top} \ge  8D^2C_{23}$, $\gamma^{\Mid} \ge  4C_{23}$, $\Bottomcoef \ge  64G^4+4\gamma^{\Mid}$, and $C_0 \ge  16D^2C_{23}$. 
  For \emph{convex} functions, plugging \pref{eq:thm5 cvx meta} and \pref{eq:thm5 cvx base} into \pref{eq:thm5 cvx decompose}, we obtain
  \begin{align*}
    &\Reg_T\le \O\sbr{\sqrt{\sbr{\sigma_{1:T}^2 + \Sigma_{1:T}^2}\log \sbr{\sigma_{1:T}^2 + \Sigma_{1:T}^2}}}+\sbr{\frac{64G^2}{Z}+\gamma^{\Mid}-\frac{\Bottomcoef}{4}}\E[S_{T,\is}^{\x}]\\
    &+C_{25}\E[S_T^{\x}]-\frac{C_0}{2}\E[S_T^{\Top}]- \gamma^\Mid \E\mbr{\sumTT \sumM q_{t,j}^\Top \sumN q_{t,j,i}^\Mid\norm{\x_{t,i} - \x_{t-1,i}}^2}+\Bottomcoef D^2\notag\\
    &- \gamma^\Top \E\mbr{\sumTT \sumM q_{t,j}^\Top \|\q_{t,j}^\Mid - \q_{t-1,j}^\Mid\|_1^2}\\ 
    \le {} & \O\sbr{\sqrt{\sbr{\sigma_{1:T}^2 + \Sigma_{1:T}^2}\log \sbr{\sigma_{1:T}^2 + \Sigma_{1:T}^2}}}+\sbr{\frac{64G^2}{Z}+\gamma^{\Mid}-\frac{\Bottomcoef}{4}}\E[S_{T,\is}^{\x}]\\
    &+\sbr{4D^2C_{25}-\frac{C_0}{2}}\E[S_T^{\Top}]+\Bottomcoef D^2+\sbr{4D^2C_{25}- \gamma^\Top} \E\mbr{\sumTT \sumM q_{t,j}^\Top \|\q_{t,j}^\Mid - \q_{t-1,j}^\Mid\|_1^2}\\
    & +\sbr{2C_{25}-\gamma^\Mid}\E\mbr{\sumTT \sumM q_{t,j}^\Top\sumN q_{t,j,i}^\Mid\norm{\x_{t,i} - \x_{t-1,i}}^2}\\
    \le {} & \O\sbr{\sqrt{\sbr{\sigma_{1:T}^2 + \Sigma_{1:T}^2}\log \sbr{\sigma_{1:T}^2 + \Sigma_{1:T}^2}}},
  \end{align*}
  where the first step is by setting $C_{25}=20DL^2+\frac{64G^2}{Z}+128D^2L^2$, the second step follows from \pref{lem:decompose-three-layer} and the last step requires $\gamma^{\Top} \ge  4D^2C_{25}$, $\gamma^{\Mid} \ge  2C_{25}$, $\Bottomcoef \ge  256G^2+4\gamma^{\Mid}$, and $C_0 \ge  8D^2C_{25}$.

At last, we determine the specific values of $C_0$, $\gamma^{\Top}$, and $\gamma^{\Mid}$. These parameters need to satisfy the following requirements:
\begin{gather*}
  C_0 \ge  1,\ C_0 \ge  8D,\ C_0 \ge  4\gamma^{\Top},\ C_0 \ge  8D^2C_{24}, \ C_0 \ge  8D^2C_{23}, \ C_0 \ge  8D^2C_{25},\\ 
  \gamma^{\Top} \ge  4D^2C_{24},\ \gamma^{\Top} \ge  8D^2C_{23},\ \gamma^{\Top} \ge  4D^2C_{25},\ \gamma^{\Mid} \ge  2C_{24},\\ \gamma^{\Mid} \ge  4C_{23},\text{ and } \gamma^{\Mid} \ge  2C_{25}.
\end{gather*}
As a result, we set 
\begin{gather*}
  C_0=\max\bbr{1,8D,4\gamma^{\Top},8D^2C_{24},8D^2C_{23},8D^2C_{25}},\\
  \gamma^\Top=\max\bbr{4D^2C_{24},8D^2C_{23},4D^2\sbr{20DL^2+64G^2+128D^2L^2}},\\
  \gamma^{\Mid}=\max\bbr{2C_{24},4C_{23}, 20DL^2+64G^2+128D^2L^2},
\end{gather*}
where $Z=\max\{GD+\gamma^\Mid D^2, 1+\gamma^\Mid D^2+2\gamma^\Top\}$, $C_{23}=8L^2 + 64D^2 G^2 L^2 + 8G^4$, $C_{24}=64D^2(1+L^2)^2$, and $C_{25}=20DL^2+\frac{64G^2}{Z}+128D^2L^2$. The proof is finished.
\end{proof}

\subsection{Proof of Theorem~\ref{thm:SEA-bregman}}
\label{app:SEA-bregman}
\begin{proof}
    Recall that we denote by $\g_t=\nabla f_t(\x_t)$ for simplicity. To begin with, we provide a proof for \pref{eq:SEA-bregman}:
    \begin{align}
        & \E[\Vb_T] \le 5 \E\mbr{\sumTT \|\nabla f_t(\x_t) - \nabla F_t(\x_t)\|^2} + 5 \E\mbr{\sumTT \|\nabla F_t(\x_t) - \nabla F_t(\xs)\|^2} \notag\\
        + {} & 5 \E\mbr{\sumTT \|\nabla F_t(\xs) - \nabla F_{t-1}(\xs)\|^2} + 5 \E\mbr{\sumTT \|\nabla F_{t-1}(\xs) - \nabla F_{t-1}(\x_{t-1})\|^2} \label{eq:SEA-bregman-detailed}\\
        + {} & 5 \E\mbr{\sumTT \|\nabla F_{t-1}(\x_{t-1}) - \nabla f_{t-1}(\x_{t-1})\|^2} \le 10 \sigma_{1:T}^2 + 5 \Sigma_{1:T}^2 + 20L \E\mbr{\sumT \D_{F_t}(\xs, \x_t)}, \notag
    \end{align}
  where the first step is due to Cauchy-Schwarz inequality and the last step is because of the definitions of $\sigma_{1:T}^2$ and $\Sigma_{1:T}^2$ (given in \pref{subsec:applications-SEA}) and the analysis proposed in \pref{subsec:bregman-one-gradient}.

  In the following, we first give regret decompositions for different curvature types, then we analyze the meta and base regret, and combine them for the final regret guarantees.

  \paragraph{Regret Decomposition.}    
  For \emph{$\lambda$-strongly convex} functions, similar to the decomposition in \pref{app:unigrad-bregman-1grad}, we have
  \begin{align*}
      \E[\Reg_T] = {} &\E\mbr{\sumT \inner{\nabla F_t(\x_t)}{\x_t - \xs}} - \frac{1}{2} \E\mbr{\sumT \D_{F_t}(\xs, \x_t)} - \frac{1}{2} \E\mbr{\sumT \D_{F_t}(\xs, \x_t)}\\
      \le {} & \E\mbr{\sumT \inner{\g_t}{\x_t - \xs}} - \frac{\lambda}{4} \E\mbr{\sumT \|\x_t - \xs\|^2} - \frac{1}{2} \E\mbr{\sumT \D_{F_t}(\xs, \x_t)}\\
      \le {} & \underbrace{\E\mbr{\sumT \inner{\g_t}{\x_t - \x_{t,\is}}} - \frac{\lambda_\is}{4}\E\mbr{ \sumT \|\x_t - \x_{t,\is}\|^2}}_{\meta} \\
      & \qquad + \underbrace{\E\mbr{\sumT \hsc_{t,\is}(\x_{t,\is}) - \hsc_{t,\is}(\xs)}}_{\base} - \frac{1}{2} \E\mbr{\sumT \D_{F_t}(\xs, \x_t)},
  \end{align*}
  where the first and second steps rely on the expected loss function \( F_t(\x) = \mathbb{E}[f_t(\x)] \); in particular, the second step additionally requires that \( F_t(\cdot) \) be \emph{strongly convex}. The third step follows from the definition of the surrogate function \( \hsc_{t,i}(\x) = \langle \g_t, \x \rangle + \frac{\lambda_i}{4} \|\x - \x_t\|^2 \), where \( \lambda_i \in \mathcal{H} \) is defined in~\eqref{eq:candidate-pool}.

  For \emph{$\alpha$-exp-concave} functions, following the similar decomposition as in the proof of \pref{thm:unigrad-bregman-1grad} in \pref{app:unigrad-bregman-1grad}, we decompose the regret as 
  \begin{align*}
      \E[\Reg_T] = {} &\E\mbr{\sumT \inner{\nabla F_t(\x_t)}{\x_t - \xs}} - \frac{1}{2} \E\mbr{\sumT \D_{F_t}(\xs, \x_t)} - \frac{1}{2} \E\mbr{\sumT \D_{F_t}(\xs, \x_t)}\\
      \le {} & \E\mbr{\sumT \inner{\g_t}{\x_t - \xs}} - \frac{\alpha}{4} \E\mbr{\sumT \inner{\g_t}{\x_t - \xs}^2} - \frac{1}{2} \E\mbr{\sumT \D_{F_t}(\xs, \x_t)}\\
      \le {} & \underbrace{\E\mbr{\sumT \inner{\g_t}{\x_t - \x_{t,\is}}} - \frac{\alpha_\is}{4} \E\mbr{\sumT \inner{\g_t}{\x_t - \x_{t,\is}}^2}}_{\meta} \\
      & \qquad + \underbrace{\E\mbr{\sumT \hexp_{t,\is}(\x_{t,\is}) - \hexp_{t,\is}(\xs)}}_{\base} - \frac{1}{2} \E\mbr{\sumT \D_{F_t}(\xs, \x_t)},
  \end{align*}
  where the first and second steps rely on the expected loss function \( F_t(\x) = \mathbb{E}[f_t(\x)] \); in particular, the second step additionally requires that \( f_t(\cdot) \) be \emph{exp-concave}. The third step follows from the definition of the surrogate function $\hexp_{t,i}(\x)= \inner{\g_t}{\x} + \frac{\alpha_i}{4} \inner{\g_t}{\x - \x_t}^2$, where \( \alpha_i \in \mathcal{H} \) is defined in~\eqref{eq:candidate-pool}.

  For \emph{convex} functions, we decompose the regret as
  \begin{align*}
      \E[\Reg_T] = {} & \E\mbr{\sumT F_t(\x_t) - \sumT F_t(\xs)} = \E\mbr{\sumT \inner{\nabla F_t(\x_t)}{\x_t - \xs}} - \E\mbr{\sumT \D_{F_t}(\xs, \x_t)}\\
      = {} & \E\mbr{\sumT \inner{\g_t}{\x_t - \xs}} - \E\mbr{\sumT \D_{F_t}(\xs, \x_t)}\\
      = {} & \underbrace{\E\mbr{\sumT \inner{\g_t}{\x_t - \x_{t,\is}}}}_{\meta} + \underbrace{\E\mbr{\sumT \hc_{t,\is}(\x_{t,\is}) - \hc_{t,\is}(\xs)}}_{\base} - \E\mbr{\sumT \D_{F_t}(\xs, \x_t)},
  \end{align*}
  where the first and third step use $F_t(\x) = \E[f_t(\x)]$, the second step uses the definition of Bregman divergence, and the fourth step is due to $\hc_{t,i}(\x)=\inner{\g_t}{\x}$.

  \paragraph{Meta Regret Analysis.} 
  Our \pref{alg:UniGrad-Bregman-1grad} can be applied to the SEA model without any algorithm modifications. As a result, we directly use the same parameter configurations as in the proof of \pref{thm:unigrad-bregman-1grad} (i.e., in \pref{app:unigrad-bregman-1grad}).

  For  \emph{strongly convex} and \emph{exp-concave} functions, the meta regret is bounded in a similar way as \eqref{eq:scvx-meta} and \eqref{eq:exp-meta}, and thus omitted here.

  For \emph{convex} functions, the meta regret can be bounded as 
  \begin{align*}
      & \meta \le \E\mbr{C_0 \sqrt{4G^2D^2 + D^2 \Vb_T} + 2GDC_2} \le C_0 \sqrt{4G^2D^2 + D^2 \E[\Vb_T]} + 2GDC_2\\
      \le {} & C_0 \sqrt{4G^2D^2 +  5D^2 (2\sigma_{1:T}^2 + \Sigma_{1:T}^2) + 20 D^2 L \E\mbr{\sumT \D_{F_t}(\xs, \x_t)}} + 2GDC_2 \tag*{(by \eqref{eq:SEA-bregman})}\\
      \le {} & \O\sbr{\sqrt{\sigma_{1:T}^2 + \Sigma_{1:T}^2}} + \O(C_{26}) + \frac{C_0}{2 C_{26}} \E\mbr{\sumT \D_{F_t}(\xs, \x_t)},
  \end{align*}
  where the second step is by Jensen's inequality and the last step is due to AM-GM inequality (\pref{lem:AM-GM}). $C_{26}$ is a constant to be specified.

  \paragraph{Base Regret Analysis.}
  For \emph{$\lambda$-strongly convex} functions, similar to the analysis in \pref{app:SEA-correct}, the base regret can be bounded as 
  \begin{align*}
      \base \le 4 \sumTT \frac{1}{\lambda_\is t} \E\mbr{\norm{\nabla \hsc_{t,\is}(\x_{t,\is}) - \nabla \hsc_{t-1,\is}(\x_{t-1,\is})}^2} + \O(1).
  \end{align*}
  Subsequently, we analyze the empirical gradient variation defined on surrogates in each round, i.e., $\|\nabla \hsc_{t,\is}(\x_{t,\is}) - \nabla \hsc_{t-1,\is}(\x_{t-1,\is})\|^2$. Denoting by $\sigma_t^2 \define \max_{\x \in \X}\E_{f_t \sim \mathfrak{D}_t} [\|\nabla f_t(\x) - \nabla F_t(\x)\|^2]$ and $\Sigma_t^2 \define \E[\sup_{\x \in \X} \|\nabla F_t(\x) - \nabla F_{t-1}(\x)\|^2]$ for simplicity,
  \begin{align*}
      & \E\mbr{\norm{\nabla \hsc_{t,\is}(\x_{t,\is}) - \nabla \hsc_{t-1,\is}(\x_{t-1,\is})}^2}\\
      = {} & \E\mbr{\norm{\g_t + \frac{\lambda_\is}{2} (\x_{t,\is} - \x_t) - \g_{t-1} - \frac{\lambda_\is}{2} (\x_{t-1,\is} - \x_{t-1})}^2}\\
      \le {} & 3 \E \mbr{\|\g_t - \g_{t-1}\|^2} + 3 \norm{\frac{\lambda_\is}{2} (\x_{t,\is} - \x_t)}^2 + 3 \norm{\frac{\lambda_\is}{2} (\x_{t-1,\is} - \x_{t-1})}^2\\
      \le {} & 15 (\sigma_t^2 + \sigma_{t-1}^2 + 2L\E\mbr{ \D_{F_t}(\xs, \x_t)} + 2L \mbr{\D_{F_{t-1}}(\xs, \x_{t-1})} + \Sigma^2_t) \tag*{(by \eqref{eq:SEA-bregman})}\\
      & \qquad + \lambda_\is^2 \E\mbr{\|\x_{t,\is} - \x_t\|^2} + \lambda_\is^2 \E\mbr{\|\x_{t-1,\is} - \x_{t-1}\|^2},
  \end{align*}
  where the first step is due to the property of the surrogate: $\nabla \hsc_{t,i}(\x_{t,i}) = \g_t + \frac{\lambda_i}{2} (\x_{t,i} - \x_t)$, and the second step is due to the Cauchy-Schwarz inequality. Plugging the above term back into the base regret and omitting the ignorable $\O(1)$ term, we achieve 
  \begin{align*}
      \base \le {} & \frac{60}{\lambda_\is} \sumTT \frac{\sigma_t^2 + \sigma_{t-1}^2 + \Sigma^2_t}{t} + 120L \sumTT \frac{\E\mbr{\D_{F_t}(\xs, \x_t) + \D_{F_{t-1}}(\xs, \x_{t-1})}}{\lambda_\is t}\\
      & \qquad + 4\sumTT \frac{\lambda_\is^2 \E\mbr{\|\x_{t,\is} - \x_t\|^2 }+ \lambda_\is^2 \E\mbr{\|\x_{t-1,\is} - \x_{t-1}\|^2}}{\lambda_\is t},
  \end{align*}
  Using \pref{lem:sum}, we control the base regret as 
  \begin{align*}
      & \base \le \O\sbr{\frac{1}{\lambda} \sbr{\sigma_{\max}^2 + \Sigma_{\max}^2} \log \frac{\sigma_{1:T}^2 + \Sigma_{1:T}^2}{\sigma_{\max}^2 + \Sigma_{\max}^2}} \\
      & \ + \frac{480LGD}{\lambda_\is} \log \sbr{1 + 2 \lambda_\is \E\mbr{\sumT \D_{F_t}(\xs, \x_t)}} + \frac{8D^2}{\lambda_\is} \log \sbr{1 + 2 \lambda_\is^3 \E\mbr{\sumT \|\x_{t,\is} - \x_t\|^2}}\\
      \le {} & \O\sbr{\frac{1}{\lambda} \sbr{\sigma_{\max}^2 + \Sigma_{\max}^2} \log \frac{\sigma_{1:T}^2 + \Sigma_{1:T}^2}{\sigma_{\max}^2 + \Sigma_{\max}^2}} + \O(\log C_{27} + \log C_{28})\\
      & \qquad + \frac{960LGD}{C_{27}} \E\mbr{\sumT \D_{F_t}(\xs, \x_t)} + \frac{16D^2}{C_{28}} \E\mbr{\sumTT \|\x_{t,\is} - \x_t\|^2},
  \end{align*}
  where the first term initializes \pref{lem:sum} as $a_t = \sigma_t^2 + \sigma_{t-1}^2 + \Sigma^2_t$ (i.e., $a_{\max} = \O(\sigma_{\max}^2 + \Sigma_{\max}^2)$) and $b = 1 / (\sigma_{\max}^2 + \Sigma_{\max}^2)$, the second term initializes \pref{lem:sum} as $a_t = \E\mbr{\D_{F_t}(\xs, \x_t)} + \E\mbr{\D_{F_{t-1}}(\xs, \x_{t-1})}$ (i.e., $a_{\max} = 4GD$ due to  \pref{assum:domain-boundedness} and \pref{assum:gradient-boundedness}) and $b = \lambda_\is$, the third term initializes \pref{lem:sum} as $a_t = \E\mbr{\lambda_\is^2 \|\x_{t,\is} - \x_t\|^2 + \lambda_\is^2 \|\x_{t-1,\is} - \x_{t-1}\|^2}$ (i.e., $a_{\max} = 2D^2$ due to $\lambda_i \le 1$ and \pref{assum:domain-boundedness}) and $b = \lambda_\is$. The $\O(1)$ term contains ignorable terms like $\O(1/\lambda)$. The second step requires $C_{27}, C_{28} \ge 1$ by \pref{lem:ln-de}.

  For \emph{$\alpha$-exp-concave} functions, the base regret is bounded by \eqref{eq: thm3 exp-concave}. Following \eqref{eq:SEA-bregman}, we control the empirical gradient variation defined on surrogates as
  \begin{align*}
      & \E\mbr{\sumTT \norm{\nabla \hexp_{t,\is}(\x_{t,\is}) - \nabla \hexp_{t-1,\is}(\x_{t-1,\is})}^2} \le 3 \E[\Vb_T] + 6 \sumT \norm{\frac{\alpha_\is}{2} \g_t \inner{\g_t}{\x_t - \x_{t,\is}}}^2\\
      \le {} & 15 (2\sigma_{1:T}^2 + \Sigma_{1:T}^2) + 60L \E\mbr{\sumT \D_{F_t}(\xs, \x_t)} + 2 \alpha_\is^2 G^2 \E\mbr{\sumT \inner{\g_t}{\x_t - \x_{t,\is}}}^2.
  \end{align*}
  Plugging the surrogate's empirical gradient variation back to the base regret, we obtain
  \begin{align*}
      & \base \le \frac{16d}{\alpha_\is} \log \Bigg(1 + \frac{15 \alpha_\is}{8d} (2\sigma_{1:T}^2 + \Sigma_{1:T}^2) + \frac{15L \alpha_\is}{2d} \E\mbr{\sumT \D_{F_t}(\xs, \x_t)}\\
      & + \frac{\alpha_\is^3 G^2}{4d} \E\mbr{\sumT \inner{\g_t}{\x_t - \x_{t,\is}}^2} \Bigg) \le \O\sbr{\frac{d}{\alpha} \log \sbr{\sigma_{1:T}^2 + \Sigma_{1:T}^2}} + \O(\log C_{29})\\
      & + \frac{120L}{C_{29}} \E\mbr{\sumT \D_{F_t}(\xs, \x_t)} + \frac{4G^2}{C_{29}} \E\mbr{\sumT \inner{\g_t}{\x_t - \x_{t,\is}}^2},
  \end{align*}
  where the second step requires $C_{29} \ge 1$ by \pref{lem:ln-de}.

  For \emph{convex} functions, the base regret can be bounded as 
  \begin{align*}
      & \base \le 5D \sqrt{1 + \E[\Vb_T]} \le 5D \sqrt{1 + 10 \sigma_{1:T}^2 + 5 \Sigma_{1:T}^2 + 20L \E\mbr{\sumT \D_{F_t}(\xs, \x_t)}}\\
      \le {} & \O\sbr{\sqrt{\sigma_{1:T}^2 + \Sigma_{1:T}^2}} + \O(C_{30}) + \frac{5D}{2C_{30}} \E\mbr{\sumT \D_{F_t}(\xs, \x_t)},
  \end{align*}
  where the first step is by Jensen's inequality, the second step is due to \eqref{eq:SEA-bregman}, and the last step is because of AM-GM inequality (\pref{lem:AM-GM}). $C_{30}$ is a constant to be specified.
  
  \paragraph{Overall Regret Analysis.}
  For \emph{$\lambda$-strongly convex} functions, by combining the meta and base regret, it holds that
  \begin{align*}
      & \Reg_T \le \O\sbr{\frac{1}{\lambda} \sbr{\sigma_{\max}^2 + \Sigma_{\max}^2} \log \frac{\sigma_{1:T}^2 + \Sigma_{1:T}^2}{\sigma_{\max}^2 + \Sigma_{\max}^2}} + \O(C_3 + \log C_{27} + \log C_{28}) \\
      & + \sbr{\frac{C_0 D^2}{2 C_3} + \frac{16D^2}{C_{28}} - \frac{\lambda_\is}{4}} \E\mbr{\sumT \|\x_t - \x_{t,\is}\|^2} + \sbr{\frac{960LGD}{C_{27}} - \frac{1}{2}} \E\mbr{\sumT \D_{F_t}(\xs, \x_t)}\\
      \le {} & \O\sbr{\frac{1}{\lambda} \sbr{\sigma_{\max}^2 + \Sigma_{\max}^2} \log \frac{\sigma_{1:T}^2 + \Sigma_{1:T}^2}{\sigma_{\max}^2 + \Sigma_{\max}^2}},
  \end{align*}
  by choosing $C_{27} = \max\{1, 1920 LGD\}$, $C_{28} = \max\{1, {128 D^2}/{\lambda_\is}\}$ and $C_3 ={4C_0 D^2}/{\lambda_\is}$. Note that such a parameter configuration will only add an $\O(1/\lambda)$ factor to the final bound, which can be absorbed.

  For \emph{$\alpha$-exp-concave} functions, by combining the meta and base regret, it holds that
  \begin{align*}
      \Reg_T \le {} & \O\sbr{\frac{d}{\alpha} \log \sbr{\sigma_{1:T}^2 + \Sigma_{1:T}^2}} + \O(C_4 + \log C_{29}) + \sbr{\frac{120L}{C_{29}} - \frac{1}{2}} \E\mbr{\sumT \D_{F_t}(\xs, \x_t)}\\
      & \qquad + \sbr{\frac{C_0}{2 C_4} + \frac{4G^2}{C_{29}} - \frac{\alpha_\is}{4}} \E\mbr{\sumT \inner{\g_t}{\x_t - \x_{t,\is}}^2} \le \O\sbr{\frac{d}{\alpha} \log \sbr{\sigma_{1:T}^2 + \Sigma_{1:T}^2}},
  \end{align*}
  by choosing $C_{29} = \max\{1, 240L, {32G^2}/{\alpha_\is}\}$ and $C_4 ={4 C_0}/{\alpha_\is}$. Note that such a parameter configuration will only add an $\O(1/\alpha)$ factor to the final regret bound, which is absorbed.

  For \emph{convex} functions, by combining the meta and base regret, it holds that
  \begin{align*}
      \Reg_T \le {} & \O\sbr{\sqrt{\sigma_{1:T}^2 + \Sigma_{1:T}^2}} + \O(C_{26} + C_{30}) + \sbr{\frac{C_0}{2 C_{26}} + \frac{5D}{2C_{30}} - 1} \E\mbr{\sumT \D_{F_t}(\xs, \x_t)}\\
      \le {} & \O\sbr{\sqrt{\sigma_{1:T}^2 + \Sigma_{1:T}^2}},
  \end{align*}
  by choosing $C_{26} = C_0$ and $C_{30} = 5D$.  

  Note that the constants $C_3, C_4, C_{26}, C_{27}, C_{28}, C_{29}, C_{30}$ only exist in analysis and and hence our choices of them are feasible.  

\end{proof}

\subsection{Proof of Theorem~\ref{thm:game}}
\label{app:game}
\begin{proof}
    For the dishonest case, the two-player game degenerates to two separate online convex optimization problems. Therefore, the results for both bilinear and strongly-convex-strongly-concave games follow directly from \pref{app:unigrad-correct-1grad}. 
    
    For the honest case, we focus on the convex-concave game, where $f(\x, \y)$ is convex in $\x$ and concave in $\y$, which subsumes both bilinear and strongly-convex-strongly-concave games. The proof begins by analyzing the regret of player-$\x$, following the structure in \pref{app:unigrad-correct-1grad}. Leveraging the benign structure of the game, we upper bound the empirical gradient variation by $S_T^\x\define \sumTT \|\x_t - \x_{t-1}\|^2$ and $S_T^\y\define \sumTT \|\y_t - \y_{t-1}\|^2$. By symmetry, we derive a corresponding bound for player-$\y$, and combine the two via cancellation to obtain the final result.

    To start, we denote $\gamma^\Mid,\gamma^\Top,S_T^\Top,q_{t,j}^{\Top},Z$ for player-$\x$ and player-$\y$ by $\gamma_{\x}^{\Mid},\gamma_{\x}^{\Top},$ $S_T^{\Top,\x},  q_{t,j}^{\Top,\x},Z^\x$ and $\gamma_{\y}^{\Mid},\gamma_{\y}^{\Top},S_T^{\Top,\y},q_{t,j}^{\Top,\y},Z^\y$ separately. We choose $D=\sqrt{2}$.
 For the honest case, we starts with the player-$\x$. Following the analysis of the convex functions in \pref{app:unigrad-correct-1grad}, we decompose its regret by
\begin{align}
\Reg_T^\x \le \underbrace{\sumT \inner{\g^\x_t}{\x_t - \x_{t,\is}}}_{\meta^\x} + \underbrace{\sumT \inner{\g^\x_t}{\x_{t,\is} - \xs}}_{\base^\x}.\label{eq:thm7 decompose x}
\end{align}
For the meta regret, combining \pref{eq:lem1 eq1} and \pref{eq:thm1 eq0}, we have
\begin{align}
    & \meta^\x \le \frac{Z^\x}{\varepsilon^\Top_{\js}} \log \frac{N}{3 C_0^2 (\varepsilon^\Top_{\js})^2} + 32Z^\x\varepsilon^\Top_{\js} \Vs+ \gamma_{\x}^{\Mid} S_{T,\is}^{\x}-\frac{C_0}{2}S_T^{\Top,\x}\notag\\
    & \qquad\qquad\qquad -\gamma_{\x}^{\Mid} \sumTT \sumM q_{t,j}^{\Top,\x} \sumN q_{t,j,i}^{\Mid,\x}\norm{\x_{t,i} - \x_{t-1,i}}^2 \notag\\
    & \qquad\qquad\qquad - \gamma_{\y}^{\Top} \sumTT \sumM q_{t,j}^{\Top,\x} \|\q_{t,j}^{\Mid,\x} - \q_{t-1,j}^{\Mid,\x}\|_1^2\notag\\
    \le {} & \frac{Z^\x}{\varepsilon^\Top_{\js}} \log \frac{N}{3 C_0^2 (\varepsilon^\Top_{\js})^2} + \frac{64 \varepsilon^\Top_{\js}}{Z^\x} \sumT \norm{\g^\x_t-\g^\x_{t-1}}^2+ \sbr{\frac{64G^2}{Z^\x}+\gamma_{\x}^{\Mid}} S_{T,\is}^{\x}-\frac{C_0}{2}S_T^{\Top,\x}\notag\\
    & \qquad\qquad\qquad +\frac{64G^2}{Z^\x}S_T^\x-\gamma_{\x}^{\Mid} \sumTT \sumM q_{t,j}^{\Top,\x} \sumN q_{t,j,i}^{\Mid,\x}\norm{\x_{t,i} - \x_{t-1,i}}^2 \notag\\
    & \qquad\qquad\qquad - \gamma_{\y}^{\Top} \sumTT \sumM q_{t,j}^{\Top,\x} \|\q_{t,j}^{\Mid,\x} - \q_{t-1,j}^{\Mid,\x}\|_1^2\notag\\
    \le {} & \frac{Z^\x}{\varepsilon^\Top_{\js}} \log \frac{N}{3 C_0^2 (\varepsilon^\Top_{\js})^2} + \frac{32}{Z^\x} S_T^\x+\frac{32}{Z^\x} S_T^\y+ \sbr{\frac{64G^2}{Z^\x}+\gamma_{\x}^{\Mid}} S_{T,\is}^{\x}-\frac{C_0}{2}S_T^{\Top,\x}\notag\\
    & \qquad\qquad\qquad +\frac{64G^2}{Z^\x}S_T^\x-\gamma_{\x}^{\Mid} \sumTT \sumM q_{t,j}^{\Top,\x} \sumN q_{t,j,i}^{\Mid,\x}\norm{\x_{t,i} - \x_{t-1,i}}^2\notag \\
    & \qquad\qquad\qquad - \gamma_{\x}^{\Top} \sumTT \sumM q_{t,j}^{\Top,\x} \|\q_{t,j}^{\Mid,\x} - \q_{t-1,j}^{\Mid,\x}\|_1^2,\label{eq:thm7 meta x}
\end{align}
where the third step is by $\norm{\g^\x_t-\g^\x_{t-1}}^2\le2\norm{\x_t-\x_{t-1}}^2+2\norm{\y_t-\y_{t-1}}^2$ and $\varepsilon^\Top_{\js}\le  1/2$.

For the base regret, by \pref{lem:convex-base}, we have 
\begin{align}
    \base^\x\le& 5\sqrt{2}\sqrt{1+ \sumT \norm{\g^\x_t-\g^\x_{t-1}}^2} + \gamma_{\x}^{\Top} - \frac{1}{4} \gamma_{\x}^{\Top} S_{T,\is}^{\x} + \O(1)\notag\\
    \le& 5\sqrt{2} \sumT \norm{\g^\x_t-\g^\x_{t-1}}^2+ \gamma_{\x}^{\Top}  - \frac{1}{4} \gamma_{\x}^{\Top} S_{T,\is}^{\x}+\O(1)\notag \\
    \le& 10\sqrt{2}S_T^\x+10\sqrt{2}S_T^\y \gamma_{\x}^{\Top}   - \frac{1}{4} \gamma_{\x}^{\Top} S_{T,\is}^{\x}+\O(1), \label{eq:thm7 base x}
\end{align}
where the second step is by AM-GM inequality (\pref{lem:AM-GM}) and the third step is by  $\norm{\g^\x_t-\g^\x_{t-1}}^2\le 2\norm{\x_t-\x_{t-1}}^2+2\norm{\y_t-\y_{t-1}}^2$.
Plugging \pref{eq:thm7 meta x} and \pref{eq:thm7 base x} into \pref{eq:thm7 decompose x}, we obtain
\begin{align}
    &\Reg^\x_T \le  \sbr{\frac{32}{Z^\x}+10\sqrt{2}}S_T^\y+ \sbr{\frac{64G^2}{Z^\x}+\gamma_{\x}^{\Mid}-\frac{1}{4} \Bottomcoef} S_{T,\is}^{\x}+ 2\Bottomcoef-\frac{C_0}{2}S_T^{\Top,\x}\notag\\
    {}&\quad+\sbr{\frac{32+64G^2}{Z^\x}+10\sqrt{2}}S_T^\x-\gamma_{\x}^{\Mid} \sumTT \sumM q_{t,j}^{\Top,\x} \sumN q_{t,j,i}^{\Mid,\x}\norm{\x_{t,i} - \x_{t-1,i}}^2 \notag \\
    &\quad- \gamma_{\x}^{\Top} \sumTT \sumM q_{t,j}^{\Top,\x} \|\q_{t,j}^{\Mid,\x}- \q_{t-1,j}^{\Mid,\x}\|_1^2+\O(1)\notag\\
    \le {}& \sbr{\frac{32}{Z^\x}+10\sqrt{2}}S_T^\y+ 2\Bottomcoef-\frac{C_0}{2}S_T^{\Top,\x}+\sbr{\frac{32+64G^2}{Z^\x}+10\sqrt{2}}S_T^\x+\O(1)\notag\\
    & \quad -\gamma_{\x}^{\Mid} \sumTT \sumM q_{t,j}^{\Top,\x} \sumN q_{t,j,i}^{\Mid,\x}\norm{\x_{t,i} - \x_{t-1,i}}^2- \gamma_{\x}^{\Top} \sumTT \sumM q_{t,j}^{\Top,\x} \|\q_{t,j}^{\Mid,\x}- \q_{t-1,j}^{\Mid,\x}\|_1^2, \label{eq:thm7 x}
\end{align}
where the last step is by choosing $\Bottomcoef \ge  256G^2+4\gamma_{\x}^{\Mid}$. Symmetrically, for player-$\y$, we can obtain a similar bound 
\begin{align}
&\Reg^\y_T \le  \sbr{\frac{32}{Z^\y}+10\sqrt{2}}S_T^\x+ \sbr{\frac{64G^2}{Z^\y}+\gamma_{\y}^{\Mid}-\frac{1}{4} \Bottomcoef} S_{T,\is}^{\y}+ 2\Bottomcoef-\frac{C_0}{2}S_T^{\Top,\y}\notag\\
{}&\quad+\sbr{\frac{32+64G^2}{Z^\y}+10\sqrt{2}}S_T^\y-\gamma_{\y}^{\Mid} \sumTT \sumM q_{t,j}^{\Top,\y} \sumN q_{t,j,i}^{\Mid,\y}\norm{\y_{t,i} - \y_{t-1,i}}^2 \notag \\&\quad- \gamma_{\x}^{\Top} \sumTT \sumM q_{t,j}^{\Top,\y} \|\q_{t,j}^{\Mid,\y} - \q_{t-1,j}^{\Mid,\y}\|_1^2+\O(1)\notag\\
\le{}&\sbr{\frac{32}{Z^\y}+10\sqrt{2}}S_T^\x+ 2\Bottomcoef-\frac{C_0}{2}S_T^{\Top,\y}+\sbr{\frac{32+64G^2}{Z^\y}+10\sqrt{2}}S_T^\y+\O(1)\notag\\
{}&-\gamma_{\y}^{\Mid} \sumTT \sumM q_{t,j}^{\Top,\y} \sumN q_{t,j,i}^{\Mid,\y}\norm{\y_{t,i} - \y_{t-1,i}}^2- \gamma_{\x}^{\Top} \sumTT \sumM q_{t,j}^{\Top,\y} \|\q_{t,j}^{\Mid,\y} - \q_{t-1,j}^{\Mid,\y}\|_1^2, \label{eq:thm7 y}
\end{align}
where the last step is by choosing $\Bottomcoef \ge  256+4\gamma_{\y}^{\Mid}$.
Combining \pref{eq:thm7 x} and \pref{eq:thm7 y} and letting $C_{31}=\frac{32+64G^2}{Z^\x}+\frac{32}{Z^\y}+20\sqrt{2}$ and $C_{32}=\frac{32+64G^2}{Z^\y}+\frac{32}{Z^\x}+20\sqrt{2}$, we obtain
\begin{align*}
    &\Reg^\x_T + \Reg^\y_T \le  C_{31}S_T^\x+C_{32}S_T^\y-\frac{C_0}{2}(S_T^{\Top,\x}+S_T^{\Top,\y})+\O(1)\\
    &-\gamma_{\x}^{\Mid} \sumTT \sumM q_{t,j}^{\Top,\x} \sumN q_{t,j,i}^{\Mid,\x}\norm{\x_{t,i} - \x_{t-1,i}}^2- \gamma_{\x}^{\Top} \sumTT \sumM q_{t,j}^{\Top,\x} \|\q_{t,j}^{\Mid,\x}- \q_{t-1,j}^{\Mid,\x}\|_1^2\\
    &-\gamma_{\y}^{\Mid} \sumTT \sumM q_{t,j}^{\Top,\y} \sumN q_{t,j,i}^{\Mid,\y}\norm{\y_{t,i} - \y_{t-1,i}}^2- \gamma_{\y}^{\Top} \sumTT \sumM q_{t,j}^{\Top,\y} \|\q_{t,j}^{\Mid,\y} - \q_{t-1,j}^{\Mid,\y}\|_1^2\\
    \le{}&\sbr{8C_{31}-\frac{C_0}{2}}S_T^{\Top,\x}+\sbr{8C_{32}-\frac{C_0}{2}}S_T^{\Top,\y}+\O(1)\\
    &+(2C_{31}-\gamma_{\x}^{\Mid})\sumTT \sumM q_{t,j}^{\Top,\x} \sumN q_{t,j,i}^{\Mid,\x}\norm{\x_{t,i} - \x_{t-1,i}}^2\\
    &+(2C_{32}-\gamma_{\y}^{\Mid})\sumTT \sumM q_{t,j}^{\Top,\y} \sumN q_{t,j,i}^{\Mid,\y}\norm{\y_{t,i} - \y_{t-1,i}}^2\\
    &+(8C_{31}-\gamma_{\x}^{\Top})\sumTT \sumM q_{t,j}^{\Top,\x} \|\q_{t,j}^{\Mid,\x}- \q_{t-1,j}^{\Mid,\x}\|_1^2\\
    &+(8C_{32}-\gamma_{\y}^{\Top})\sumTT \sumM q_{t,j}^{\Top,\y} \|\q_{t,j}^{\Mid,\y} - \q_{t-1,j}^{\Mid,\y}\|_1^2\le \O(1),
\end{align*}
where the first step is due to \pref{lem:decompose-three-layer} and the second step is by choosing  $\gamma_{\x}^{\Mid} \ge  2C_{31}$, $\gamma_{\y}^{\Mid} \ge  2C_{32}$, $\gamma_{\x}^{\Top} \ge  8C_{31}$, $\gamma_{\y}^{\Top} \ge  8C_{32}$, and $C_0 \ge  16\max\{C_{31},C_{32}\}$. 

At last, we determine the specific values of $C_0$, $\gamma_{\x}^{\Top}$, $\gamma_{\x}^{\Mid}$, $\gamma_{\y}^{\Top}$, $\gamma_{\y}^{\Mid}$. These parameters need to satisfy the following requirements:
\begin{gather*}
  C_0 \ge  1,\ C_0 \ge  8D,\ C_0 \ge  4\gamma_{\x}^{\Top},\ C_0 \ge  4\gamma_{\y}^{\Top},\ C_0 \ge  16C_{31},\ C_0 \ge  16C_{32},\ \gamma_{\x}^{\Mid} \ge  2C_{31},\\  \gamma_{\y}^{\Mid} \ge  2C_{32},\  \gamma_{\x}^{\Top} \ge  8C_{31},\ \gamma_{\y}^{\Top} \ge  8C_{32}.
\end{gather*}
As a result, we set 
\begin{gather*}
  C_0=\max\bbr{1,8D,4\gamma^{\Top},16C_{31},16C_{32}, 4\gamma_{\x}^{\Top}, 4\gamma_{\y}^{\Top}},\\
  \gamma_{\x}^{\Mid}=\gamma_{\y}^{\Mid}=128+128G^2+40\sqrt{2},\quad \gamma_{\x}^{\Top}=\gamma_{\y}^{\Top}=512+512G^2+160\sqrt{2},
\end{gather*}
where $C_{31}=\frac{32+64G^2}{Z^\x}+\frac{32}{Z^\y}+20\sqrt{2}$,  $C_{32}=\frac{32+64G^2}{Z^\y}+\frac{32}{Z^\x}+20\sqrt{2}$, $Z^\x=\max\{GD+\gamma_{\x}^{\Mid} D^2, 1+\gamma_{\x}^{\Mid} D^2+2\gamma_{\x}^{\Top}\}$, and $Z^\y=\max\{GD+\gamma_{\y}^{\Mid} D^2, 1+\gamma_{\y}^{\Mid} D^2+2\gamma_{\y}^{\Top}\}$. The proof is finished.
\end{proof}

\subsection{Proof of Theorem~\ref{thm:anytime}}
\label{app:anytime}
\begin{proof}
    In this proof, note that we do not need to analyze the convex case because the convex base learner is naturally anytime by using a self-confident tuning step size, as in \pref{lem:convex-base}. Therefore, in the rest of the proof, we focus on the cases of $\lambda$-strongly convex and $\alpha$-exp-concave functions.
     
    \paragraph{Regret Decomposition.}
    For \emph{$\lambda$-strongly convex} functions, recall that $\is$ denotes the index of the best base learner whose strong convexity coefficient satisfies $\lambda_\is \le \lambda \le 2\lambda_\is$. We consider two cases at the $\tau$-round: \textit{(i)} the $\is$-th base learner is not activated (i.e., $\tau < s_{\is}^\scvx$); and \textit{(ii)} the $\is$-th base learner is activated (i.e., $\tau \ge s_{\is}^\scvx$). For case \textit{(i)}, we obtain
    \begin{equation*}
        \Reg_\tau =\sum_{t=1}^{\tau} f_t(\x_t) - \sum_{t=1}^{\tau} f_t(\xs)\le \tau GD \le (s_{\is}^\scvx-1) GD \le \O\sbr{\frac{1}{\lambda_\is}}\le  \O\sbr{\frac{1}{\lambda}},
    \end{equation*}
    where the third step follows from the fact that $\tau \le s_{\is}^\scvx - 1$, since the corresponding base learner has not been activated at the $\tau$-th round. The fourth step is due to the activation condition of $s_{\is}^\scvx-1 < \frac{1}{\lambda_\is}$. For case \textit{(ii)}, we first decompose the regret into two parts, where the first part corresponds to case \textit{(i)}, and the second one refers to regret after the $\is$-th base learner is activated:
    \begin{equation}
        \Reg_\tau = \underbrace{\sum_{t=1}^{s_{\is}^\scvx-1} f_t(\x_t) - \sum_{t=1}^{s_{\is}^\scvx-1} f_t(\xs)}_{\term{a}} + \underbrace{\sum_{t=s_{\is}^\scvx}^{\tau} f_t(\x_t) - \sum_{t=s_{\is}^\scvx}^{\tau} f_t(\xs)}_{\term{b}}.\label{eq:thm8 scvx de}
    \end{equation}
    \term{a} can be bounded by $\O\sbr{{1}/{\lambda}}$ as in case \textit{(i)}. We then decompose \term{b} into the following two parts:
    \begin{align}
        \term{b}\le {} & \underbrace{ \sum_{t=s_{\is}^\scvx}^{\tau} \inner{\g_t}{\x_t - \x_{t,\is}} - \frac{\lambda_\is}{4}  \sum_{t=s_{\is}^\scvx}^{\tau} \|\x_t - \x_{t,\is}\|^2}_{\meta} \tag*{(by $\lambda_\is \le \lambda \le 2 \lambda_\is$)}\\
        & \qquad\qquad + \underbrace{ \sum_{t=s_{\is}^\scvx}^{\tau} \hsc_{t,\is}(\x_{t,\is}) -  \sum_{t=s_{\is}^\scvx}^{\tau} \hsc_{t,\is}(\xs)}_{\base} - \frac{1}{2} \sum_{t=s_{\is}^\scvx}^{\tau} \D_{f_t}(\xs, \x_t).\label{eq:thm8 scvx decompose}
    \end{align}
    For \emph{$\alpha$-exp-concave} functions, recall that $\is$ denotes the index of the best base learner whose exp-concave coefficient satisfies $\alpha_\is \le \alpha \le 2\alpha_\is$. Similar to the strongly convex functions, we consider two cases at the $\tau$-round: \textit{(i)} the $\is$-th base learner is not activated (i.e., $\tau < s_{\is}^\exp$); and \textit{(ii)} the $\is$-th base learner is activated (i.e., $\tau \ge s_{\is}^\exp$). For case \textit{(i)}, we obtain
    \begin{equation*}
        \Reg_\tau =\sum_{t=1}^{\tau} f_t(\x_t) - \sum_{t=1}^{\tau} f_t(\xs)\le\tau GD \le (s_{\is}^\exp-1) GD \le \O\sbr{\frac{1}{\alpha_\is}}\le  \O\sbr{\frac{d}{\alpha}},
    \end{equation*}
    where the third step follows from the fact that $\tau \le s_{\is}^\exp - 1$, since the corresponding base learner has not been activated at this time. The fourth step is due to the activation condition of $s_{\is}^\exp-1 < \frac{1}{\alpha_\is}$. For case \textit{(ii)}, we first decompose the regret into two parts, where the first part corresponds to case \textit{(i)}, and the second one refers to regret after the $\is$-th base learner is activated:
    \begin{equation}
        \Reg_\tau = \underbrace{\sum_{t=1}^{s_{\is}^\exp-1} f_t(\x_t) - \sum_{t=1}^{s_{\is}^\exp-1} f_t(\xs)}_{\term{a}} + \underbrace{\sum_{t=s_{\is}^\exp}^{\tau} f_t(\x_t) - \sum_{t=s_{\is}^\exp}^{\tau} f_t(\xs)}_{\term{b}}.\label{eq:thm8 exp de}
    \end{equation}
    We bound \term{a} by $\O\sbr{\frac{d}{\alpha}}$ as in case \textit{(i)} and decompose \term{b} into the following two parts:
    \begin{align}
        \term{b}\le {} & \underbrace{ \sum_{t=s_{\is}^\exp}^{\tau} \inner{\g_t}{\x_t - \x_{t,\is}} - \frac{\alpha_\is}{4}  \sum_{t=s_{\is}^\exp}^{\tau} \inner{\g_t}{\x_t - \x_{t,\is}}^2}_{\meta} \tag*{(by $\alpha_\is \le \alpha \le 2 \alpha_\is$)}\\
        & \qquad\qquad + \underbrace{ \sum_{t=s_{\is}^\exp}^{\tau} \hexp_{t,\is}(\x_{t,\is}) -  \sum_{t=s_{\is}^\exp}^{\tau} \hexp_{t,\is}(\xs)}_{\base} - \frac{1}{2} \sum_{t=s_{\is}^\exp}^{\tau} \D_{f_t}(\xs, \x_t).\label{eq:thm8 exp decompose}
    \end{align}
    \paragraph{Meta Regret Analysis.}
    We adopt \omlprod variant as the meta learner, and present its regret analysis below for self-containedness.
    \begin{myLemma}[Simplfied Theorem 7 of \citet{NeurIPS'24:Xie}]
        \label{lem:anytime-optimistic-mlprod}
        Denoting by $\A_t$ the active expert set at time $t$, $\p_t \in \Delta_{|\A_t|}$ the algorithm's weights, $\ellb_t \in \R^{|\A_t|}$ the loss vector, and $\m_t \in \R^{|\A_t|}$ the optimism. Assuming that the $i$-th expert participates in prediction during time $[a,b]$ and choosing the learning rate optimally as \pref{eq:anytime-lr}, the regret of \omlprod variant with respect to expert $i$ satisfies
        \begin{equation*}
            \sum_{t=a}^b \inner{\ellb_t}{\p_t - \e_i} \le C_{33} \sqrt{1 + \sum_{t=a}^b (r_{t,i} - m_{t,i})^2},
        \end{equation*}
        where $\e_i$ denotes the $\ith$ standard basis vector, $C_{33} = \O(\log N_b + \log(1+\log b))$, and $N_b$ represents the total number of base learners initialized till time $b+1$.
    \end{myLemma}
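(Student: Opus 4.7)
The plan is to follow the standard potential-based analysis of Adapt-ML-Prod and Optimistic-Adapt-ML-Prod, adapted to the anytime setting where experts are dynamically activated. Define the per-round potential $\Phi_t \define \sum_{i \in \A_t} W_{t,i}$ and its ``un-exponentiated'' surrogate $\Psi_{t,i} \define \frac{1}{\epsilon_{t,i}} \log W_{t,i}$. From the update \pref{eq:omlprod-variant-weight}, $\Psi_{t+1,i} = \Psi_{t,i} + r_{t,i} - \epsilon_{t,i}(r_{t,i}-m_{t,i})^2$, so telescoping yields the clean comparator identity
\begin{equation*}
    \Psi_{b+1,i} - \Psi_{a,i} \;=\; \sum_{t=a}^b r_{t,i} \;-\; \sum_{t=a}^b \epsilon_{t,i}(r_{t,i}-m_{t,i})^2,
\end{equation*}
which provides the lower-bound side of the argument.

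Next I would develop the upper-bound side via the potential $\Phi_t$. The Prod-style inequality $\exp(x - x^2) \le 1+x$, valid on $|x|\le 1/2$ and hence ensured by the learning rate cap $\epsilon_{t,i} \le 1/\sqrt{5}$, applied to the centered increment $\epsilon_{t,i}(r_{t,i}-m_{t,i})$ together with the algebraic identity $\exp(\epsilon_{t,i} m_{t,i})\cdot\exp(\epsilon_{t,i}(r_{t,i}-m_{t,i})-\epsilon_{t,i}^2(r_{t,i}-m_{t,i})^2) \le \exp(\epsilon_{t,i} m_{t,i})(1+\epsilon_{t,i}(r_{t,i}-m_{t,i}))$, combined with the normalization $p_{t,i} \propto \epsilon_{t,i} W_{t,i}\exp(\epsilon_{t,i} m_{t,i})$ and the fact $\sum_{i\in\A_t} p_{t,i} r_{t,i} = 0$, shows that $\Phi_{t+1} \le \Phi_t + \Delta_t^{\mathrm{new}}$, where $\Delta_t^{\mathrm{new}}$ counts the unit-mass contributions of newly activated experts at round $t+1$. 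Since at most $N_b$ experts are ever activated up to round $b$, one obtains $\Phi_{b+1} \le N_b + 1$, and hence $\log W_{b+1,i} \le \log \Phi_{b+1} \le \log(N_b+1)$.

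Combining the two sides at the comparator $i$ gives
\begin{equation*}
    \epsilon_{b+1,i} \sum_{t=a}^b r_{t,i} \;\le\; \log(N_b+1) + \epsilon_{b+1,i}\sum_{t=a}^b \epsilon_{t,i}(r_{t,i}-m_{t,i})^2,
\end{equation*}
using the initialization $W_{a,i}=1$ and $\epsilon_{a,i}=1/\sqrt5$. Plugging in the self-confident tuning $\epsilon_{t+1,i} = (5+\sum_{s=a}^t (r_{s,i}-m_{s,i})^2)^{-1/2}$ and invoking the standard AdaGrad-type summation lemma $\sum_{t=a}^b a_t/\sqrt{5+\sum_{s\le t} a_s} \le 2\sqrt{5+\sum_{t=a}^b a_t}$ with $a_t = (r_{t,i}-m_{t,i})^2$ converts the squared-error term into $\sqrt{\sum_{t=a}^b(r_{t,i}-m_{t,i})^2}$. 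Rearranging and applying $\sum_{t=a}^b r_{t,i} = \sum_{t=a}^b \inner{\ellb_t}{\p_t - \e_i}$ produces the advertised bound with $C_{33}$ absorbing $\log(N_b+1)$ and the universal numerical constants.

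The main obstacle, and where the $\log(1+\log b)$ overhead enters, is that the inequality above is sharp only at the final round $b+1$, whereas the Prod analysis requires per-round control with a learning rate that is not yet known. Resolving this requires the clipping-plus-chaining argument of Koolen--Van Erven: one either (i) discretizes the dyadic range of possible magnitudes of $\sum_{t=a}^b(r_{t,i}-m_{t,i})^2 \in [5,\O(b)]$, which has $\O(\log b)$ levels and contributes the extra $\log(1+\log b)$ factor via a union bound on potential ratios, or equivalently (ii) runs the refined analysis of Theorem 7 of \citet{NeurIPS'24:Xie} which bakes this chaining directly into the potential argument. The anytime activation interacts non-trivially with this chaining because a newly added expert starts at $\epsilon=1/\sqrt5$ and descends through the dyadic scales independently; checking that the union-bound overhead remains $\log(1+\log b)$ uniformly in the activation time is the most delicate part, but it follows from the fact that $\sum_{t=a}^b(r_{t,i}-m_{t,i})^2 \le b-a+1 \le b$ regardless of $a$. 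Putting these pieces together yields the claimed bound.
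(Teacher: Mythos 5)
First, note that the paper does not prove this lemma at all: it is imported verbatim as (a simplification of) Theorem~7 of \citet{NeurIPS'24:Xie}, so your attempt has to be judged against the standard Adapt-ML-Prod-style analysis that the cited result rests on rather than against an in-paper argument. Your lower-bound side is fine: the identity $\Psi_{t+1,i}=\Psi_{t,i}+r_{t,i}-\epsilon_{t,i}(r_{t,i}-m_{t,i})^2$ does follow from \pref{eq:omlprod-variant-weight}, telescopes cleanly from the activation round $a$ (where $W_{a,i}=1$), and combined with the self-confident tuning \pref{eq:anytime-lr} and the AdaGrad-type summation lemma it would give the stated bound \emph{provided} you can bound $\log W_{b+1,i}\le\log\Phi_{b+1}$ by the right quantity.

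That is where the genuine gap lies. Your claim $\Phi_{t+1}\le\Phi_t+\Delta_t^{\mathrm{new}}$, and hence $\Phi_{b+1}\le N_b+1$, ignores the exponent $\epsilon_{t+1,i}/\epsilon_{t,i}$ in \pref{eq:omlprod-variant-weight}. Since the learning rates are non-increasing and the weights $W_{t,i}$ can be (and typically are) below $1$, raising them to a power strictly less than one \emph{increases} them, so the Prod inequality plus $\sum_i p_{t,i}r_{t,i}=0$ does not yield a non-increasing potential. The standard fix (Gaillard--Stoltz--van Erven, and its optimistic and anytime variants) uses an inequality of the form $x^{\alpha}\le x+\tfrac{1-\alpha}{e}$ to control the per-round additive growth, and these corrections accumulate over the horizon to a potential of order $N_b\log b$ rather than $N_b$; taking its logarithm and dividing by $\epsilon_{b+1,i}$ is precisely what produces the $\log(1+\log b)$ term in $C_{33}$ (compare the non-anytime constant in \pref{lem:optimistic-mlprod}, which contains $\log(1+\tfrac{N}{e}(1+\log(T+1)))$ for exactly this reason). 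Your proposal instead asserts the (false) constant potential bound and then attributes the $\log(1+\log b)$ overhead to a dyadic-grid ``clipping-plus-chaining'' argument in the style of Koolen--van Erven; that mechanism belongs to Squint-type algorithms with a grid of learning rates and is not part of this algorithm, so it cannot repair the potential step. A secondary, smaller issue: in the optimistic potential step, $\sum_i p_{t,i}r_{t,i}=0$ alone does not annihilate the term $\sum_i \epsilon_{t,i}W_{t,i}e^{\epsilon_{t,i}m_{t,i}}(r_{t,i}-m_{t,i})$, since the $-m_{t,i}$ contributions do not cancel; the optimistic Adapt-ML-Prod analysis needs additional bookkeeping there that your sketch elides.
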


    For \emph{$\lambda$-strongly convex} functions,  the meta regret in \pref{eq:thm8 scvx decompose} can be bounded as 
    \begin{align}
        & \meta \le C_{33} \sqrt{1 +  \sum_{t=s_{\is}^\scvx}^{\tau} \inner{\g_t}{\x_t - \x_{t,\is}}^2} - \frac{\lambda_\is}{4}  \sum_{t=s_{\is}^\scvx}^{\tau} \|\x_t - \x_{t,\is}\|^2  \notag\\
        \le {} & C_{33} \sqrt{1 + G^2  \sum_{t=s_{\is}^\scvx}^{\tau} \|\x_t - \x_{t,\is}\|^2} - \frac{\lambda_\is}{4}  \sum_{t=s_{\is}^\scvx}^{\tau} \|\x_t - \x_{t,\is}\|^2  \notag\\
        \le {} & \O(C_{34}) + \sbr{\frac{C_{33} G^2}{2 C_{34}} - \frac{\lambda_\is}{4}}  \sum_{t=s_{\is}^\scvx}^{\tau} \|\x_t - \x_{t,\is}\|^2, \label{eq:thm8 scvx-meta}
    \end{align}
    where the first step follows from \pref{lem:anytime-optimistic-mlprod} by choosing $a=s_{\is}^\scvx$ and $b=\tau$, the last step uses AM-GM inequality (\pref{lem:AM-GM}) and omits the ignorable additive $C_{33}$ terms. $N_\tau=\O(\log\tau)$ and we omit the $\O(\log \log \tau)$ term. $C_{34}$ is a constant to be specified.

    For \emph{$\alpha$-exp-concave} functions,  the meta regret in \pref{eq:thm8 exp decompose} can be bounded as 
    \begin{align}
        & \meta \le C_{33} \sqrt{1 +  \sum_{t=s_{\is}^\exp}^{\tau} \inner{\g_t}{\x_t - \x_{t,\is}}^2} - \frac{\alpha_\is}{4}  \sum_{t=s_{\is}^\exp}^{\tau} \inner{\g_t}{\x_t - \x_{t,\is}}^2   \notag\\
        \le {} & C_{33} \sqrt{1 + \sum_{t=s_{\is}^\exp}^{\tau} \inner{\g_t}{\x_t - \x_{t,\is}}^2 } - \frac{\alpha_\is}{4}  \sum_{t=s_{\is}^\exp}^{\tau} \inner{\g_t}{\x_t - \x_{t,\is}}^2  \notag\\
        \le {} & \O(C_{36}) + \sbr{\frac{C_{33}}{2 C_{36}} - \frac{\alpha_\is}{4}}  \sum_{t=s_{\is}^\exp}^{\tau} \inner{\g_t}{\x_t - \x_{t,\is}}^2 , \label{eq:thm8 exp-meta}
    \end{align}
    where the first step follows from \pref{lem:anytime-optimistic-mlprod} by choosing $a=s_{\is}^\exp$ and $b=\tau$, the last step uses AM-GM inequality (\pref{lem:AM-GM}) and omits the ignorable additive $C_{33}$ terms. It's because $N_\tau=\O(\log\tau)$ and we omit $\log \log \tau$ terms. $C_{36}$ is a constant to be specified.

    \paragraph{Base Regret Analysis.}  

    Following the analysis structure of \pref{app:unigrad-correct-1grad}, we first provide different decompositions of the empirical gradient variation defined on surrogates for strongly convex and exp-concave functions, respectively, and then analyze the base regret in the corresponding cases.
    For \emph{$\lambda$-strongly convex} functions, since the $\is$-th base learner is activated at time $s_\is^\scvx$, we can directly bound the empirical gradient variation on surrogates, i.e.,  $\Vb^\scvx_{[s_{\is}^\scvx,\tau],\is}\define \sum_{t=s_{\is}^\scvx+1}^\tau \|\nabla \hsc_{t,\is}(\x_{t,\is}) - \nabla \hsc_{t-1,\is}(\x_{t-1,\is})\|^2$ by \eqref{eq:thm4 VbT-to-VT}
    \begin{align*}
        \Vb_{[s_{\is}^\scvx,\tau],\is}^{\scvx}\le {} & 9 V_{[s_{\is}^\scvx,\tau]} + 36L  \sum_{t=s_{\is}^\scvx}^{\tau}\D_{f_t}(\xs, \x_t) + 2 \lambda_\is^2  \sum_{t=s_{\is}^\scvx}^{\tau} \norm{\x_{t,\is} - \x_t}^2,
    \end{align*}
    where $V_{[s_{\is}^\scvx,\tau]}=\sum_{t=s_{\is}^\scvx+1}^\tau\sup_{\x\in\X}\|\nabla f_t(\x)-f_{t-1}(\x)\|^2$.
    
    For \emph{$\alpha$-exp-concave} functions, by \eqref{eq:thm4 VbT-to-VT}, we can similarly bound the empirical gradient variation on surrogates, i.e.,  $\Vb^\exp_{[s_{\is}^\exp,\tau],\is}\define \sum_{t=s_{\is}^\exp+1}^\tau \|\nabla \hexp_{t,\is}(\x_{t,\is}) - \nabla \hexp_{t-1,\is}(\x_{t-1,\is})\|^2$, by
    \begin{align*}
    \Vb_{[s_\is^\exp,\tau],\is}^{\exp}
    \le {}  9 V_{[s_\is^\exp,\tau]} + 36L \sum_{t=s_\is^\exp}^{\tau} \D_{f_t}(\xs, \x_t) + 2 \alpha_\is^2 G^2 \sum_{t=s_\is^\exp}^{\tau} \inner{\g_t}{\x_t - \x_{t,\is}}^2,
    \end{align*}
    where $V_{[s_{\is}^\exp,\tau]}=\sum_{t=s_{\is}^\exp+1}^\tau\sup_{\x\in\X}\|\nabla f_t(\x)-f_{t-1}(\x)\|^2$.
    To conclude, for different curvature types, we provide correspondingly different analysis of the empirical gradient variation on surrogates:
    \begin{equation}
    \label{eq:thm8 VbT-to-VT}
    \Vb_{[s_{\is}^\scvx,\tau],\is}^{\{\scvx,\exp\}} \le \left\{
    \begin{aligned}
        & 9 V_{[s_{\is}^\scvx,\tau]} + 36L  \sum\limits_{t=s_{\is}^\scvx}^{\tau}\D_{f_t}(\xs, \x_t) + 2 \lambda_\is^2  \sum\limits_{t=s_{\is}^\scvx}^{\tau} \norm{\x_{t,\is} - \x_t}^2, \text{ ($\lambda$-strongly convex)}\\
        & 9 V_{[s_\is^\exp,\tau]} + 36L \sum\limits_{t=s_\is^\exp}^{\tau} \D_{f_t}(\xs, \x_t) + 2 \alpha_\is^2 G^2 \sum\limits_{t=s_\is^\exp}^{\tau} \inner{\g_t}{\x_t - \x_{t,\is}}^2.\text{ ($\alpha$-exp-concave)}
    \end{aligned}
    \right.
    \end{equation}
    In the following, we analyze the base regret for different curvature types. Since the empirical gradient variation shares a similar structure to that in \pref{thm:unigrad-bregman-1grad}, we can directly apply the corresponding result.
    For \emph{$\lambda$-strongly convex} functions, according to \pref{eq:thm4 scvx base}, the base regret can be bounded as 
    \begin{align}
        \base\le{}& \O\sbr{\frac{1}{\lambda}\log V_{[s_{\is}^\scvx,\tau]}} + \frac{576 G^2 L}{C_{35}} \sum_{t=s_{\is}^\scvx}^{\tau} \D_{f_t}(\xs, \x_t)\notag\\
        &\qquad\qquad + \frac{32G^2}{C_{35}} \sum_{t=s_{\is}^\scvx}^{\tau} \norm{\x_{t,\is} - \x_t}^2 + \O(\log C_{35}).\label{eq:thm8 scvx base}
    \end{align}
    For \emph{$\alpha$-exp-concave} functions, according to \pref{eq:thm4 exp base}, the base regret can be bounded as 
    \begin{align}
        \base\le{}& \O\sbr{\frac{d}{\alpha}\log V_{[s_\is^\exp,\tau]}} + \frac{72L}{C_{37}}  \sum_{t=s_\is^\exp}^{\tau} \D_{f_t}(\xs, \x_t)\notag\\
        &\qquad \qquad+ \frac{4G^2}{C_{37}}  \sum_{t=s_\is^\exp}^{\tau} \inner{\g_t}{\x_t - \x_{t,\is}}^2 + \O(\log C_{37}).\label{eq:thm8 exp base}
    \end{align}

    \paragraph{Overall Regret Analysis.}
    For \emph{$\lambda$-strongly convex} functions, by combining \pref{eq:thm8 scvx de}, \pref{eq:thm8 scvx decompose}, \pref{eq:thm8 scvx-meta}, and \pref{eq:thm8 scvx base}, we obtain
    \begin{align*}
        \Reg_\tau \le {}& \O\sbr{\frac{1}{\lambda}}+\O\sbr{\frac{1}{\lambda}\log V_{[s_\is^\scvx,\tau]}}+\sbr{\frac{576 G^2 L}{C_{35}}-\frac{1}{2}} \sum_{t=s_{\is}^\scvx}^{\tau} \D_{f_t}(\xs, \x_t)\\
        &+\sbr{\frac{32G^2}{C_{35}}+\frac{C_{33} G^2}{2 C_{34}} - \frac{\lambda_\is}{4}} \sum_{t=s_{\is}^\scvx}^{\tau} \norm{\x_{t,\is} - \x_t}^2+ \O\sbr{C_{34}+\log C_{35}}\le  \O\sbr{\frac{1}{\lambda}\log V_\tau},
    \end{align*}
    where we choose $C_{34}= 4C_{33}G^2/\lambda_\is$ and $C_{35}=\max\{1, 256G^2/\lambda_\is, 1152G^2L\}$.

    For \emph{$\alpha$-exp-concave} functions, combining \pref{eq:thm8 exp de}, \pref{eq:thm8 exp decompose}, \pref{eq:thm8 exp-meta}, and \pref{eq:thm8 exp base},
    \begin{align*}
        \Reg_\tau \le {}& \O\sbr{\frac{d}{\alpha}}+\O\sbr{\frac{d}{\alpha}\log V_{[s_\is^\exp,\tau]}}+\sbr{\frac{72L}{C_{37}}-\frac{1}{2}}  \sum_{t=s_\is^\exp}^{\tau} \D_{f_t}(\xs, \x_t)\\
        &+\sbr{\frac{4G^2}{C_{37}}+\frac{C_{33}}{2 C_{36}} - \frac{\alpha_\is}{4}} \sum_{t=s_\is^\exp}^{\tau}\inner{\g_t}{\x_t - \x_{t,\is}}^2+ \O\sbr{C_{36}+\log C_{37}}\le  \O\sbr{\frac{d}{\alpha}\log V_\tau},
    \end{align*}
    where we choose $C_{36}= 4C_{33}/\alpha_\is$ and $C_{37}=\max\{1, 144L, 32G^2/\alpha_\is\}$. Note that the constants $C_{33}, C_{34}, C_{35}, C_{36},C_{37}$ only exist in analysis and and hence our choices of them are feasible.
\end{proof}
\section{Technical Lemmas}
\label{appendix:technical}
In this section, we present several supporting lemmas used in proving our theoretical results.
In \pref{app:useful}, we provide useful lemmas for the decomposition of two combined decisions and the parameter tuning.
And in \pref{app:negative-base}, we analyze the stability-based negative terms of the base algorithms for different curvature types.

\subsection{Useful Lemmas}
\label{app:useful}
In this part, we conclude some useful lemmas for bounding the gap between two combined decisions (\pref{lem:decompose} and \pref{lem:decompose-simplex}), tuning the parameter (\pref{lem:tune-eta-1} and \pref{lem:tune-eta-2}), and a useful summation (\pref{lem:sum}).

\begin{myLemma}
  \label{lem:decompose}
  Under \pref{assum:domain-boundedness}, if $\x = \sumN p_i \x_i, \y = \sumN q_i \y_i$, where $\p, \q \in \Delta_N, \x_i, \y_i \in \X$ for any $i \in [N]$, then it holds that
  \begin{equation*}
    \|\x - \y\|^2 \le 2 \sumN p_i \|\x_i - \y_i\|^2 + 2D^2 \|\p - \q\|_1^2.
  \end{equation*}
\end{myLemma}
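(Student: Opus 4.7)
The plan is to decompose $\x - \y$ by inserting a suitable intermediate term and then apply Jensen's inequality together with the bounded-diameter assumption. Specifically, I would write
\[
\x - \y = \sumN p_i \x_i - \sumN q_i \y_i = \underbrace{\sumN p_i (\x_i - \y_i)}_{\text{common-weight part}} + \underbrace{\sumN (p_i - q_i) \y_i}_{\text{weight-discrepancy part}},
\]
and then invoke the elementary inequality $\|\a+\b\|^2 \le 2\|\a\|^2 + 2\|\b\|^2$ to split $\|\x-\y\|^2$ into two pieces that match the two terms on the right-hand side of the claimed bound.

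For the common-weight part, since $\p \in \Delta_N$, Jensen's inequality (equivalently, convexity of $\|\cdot\|^2$) yields
\[
\Bigl\| \sumN p_i (\x_i - \y_i) \Bigr\|^2 \le \sumN p_i \|\x_i - \y_i\|^2,
\]
which contributes the first term with the correct constant $2$. For the weight-discrepancy part, the key trick is to exploit $\sumN (p_i - q_i) = 0$, so that for any fixed anchor $\y_0 \in \X$ one has $\sumN (p_i - q_i) \y_i = \sumN (p_i - q_i)(\y_i - \y_0)$. Applying the triangle inequality together with \pref{assum:domain-boundedness} then gives
\[
\Bigl\| \sumN (p_i - q_i) \y_i \Bigr\| \le \sumN |p_i - q_i| \cdot \|\y_i - \y_0\| \le D \cdot \|\p - \q\|_1,
\]
so this piece contributes the $2 D^2 \|\p - \q\|_1^2$ term. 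Combining the two bounds completes the proof.

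I do not anticipate any serious obstacle; the only subtlety is remembering to use the zero-sum property of $\p - \q$ to turn the anchor-free bound into one controlled by the diameter $D$, since without that trick one would only get a bound scaling with $\max_i \|\y_i\|^2$ rather than $D^2$. The whole argument fits in three or four lines once the decomposition is set up.
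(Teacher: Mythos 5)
Your proposal is correct and follows essentially the same route as the paper: insert the intermediate term $\sum_i p_i \y_i$, apply $\|\mathbf{a}+\mathbf{b}\|^2 \le 2\|\mathbf{a}\|^2 + 2\|\mathbf{b}\|^2$, bound the common-weight part by Jensen's inequality, and bound the weight-discrepancy part by the $\ell_1$ distance and the diameter. One small but genuine difference: for the weight-discrepancy term, the paper bounds $\sum_i |p_i-q_i|\,\|\y_i\| \le D\|\p-\q\|_1$ directly, which tacitly requires $\|\y_i\| \le D$ -- a norm bound rather than the stated diameter bound -- and so implicitly assumes $\mathbf{0} \in \X$ (or a WLOG translation). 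Your zero-sum anchor trick, writing $\sum_i (p_i - q_i)\y_i = \sum_i (p_i-q_i)(\y_i - \y_0)$ for an arbitrary $\y_0 \in \X$, invokes only the diameter bound $\|\y_i - \y_0\| \le D$ of \pref{assum:domain-boundedness} and is therefore slightly tighter in its use of the hypotheses. The rest of the argument (Jensen on the first piece rather than the paper's triangle-then-Cauchy-Schwarz) is cosmetic; both give the same first term.
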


\begin{myLemma}
  \label{lem:decompose-simplex}
  If $\w = \sumN q_{i} \p_i, \w^\prime = \sumN q^\prime_{i} \p^\prime_i$, where $\q, \q^\prime \in \Delta_N$ and $\p_i, \p_i^\prime \in \Delta_d$ for any $i \in [N]$, then it holds that
  \begin{equation*}
    \|\w - \w^\prime\|^2 \le 2 \sumN q_i \|\p_i - \p^\prime_i\|_1^2 + 2 \|\q - \q^\prime\|_1^2.
  \end{equation*}
\end{myLemma}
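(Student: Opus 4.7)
\textbf{Proof proposal for Lemma~\ref{lem:decompose-simplex}.}
The plan is to decompose $\w-\w'$ into two pieces via an additive splitting that separates the contribution from the base vectors $\p_i$ and the mixing weights $\q$, and then control each piece by the corresponding $\ell_1$ quantity on the right-hand side. The norm on the left is understood as the $\ell_1$ norm, consistent with its usage in \pref{eq:meta-stability-de}.

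Concretely, I would first introduce the hybrid term $\sumN q_i \p'_i$ and write
\begin{equation*}
\w - \w' \;=\; \sumN q_i(\p_i - \p'_i) \;+\; \sumN (q_i - q'_i)\,\p'_i,
\end{equation*}
so that the first sum captures the base-vector discrepancies weighted by $\q$, while the second sum isolates the weight discrepancy $\q - \q'$ pushed through convex combinations of simplex vectors. Applying the triangle inequality in $\ell_1$ and exploiting the two facts that $\|\p'_i\|_1 = 1$ (since $\p'_i \in \Delta_d$) and that $q_i \ge 0$, I get
\begin{equation*}
\|\w - \w'\|_1 \;\le\; \sumN q_i \,\|\p_i - \p'_i\|_1 \;+\; \sumN |q_i - q'_i| \;=\; \sumN q_i \,\|\p_i - \p'_i\|_1 \;+\; \|\q - \q'\|_1.
\end{equation*}

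From here the rest is routine: squaring both sides and applying the elementary inequality $(a+b)^2 \le 2a^2 + 2b^2$ yields
\begin{equation*}
\|\w - \w'\|_1^2 \;\le\; 2\Bigl(\sumN q_i \,\|\p_i - \p'_i\|_1\Bigr)^{\!2} + 2\,\|\q - \q'\|_1^2,
\end{equation*}
and then Jensen's inequality applied to the convex function $x \mapsto x^2$ with the probability weights $\{q_i\}$ converts the squared average into an averaged square, giving $\bigl(\sumN q_i \|\p_i - \p'_i\|_1\bigr)^2 \le \sumN q_i \|\p_i - \p'_i\|_1^2$, which matches the target bound exactly.

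There is no real obstacle here: the proof is essentially a two-line triangle-inequality argument followed by $(a+b)^2 \le 2a^2 + 2b^2$ and Jensen. The only subtle point worth double-checking is the choice of additive splitting: the alternative split $\sumN q'_i(\p_i - \p'_i) + \sumN(q_i - q'_i)\p_i$ would produce a bound weighted by $\q'$ rather than $\q$ on the base-vector term, which would be useless in \pref{eq:meta-stability-de} where the cancellation negative term supplied by the meta algorithm is weighted by the current-round weight $q^\Top_{t,j}$. Thus the asymmetric choice of splitting (pulling $q_i$ out first and leaving $\p'_i$ in the weight-difference term) is what makes the lemma slot into the downstream analysis.
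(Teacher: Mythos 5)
Your proof is correct and takes essentially the same approach as the paper: the paper also splits $\w-\w'$ via the hybrid term $\sumN q_i\p'_i$, applies $(a+b)^2\le 2a^2+2b^2$, the triangle inequality, Jensen's inequality on the $q_i$-weighted sum, and the fact $\|\p'_i\|_1=1$. The only cosmetic difference is that the paper applies the squaring/splitting inequality before the triangle inequality (bounding $\|\sumN q_i(\p_i-\p'_i)\|_1^2$ and $\|\sumN(q_i-q'_i)\p'_i\|_1^2$ separately) rather than after, which makes no substantive difference; your closing remark correctly identifies why the asymmetric choice of hybrid term is the one needed by \pref{eq:meta-stability-de}.
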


\begin{myLemma}
  \label{lem:tune-eta-1}
  For a step size pool of $\H_\eta = \{\eta_k\}_{k \in [K]}$, where $\eta_1 = \frac{1}{2C_0} \ge \ldots \ge \eta_K = \frac{1}{2C_0 T}$, if $C_0 \ge \frac{\sqrt{X}}{2T}$, there exists $\eta \in \H_\eta$ such that
  \begin{equation*}
    \frac{1}{\eta} \log \frac{Y}{\eta^2} + \eta X \le 2C_0 \log (4Y C_0^2) + 4 \sqrt{X \log (4XY)}.
  \end{equation*}
\end{myLemma}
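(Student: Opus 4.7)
The statement is a standard ``doubling/geometric grid'' step size tuning result. The plan is to define the ideal minimizer of the univariate function $g(\eta) \define \frac{1}{\eta}\log(Y/\eta^2) + \eta X$, and then locate within the geometric pool $\H_\eta$ a step size $\eta$ that is within a factor $2$ of this ideal value (or otherwise falls off the ends of the pool). Ignoring the slowly varying log, first order optimality suggests the ideal
\begin{equation*}
    \eta^\star \define \sqrt{\log(4XY)/X},
\end{equation*}
which is well-defined once $XY \ge 1$ (we implicitly assume this, otherwise the statement is vacuous since the right-hand side is nonnegative and the left-hand side is at most a constant). I would then argue by case analysis on the location of $\eta^\star$ relative to the pool endpoints $\eta_1 = 1/(2C_0)$ and $\eta_K = 1/(2C_0 T)$.

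\paragraph{Case 1: $\eta^\star \ge \eta_1$.} Here I take $\eta = \eta_1$. Plugging in gives exactly
\begin{equation*}
    \frac{1}{\eta_1}\log\frac{Y}{\eta_1^2} + \eta_1 X = 2C_0 \log(4Y C_0^2) + \frac{X}{2C_0}.
\end{equation*}
The residual term is controlled by the case hypothesis, since $\eta_1 \le \eta^\star$ implies
$X/(2C_0) = X \eta_1 \le X \eta^\star = \sqrt{X \log(4XY)}$,
which is swallowed by the $4\sqrt{X\log(4XY)}$ term on the right.

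\paragraph{Case 2: $\eta_K \le \eta^\star < \eta_1$.} Because $\H_\eta$ forms a geometric grid with ratio $2$, there exists $\eta \in \H_\eta$ with $\eta^\star/2 \le \eta \le \eta^\star$. Then $\eta X \le \eta^\star X = \sqrt{X\log(4XY)}$, and using $1/\eta \le 2/\eta^\star$ together with $\log(Y/\eta^2) \le \log(4Y/\eta^{\star 2}) = \log(4XY/\log(4XY)) \le \log(4XY)$ (valid since $\log(4XY) \ge \log 4 > 1$), one obtains
\begin{equation*}
    \frac{1}{\eta}\log\frac{Y}{\eta^2} + \eta X \le 2\sqrt{X\log(4XY)} + \sqrt{X\log(4XY)} = 3\sqrt{X\log(4XY)},
\end{equation*}
which is again dominated by the right-hand side.

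\paragraph{Verifying the pool covers $\eta^\star$.} The remaining task is to check that Case 2 is never vacuously empty because of the lower endpoint, i.e.\ that the hypothesis $C_0 \ge \sqrt{X}/(2T)$ implies $\eta_K \le \eta^\star$ whenever $\eta^\star < \eta_1$. This reduces to $1/(2C_0 T) \le \sqrt{\log(4XY)/X}$, and using $\log(4XY) \ge 1$ this is implied by $1/(2C_0 T) \le 1/\sqrt{X}$, exactly the assumed lower bound on $C_0$. Combining both cases completes the proof. The only mildly delicate step is controlling $\log(Y/\eta^2)$ by $\log(4XY)$ in Case 2; everything else is direct substitution into $g(\eta)$ and use of the geometric spacing of the pool.
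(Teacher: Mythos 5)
Your proof is correct and follows essentially the same route as the paper's own argument: define the ideal $\eta^\star = \sqrt{\log(4XY)/X}$, do a case split on whether $\eta^\star$ falls inside or above the pool, use the factor-$2$ geometric spacing in the interior case, and absorb $\log(Y/\eta^2)$ into $\log(4XY)$. You have simply reordered the two cases and made the bookkeeping slightly more explicit — in particular, spelling out why $\log(4XY)\ge 1$ is needed to pass from $\log(4XY/\log(4XY))$ to $\log(4XY)$, and verifying that the hypothesis $C_0 \ge \sqrt{X}/(2T)$ is exactly what guarantees $\eta_K \le \eta^\star$ so the interior case is nonvacuous, neither of which the paper states as carefully.
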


\begin{myLemma}
  \label{lem:tune-eta-2}
  Denoting by $\eta_\star$ the optimal step size, for a step size pool of $\H_\eta = \{\eta_k\}_{k \in [K]}$, where $\eta_1 = \frac{1}{2C_0} \ge \ldots \ge \eta_K = \frac{1}{2C_0 T}$, if $C_0 \ge \frac{1}{2 \eta_\star T}$, there exists $\eta \in \H_\eta$ such that
  \begin{equation*}
    \frac{1}{\eta} \log \frac{Y}{\eta^2} \le 2C_0 \log (4Y C_0^2) + \frac{2}{\eta_\star} \log \frac{4Y}{\eta_\star^2}.
  \end{equation*}
\end{myLemma}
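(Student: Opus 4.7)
The plan is to argue by cases according to where the optimal step size $\eta_\star$ sits relative to the grid $\H_\eta = \{\eta_k\}_{k\in[K]}$, exploiting the fact that $\eta_k = \frac{1}{2C_0 \cdot 2^{k-1}}$ forms a geometric sequence with ratio $1/2$ between $\eta_1 = \frac{1}{2C_0}$ and $\eta_K = \frac{1}{2C_0 T}$. The assumption $C_0 \ge \frac{1}{2\eta_\star T}$ is precisely the statement $\eta_\star \ge \eta_K$, which rules out the degenerate regime where $\eta_\star$ falls below the finest grid point.

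First, I would handle the interior case $\eta_K \le \eta_\star \le \eta_1$. By the factor-of-$2$ spacing, there exists an index $k \in [K]$ such that $\eta_k \le \eta_\star \le 2\eta_k$. Selecting $\eta = \eta_k$ then gives $1/\eta \le 2/\eta_\star$ and $1/\eta^2 \le 4/\eta_\star^2$, so that
\begin{equation*}
\frac{1}{\eta}\log\frac{Y}{\eta^2} \;\le\; \frac{2}{\eta_\star}\log\frac{4Y}{\eta_\star^2}.
\end{equation*}
For the boundary case $\eta_\star > \eta_1$, I would instead select $\eta = \eta_1 = \frac{1}{2C_0}$, which yields $1/\eta = 2C_0$ and $1/\eta^2 = 4C_0^2$, so that
\begin{equation*}
\frac{1}{\eta}\log\frac{Y}{\eta^2} \;=\; 2C_0 \log(4Y C_0^2).
\end{equation*}

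Finally, since both bounds on the right-hand side are non-negative, I can combine the two cases into the single uniform bound stated in the lemma by adding the two expressions. The main obstacle here is essentially bookkeeping: one must verify that the geometric grid with ratio $1/2$ actually ensures that every point in $[\eta_K, \eta_1]$ lies within a factor of $2$ of some $\eta_k$, and confirm that the edge case $\eta_\star > \eta_1$ is handled separately rather than via the approximation argument. Neither step is technically subtle; this is a standard parameter-tuning lemma in the style of \pref{lem:tune-eta-1}, and parallels the argument used there with the $\sqrt{\cdot}$ term replaced by the pure logarithmic term.
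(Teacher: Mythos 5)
Your proof is correct and follows essentially the same case split as the paper's (the paper simply remarks that \pref{lem:tune-eta-2} ``follows the same flow'' as \pref{lem:tune-eta-1}, whose proof distinguishes the interior regime $\eta_k \le \eta_\star \le 2\eta_k$ from the boundary regime $\eta_\star > \eta_1$, with $\eta_\star < \eta_K$ excluded by the hypothesis on $C_0$). One small implicit assumption — shared with the paper's own proof — is that $\log(Y/\eta^2) \ge 0$ so that the inequalities $1/\eta \le 2/\eta_\star$ and $\log(Y/\eta^2) \le \log(4Y/\eta_\star^2)$ combine multiplicatively in the right direction; this holds in all the lemma's applications since $Y/\eta^2$ is large there.
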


\begin{myLemma}
  \label{lem:sum}
  For a sequence of $\{a_t\}_{t=1}^T$ and $b$, where $a_t, b > 0$ for any $t \in [T]$, denoting by $a_{\max} \define \max_t a_t$ and $A \define \ceil{b \sumT a_t}$, we have 
  \begin{equation*}
    \sumT \frac{a_t}{bt} \le \frac{a_{\max}}{b} (1 + \log A) + \frac{1}{b^2}.
  \end{equation*}
\end{myLemma}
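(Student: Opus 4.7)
The plan is to split the summation $\sum_{t=1}^T a_t/(bt)$ at the threshold $t = A$, exploiting the two different scales of $1/t$ on either side. The index $A = \lceil b\sum_{t=1}^T a_t \rceil$ is chosen precisely so that both $\sum_{t=1}^T a_t \le A/b$ and $A \ge 1$ hold (the latter since $a_t,b>0$), which makes it the natural cut-off.

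For indices $t \le \min(T,A)$, I will simply bound $a_t \le a_{\max}$ and invoke the standard harmonic-sum estimate $\sum_{t=1}^{n} 1/t \le 1 + \log n$, which controls this portion by $\frac{a_{\max}}{b}(1+\log \min(T,A)) \le \frac{a_{\max}}{b}(1+\log A)$, giving the first term of the target bound. For indices $A < t \le T$, which are present only when $T > A$, I will replace $1/t$ by the strictly larger $1/A$ and factor it out, after which the remaining tail $\sum_{t=A+1}^T a_t$ is dominated by the total $\sum_{t=1}^T a_t \le A/b$ coming directly from the definition of $A$. This delivers a contribution of at most $\frac{1}{bA}\cdot \frac{A}{b} = \frac{1}{b^2}$, which is exactly the second term. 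Adding the two contributions yields the claim.

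I do not anticipate any genuine obstacle: the argument is a clean two-range split and does not invoke any result from the rest of the paper. The only bookkeeping care needed is for the edge case $T \le A$, in which the second range is empty and the $\frac{1}{b^2}$ term in the stated bound is simply an (allowable) overestimate.
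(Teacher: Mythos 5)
Your proposal is correct and is essentially the same proof as the paper's: split the sum at index $A$, bound the head by $a_{\max}/b$ times the harmonic sum $\sum_{t\le A}1/t \le 1+\log A$, and bound the tail by pulling out $1/t \le 1/A$ (the paper uses $1/(A+1)$, an immaterial difference) and invoking $\sum_{t>A} a_t \le \sum_t a_t \le A/b$. The edge case $T \le A$ is handled the same way.
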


\begin{myLemma}
  \label{lem:ln-de}
  For any $a>1, b>0$, it holds that $\log(a+b) \le \log (Ca) + \frac{b}{C}$ for some $C \ge 1$.
\end{myLemma}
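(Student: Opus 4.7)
The plan is to prove the stronger statement that the inequality holds for \emph{every} $C \ge 1$, not merely for some; this is how the lemma is actually applied in the main text (e.g., when the authors set $C_{18} = \max\{1, 256 G^2/\lambda_{\is}\}$ and then invoke \pref{lem:ln-de}), so establishing this version suffices.

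First I would exponentiate both sides: $\log(a+b) \le \log(Ca) + b/C$ is equivalent to $(a+b)/(Ca) \le e^{b/C}$. Factoring $a$ out of the numerator, the left-hand side simplifies to $(1 + b/a)/C$, so the claim reduces to showing $(1 + b/a)/C \le e^{b/C}$. I would then apply the elementary inequality $e^{x} \ge 1 + x$ for all $x \in \mathbb{R}$ to bound the right-hand side from below by $1 + b/C$. Thus it suffices to establish the purely algebraic inequality
\begin{equation*}
\frac{1}{C} + \frac{b}{aC} \;\le\; 1 + \frac{b}{C}.
\end{equation*}
This I would verify term-by-term: $1/C \le 1$ because $C \ge 1$, and $b/(aC) \le b/C$ because $a > 1$ and $b > 0$. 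Adding these two inequalities yields the desired bound, and reversing the chain of equivalences completes the proof.

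There is no substantive obstacle here: the argument is two elementary inequalities ($e^{x} \ge 1+x$ and componentwise monotonicity) composed together. The only conceptual point worth flagging in the write-up is the interpretation of the quantifier on $C$, since a literal reading of ``for some $C \ge 1$'' would make the statement vacuous (take $C = 1$); the proof I sketch shows the uniform version needed by the downstream applications.
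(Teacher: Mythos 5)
Your proof is correct and is essentially the same argument as the paper's: the paper's one-line chain $\log(a+b) \le \log(Ca+b) = \log(Ca) + \log\!\big(1+\tfrac{b}{Ca}\big) \le \log(Ca) + \tfrac{b}{Ca} \le \log(Ca) + \tfrac{b}{C}$ uses $\log(1+x)\le x$ together with $C\ge1$ and $a>1$, which is exactly what you use after exponentiating (since $e^x\ge1+x$ is the same inequality). Your remark about the quantifier is apt — the paper indeed proves and uses the ``for all $C\ge1$'' version — but mathematically the two proofs coincide.
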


\begin{myLemma}[Corollary 5 of \citet{AISTATS'12:Orabona}]
  \label{lem:small-loss-log}
  If $a, b, c, d, x>0$ satisfy $x-d \le  a \log (b x+c)$, then it holds that
  \begin{equation*}
      x-d \le  a \log \sbr{2 a b \log \frac{2 a b}{e} + 2bd + 2c}.
  \end{equation*}
\end{myLemma}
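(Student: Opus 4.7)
The statement is a standard self-bounding inequality; my plan is a two-step ``loose bound, then bootstrap'' argument. First I would set $y \define x - d$ and note that if $y \le 0$ the claim is immediate (the right-hand side is nonnegative whenever $2ab\log(2ab/e)+2bd+2c \ge 1$, which is the regime of interest). Otherwise the hypothesis rewrites as
\begin{equation*}
    y \;\le\; a\log(by + bd + c),
\end{equation*}
so the whole task becomes bounding $y$ when it satisfies a logarithmic self-bounding inequality of the form $y \le a\log(by + K)$ with $K \define bd + c$.

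Next, I would obtain a first, loose explicit bound on $y$ using the tangent-line inequality for $\log$: for any $z,\alpha>0$, concavity gives $\log z \le \log \alpha + z/\alpha - 1$. Plugging this in with the \emph{tuned} choice $\alpha = 2ab$ yields
\begin{equation*}
    y \;\le\; a\log(2ab) + \frac{by + K}{2b} - a \;=\; a\log\!\frac{2ab}{e} + \tfrac{1}{2}y + \tfrac{K}{2b},
\end{equation*}
so rearranging gives the preliminary bound $y \le 2a\log(2ab/e) + K/b$. The role of the choice $\alpha = 2ab$ is precisely to make the coefficient of $y$ on the right equal $1/2$, so it can be absorbed into the left.

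Finally, I would bootstrap: plug this preliminary estimate back into the original hypothesis $y \le a\log(by+K)$. Since $by + K \le 2ab\log(2ab/e) + K + K = 2ab\log(2ab/e) + 2K$, monotonicity of $\log$ gives
\begin{equation*}
    y \;\le\; a\log\!\bigl(2ab\log(2ab/e) + 2(bd+c)\bigr),
\end{equation*}
which is exactly the claimed inequality after unrolling $K = bd + c$.

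The plan has essentially no obstacles: the only delicate point is choosing $\alpha$ in the tangent-line step so that the linear-in-$y$ term can be absorbed (any $\alpha > ab$ would work, but $\alpha = 2ab$ yields the cleanest constants and matches the factors of $2$ and $e$ in the target bound). A minor boundary remark would handle the degenerate regime where $2ab < e$ (so $\log(2ab/e) < 0$); in that case the preliminary bound is even stronger, and the final monotonicity step still goes through verbatim because we only need the upper bound $by + K \le 2ab\log(2ab/e) + 2K$ to hold, not its positivity.
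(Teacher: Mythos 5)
Your argument is correct and complete in its main line. The paper does not contain its own proof of this lemma---it imports the result verbatim as Corollary~5 of \citet{AISTATS'12:Orabona}---so there is no in-paper proof to compare against; the tangent-line-then-bootstrap scheme you use is the standard device for such self-bounding logarithmic inequalities.

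One small cleanup: the opening case split on $y \le 0$ is superfluous, and its stated justification (``the right-hand side is nonnegative in the regime of interest'') is not something the hypotheses actually grant you, so as written that branch has a gap. Fortunately you never need it. With $K = bd + c$, the tangent-line step
\begin{equation*}
y \;\le\; a\log(by+K) \;\le\; a\log\tfrac{2ab}{e} + \tfrac{1}{2}y + \tfrac{K}{2b}
\end{equation*}
is valid for \emph{every} $y$ with $by + K > 0$, not just $y>0$, and $by + K = bx + c > 0$ holds automatically from $b,x,c>0$. Rearranging gives $y \le 2a\log(2ab/e) + K/b$ unconditionally, hence $0 < bx + c \le 2ab\log(2ab/e) + 2K$; so the argument of the final logarithm is strictly positive and the monotonicity step applies with no sign analysis whatsoever. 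Dropping the case split makes the proof both shorter and airtight.
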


\begin{myLemma}[Lemma 9 of \citet{JMLR'24:Sword++}]
  \label{lem:small-loss-sqrt}
 For any $x, y, a, b>0$ satisfying $x-y \le  \sqrt{a x}+b$, it holds that
  \begin{equation*}
     x-y \le  \sqrt{a y+ab}+a+b.
  \end{equation*}
\end{myLemma}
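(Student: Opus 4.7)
The plan is to treat the hypothesis $x - y \le \sqrt{ax} + b$ as a quadratic inequality in the auxiliary variable $u = \sqrt{x}$ and solve for $u$. Rewriting the hypothesis as $x - \sqrt{a}\,\sqrt{x} \le y + b$, I would complete the square on the left-hand side to obtain
\begin{equation*}
\Bigl(u - \tfrac{\sqrt{a}}{2}\Bigr)^{2} \;\le\; y + b + \tfrac{a}{4},
\end{equation*}
which, after taking the positive square root (note $u \ge 0$), yields $u \le \tfrac{\sqrt{a}}{2} + \sqrt{y + b + a/4}$.

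Next I would square both sides to recover a bound on $x = u^{2}$:
\begin{equation*}
x \;\le\; \tfrac{a}{4} + \sqrt{a\bigl(y+b+\tfrac{a}{4}\bigr)} + y + b + \tfrac{a}{4} \;=\; y + b + \tfrac{a}{2} + \sqrt{ay + ab + \tfrac{a^{2}}{4}}.
\end{equation*}
Subtracting $y$ from both sides gives $x - y \le \tfrac{a}{2} + b + \sqrt{ay + ab + a^{2}/4}$. The final step is to apply the subadditivity of the square root, $\sqrt{p+q} \le \sqrt{p} + \sqrt{q}$ for $p,q \ge 0$, with $p = ay+ab$ and $q = a^{2}/4$, which yields $\sqrt{ay + ab + a^{2}/4} \le \sqrt{ay + ab} + \tfrac{a}{2}$. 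Combining this with the previous inequality gives the claim $x - y \le \sqrt{ay + ab} + a + b$.

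There is no serious obstacle here: the argument is purely algebraic and uses only completion of the square and subadditivity of $\sqrt{\cdot}$. The only modest subtlety is that one must be careful about the sign of $u - \sqrt{a}/2$ when taking square roots, but since we only need an upper bound on $u$, this is handled by the usual $|z| \le r \Rightarrow z \le r$. The resulting bound is the stated one without any loss beyond the single application of subadditivity.
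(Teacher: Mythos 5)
Your proof is correct: completing the square in $\sqrt{x}$, taking the positive root, squaring back, and applying subadditivity of $\sqrt{\cdot}$ is exactly the standard derivation of this self-bounding inequality, and every step checks out. The present paper does not reproduce a proof (it cites Lemma 9 of \citet{JMLR'24:Sword++}); your argument is the canonical one for statements of this form.
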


\begin{myLemma}[AM-GM Inequality]
    \label{lem:AM-GM}
    $\sqrt{xy} \le \frac{ax}{2} + \frac{y}{2a}$ for any $x,y,a > 0$.
\end{myLemma}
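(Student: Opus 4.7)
The plan is to prove this by the standard two-variable AM-GM inequality, which states that for any nonnegative reals $u,v$, one has $\sqrt{uv} \le \tfrac{u+v}{2}$, or equivalently $(\sqrt{u}-\sqrt{v})^2 \ge 0$. Concretely, I would apply this with the substitution $u = ax$ and $v = y/a$, both of which are strictly positive since $x,y,a > 0$. Then $\sqrt{uv} = \sqrt{ax \cdot y/a} = \sqrt{xy}$, and $\tfrac{u+v}{2} = \tfrac{ax}{2} + \tfrac{y}{2a}$, which gives the claim immediately.

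The one-line derivation I would present is the expansion of the square: from $\bigl(\sqrt{ax} - \sqrt{y/a}\bigr)^2 \ge 0$, one obtains $ax + y/a \ge 2\sqrt{xy}$, and dividing by two yields the lemma. There is no real obstacle here, and no hidden pitfall: the positivity assumption $x,y,a > 0$ is exactly what is needed so that $\sqrt{ax}$ and $\sqrt{y/a}$ are well-defined real numbers. Because this is a routine one-line inequality, I would keep the proof to two sentences and not introduce any auxiliary objects.
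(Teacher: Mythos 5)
Your proof is correct, and it is the standard one-line argument via $(\sqrt{ax}-\sqrt{y/a})^2\ge 0$. The paper states this lemma as a well-known fact without supplying its own proof, so there is nothing to compare against; your derivation is exactly the expected one.
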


\begin{proof}[of \pref{lem:decompose}]
  The term of $\|\x - \y\|^2$ can be decomposed as follows:
  \begin{align*}
    \|\x - \y\|^2 = {} & \norm{\sumN p_i \x_i - \sumN q_i \y_i}^2 = \norm{\sumN p_i \x_i - \sumN p_i \y_i + \sumN p_i \y_i - \sumN q_i \y_i}^2\\
    \le {} & 2 \norm{\sumN p_i (\x_i - \y_i)}^2 + 2 \norm{\sumN (p_i - q_i) \y_i}^2\\
    \le {} & 2 \sbr{\sumN p_i \|\x_i - \y_i\|}^2 + 2 \sbr{\sumN |p_i - q_i| \|\y_i\|}^2\\
    \le {} & 2 \sumN p_i \|\x_i - \y_i\|^2 + 2D^2 \|\p-\q\|_1^2,
  \end{align*}
  where the first inequality is due to $(a+b)^2 \le 2a^2 + 2b^2$ for any $a,b \in \R$, and the last step is due to Cauchy-Schwarz inequality, \pref{assum:domain-boundedness} and the definition of $\ell_1$-norm.
\end{proof}

\begin{proof}[of \pref{lem:decompose-simplex}]
  The proof follows a similar flow as \pref{lem:decompose}.
  Specifically, we first decompose it as 
  \begin{equation*}
      \|\w - \w^\prime\|_1^2 = \norm{\sumN q_i \p_i - \sumN q^\prime_i \p^\prime_i}_1^2 \le 2\underbrace{\norm{\sumN q_i (\p_i - \p^\prime_i)}_1^2}_{\term{a}} + 2 \underbrace{\norm{\sumN (q_i - q^\prime_i) \p^\prime_i}_1^2}_{\term{b}}.
  \end{equation*}
  For \term{a}, we have 
  \begin{align*}
      \term{a} = {} & \norm{\sumN q_i (\p_i - \p^\prime_i)}_1^2 = \sbr{\sumN \abs{\sum_{j=1}^d q_i (p_{i,j} - p^\prime_{i,j})}}^2 \le \sbr{\sumN \sum_{j=1}^d q_i \abs{p_{i,j} - p^\prime_{i,j}}}^2\\
      = {} & \sbr{\sumN q_i \sum_{j=1}^d \abs{p_{i,j} - p^\prime_{i,j}}}^2 \le \sumN q_i \sbr{\sum_{j=1}^d \abs{p_{i,j} - p^\prime_{i,j}}}^2 = \sumN q_i \|\p_i - \p^\prime_i\|_1^2.
  \end{align*}
  For \term{b}, we have
  \begin{align*}
      \term{b} = {} & \norm{\sumN (q_i - q^\prime_i) \p^\prime_i}_1^2 = \sbr{\sumN \abs{\sum_{j=1}^d (q_i - q^\prime_i) p^\prime_{i,j}}}^2 \le \sbr{\sumN \sum_{j=1}^d \abs{q_i - q^\prime_i} p^\prime_{i,j}}^2\\
      = {} & \sbr{\sumN \abs{q_i - q^\prime_i} \sum_{j=1}^d p^\prime_{i,j}}^2 = \sbr{\sumN \abs{q_i - q^\prime_i}}^2 = \|\q - \q^\prime\|_1^2,
  \end{align*}
  where the second last step is due to $\sum_{j=1}^d p^\prime_{i,j} = 1$.
  Combining the bounds for \term{a} and \term{b} finishes the proof.
\end{proof}

\begin{proof}[of \pref{lem:tune-eta-1}]
  Denoting the optimal step size by $\eta_\star \define \sqrt{\log (4XY) / X}$, if the optimal step size satisfies $\eta \le \eta_\star \le 2\eta$, where $\eta \le  \eta_\star$ can be guaranteed if $C_0 \ge \frac{\sqrt{X}}{2T}$, then 
  \begin{equation*}
    \frac{1}{\eta} \log \frac{Y}{\eta^2} + \eta X \le \frac{2}{\eta_\star} \log \frac{4Y}{\eta_\star^2} + \eta_\star X \le 3 \sqrt{X \log (4XY)}.
  \end{equation*}
  Otherwise, if the optimal step size is greater than the maximum step size in the parameter pool, i.e., $\eta_\star \ge (\eta = \eta_1 = \frac{1}{2C_0})$, then we have 
  \begin{equation*}
    \frac{1}{\eta} \log \frac{Y}{\eta^2} + \eta X \le \frac{1}{\eta} \log \frac{Y}{\eta^2} + \eta_\star X \le 2C_0 \log (4Y C_0^2) + \sqrt{X \log (4XY)}.
  \end{equation*}
  Overall, it holds that 
  \begin{equation*}
    \frac{1}{\eta} \log \frac{Y}{\eta^2} + \eta X \le 2C_0 \log (4Y C_0^2) + 4 \sqrt{X \log (4XY)},
  \end{equation*}
  which completes the proof.
\end{proof}

\begin{proof}[of \pref{lem:tune-eta-2}]
  The proof follows the same flow as \pref{lem:tune-eta-1}.
\end{proof}

\begin{proof}[of \pref{lem:sum}]
  This result is inspired by Lemma 5 of \citet{ICML'23:OMD4SEA}, and we generalize it to arbitrary variables for our purpose.
  Specifically, we consider two cases: $A < T$ and $A \ge T$.
  For the first case, if $A<T$, it holds that 
  \begin{align*}
    \sumT \frac{a_t}{bt} = \sum_{t=1}^A \frac{a_t}{bt} + \sum_{A+1}^T \frac{a_t}{bt} \le \frac{a_{\max}}{b} \sum_{t=1}^A \frac{1}{t} + \frac{1}{b(A+1)} \sum_{A+1}^T a_t \le \frac{a_{\max}}{b} (1 + \log A) + \frac{1}{b^2},
  \end{align*}
  where the last step is due to $\sum_{A+1}^T a_t \le \sumT a_t \le A/b$.
  The case of $A < T$ can be proved similarly, which finishes the proof.
\end{proof}

\begin{proof}[of \pref{lem:ln-de}]
  The one-line proof is presented below:
  \begin{equation*}
      \log(a+b) \le \log (Ca+b) \le \log (Ca) + \log \sbr{1 + \frac{b}{Ca}} \le \log (Ca) + \frac{b}{C},
  \end{equation*}
  where the first step is due to $C \ge 1$, and the last step adopts $\log (1+x) \le x$ for any $x \ge 0$.
\end{proof}

\subsection{Stability Analysis of Base Algorithms}
\label{app:negative-base}
In this part, we analyze the negative stability terms in the optimistic OMD analysis, for convex, exp-concave and strongly convex functions, respectively.
For simplicity, we define the \emph{empirical gradient variation} below: 
\begin{equation}
  \label{eq:empirical-VT}
  \Vb_T \define \sumTT \|\g_t- \g_{t-1}\|^2, \text{ where }\g_t \define \nabla f_t(\x_t).
\end{equation}
Next we provide the regret analysis in terms of the empirical gradient-variation $\Vb_T$, for strongly convex (\pref{lem:str-convex-base}),  exp-concave (\pref{lem:exp-concave-base}), and convex (\pref{lem:convex-base}) functions.

\begin{myLemma}
  \label{lem:str-convex-base}
  Under Assumptions~\ref{assum:domain-boundedness}, \ref{assum:gradient-boundedness}, and \ref{assum:smoothness}, if the loss functions are $\lambda$-strongly convex, OOGD~\eqref{eq:OOGD} with $\mb_t = \nabla f_{t-1}(\x_{t-1})$ and $\eta_t = 2/(\Bottomcoef + \lambda t)$, where $\Bottomcoef$ is a parameter to be specified, enjoys the following empirical gradient-variation bound:
  \begin{equation*}
    \sumT f_t(\x_t) - \min_{\x \in \X} \sumT f_t(\x) \le \frac{16G^2}{\lambda} \log \sbr{1 + \lambda \Vb_T} + \frac{1}{4} \Bottomcoef D^2 - \frac{\Bottomcoef}{8} \sumTT \|\x_t - \x_{t-1}\|^2 + \O(1).
  \end{equation*}
\end{myLemma}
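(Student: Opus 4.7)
The plan is to start from the standard \OOMD{} regret inequality with the Euclidean regularizer (which specializes to \OOGD), then use the $\lambda$-strong convexity of $f_t$ and the telescoping structure induced by the step-size schedule $\eta_t = 2/(\Bottomcoef + \lambda t)$ to cancel the leading quadratic term and isolate the three quantities we want: an AdaGrad-style log term in $\Vb_T$, a constant initialization term $\tfrac14\Bottomcoef D^2$, and a negative stability term $-\tfrac{\Bottomcoef}{8}\sumTT \|\x_t-\x_{t-1}\|^2$.

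\textbf{Step 1 (base inequality).} Apply the standard \OOMD{} identity (for instance Theorem~1 of \citet{JMLR'24:Sword++}), which yields
\begin{align*}
\sumT \inner{\g_t}{\x_t-\xs}
\le{}& \sumT \frac{1}{2\eta_t}\bigl(\|\xh_t-\xs\|^2-\|\xh_{t+1}-\xs\|^2\bigr) + \sumT \eta_t \|\g_t-\g_{t-1}\|^2 \\
& {} - \sumT \frac{1}{2\eta_t}\bigl(\|\x_t-\xh_t\|^2+\|\xh_{t+1}-\x_t\|^2\bigr),
\end{align*}
using the chosen optimism $M_t = \g_{t-1}$. Combine with $\lambda$-strong convexity, which contributes $-\tfrac{\lambda}{2}\sumT \|\x_t-\xs\|^2$, and absorb $\|\x_t-\xs\|^2$ into $\|\xh_t-\xs\|^2$ via a routine triangle-inequality step that loses only a lower-order term.

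\textbf{Step 2 (telescoping).} Re-index the first sum and use the schedule $\eta_t^{-1} = (\Bottomcoef+\lambda t)/2$, so that the coefficient of $\|\xh_t-\xs\|^2$ becomes $\tfrac{1}{2\eta_t}-\tfrac{1}{2\eta_{t-1}}-\tfrac{\lambda}{2}=0$ (with the convention $1/\eta_0 = \Bottomcoef/2 - \lambda/2$, i.e.\ using $\eta_1 = 2/(\Bottomcoef+\lambda)$). The only surviving boundary term is $\tfrac{1}{2\eta_1}\|\xh_1-\xs\|^2 \le \tfrac14\Bottomcoef D^2 + O(\lambda D^2)$ by \pref{assum:domain-boundedness}, giving the claimed constant term.

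\textbf{Step 3 (gradient-variation via \pref{lem:sum}).} Plug $\eta_t = 2/(\Bottomcoef+\lambda t) \le 2/(\lambda t)$ into $\sumT \eta_t \|\g_t-\g_{t-1}\|^2$. Since $\|\g_t-\g_{t-1}\|^2\le 4G^2$ by \pref{assum:gradient-boundedness}, \pref{lem:sum} with $a_t = \|\g_t-\g_{t-1}\|^2$, $a_{\max}\le 4G^2$, and $b=\lambda/2$ yields
\[
\sumT \eta_t \|\g_t-\g_{t-1}\|^2 \;\le\; \frac{16G^2}{\lambda}\bigl(1+\log\lceil \tfrac{\lambda}{2}\Vb_T\rceil\bigr) + O(1) \;\le\; \frac{16G^2}{\lambda}\log\bigl(1+\lambda \Vb_T\bigr) + O(1),
\]
which is the desired adaptive term.

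\textbf{Step 4 (extracting the stability negative term).} Because $\eta_t \le 2/\Bottomcoef$, one has $\tfrac{1}{2\eta_t}\ge \tfrac{\Bottomcoef}{4}$, so the \OOMD{} negative terms dominate $\tfrac{\Bottomcoef}{4}\sumT(\|\x_t-\xh_t\|^2+\|\xh_{t+1}-\x_t\|^2)$. The triangle inequality $\|\x_t-\x_{t-1}\|^2\le 2\|\x_t-\xh_t\|^2+2\|\xh_t-\x_{t-1}\|^2$, summed over $t\ge 2$ and re-indexing the second piece as $\|\xh_{t+1}-\x_t\|^2$, gives
\[
\sumTT \|\x_t-\x_{t-1}\|^2 \;\le\; 2\sumT\bigl(\|\x_t-\xh_t\|^2+\|\xh_{t+1}-\x_t\|^2\bigr),
\]
so the \OOMD{} negative terms upper-bound $\tfrac{\Bottomcoef}{8}\sumTT \|\x_t-\x_{t-1}\|^2$, exactly as needed. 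Combining Steps~1--4 completes the proof.

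\textbf{Expected obstacle.} The technical friction is reconciling the constants across the three sources of negative terms: the strong-convexity bonus must exactly cancel the telescoping residue induced by $\eta_t=2/(\Bottomcoef+\lambda t)$, and simultaneously leave enough of the \OOMD{} negative quadratic terms to both (i) absorb the lower-order residue from replacing $\|\x_t-\xs\|^2$ by $\|\xh_t-\xs\|^2$ in Step~1 and (ii) still yield the full $-\tfrac{\Bottomcoef}{8}\sumTT \|\x_t-\x_{t-1}\|^2$ in Step~4. The choice of numerical factors in the schedule (the $2$ in the numerator and the split into $\tfrac{1}{4\eta_t}+\tfrac{1}{4\eta_t}$) is tight enough that loose applications would break the claimed $\Bottomcoef/8$ coefficient; the plan is to keep these bookkeeping terms explicit throughout and only drop them into $O(1)$ when they are genuinely time-horizon-independent.
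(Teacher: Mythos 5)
Your approach matches the paper's: decompose via the standard \OOMD{} inequality, telescope \textsc{Opt-Gap}, use strong convexity to kill the telescoping residue, then apply \pref{lem:sum} for the gradient-variation term and extract the negative stability term via a triangle inequality. However, Steps~1--2 as written contain two arithmetic errors that offset each other, and which would cause a literal execution of the plan to fail.

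First, the claimed coefficient cancellation in Step~2 is wrong. With $\eta_t = 2/(\Bottomcoef + \lambda t)$ you have $\tfrac{1}{2\eta_t}-\tfrac{1}{2\eta_{t-1}}=\tfrac{\lambda}{4}$, so $\tfrac{1}{2\eta_t}-\tfrac{1}{2\eta_{t-1}}-\tfrac{\lambda}{2}=-\tfrac{\lambda}{4}$, not $0$. The exact cancellation only happens \emph{after} the triangle-inequality absorption, because the step $\|\xs-\xh_{t+1}\|^2 \le 2\|\x_t-\xh_{t+1}\|^2 + 2\|\xs-\x_t\|^2$ halves the effective strong-convexity coefficient acting on $\|\xh_{t+1}-\xs\|^2$ from $\tfrac{\lambda}{2}$ to $\tfrac{\lambda}{4}$, which then matches the $\tfrac{\lambda}{4}$ telescoping residue. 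Second, the by-product $\tfrac{\lambda}{2}\sumT\|\x_t-\xh_{t+1}\|^2$ from this triangle inequality is \emph{not} lower-order: by \OOMD{} one-step stability $\|\x_t-\xh_{t+1}\|\le\eta_t\|\g_t-\mb_t\|$ and $\tfrac{\lambda\eta_t}{2}\le 1$, it contributes a full second copy of $\sumT\eta_t\|\g_t-\g_{t-1}\|^2$, which is the same order as the \textsc{Adaptivity} term. This doubling is precisely why the final constant is $\tfrac{16G^2}{\lambda}$ rather than $\tfrac{8G^2}{\lambda}$; your Step~3, which writes $\sumT\eta_t\|\g_t-\g_{t-1}\|^2 \le \tfrac{16G^2}{\lambda}\log(1+\lambda\Vb_T)+\O(1)$, is off by a factor of two (it should give $\tfrac{8G^2}{\lambda}$), and you only recover the stated bound if you correctly account for this second copy that you dismissed as lower-order. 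Once these two bookkeeping corrections are made, the plan matches the paper's proof exactly, and Step~4 is already right.
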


\begin{myLemma}
  \label{lem:exp-concave-base}
  Under Assumptions~\ref{assum:domain-boundedness}, \ref{assum:gradient-boundedness}, and \ref{assum:smoothness}, if the loss functions are $\alpha$-exp-concave, \OOMD~\eqref{eq:ONNS} with $U_t=\Bottomcoef I+\frac{\alpha G^2}{2}I+\frac{\alpha}{2} \sum_{s=1}^{t-1} \nabla f_s(\x_s) \nabla f_s(\x_s)^\top$, where $\Bottomcoef$ is a parameter to be specified, enjoys the following empirical gradient-variation bound:
  \begin{equation*}
    \sumT f_t(\x_t) - \min_{\x \in \X} \sumT f_t(\x) \le \frac{16d}{\alpha} \log \sbr{1 + \frac{\alpha}{8 \Bottomcoef d} \Vb_T} + \frac{1}{2} \Bottomcoef D^2 - \frac{\Bottomcoef}{4} \sumTT \|\x_t - \x_{t-1}\|^2 + \O(1).
  \end{equation*}
\end{myLemma}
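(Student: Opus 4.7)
My plan is to combine the exp-concavity of the losses with a careful optimistic OMD analysis for the time-varying Mahalanobis regularizer $\psi_t(\x) = \tfrac{1}{2}\|\x\|^2_{U_t}$, along the lines of the \OOMD-based OONS analysis of \citet{MLJ'14:variation-Yang} but keeping the negative-stability terms that are usually discarded. First, by \pref{def:expconcave},
$$\sum_t\bigl[f_t(\x_t) - f_t(\xs)\bigr] \le \sum_t \langle \g_t, \x_t - \xs\rangle - \tfrac{\alpha}{2}\sum_t \langle \g_t, \x_t - \xs\rangle^2,$$
where $\g_t = \nabla f_t(\x_t)$, so it suffices to bound the linearized regret and then spend the negative quadratic term where convenient.

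Next, I would apply the standard \OOMD decomposition with optimism $M_t = \g_{t-1}$, yielding
$$\sum_t \langle \g_t, \x_t - \xs\rangle \le \tfrac{1}{2}\|\xs-\hat{\x}_1\|^2_{U_1} + \sum_{t\ge 2}\tfrac{1}{2}\|\xs-\hat{\x}_t\|^2_{U_t-U_{t-1}} + \sum_t \langle \g_t-\g_{t-1},\x_t-\hat{\x}_{t+1}\rangle - \sum_t\tfrac{1}{2}\bigl(\|\hat{\x}_{t+1}-\x_t\|^2_{U_t}+\|\x_t-\hat{\x}_t\|^2_{U_t}\bigr),$$
and process each piece in turn. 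The initial Bregman term contributes $\tfrac{1}{2}\kappa D^2 + O(1)$ via $U_1=(\kappa+\alpha G^2/2)I$ and \pref{assum:domain-boundedness}. The telescoping residue uses $U_t - U_{t-1} = \tfrac{\alpha}{2}\g_{t-1}\g_{t-1}^\top$: splitting $\langle \g_{t-1}, \xs - \hat{\x}_t\rangle^2 \le 2\langle \g_{t-1},\xs-\x_{t-1}\rangle^2 + 2G^2\|\x_{t-1}-\hat{\x}_t\|^2$ lets the first piece be absorbed (after reindexing) by the exp-concavity negative $-\tfrac{\alpha}{2}\sum_t\langle \g_t,\x_t-\xs\rangle^2$, while the second is absorbed by the $\tfrac{\alpha G^2}{2}I$ summand of $U_t$ inside the negative-stability term. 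For the optimism cross-term, Young's inequality gives $\langle \g_t-\g_{t-1},\x_t-\hat{\x}_{t+1}\rangle \le \tfrac{1}{2}\|\g_t-\g_{t-1}\|^2_{U_t^{-1}} + \tfrac{1}{2}\|\hat{\x}_{t+1}-\x_t\|^2_{U_t}$, whose second piece cancels a copy of $-\tfrac{1}{2}\|\hat{\x}_{t+1}-\x_t\|^2_{U_t}$, and I would control the summed $\|\g_t-\g_{t-1}\|^2_{U_t^{-1}}$ via a matrix determinant/AM-GM argument (the $\kappa I$ floor in $U_t$ being what enables the Woodbury-type inequality $x \le 2\log(1+x)$) to obtain the claimed $\tfrac{16d}{\alpha}\log\bigl(1+\tfrac{\alpha}{8\kappa d}\bar{V}_T\bigr)$.

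Finally, I would convert the surviving $-\tfrac{1}{2}\sum_t\|\x_t-\hat{\x}_t\|^2_{U_t}$ (plus whatever fraction of $-\tfrac{1}{2}\|\hat{\x}_{t+1}-\x_t\|^2_{U_t}$ remains after the earlier cancellations) into the stability bonus $-\tfrac{\kappa}{4}\sum_{t\ge 2}\|\x_t-\x_{t-1}\|^2$ by invoking $U_t \succeq \kappa I$ together with $\|\x_t-\x_{t-1}\|^2 \le 2\|\x_t-\hat{\x}_t\|^2 + 2\|\hat{\x}_t-\x_{t-1}\|^2$. The main obstacle will be the constant accounting in the middle step: I must partition the negative $U_t$-stability budget into portions that respectively (i) absorb the $G^2\|\x_{t-1}-\hat{\x}_t\|^2$ residue from the time-varying regularizer, (ii) cancel the Young's inequality piece of the optimism cross-term, and (iii) still leave enough of the $\kappa I$ part to produce $-\tfrac{\kappa}{4}\|\x_t-\x_{t-1}\|^2$ with the advertised constant, which requires a clean decoupling of the roles of the $\kappa I$ and $\tfrac{\alpha G^2}{2}I + \tfrac{\alpha}{2}\sum_s\g_s\g_s^\top$ parts of $U_t$ throughout the analysis.
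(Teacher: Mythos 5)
Your overall strategy (exp-concavity $\to$ \OOMD~with time-varying Mahalanobis regularizer $\to$ telescope the regularizer change against the exp-concavity negative $\to$ determinant argument for $\Vb_T$) is the right template, and several of your steps coincide with the paper's. However, the budget-splitting you flag at the end is not merely an accounting nuisance --- as written your route does not close, and the paper resolves it with a different mechanism that you have not used.

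Here is the concrete conflict. After the telescoping / exp-concavity cancellation you are left with a residue of the form $\tfrac{\alpha G^2}{2}\|\hat{\x}_{t+1}-\x_t\|^2$ (per $t$, after the index shift), and you propose to absorb it into the $\tfrac{\alpha G^2}{2}I$ portion of the negative stability $-\tfrac{1}{2}\|\hat{\x}_{t+1}-\x_t\|^2_{U_t}$. But you have \emph{also} used Young's inequality $\langle \g_t-\g_{t-1},\x_t-\hat{\x}_{t+1}\rangle \le \tfrac{1}{2}\|\g_t-\g_{t-1}\|^2_{U_t^{-1}} + \tfrac{1}{2}\|\hat{\x}_{t+1}-\x_t\|^2_{U_t}$, whose second piece cancels \emph{the entire} $-\tfrac{1}{2}\|\hat{\x}_{t+1}-\x_t\|^2_{U_t}$ half of the stability. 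So by the time you reach the residue there is no $\|\hat{\x}_{t+1}-\x_t\|^2_{U_t}$-shaped budget left, let alone enough to still produce $-\tfrac{\Bottomcoef}{4}\|\x_t - \x_{t-1}\|^2$ at the end. If you instead use a weighted Young's inequality to keep a fraction of the $\|\hat{\x}_{t+1}-\x_t\|^2_{U_t}$ budget, you can make the sign come out right only under an extra hypothesis of the form $\Bottomcoef \gtrsim \alpha G^2$, and even then the surviving coefficient on $\|\x_t - \x_{t-1}\|^2$ degrades to something like $\Bottomcoef/16$ rather than the stated $\Bottomcoef/4$, and the constant on the $\log$ term changes (your version would naturally yield $\tfrac{8d}{\alpha}$ or $\tfrac{4d}{\alpha}$, not $\tfrac{16d}{\alpha}$). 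Since this lemma is invoked with specific choices of $\Bottomcoef$ and specific coefficient accounting in downstream theorems, these discrepancies are not cosmetic.

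The paper sidesteps the conflict by never spending the negative-stability budget on the cross-term or the residue. Its decomposition writes the optimism cross-term directly as $\sum_t \|\g_t - M_t\|^2_{U_t^{-1}}$ (this follows from the projection / stability lemma for \OOMD, e.g.\ Proposition~7 of \citet{COLT'12:VT}, using Cauchy--Schwarz in the $U_t$-geometry rather than additive Young's), keeping both halves $-\tfrac{1}{2}\|\hat{\x}_{t+1}-\x_t\|^2_{U_t}$ and $-\tfrac{1}{2}\|\x_t-\hat{\x}_t\|^2_{U_t}$ intact. And for the residue it does \emph{not} push it into the negative stability at all: after $\tfrac{\alpha}{2}\|\x_t - \hat{\x}_{t+1}\|^2_{\g_t\g_t^\top} \le \tfrac{\alpha G^2}{2}\|\x_t - \hat{\x}_{t+1}\|^2 \le \|\x_t - \hat{\x}_{t+1}\|^2_{U_t}$ it invokes the \emph{same} \OOMD~stability lemma a second time to get $\|\x_t - \hat{\x}_{t+1}\|^2_{U_t} \le \|\g_t - M_t\|^2_{U_t^{-1}}$, converting the residue into another copy of the gradient-variation quantity. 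This is precisely the origin of the factor $2 \times \tfrac{8d}{\alpha} = \tfrac{16d}{\alpha}$, and it leaves the full stability term available, so $-\textsc{Stability} \le -\tfrac{\Bottomcoef}{2}\sum_t(\|\hat{\x}_{t+1}-\x_t\|^2 + \|\x_t - \hat{\x}_t\|^2) \le -\tfrac{\Bottomcoef}{4}\sum_{t\ge 2}\|\x_t-\x_{t-1}\|^2$ drops out cleanly with no side constraint on $\Bottomcoef$. You should replace the Young's-inequality-plus-absorption step by this two-fold use of the \OOMD~stability lemma.
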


\begin{myLemma}
  \label{lem:convex-base}
  Under Assumptions~\ref{assum:domain-boundedness}, \ref{assum:gradient-boundedness}, and \ref{assum:smoothness}, if the loss functions are convex, OOGD~\eqref{eq:OOGD} with $\mb_t = \nabla f_{t-1}(\x_{t-1})$ and $\eta_t = \min\bbr{D / \sqrt{1 + \Vb_{t-1}}, 1/\Bottomcoef}$, where $\Bottomcoef$ is a parameter to be specified, enjoys the following empirical gradient-variation bound:
  \begin{equation*}
    \sumT f_t(\x_t) - \min_{\x \in \X} \sumT f_t(\x) \le 5D \sqrt{1 + \Vb_T} + \Bottomcoef D^2 - \frac{\Bottomcoef}{4} \sumTT \|\x_t - \x_{t-1}\|^2 + \O(1).
  \end{equation*}
\end{myLemma}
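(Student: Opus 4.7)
The plan is to apply the standard optimistic online mirror descent one-step inequality with the Euclidean regularizer $\psi(\x) = \tfrac{1}{2}\|\x\|^2$, and then control three ingredients separately: the telescoping Bregman terms, the optimism sum, and the negative stability terms. By convexity of each $f_t$, $\sumT f_t(\x_t) - \min_{\x \in \X}\sumT f_t(\x) \le \sumT \langle \g_t, \x_t - \x^*\rangle$, so it suffices to bound the linearized regret. The one-step OOMD analysis yields
\[
\sumT \langle \g_t, \x_t - \x^*\rangle \le \sumT \tfrac{1}{2\eta_t}\bigl(\|\x^*-\xh_t\|^2 - \|\x^*-\xh_{t+1}\|^2\bigr) + \sumT \eta_t \|\g_t - \mb_t\|^2 - \sumT \tfrac{1}{2\eta_t}\bigl(\|\xh_{t+1}-\x_t\|^2 + \|\x_t - \xh_t\|^2\bigr).
\]

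For the first summand, since $\Vb_{t-1}$ is non-decreasing in $t$, the step size $\eta_t$ is non-increasing, so an Abel summation together with $\|\x^* - \xh_t\|^2 \le D^2$ yields an upper bound of $D^2/(2\eta_T)$; plugging in $1/\eta_T \le \sqrt{1+\Vb_T}/D + \Bottomcoef$ produces at most $\tfrac{D}{2}\sqrt{1+\Vb_T} + \tfrac{\Bottomcoef D^2}{2}$. For the optimism sum, using $\mb_t = \nabla f_{t-1}(\x_{t-1})$ so that $\|\g_t - \mb_t\|^2 = \Vb_t - \Vb_{t-1}$ for $t \ge 2$, together with the cap $\eta_t \le D/\sqrt{1+\Vb_{t-1}}$ and the self-confident telescoping inequality $\sum_{t \ge 2}(\Vb_t - \Vb_{t-1})/\sqrt{1+\Vb_{t-1}} \le 2\sqrt{1+\Vb_T}$, the sum is bounded by $2D\sqrt{1+\Vb_T} + \O(1)$, where the $\O(1)$ absorbs the boundary term $\eta_1\|\g_1\|^2 \le DG^2$.

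For the negative stability extraction, the hard cap $\eta_t \le 1/\Bottomcoef$ gives $-\tfrac{1}{2\eta_t} \le -\tfrac{\Bottomcoef}{2}$, so the last summand is upper-bounded by $-\tfrac{\Bottomcoef}{2}\sumT\bigl(\|\x_t - \xh_t\|^2 + \|\xh_{t+1}-\x_t\|^2\bigr)$. Applying the triangle inequality $\|\x_t - \x_{t-1}\|^2 \le 2\|\x_t - \xh_t\|^2 + 2\|\xh_t - \x_{t-1}\|^2$ for $t \ge 2$ and noting that the two terms on the right appear respectively at rounds $t$ and $t-1$ of the above sum converts it into $-\tfrac{\Bottomcoef}{4}\sumTT \|\x_t - \x_{t-1}\|^2$. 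Combining the three pieces yields the claimed bound, with a small margin in the leading $D$ constant.

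The main obstacle is coordinating the self-confident step size with both the Bregman telescoping and the optimism sum simultaneously: $\eta_t$ depends on $\Vb_{t-1}$ rather than $\Vb_T$, so the argument requires the standard $\sum(a_t-a_{t-1})/\sqrt{1+a_{t-1}}$ telescoping lemma for the optimism side and an Abel summation on $1/\eta_t$ for the Bregman side, with the two meeting cleanly only because of the $\min$ structure in $\eta_t$. Once this coordination is in place, the cap at $1/\Bottomcoef$ uniformly drives the negative stability extraction, and no further delicacy is needed.
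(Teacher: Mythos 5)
Your overall route mirrors the paper's: apply the one-step optimistic OMD inequality, bound the telescoping Bregman term by Abel summation (yielding $D^2/(2\eta_T) \le \tfrac{D}{2}\sqrt{1+\Vb_T} + \tfrac{\Bottomcoef D^2}{2}$), bound the optimism sum by a self-confident argument, and extract $-\tfrac{\Bottomcoef}{4}\sumTT\|\x_t-\x_{t-1}\|^2$ from the stability negative terms via the hard cap $\eta_t \le 1/\Bottomcoef$. The Abel-summation step and the stability extraction are exactly what the paper does.

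The one genuine issue is in the optimism sum. The inequality you invoke,
\[
\sum_{t\ge 2}\frac{\Vb_t - \Vb_{t-1}}{\sqrt{1+\Vb_{t-1}}} \le 2\sqrt{1+\Vb_T},
\]
is false as stated: take $\Vb_1=0$ and $\Vb_2=100$; the left side is $100$ while the right side is about $20$. The classical self-confident telescoping bound places $\sqrt{1+\Vb_t}$ (not $\sqrt{1+\Vb_{t-1}}$) in the denominator; because $\eta_t$ is tuned by the lagged quantity $\Vb_{t-1}$, the increment $\Vb_t-\Vb_{t-1}$ is not absorbed under the square root, and a single-round spike breaks the bound. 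The paper handles this exactly via Lemma~4.8 of its UAI 2019 reference, which gives
\[
\sumT \frac{a_t}{\sqrt{1+\sum_{s<t}a_s}} \le 4\sqrt{1+\sumT a_t} + \max_{t} a_t,
\]
and the additive $\max_t a_t$ term is what your version is missing; here it is $\le 4G^2 = \O(1)$ by Assumption~\ref{assum:gradient-boundedness}. With the corrected inequality the adaptivity term is bounded by $4D\sqrt{1+\Vb_T}+\O(1)$ rather than your claimed $2D\sqrt{1+\Vb_T}+\O(1)$, so the total leading coefficient becomes $\tfrac{9}{2}D$ rather than $\tfrac{5}{2}D$. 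The lemma's $5D$ still accommodates this, so the proof goes through once the false intermediate inequality is replaced by the correct one with the $\max_t a_t$ correction — but the ``small margin'' you claim evaporates, and the inequality as you wrote it should not appear in a final write-up.
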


\begin{proof}[of \pref{lem:str-convex-base}]
  The proof mainly follows Theorem 3 of \citet{ICML'23:OMD4SEA}.
  Following the almost the same regret decomposition in \pref{lem:convex-base}, it holds that
  \begin{align*}
      \sumT f_t(\x_t) - \sumT f_t(\xs) \le{}& \underbrace{\sumT \eta_t \|\nabla f_t(\x_t) - \mb_t\|^2}_{\textsc{Adaptivity}} + \underbrace{\sumT \frac{1}{\eta_t} (\Dpsi(\xs, \xh_t) - \Dpsi(\xs, \xh_{t+1}))}_{\textsc{Opt-Gap}}\notag \\
      & - \underbrace{\sumT \frac{1}{\eta_t} (\Dpsi(\xh_{t+1}, \x_t) + \D_{\psi_t}(\x_t, \xh_t))}_{\textsc{Stability}} - \underbrace{\frac{\lambda}{2} \sumT \norm{\x_t - \xs}^2}_{\textsc{Negativity}}.
  \end{align*}
  First, we analyze the optimality gap,
  \begin{align*}
      \textsc{Opt-Gap} \le {} & \frac{1}{\eta_1} \Dpsi(\xs, \xh_1) + \sumT \sbr{\frac{1}{\eta_{t+1}} - \frac{1}{\eta_t}} \Dpsi(\xs, \xh_{t+1})\\
      \le {} & \frac{1}{4} (\Bottomcoef + \lambda) D^2 + \frac{\lambda}{4} \sumT \|\xs - \xh_{t+1}\|^2.
  \end{align*}
  We handle the last term by leveraging the negative term imported by strong convexity:
  \begin{align*}
      \textsc{Opt-Gap} - \textsc{Negativity} 
      \le {} & \frac{1}{4} (\Bottomcoef + \lambda) D^2 + \frac{\lambda}{4} \sumT \|\xs - \xh_{t+1}\|^2 - \frac{\lambda}{2} \sumT \norm{\x_t - \xs}^2\\
      \le {} & \frac{1}{4} \Bottomcoef D^2 + \frac{\lambda}{2} \sumT \|\x_t - \xh_{t+1}\|^2 + \O(1).
  \end{align*}
  The second term above can be bounded by the stability of optimistic OMD: 
  \begin{equation*}
      \frac{\lambda}{2} \sumT \|\x_t - \xh_{t+1}\|^2 \le \frac{\lambda}{2} \sumT \eta_t^2 \|\nabla f_t(\x_t) - \mb_t\|^2 \le \sumT \eta_t \|\nabla f_t(\x_t) - \mb_t\|^2.
  \end{equation*}
  Finally, we lower-bound the stability term as 
  \begin{align*}
    \textsc{Stability} = {} & \sumT \frac{\Bottomcoef + \lambda t}{4} (\|\xh_{t+1} - \x_t\|^2 + \|\x_t - \xh_t\|^2) \tag*{\Big(by $\eta_t = \frac{2}{\Bottomcoef + \lambda t}$\Big)}\\
    \ge {} & \frac{\Bottomcoef}{4} \sumTT (\|\xh_t - \x_{t-1}\|^2 + \|\x_t - \xh_t\|^2) \ge \frac{\Bottomcoef}{8} \sumTT \|\x_t - \x_{t-1}\|^2.
  \end{align*}
  Choosing the optimism as $\mb_t = \nabla f_{t-1}(\x_{t-1})$, we have
  \begin{equation*}
      \Reg_T \le 2 \sumT \eta_t \|\nabla f_t(\x_t) - \nabla f_{t-1}(\x_{t-1})\|^2 + \frac{1}{4} \Bottomcoef D^2 - \frac{\Bottomcoef}{8} \sumTT \|\x_t - \x_{t-1}\|^2 + \O(1).
  \end{equation*}
  To analyze the first term above, we follow the similar argument of \citet{ICML'23:OMD4SEA}.
  By \pref{lem:sum} with $a_t = \|\nabla f_t(\x_t) - \nabla f_{t-1}(\x_{t-1})\|^2$, $a_{\max} = 4G^2$, $A = \ceil{\lambda \Vb_T}$, and $b = \lambda$, it holds that 
  \begin{equation*}
    \sumT \frac{1}{\lambda t} \|\g_t - \g_{t-1}\|^2 \le \frac{4G^2}{\lambda} \log \sbr{1 + \lambda \Vb_T} + \frac{4G^2}{\lambda} + \frac{1}{\lambda^2}.
  \end{equation*}
  Since $\eta_t = 2/(\Bottomcoef + \lambda t) \le 2 / (\lambda t)$, combining existing results, we have
  \begin{equation*}
    \Reg_T \le \frac{16G^2}{\lambda} \log \sbr{1 + \lambda \Vb_T} + \frac{1}{4} \Bottomcoef D^2 - \frac{\Bottomcoef}{8} \sumTT \|\x_t - \x_{t-1}\|^2 + \O(1),
  \end{equation*}
  which completes the proof.
\end{proof}

\begin{proof}[of \pref{lem:exp-concave-base}]
  The proof mainly follows Theorem 15 of \citet{COLT'12:VT}.
  Denoting by $\xs \in \argmin_{\x \in \X} \sumT f_t(\x)$, it holds that 
  \begin{align*}
    \sumT f_t(\x_t) - \sumT f_t(\xs) \le \underbrace{\sumT \|\nabla f_t(\x_t) - \mb_t\|^2_{U_t^{-1}}}_{\textsc{Adaptivity}} + \underbrace{\sumT (\D_{\psi_t}(\xs, \xh_t) - \D_{\psi_t}(\xs, \xh_{t+1}))}_{\textsc{Opt-Gap}} &\\
    - \underbrace{\sumT (\D_{\psi_t}(\xh_{t+1}, \x_t) + \D_{\psi_t}(\x_t, \xh_t))}_{\textsc{Stability}} - \underbrace{\frac{\alpha}{2} \sumT \norm{\x_t - \xs}^2_{\nabla f_t(\x_t) \nabla f_t(\x_t)^\top}}_{\textsc{Negativity}}&,
  \end{align*}
  where the last term is imported by the definition of exp-concavity.
  First, the optimality gap satisfies
  \begin{align*}
      \textsc{Opt-Gap} = {} & \frac{1}{2} \sumT \|\xs - \xh_t\|_{U_t}^2 - \frac{1}{2} \sumT \|\xs - \xh_{t+1}\|_{U_t}^2\\
      \le {} & \frac{1}{2} \|\xs - \xh_1\|_{V_1}^2 + \frac{1}{2} \sumT (\|\xs - \xh_{t+1}\|_{U_{t+1}}^2 - \|\xs - \xh_{t+1}\|_{U_t}^2)\\
      \le {} & \frac{1}{2} \Bottomcoef D^2 + \frac{\alpha G^2 D^2}{4} + \frac{\alpha}{4} \sumT \|\xs - \xh_{t+1}\|_{\nabla f_t(\x_t) \nabla f_t(\x_t)^\top}^2.
  \end{align*}
  We handle the last term by leveraging the negative term imported by exp-concavity:
  \begin{align*}
      & \textsc{Opt-Gap} - \textsc{Negativity}\\
      \le {} & \frac{1}{2} \Bottomcoef D^2 + \frac{\alpha}{4} \sumT \|\xs - \xh_{t+1}\|_{\nabla f_t(\x_t) \nabla f_t(\x_t)^\top}^2 - \frac{\alpha}{2} \sumT \norm{\x_t - \xs}^2_{\nabla f_t(\x_t) \nabla f_t(\x_t)^\top} + \O(1)\\
      \le {} & \frac{1}{2} \Bottomcoef D^2 + \frac{\alpha}{2} \sumT \|\x_t - \xh_{t+1}\|_{\nabla f_t(\x_t) \nabla f_t(\x_t)^\top}^2 + \O(1),
  \end{align*}
  where the local norm of the second term above can be transformed into $U_t$:
  \begin{equation*}
      \frac{\alpha}{2} \sumT \|\x_t - \xh_{t+1}\|_{\nabla f_t(\x_t) \nabla f_t(\x_t)^\top}^2 \le \frac{\alpha G^2}{2} \sumT \|\x_t - \xh_{t+1}\|^2 \le \sumT \|\x_t - \xh_{t+1}\|^2_{U_t}.
  \end{equation*}
  Using the stability of optimistic OMD~{\citep[Proposition 7]{COLT'12:VT}}, the above term can be further bounded by 
  \begin{equation*}
      \sumT \|\x_t - \xh_{t+1}\|^2_{U_t} \le \sumT \|\nabla f_t(\x_t) - \mb_t\|^2_{U_t^{-1}}.
  \end{equation*}
  By choosing the optimism as $\mb_t = \nabla f_{t-1}(\x_{t-1})$, the above term can be consequently bounded due to Lemma 19 of \citet{COLT'12:VT}:
  \begin{equation*}
    \sumT \|\nabla f_t(\x_t) - \nabla f_{t-1}(\x_{t-1})\|^2_{U_t^{-1}} \le \frac{8d}{\alpha} \log \sbr{1 + \frac{\alpha}{8 \Bottomcoef d} \Vb_T}.
  \end{equation*}
  The last step is to analyze the negative stability term:
  \begin{align*}
    \textsc{Stability} = {} & \sumT (\D_{\psi_t}(\xh_{t+1}, \x_t) + \D_{\psi_t}(\x_t, \xh_t)) = \frac{1}{2} \sumT \|\xh_{t+1} - \x_t\|_{U_t}^2 + \frac{1}{2} \sumT \|\x_t - \xh_t\|_{U_t}^2\\
    \ge {} & \frac{\Bottomcoef}{2} \sumT \|\xh_{t+1} - \x_t\|^2 + \frac{\Bottomcoef}{2} \sumT \|\x_t - \xh_t\|^2 \ge \frac{\Bottomcoef}{4} \sumTT \|\x_t - \x_{t-1}\|^2.
  \end{align*}
  Combining existing results, we have
  \begin{equation*}
    \Reg_T \le \frac{16d}{\alpha} \log \sbr{1 + \frac{\alpha}{8 \Bottomcoef d} \Vb_T} + \frac{1}{2} \Bottomcoef D^2 - \frac{\Bottomcoef}{4} \sumTT \|\x_t - \x_{t-1}\|^2 + \O(1),
  \end{equation*} 
  which completes the proof.
\end{proof}

\begin{proof}[of \pref{lem:convex-base}]
  The proof mainly follows Theorem 11 of \citet{COLT'12:VT}.
  Following the standard analysis of optimistic OMD, e.g., Theorem 1 of~\citet{JMLR'24:Sword++}, 
  \begin{align*}
    \sumT f_t(\x_t) - \sumT f_t(\xs) \le {} & \underbrace{\sumT \eta_t \|\nabla f_t(\x_t) - \mb_t\|^2}_{\textsc{Adaptivity}} + \underbrace{\sumT \frac{1}{\eta_t} (\Dpsi(\xs, \xh_t) - \Dpsi(\xs, \xh_{t+1}))}_{\textsc{Opt-Gap}}\notag \\
    & - \underbrace{\sumT \frac{1}{\eta_t} (\Dpsi(\xh_{t+1}, \x_t) + \D_{\psi_t}(\x_t, \xh_t))}_{\textsc{Stability}},
  \end{align*}
  where $\xs \in \argmin_{\x \in \X} \sumT f_t(\x)$ and $\psi(\cdot) \define \frac{1}{2} \|\cdot\|^2$.
  The adaptivity term satisfies
  \begin{align*}
    \textsc{Adaptivity} = {} & \sumT \eta_t \|\nabla f_t(\x_t) - \mb_t\|^2 \le D \sumT \frac{\|\nabla f_t(\x_t) - \nabla f_{t-1}(\x_{t-1})\|^2}{\sqrt{1 + \sum_{s=1}^{t-1} \|\nabla f_s(\x_s) - \nabla f_{s-1}(\x_{s-1})\|^2}}\\
    \le {} & 4 D \sqrt{1 + \sumT \|\nabla f_t(\x_t) - \nabla f_{t-1}(\x_{t-1})\|^2} + 4DG^2,
  \end{align*}
  where the last step uses $\sumT a_t/\sqrt{1 + \sum_{s=1}^{t-1} a_s} \le 4 \sqrt{1 + \sumT a_t} + \max_{t \in [T]} a_t$~{\citep[Lemma 4.8]{UAI'19:FIRST-ORDER}}.
  Next, we move on to the optimality gap,
  \begin{align*}
      \textsc{Opt-Gap} = {} & \sumT \frac{1}{\eta_t} (\Dpsi(\xs, \xh_t) - \Dpsi(\xs, \xh_{t+1})) = \sumT \frac{1}{2 \eta_t} (\|\xs - \xh_t\|^2 - \|\xs - \xh_{t+1}\|^2)\\
      \le {} & \frac{\|\xs - \xh_1\|^2}{2 \eta_1} + \sumTT \sbr{\frac{1}{2\eta_t} - \frac{1}{2\eta_{t-1}}} \|\xs - \xh_t\|^2\\
      \le {} & \frac{D}{2} (1 + \Bottomcoef D) + D^2 \sumTT \sbr{\frac{1}{2\eta_t} - \frac{1}{2\eta_{t-1}}} \le \frac{D}{2} (1 + \Bottomcoef D) + \frac{D^2}{2 \eta_T}\\
      = {} & \Bottomcoef D^2 + \frac{D}{2} \sqrt{1 + \Vb_T} + \O(1).
  \end{align*}
  Finally, we analyze the stability term,
  \begin{align*}
    \textsc{Stability} = {} & \sumT \frac{1}{2\eta_t} (\|\xh_{t+1} - \x_t\|^2 + \|\x_t - \xh_t\|^2) \ge \sumTT \frac{1}{2\eta_t} (\|\xh_t - \x_{t-1}\|^2 + \|\x_t - \xh_t\|^2)\\
    \ge {} & \sumTT \frac{1}{4\eta_t} \|\x_t - \x_{t-1}\|^2 \ge \frac{\Bottomcoef}{4} \sumTT \|\x_t - \x_{t-1}\|^2.
  \end{align*}
  Combining the above inequalities completes the proof.
\end{proof}

\end{document}